\newcommand{\upcap}{C_{\textrm{UP}}}
\newcommand{\dncap}{C_{\textrm{DN}}}
\theoremstyle{plain}
\newtheorem{thm}{Theorem}[section]
\newtheorem{lem}[thm]{Lemma}
\newtheorem{prop}[thm]{Proposition}
\newtheorem{observation}[thm]{\textbf{Observation}}
\newtheorem*{rep@prop}{\rep@title}
\newcommand{\newrepprop}[2]{%
\newenvironment{rep#1}[1]{%
 \def\rep@title{#2 \ref{##1}}%
 \begin{rep@prop}}%
 {\end{rep@prop}}}
\newcommand{\TSP}{\textsc{Tsp}}
\newcommand{\MCT}{\textsc{Mct}}
\newcommand{\MCTU}{\textsc{Mct-U}}
\newcommand{\MCTD}{\textsc{Mct-Decision}}
\newcommand{\HC}{\textsc{Hc}}
\newcommand{\DCST}{\textsc{Dcst}}
\newcommand{\MBST}{\textsc{Mbst}}
\theoremstyle{definition}
\theoremstyle{remark}
\title{Throughput-Optimal Topology Design \\for Cross-Silo Federated Learning}
\author{
    Othmane Marfoq\\
    Inria, Universit\'e C\^ote d'Azur,\\
    Accenture Labs,\\
    Sophia Antipolis, France\\
    othmane.marfoq@inria.fr\\
    \And
    Chuan Xu\\
    Inria, Universit\'e C\^ote d'Azur, \\
    Sophia Antipolis, France\\
    chuan.xu@inria.fr\\
    \AND
    Giovanni Neglia\\
    Inria, Universit\'e C\^ote d'Azur, \\
    Sophia Antipolis, France\\
    giovanni.neglia@inria.fr\\
    \And
    Richard Vidal\\
    Accenture Labs, \\ 
    Sophia Antipolis, France\\
    richard.vidal@accenture.com\\
}
\newcommand{\problemtitle}[1]{\gdef\@problemtitle{#1}}
\newcommand{\probleminput}[1]{\gdef\@probleminput{#1}}
\newcommand{\problemquestion}[1]{\gdef\@problemquestion{#1}}
  \par\addvspace{.5\baselineskip}
  \par\addvspace{.5\baselineskip}
\def\1{\bm{1}}
\def\vw{{\bm{w}}}
\def\mA{{\bm{A}}}
\DeclareMathAlphabet{\mathsfit}{\encodingdefault}{\sfdefault}{m}{sl}
\SetMathAlphabet{\mathsfit}{bold}{\encodingdefault}{\sfdefault}{bx}{n}
\DeclareMathOperator*{\argmax}{arg\,max}
\begin{document}

\maketitle

\begin{abstract}

Federated learning usually employs a {server-client} architecture where an orchestrator iteratively aggregates model updates from remote clients and pushes them back a refined model. This approach may be inefficient in cross-silo settings, as close-by data silos with high-speed access links may exchange information faster than with the orchestrator, and the orchestrator may become a communication bottleneck. In this paper we define the problem of topology design for cross-silo federated learning using the theory of max-plus linear systems to compute the system throughput---number of communication rounds per time unit. We also propose practical algorithms that, under the knowledge of measurable network characteristics, find a topology with the largest throughput or with provable throughput guarantees. In realistic Internet networks with 10~Gbps access links at silos, our algorithms speed up training by a factor 9 and 1.5 in comparison to the {server-client}
architecture and to state-of-the-art MATCHA, respectively. Speedups are even larger with slower access links.

\end{abstract}

\section{Introduction}
\label{sec:introduction}
Federated learning (FL) ``\textit{involves training statistical models over remote devices or siloed data centers, such as mobile phones or hospitals, while keeping data localized}''~\cite{li2019federated} because of privacy concerns or limited communication resources. The definition implicitly distinguishes two different settings~\cite{kairouz2019advances}: the \emph{cross-device} scenario including a large number (millions or even more) of unreliable mobile/edge devices with limited computing capabilities and slow Internet connections, and the \emph{cross-silo} scenario with at most a few hundreds of reliable data silos with powerful computing resources and high-speed access links. While the first FL papers~\cite{mcmahan2016communicationefficient,konen15} emphasized the cross-device setting, the cross-silo scenario has become popular for distributed training among banks~\cite{webank}, hospitals~\cite{courtiol2019deep, silva2019federated, nvidia}, pharmaceutical labs~\cite{eucordis}, and manufacturers~\cite{mustketeer}. 

In federated learning, clients (e.g., mobile devices or whole organizations) usually train the model through an iterative procedure under the supervision of a central orchestrator, which, for example, decides to launch the training process and coordinates training advances. Often---e.g., in FedAvg~\cite{mcmahan2016communicationefficient}, SCAFFOLD~\cite{karimireddy2019scaffold}, and FedProx \cite{li2018federated}---the orchestrator directly participates to the training, by aggregating clients' updates, generating a new model, and pushing it back to the clients. Hence, clients only communicate with a potentially far-away (e.g., in another continent) orchestrator and do not exploit communication opportunities with close-by clients. This choice is justified in the cross-device setting, where inter-device communication is unreliable (devices may drop-out from training at any time) and slow (a message needs to traverse two slow access links). But in the cross-silo setting, data silos (e.g., data centers) are almost always available, enjoy high-speed connectivity comparable to the orchestrator's one, and may exchange information faster with some other silos than with the orchestrator. An orchestrator-centered communication topology is then potentially inefficient, because it ignores fast inter-silo communication opportunities and makes the orchestrator a candidate for congestion. A current trend~\cite{wang2019matcha, 10.5555/3045390.3045537, Vanhaesebrouck2017DecentralizedCL, pmlr-v80-tang18a, Bellet2018PersonalizedAP, pmlr-v97-koloskova19a, DBLP:journals/corr/abs-1901-11173} is then to replace communication with the orchestrator by peer-to-peer communications between individual silos, which perform local partial aggregations of model updates. We also consider this scenario and study how to design the communication topology.

The communication topology has two contrasting effects on training duration. First, a more connected topology leads to faster convergence in terms of iterations or communication rounds, as quantified by 
{classic worst-case}
convergence bounds in terms of the spectral properties of the topology~\cite{ndeic_834019_Tradeoffs,Duchi_2012, scaman2018optimal, scaman2017optimal, CooperatedSGDwang, jiang2017collaborative}. Second, a more connected topology increases the duration of a communication round (e.g., it may cause network congestion), motivating the use of degree-bounded topologies where every client sends and receives a small number of messages at each  round~\cite{DBLP:journals/corr/abs-1811-10792, NIPS2017_7117}. 
Recent experimental and theoretical work suggests that, in practice, \emph{the first effect has been over-estimated by classic worst-case convergence bounds}. Reference \cite{neglia:hal-02430485} partially explains the phenomenon and overviews theoretical results proving asymptotic topology-independence \cite{NIPS2017_7117, olshevsky2019asymptotic, DBLP:journals/corr/abs-1811-10792}.  \cite[Sect. 6.3]{koloskova2020} extends some of the conclusions in \cite{neglia:hal-02430485} to dynamic topologies and multiple local updates. Experimental evidence on image classification tasks (\cite[Fig. 2]{neglia:hal-02430485}, \cite[Fig 20.]{Luo_2019}, \cite[Fig. 3]{NIPS2017_7117}) and natural language processing tasks (\cite[Figs.~13-16]{NIPS2017_7117}) confirms this finding. Motivated by these observations, this paper focuses on the effect of topology on the duration of communication rounds.



Only a few studies have designed topologies taking into account the duration of a communication round. Under the simplistic assumption that the communication time is proportional to node degree, MATCHA~\cite{wang2019matcha} decomposes the set of possible communications into matchings (disjoint pairs of clients) and, at each communication round, randomly selects some matchings and allows their pairs to transmit. MATCHA chooses the matchings' selection probabilities in order to optimize the algebraic connectivity of the expected topology. Reference~\cite{neglia_infocom} studies how to select the degree of a regular topology when the duration of a communication round is determined by stragglers~\cite{karakus17,li18}. Apart from these corner cases,  ``\emph{how to design a [decentralized] model averaging policy that achieves the fastest convergence remains an open problem}''~\cite{kairouz2019advances}.

Our paper addresses this open problem. It uses the theory of linear systems in the max-plus algebra~\cite{baccelli} to design cross-silo FL topologies that minimize the duration of communication rounds, or equivalently maximize the system \emph{throughput}, i.e., the number of completed rounds per time unit. The theory holds for synchronous systems and has been successfully applied in other fields (e.g.,~manufacturing~\cite{chandra2003automated}, communication networks~\cite{leboudec01}, biology~\cite{brunsch12}, railway systems~\cite{goverde98}, and road networks~\cite{farhi11}). Synchronous optimization algorithms are often preferred for federated learning~\cite{Bonawitz19}, because they enjoy stronger convergence guarantees than their asynchronous counterparts and can be easily combined with cryptographic secure aggregation protocols~\cite{bonawitz2017practical}, differential privacy techniques~\cite{abadi2016deep}, and model and update compression~\cite{pmlr-v70-zhang17e, wang2018atomo, sattler2019robust, caldas2018expanding}.

To the best of our knowledge, this paper is the first work to take explicitly in consideration all delay components contributing to the total training time including computation times, link latencies, transmission times, and queueing delays. It complements the topology design approaches  listed above that only account for congestion at access links~\cite{wang2019matcha} and straggler effect~\cite{neglia_infocom}.  

The algorithms we propose (Sect.~\ref{sec:theoritcal_results}) are either optimal or enjoy guaranteed approximation factors. Numerical results in Sect.~\ref{sec:experiments} show significant training speed-up in realistic network settings; the slower the access links, the larger the speedups.

\section{Problem Formulation}
\label{sec:notation}
\subsection{Machine Learning Training}
We consider a network of $N$ siloed data centers  who collaboratively train a global machine learning model, solving the following optimization problem:
\begin{equation}
    \label{eq:optimization_problem_constraints}
    \underset{\vw \in \mathbb{R}^{d}}{\textrm{minimize}}   
    \sum_{i=1}^{N} p_i \mathbb{E}_{\xi_{i}}\!\left[ f_{i}(\vw, \xi_{i})\right],\;\;
\end{equation}
where  $f_{i}(\vw,\xi_{i})$ is the loss of model $\vw$ at a sample $\xi_{i}$ drawn from  data distribution at silo $i$  
and the coefficient $p_i>0$ specifies the relative importance of each silo,
with two natural settings being $p_i$ equal to $1$ or to the size of silo $i$'s local dataset~\cite{li2019federated}. In the rest of the paper we consider $p_i=1$, but our analysis is not affected by the choice of $p_i$.




In order to solve Problem~\eqref{eq:optimization_problem_constraints} in an FL scenario, silos do not share the local datasets, but periodically transmit model updates, and different distributed algorithms have been proposed \cite{li2018federated, mcmahan2016communicationefficient, Li2019CommunicationEfficientLD, karimireddy2019scaffold, wang2019matcha, DBLP:journals/corr/KonecnyMRR16,CooperatedSGDwang}. In this paper we consider as archetype the decentralized periodic averaging stochastic gradient descent (DPASGD)~\cite{CooperatedSGDwang}, where silos are represented as vertices of a communication graph that we call \emph{overlay}. Each silo $i$  maintains a local model $\vw_i$ and  performs $s$~mini-batch gradient updates before sending its model to a subset of silos $\mathcal N_i^-$ (its out-neighbours in the overlay). It then aggregates its model with those received by a (potentially different) set of silos $\mathcal N_i^+$ (its in-neighbours). Formally, the algorithm is described by the following equations:
\begin{align}
    \label{equ:gossip_updates}
    \vw_i(k+1) & = \begin{cases}
        \sum_{j \in \mathcal N^{+}_i \cup \{i\}} \mA_{i,j} \vw_j\!\left(k \right), & \mbox{ if }k \equiv 0 \pmod{s+1},\\
        \vw_i(k) - \alpha_k \frac{1}{m} \sum_{h=1}^m \nabla f_{i}\!\left(\vw_i\!\left(k \right),  \xi_i^{(h)}(k)\right), & \mbox{ otherwise}.
    \end{cases}
\end{align}
where $m$ is the batch size, $\alpha_{k} > 0$ is a potentially  varying learning rate, and $\mA \in \mathbb{R}^{N\times N}$ is a  matrix of non-negative weights, referred to as the \emph{consensus matrix}. For particular choices of the matrix $\mA$ and the number of local updates $s$, DPASGD reduces to other schemes previously proposed~\cite{NIPS2017_7117,Li2019CommunicationEfficientLD, yuan19}, including FedAvg~\cite{mcmahan2016communicationefficient}, where the orchestrator just performs the averaging step (this corresponds to its local loss function $f_i(.)$ being a constant). Convergence of~\eqref{equ:gossip_updates} was proved in \cite{CooperatedSGDwang}.

In this paper we study how to design the overlay in order to minimize the training time. While we consider DPASGD, our results are applicable to any synchronous iterative algorithm where each silo alternates a local computation phase and a communication phase during which it needs to receive inputs from a given subset of silos before moving to the next computation phase. This includes the distributed algorithms already cited, as well as push-sum training schemes~\cite{DBLP:journals/corr/abs-1811-10792, shi2014extra, DBLP:journals/oms/RamNV12, nedich2016achieving, 7472453, 6483403, Yin_2017} and in general the \emph{black-box optimization procedures} as defined in~\cite{scaman2017optimal}.

\subsection{Underlay, Connectivity graph, and Overlay} 
\label{s:underlay_and_co}
\begin{figure}
    \centering
    \begin{subfigure}[b]{0.3\textwidth}
        \centering 
        \includegraphics[scale=0.15]{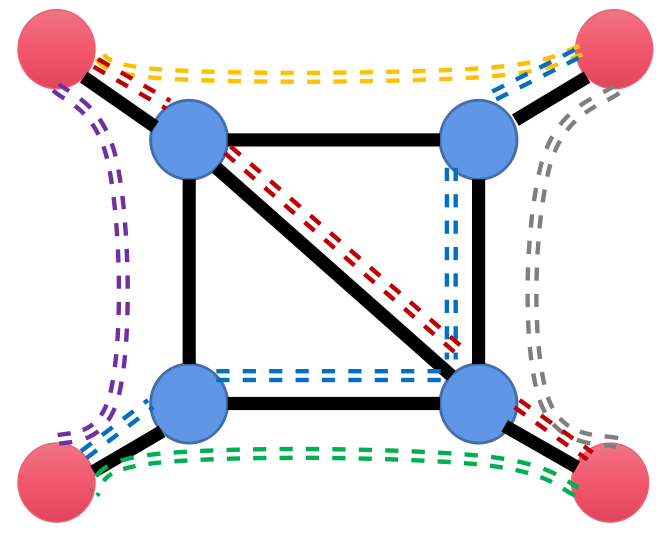}
        \caption{\small Underlay  $\mathcal{G}_{u} = (\mathcal{V} \cup \mathcal{V'}, \mathcal{E}_{u})$}
        \label{fig:underlay}
    \end{subfigure}
    \hfill
    \begin{subfigure}[b]{0.33\textwidth}
        \centering
        \includegraphics[scale=0.15]{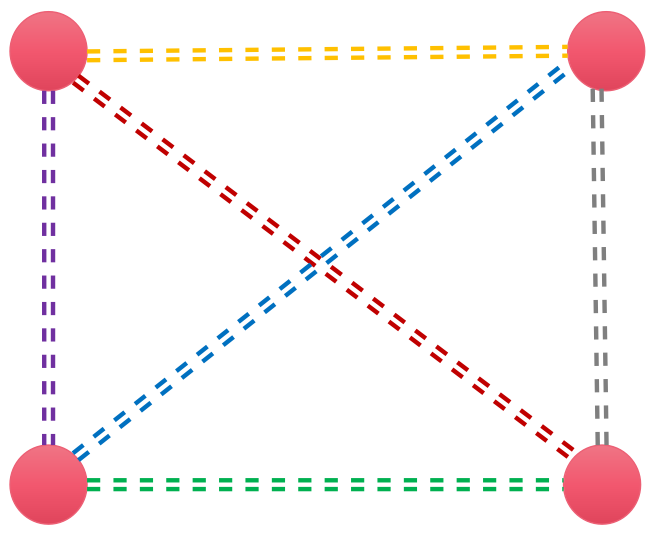}
        \caption{\small Connectivity graph $\mathcal G_c = (\mathcal V, \mathcal E_c)$}
        \label{fig:connectivity}
    \end{subfigure}
    \hfill
    \begin{subfigure}[b]{0.3\textwidth}
        \centering
        \includegraphics[scale=0.16]{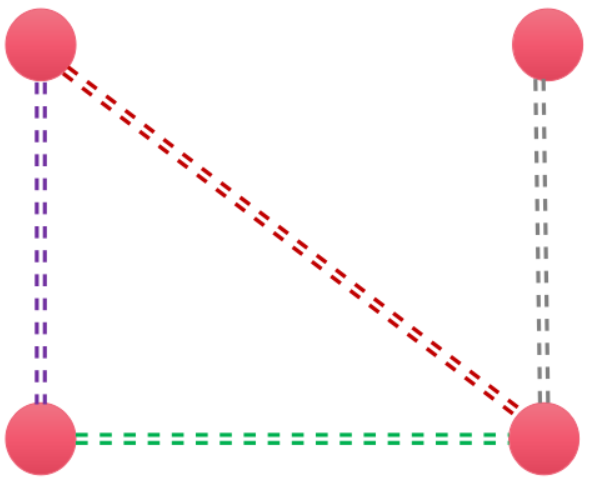}
        \caption{\small Overlay  $\mathcal G_o = (\mathcal V, \mathcal E_o)$}
        \label{fig:overlay}
    \end{subfigure}
    \caption{Examples for underlay, connectivity graph, and overlay, with routers (blue nodes), silos (red nodes), underlay links (solid black lines), and  information exchanges (dashed lines).}
    \label{fig:topologies}
\end{figure}

FL silos are connected by a communication infrastructure (e.g., the Internet or some private network), which we call  \emph{underlay}. The underlay can be represented as a directed graph (digraph) \mbox{$\mathcal{G}_{u} = (\mathcal{V} \cup \mathcal{V'}, \mathcal{E}_{u})$}, where $\mathcal V$ denotes the set of silos, $\mathcal V'$ the set of other nodes (e.g., routers) in the network, and $\mathcal E_u$ the set of communication links. For simplicity, we consider that each silo $i \in \mathcal V$ is connected to the rest of the network through a single link $(i,i')$, where $i' \in \mathcal V'$, with uplink capacity $\upcap(i)$ and downlink capacity $\dncap(i)$. The example in Fig.~\ref{fig:topologies} illustrates  the underlay and the other concepts we are going to define. 

The \emph{connectivity graph} $\mathcal G_c = (\mathcal V, \mathcal E_c)$ captures the possible direct communications among silos. Often the connectivity graph  is fully connected, but specific NAT or firewall configurations may prevent some pairs of silos to communicate. If $(i,j) \in \mathcal E_c$, $i$ can transmit its updated model to $j$. The message experiences a delay that is the sum of two contributions: 1)~an end-to-end delay $l(i,j)$ accounting for link latencies, and queueing delays long the path, and 2)~a term depending on the model size $M$ and the \emph{available bandwidth}\footnote{
    The available bandwidth of a path is the maximum rate that the path can provide to a flow, taking into account the rest of the traffic~\cite{carter96,jain02}; it is then smaller than the minimum link capacity of the path.
}
$A(i,j)$ of the path. Each pair of silos $(i,j)$ can use probing packets~\cite{jain02,prasad03,gaia_10.5555/3154630.3154682} to measure end-to-end delays and available bandwidths and communicate them to the orchestrator, which then designs the topology. We assume that in the stable cross-silo setting these quantities do not vary or vary slowly, so that the topology is recomputed only occasionally, if at all.

The training algorithm in~\eqref{equ:gossip_updates} does not need to use all potential connections. The orchestrator can  select a connected subgraph of $\mathcal G_c$. We call such subgraph \emph{overlay} and denote it by $\mathcal G_o = (\mathcal V, \mathcal E_o)$, where $\mathcal E_o \subset \mathcal E_c$.  Only nodes directly connected in $\mathcal G_o$ will exchange messages. We can associate a delay to each link $(i,j) \in \mathcal E_o$, corresponding to the time interval between the beginning of a local computation at node $i$, and the receiving of $i$'s updated model by $j$:
\begin{equation}
    \label{eq:do}
    d_o(i,j) =  s \times T_c(i) + l(i,j) + \frac{M}{A(i,j)} = s \times T_c(i) + l(i,j) + \frac{M}{\min\left(\frac{\upcap(i)}{|\mathcal N^-_i|},\frac{\dncap(j)}{|\mathcal N^+_j|}, A(i',j')\right)}, 
\end{equation}
where $T_c(i)$ denotes the time to compute one local update of the model. We also define $d_o(i,i) = s \times T_c(i)$. Equation~\eqref{eq:do} holds under the following assumptions. First, each silo $i$ uploads its model in parallel to its out-neighbours in $\mathcal N^-_i$ (with a rate at most $\upcap(i)/|\mathcal N^-_i|$). Second, downloads at $j$ happen in parallel too. While messages from different in-neighbours may not arrive at the same time at $j$'s downlink, their transmissions are likely to partially overlap. Finally, different messages do not interfere significantly in the core network, where they are only a minor component of the total network traffic ($A(i',j')$ does not depend on $\mathcal G_o$).

{Borrowing the terminology from P2P networks~\cite{massoulie07} we call a network \emph{edge-capacitated} if access links delays can be neglected,
otherwise we say that it is \emph{node-capacitated}}. While in cross-device FL the network is definitely node-capacitated, in cross-silo FL---the focus of our work---silos may be geo-distributed data centers or branches of a company and then have high-speed connections, so that neglecting access link delays may be an acceptable approximation.

Our model is more general than those considered in related work:  \cite{wang2019matcha} considers $d_o(i,j) = M \times |\mathcal N^-_i| / \upcap(i)$ and \cite{neglia_infocom} considers $d_o(i,j) = T_c(i)$ (but it accounts for random computation times).

\subsection{Time per Communication Round (Cycle Time)}
Let $t_i(k)$ denote the time at which worker $i$ starts computing $w_i((s+1)k+1)$ according to~\eqref{equ:gossip_updates} with $t_i(0)=0$. As $i$ needs to wait for the inputs $w_j((s+1)k)$ from its in-neighbours, the following recurrence relation holds
\begin{equation}
    t_i(k+1)= \max_{j \in \mathcal N^+_i \cup \{i\}} (t_j(k) + d_o(j,i)).
\end{equation}
This set of relations generalizes the concept of a linear system in the max-plus algebra, where the $\max$ operator replaces the usual sum and the $+$ operator replaces the usual product. We refer the reader to~\cite{baccelli} for the general theory of such systems and we present here only the key results for our analysis.

We call the time interval between $t_i(k)$ and $t_i(k+1)$ a \emph{cycle}. The average cycle time for silo $i$ is defined as $\tau_i = \lim_{k \to \infty} t_i(k)/k$. The cycle time 1)  does not depend on the specific silo (i.e., $\tau_i = \tau_j$)~\cite[Sect.~7.3.4]{baccelli}, and 2) can be computed directly from the graph $\mathcal G_o$~\cite[Thm.~3.23]{baccelli}. In fact:
\begin{equation}
    \label{eq:cycle_time}
     \tau(\mathcal{G}_o) = \max_{\gamma}\frac{d_o(\gamma)}{|\gamma|},
\end{equation}
where $\gamma$ is a generic circuit, i.e., a path $(i_{1}, \dots, i_{p}=i_{1})$ where the initial node and the final node coincide,  $|\gamma|=p$ is the length of the circuit, and  $d_o(\gamma)=\sum_{k=1}^{p-1}d_o(i_{k}, i_{k+1})$ is the sum of delays on~$\gamma$. A circuit $\gamma$ of $\mathcal{G}_o$ is called \emph{critical} if $\tau(\mathcal{G}_o) = d_o(\gamma)/{|\gamma|}$. There exist algorithms with different complexity to compute the cycle time~\cite{karp78,dasdan98}.

The cycle time is a key performance metric for the system because the difference $|t_i(k) - \tau(\mathcal G_o) \times k|$ is bounded for all $k \ge 0$ so that, for large enough  $k$, $t_i(k) \approx \tau(\mathcal G_o) \times k$. In particular, the inverse of the cycle time is the \emph{throughput} of the system, i.e., the number of communication rounds per time unit. An overlay with minimal cycle time minimizes the time required for a given number of communication rounds. This observation leads to our optimization problem.

\subsection{Optimization Problem}
Given a connectivity graph $\mathcal{G}_{c}$, we want the overlay $\mathcal{G}_{o}$ to be a strong digraph (i.e., a strongly connected directed graph) with minimal cycle time.  Formally, we define the following \textit{Minimal Cycle Time} problem: 
\begin{problem}
  \label{main_problem}
  \problemtitle{Minimal Cycle Time ($\MCT$)}
  \probleminput{A strong digraph $\mathcal{G}_{c}\! =\! (\mathcal{V},\mathcal{E}_{c})$, $\{\upcap(i), \dncap(j), l(i,j), A(i',j'), T_c(i), \forall (i,j) \in \mathcal E_c\}$.} 
  \problemquestion{A strong  spanning subdigraph of $\mathcal{G}_{c}$ with minimal cycle time.}
    \end{problem}
Note that the input does not include detailed information about the underlay $\mathcal G_u$, but only information available or measurable at the silos (see Sect.~\ref{s:underlay_and_co}). To the best of our knowledge, our paper is the first effort to study $\MCT$. The closest problem considered in the literature is, for a given overlay,  to select the largest delays that guarantee a minimum throughput~\cite{Gaubert95, Hardouin2014}.

\section{Theoretical Results and Algorithms}
\label{sec:theoritcal_results}
\begin{table}
    \caption{Algorithms to design the overlay $\mathcal{G}_o$ from the connectivity graph $\mathcal{G}_c$.}
    \centering
    \resizebox{\columnwidth}{!}{%
        \begin{tabular}{l l l l l}
            \toprule
            Network & Conditions & Algorithm & Complexity & Guarantees \\
            \midrule
            Edge-capacitated  & Undirected  $\mathcal{G}_o$ & Prim's Algorithm \cite{doi:10.1002/j.1538-7305.1957.tb01515.x} & $\mathcal{O}(|\mathcal E_c| + |\mathcal V| \log |\mathcal V|)$ & Optimal solution (Prop.~\ref{prop:symmetric_underlay}) \\
            Edge/Node-capacitated & Euclidean $\mathcal{G}_c$ & Christofides' Algorithm \cite{monnot:hal-00003997} & $\mathcal{O}(|\mathcal V|^{2} \log |\mathcal V|)$ & 3$N$-approximation (Prop.~\ref{prop:mctz_3-approx},\ref{prop:mct_3_approx})\\
            Node-capacitated & \begin{tabular}{@{}l@{}}Euclidean  $\mathcal{G}_c$ \\  and undirected $\mathcal{G}_o$ \end{tabular} & Algorithm \ref{alg:our_algorithm} (Appendix~\ref{sec:appendix_alg}) & $\mathcal{O}(|\mathcal E_c| |\mathcal V| \log |\mathcal V|)$ & 6-approximation (Prop.~\ref{prop:6-approx}) \\
            \bottomrule
        \end{tabular}%
    }
    \label{tab:solutions} 
\end{table}

In this section we present complexity results for $\MCT$ and algorithms to design the optimal topology in different settings. Table~\ref{tab:solutions} lists these algorithms, their time-complexity, and their guarantees. We note that in some cases we adapt known algorithms to solve \MCT. All proofs and auxiliary lemmas are in Appendix~\ref{sec:appendix_proofs}.



\subsection{Edge-capacitated networks}
\label{sec:throughput_maximization}
Remember that we call a network edge-capacitated if access links delays can be neglected, as it is for example the case whenever $\frac{1}{N}\times\min \left(\upcap(i), \dncap(j) \right ) \geq A(i',j')$ for each $(i, j) \in \mathcal{E}_c$. 
In this setting \eqref{eq:do} becomes
\begin{equation}
    \label{eq:do_edge}
    d_o(i,j) = s \times T_c(i) + l(i,j) + \frac{M}{A(i',j')},
\end{equation}
and then the delay between two silos does not depend on the selected overlay $\mathcal G_o$.

FL algorithms often use an \emph{undirected} overlay with symmetric communications, i.e., $(i, j) \in \mathcal{E}_{o} \Rightarrow (j, i) \in \mathcal{E}_{o}$. This is the case of centralized schemes, like FedAvg, but is also common for other consensus-based optimization schemes where the consensus matrix $\mA$ is required to be doubly-stochastic \cite{nedic09,DBLP:journals/oms/RamNV12,CooperatedSGDwang}---a condition simpler to achieve when $\mathcal G_o$ is undirected.

When building an undirected overlay, we can restrict ourselves to consider trees as solutions of $\MCT$ {(Lemma~\ref{lem:spanning_tree})}. In fact, additional links can only increase the number of circuits and then increase the cycle time (see~\eqref{eq:cycle_time}). Moreover, we can prove that the overlay has simple critical circuits of the form $\gamma=(i,j,i)$, for which $d_o(\gamma)/|\gamma| = (d_o(i,j)+d_o(j,i))/2$ {(Lemma~\ref{lem:symmetric_underlay})}. Intuitively, if we progressively build a tree using the links in $\mathcal G_c$ with the smallest average of delays in the two directions, we obtain the overlay with minimal cycle time. This construction corresponds to finding a minimum weight spanning tree (MST) in an opportune undirected version of $\mathcal{G}_c$:
\begin{prop}
    \label{prop:symmetric_underlay}
    Consider an undirected weighted graph $\mathcal G_c^{(u)} = (\mathcal V, \mathcal E_c^{(u)})$, where $(i,j) \in \mathcal E_c^{(u)}$ iff  $(i,j) \in \mathcal E_c$ and $(j,i) \in \mathcal E_c$ and where $(i,j) \in \mathcal E_c^{(u)}$ has weight $d_c^{(u)}(i,j)= (d_o(i,j)+d_o(j,i))/2$. A~minimum weight spanning tree of $\mathcal{G}_{c}^{(u)}$ is a solution of $\MCT$ when $\mathcal G_c$ is edge-capacitated and $\mathcal G_o$ is required to be undirected.
\end{prop}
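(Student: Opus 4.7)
The plan is to chain the two lemmas cited just before the statement and then recognise the resulting combinatorial problem as a minimum bottleneck spanning tree problem. First, by Lemma~\ref{lem:spanning_tree} one may restrict attention to strong spanning subdigraphs of $\mathcal{G}_c$ that are trees: any extra arc creates additional circuits, which can only increase the maximum in~\eqref{eq:cycle_time}. Since we further require $\mathcal{G}_o$ to be undirected, the candidates are precisely the spanning trees $T$ of the undirected graph $\mathcal{G}_c^{(u)}$, viewed as digraphs by orienting each undirected edge in both directions.

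Second, I would invoke Lemma~\ref{lem:symmetric_underlay} to locate the critical circuit of any such tree. The lemma asserts that the maximum in~\eqref{eq:cycle_time} is attained on a simple back-and-forth circuit $\gamma=(i,j,i)$ along some edge $\{i,j\}\in T$, whose average delay is $(d_o(i,j)+d_o(j,i))/2=d_c^{(u)}(i,j)$. Hence
\[
    \tau(T)=\max_{\{i,j\}\in T} d_c^{(u)}(i,j),
\]
so minimising $\tau$ over undirected spanning trees of $\mathcal{G}_c^{(u)}$ is exactly the \emph{Minimum Bottleneck Spanning Tree} (MBST) problem on $\mathcal{G}_c^{(u)}$.

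Third, I would close the argument via the classical observation that every minimum weight spanning tree is also an MBST. A short cycle-exchange argument suffices: if $T^\star$ is an MST but some tree $T'$ has strictly smaller bottleneck, pick a heaviest edge $e$ of $T^\star$; removing $e$ disconnects $\mathcal V$ into two components, and some edge $e'\in T'$ crosses the induced cut with $d_c^{(u)}(e')\le \max_{f\in T'}d_c^{(u)}(f)<d_c^{(u)}(e)$, so $T^\star-e+e'$ is a spanning tree of strictly smaller total weight, contradicting the optimality of $T^\star$. Therefore Prim's algorithm, which returns an MST of $\mathcal{G}_c^{(u)}$, returns a solution of $\MCT$.

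The substantive step is the reduction from a max-plus cycle time over arbitrary circuits to a single edge weight, and this is precisely the content of Lemma~\ref{lem:symmetric_underlay}; I expect this to be the main obstacle in a full write-up, because it has to rule out that longer alternating circuits $(i_1,\dots,i_p,i_1)$ with $p>2$ beat every 2-cycle. The edge-capacitated assumption is essential here: only under~\eqref{eq:do_edge} do the arc delays $d_o(i,j)$ become independent of the chosen overlay, so that $d_c^{(u)}$ is a well-defined weight on $\mathcal{G}_c^{(u)}$ that does not change with the candidate tree and Prim's algorithm can be applied directly.
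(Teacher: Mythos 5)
Your proposal is correct and follows essentially the same route as the paper: restrict to spanning trees via Lemma~\ref{lem:spanning_tree}, reduce the cycle time of a tree to its bottleneck edge weight via Lemma~\ref{lem:symmetric_underlay}, and conclude with the standard cut/exchange argument showing that a minimum weight spanning tree also minimises the bottleneck. The paper phrases the last step as a direct contradiction with the cut property rather than as the general fact that every MST is an MBST, but the underlying exchange is identical.
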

Prim's algorithm~\cite{doi:10.1002/j.1538-7305.1957.tb01515.x} is an efficient algorithm to find an MST with complexity $\mathcal O( |\mathcal E_c| + |\mathcal V| \log |\mathcal V|)$ and then suited for the usual cross-silo scenarios with at most a few hundred nodes \cite{kairouz2019advances}.

We have pointed out a simple algorithm when the overlay is undirected, but directed overlays can have arbitrarily shorter cycle times than undirected ones even in simple settings where all links in the underlay are bidirectional with identical delays in the two directions (see Appendix~\ref{sec:appendix_ring_beats_mts}). Unfortunately, computing optimal directed overlays is NP-hard:
\begin{prop}
	\label{prop:mctz_nphard}
	$\MCT$ is NP-hard even when $\mathcal G_c$ is a complete Euclidean edge-capacitated graph.
\end{prop}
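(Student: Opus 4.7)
The plan is to prove NP-hardness by a polynomial-time reduction from a classical NP-hard problem with Euclidean structure; Euclidean \TSP{} is the most natural starting point since the target class is already the complete Euclidean graph.

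First, from a Euclidean \TSP{} instance with $N$ points and pairwise distances $d(\cdot,\cdot)$, I would build an \MCT{} instance by taking the silos at the same points, the complete digraph as $\mathcal G_c$, and parameters chosen so that $d_o(i,j)=d(i,j)$ (for instance, $T_c(i)=0$ and $l(i,j)+M/A(i',j')=d(i,j)$). The resulting instance is edge-capacitated since none of the delays depend on the overlay's degrees. The forward direction is then immediate: any \TSP{} tour of length at most $K$, used as a directed Hamiltonian-cycle overlay, contains a unique simple circuit of length $N$ and total weight at most $K$, and hence by~\eqref{eq:cycle_time} has cycle time at most $K/N$.

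The converse---recovering from a strong spanning subdigraph with cycle time at most $K/N$ a Hamiltonian tour of length at most $K$---is the crux of the proof. The naive equivalence $\min_{\mathcal G_o}\tau(\mathcal G_o)=\mathrm{TSP}^{*}/N$ does not hold on arbitrary Euclidean inputs: for collinear points, for instance, the undirected path with bidirected edges yields cycle time equal to the maximum adjacent distance, which can be strictly below $\mathrm{TSP}^{*}/N$ (and by Proposition~\ref{prop:symmetric_underlay} even the MST bottleneck can beat the Hamiltonian cycle). To close this gap I would restrict the reduction to inputs where the structural lower bound $\tau(\mathcal G_o)\ge\mathrm{TSP}^{*}/N$ is enforced, via a Papadimitriou-style gadget augmentation: attach auxiliary silos (or tune the per-silo computation times $T_c(i)$) so that any strong spanning subdigraph that is not essentially a Hamiltonian cycle necessarily contains a short circuit of high mean weight, pushing $\tau$ above the threshold. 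An alternative route is to reduce from Euclidean Hamiltonian Cycle (Itai--Papadimitriou--Szwarcfiter) and argue that the existence of a Hamiltonian cycle in the source instance is witnessed by the optimal \MCT{} cycle time hitting a prescribed value.

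The main obstacle is exactly this lower bound: engineering a geometric gadget that simultaneously preserves the Euclidean structure of the input, keeps the \MCT{} instance edge-capacitated, and rules out MST- or cycle-cover-based alternatives whose cycle time would fall below the \TSP-induced threshold. Once the gadget is in place, the two directions of the reduction match and \MCT{} inherits NP-hardness from its Euclidean source.
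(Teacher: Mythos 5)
Your diagnosis of the crux is exactly right: the naive reduction from \TSP{} fails because the optimal overlay need not be a Hamiltonian cycle (short circuits, in particular the 2-circuits of a bidirected tree, can have much smaller mean weight than $\mathrm{TSP}^{*}/N$), and the whole difficulty is to force the critical circuit to be Hamiltonian. But the proposal stops precisely where the proof has to begin: ``attach auxiliary silos \dots{} so that any non-Hamiltonian subdigraph contains a short circuit of high mean weight'' restates what the gadget must achieve without constructing it, and no candidate gadget is exhibited or verified. Moreover, one of your two concrete suggestions cannot work as stated: adding the same amount to every silo's computation time adds a constant to every edge delay, hence shifts the mean weight of \emph{every} circuit by that same constant and changes nothing about which circuit is critical; a non-uniform choice of $T_c(i)$ makes $d_c(i,j)=s\,T_c(i)+\cdots$ asymmetric and destroys the Euclidean hypothesis unless compensated elsewhere. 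So there is a genuine gap: the reduction is not completed.

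For comparison, the paper closes this gap by reducing from (general, non-geometric) Hamiltonian Cycle rather than from \TSP, exploiting the fact that ``Euclidean'' in this paper only means symmetric delays satisfying the triangle inequality---any metric will do, no planar embedding is required. Given $\mathcal D=(\mathcal V,\mathcal E)$ it builds the complete digraph on $\mathcal V$ with delays in $\{1,2,3\}$: delay $1$ on edges of $\mathcal E$ not touching a distinguished vertex $v_0$, delay $2$ on edges of $\mathcal E$ touching $v_0$ and on non-edges not touching $v_0$, delay $3$ on non-edges touching $v_0$ (values confined to $[1,3]$ with the $3$'s only at $v_0$ make the triangle inequality a short case check). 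Every elementary circuit through $v_0$ pays a surcharge of $+2$ on its two $v_0$-incident arcs, so if it uses only edges of $\mathcal E$ its mean weight is exactly $1+2/|C|$, minimized at $|C|=N$, i.e.\ by a Hamiltonian cycle of $\mathcal D$; any non-edge adds at least one further unit and pushes the mean to at least $1+3/N$. Hence the threshold $\tau_0=(N+2)/N$ is achievable iff $\mathcal D$ is Hamiltonian. This is the kind of explicit gadget your plan would need to supply---and note that insisting on a genuinely geometric instance, as your Itai--Papadimitriou--Szwarcfiter variant would, makes the surcharge trick unavailable and the reduction substantially harder than the statement requires.
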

We call a connectivity graph $\mathcal G_c$ \emph{Euclidean} if its delays  $d_c(i,j)\triangleq s \times T_{c}(i) + l(i,j) + M/A(i',j')$ are symmetric ($d_c(i,j) = d_c(j,i), \forall i,j \in \mathcal V$) and satisfy the triangle inequality ($d_c(i,j) \le d_c(i,k) + d_c(k,j), \forall i, j, k \in \mathcal V$). These assumptions are roughly satisfied for geographically distant computing clusters with similar computation times, as the delay to transmit a message between two silos is roughly an affine function of the geodesic distance between them~\cite{10.1145/1028788.1028828}. Under this condition $\MCT$ can be approximated:
\begin{prop}
    \label{prop:mctz_3-approx}
    Christofides' algorithm~\cite{monnot:hal-00003997} is a $3 N$-approximation algorithm for $\MCT$ when $\mathcal G_c$ is edge-capacitated and Euclidean.
\end{prop}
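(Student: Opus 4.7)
The plan is to combine two ingredients: (i) in the edge-capacitated setting, \eqref{eq:do} reduces to $d_o(i,j) = d_c(i,j)$, so edge delays do not depend on the chosen overlay and are inherited from the Euclidean metric on $\mathcal V$; (ii) Christofides' algorithm, applied to the undirected graph on $\mathcal V$ weighted by $d_c$, returns a Hamiltonian cycle of total weight $C \le (3/2)\,\mathrm{OPT}_{\TSP}$, which can be oriented into a strong spanning subdigraph $\mathcal G_o^{\mathrm{Chr}} \subseteq \mathcal G_c$ of the same weight (symmetry of $d_c$ is what makes the orientation immaterial to the total cost).

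The core of the argument will be a lower bound on $\tau^* := \tau(\mathcal G_o^*)$, with $\mathcal G_o^*$ an optimal overlay, in terms of the pairwise distances $d_c$. For any $u,v \in \mathcal V$, strong connectivity of $\mathcal G_o^*$ supplies elementary directed paths from $u$ to $v$ and back of at most $N-1$ edges each. Concatenating them yields a closed walk in $\mathcal G_o^*$ with at most $2(N-1)$ edges and weight $d_{\mathcal G_o^*}(u,v) + d_{\mathcal G_o^*}(v,u)$. Any such closed walk decomposes into elementary circuits, each of which has mean weight at most $\tau^*$ by \eqref{eq:cycle_time}; summing gives a walk weight at most $2(N-1)\tau^*$. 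Since all edge weights in $\mathcal G_o^*$ are values of the metric $d_c$, the triangle inequality yields $d_{\mathcal G_o^*}(u,v) \ge d_c(u,v)$, and the symmetry of $d_c$ then gives the key estimate $d_c(u,v) \le (N-1)\tau^*$.

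To conclude, any Hamiltonian tour has $N$ edges, each of weight at most $\max_{u,v} d_c(u,v) \le (N-1)\tau^*$, so $\mathrm{OPT}_{\TSP} \le N(N-1)\tau^*$ and $C \le (3/2)N(N-1)\tau^*$. The elementary circuits of $\mathcal G_o^{\mathrm{Chr}}$ reduce to the self-loops (each contributing $s\,T_c(i) \le \tau^*$, since each self-loop is also a circuit in $\mathcal G_o^*$) and the unique directed Hamiltonian $N$-cycle of mean weight $C/N \le (3/2)(N-1)\tau^*$. Hence $\tau(\mathcal G_o^{\mathrm{Chr}}) \le (3/2)(N-1)\tau^* \le 3N\tau^*$, which is the claimed approximation guarantee.

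The step I expect to be the main obstacle is the pairwise bound $d_c(u,v) \le (N-1)\tau^*$: this is where the max-plus cycle-mean structure of $\MCT$ has to be turned into a statement about the underlying metric through the elementary-circuit decomposition of closed walks, and where both the triangle inequality and the symmetry of $d_c$ must be invoked. The remaining parts—calling Christofides' $3/2$-guarantee as a black box, orienting the undirected tour into a strong subdigraph thanks to symmetry, enumerating the circuits of a Hamiltonian cycle plus self-loops, and assembling the final inequality—are routine.
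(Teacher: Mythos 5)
Your proof is correct, but it reaches the $3N$ factor by a genuinely different route than the paper. The paper's argument (Lemma~\ref{lem:tsp_mct}) works globally on the optimal overlay $\mathcal G_o^*$: it covers $\mathcal G_o^*$ by a minimal family of elementary circuits, organizes them into an auxiliary tree, and uses a DFS traversal to splice the circuits into closed walks $\mathcal W_1,\mathcal W_2$ from which a Hamiltonian tour of weight at most $2N^2\tau^*$ is extracted; Christofides then gives $3N$. You instead prove a purely local \emph{diameter bound}: any round trip $u\to v\to u$ along elementary paths in $\mathcal G_o^*$ is a closed walk with at most $2(N-1)$ arcs, its arc multiset decomposes into elementary circuits each of mean weight at most $\tau^*$ by \eqref{eq:cycle_time}, and symmetry plus the triangle inequality for $d_c$ convert the resulting bound $2d_c(u,v)\le 2(N-1)\tau^*$ into $\max_{u,v}d_c(u,v)\le(N-1)\tau^*$; every Hamiltonian tour then trivially has weight at most $N(N-1)\tau^*$. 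All the steps check out: the circuit decomposition of a closed walk is standard, the self-loops $(i,i)$ contribute $s\,T_c(i)\le\tau^*$ to both overlays, and the circuits of the oriented Christofides ring are exactly the self-loops plus the Hamiltonian circuit. Your route is shorter, avoids the delicate traversal construction entirely, and actually yields the sharper guarantee $\tau(\mathcal G_o^{\mathrm{Chr}})\le\tfrac32(N-1)\tau^*$, i.e., roughly a factor $2$ better than the paper's $3N$ (consistent with Observation~\ref{ob:omegaN_tight}, which only rules out $o(N)$). What the paper's heavier machinery buys is a statement (Lemma~\ref{lem:tsp_mct}) phrased directly as ``the optimal tour is a $2N$-approximation of the optimal cycle time,'' which it reuses verbatim for the node-capacitated case in Proposition~\ref{prop:mct_3_approx}; your diameter bound would need the same minor adaptation (replacing $d_c$ by the degree-independent lower bound $d'$) to cover that case.
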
 
The result follows from Christofides' algorithm being a 1.5-approximation algorithm for the Travelling Salesman Problem~\cite{monnot:hal-00003997}, and our proof shows that a solution of the Travelling Salesman Problem provides a $2N$-approximation of $\MCT$. Note that Christofides' algorithm finds \emph{ring} topologies.

\subsection{Node-capacitated networks}
\label{sec:congestion}
When silos do not enjoy high-speed connectivity, congestion at access links can become the dominant contribution to network delays, especially when one silo communicates with many others. Intuitively, in this setting, good overlays will exhibit small degrees. 

If $\mathcal G_o$ is required to be undirected, $\MCT$ can be reduced from the problem of finding the minimum bottleneck spanning tree with bounded degree $\delta>1$ ($\delta$-$\MBST$ for short),\footnote{
A $\delta$-MBST is a spanning tree with degree at most $\delta$ in which the largest edge delay is as small as possible.
} which is NP-hard.
\begin{prop}
    \label{prop:mctu_nphard}
    In node-capacitated networks $\MCT$ is NP-hard even when the overlay is required to be undirected.
\end{prop}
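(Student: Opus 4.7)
I will reduce the decision version of $\delta$-$\MBST$ to $\MCT$ with undirected overlay. An instance of $\delta$-$\MBST$ consists of an undirected weighted graph $G=(V,E,w)$, an integer $\delta>1$, and a threshold $B$; the question is whether $G$ has a spanning tree with every vertex of degree at most $\delta$ and every edge of weight at most $B$. This problem is NP-hard (for $\delta=2$ it specializes to bottleneck Hamiltonian path).

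From an instance $(G,w,\delta,B)$ I build an $\MCT$ instance as follows. Set $\mathcal V=V$ and $\mathcal E_c=\{(i,j),(j,i):\{i,j\}\in E\}$, fix any integer $L\ge 1$ and any model size $M>0$, and set $T_c(i)=0$, $A(i',j')=+\infty$, $\upcap(i)=\dncap(i)=M/L$, and
\[
l(i,j)=\begin{cases}0,& w(\{i,j\})\le B,\\ L(\delta+1),& w(\{i,j\})>B.\end{cases}
\]
With these parameters, \eqref{eq:do} reduces to $d_o(i,j)=l(i,j)+L\cdot\max(|\mathcal N_i^-|,|\mathcal N_j^+|)$, and, because $\mathcal G_o$ is undirected, $|\mathcal N_i^-|=|\mathcal N_i^+|=\deg_{\mathcal G_o}(i)$. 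The construction is clearly polynomial, so it suffices to show that $\tau^*(\mathcal G_c)\le L\delta$ if and only if the $\delta$-$\MBST$ instance is a yes-instance.

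The first step is to show the optimal overlay may be taken to be a spanning tree. If $\mathcal G_o$ contains a (graph-theoretic) cycle, I remove any edge $e$ on that cycle: the overlay stays strongly connected, the degrees of the two endpoints of $e$ strictly drop, so $d_o$ on every surviving edge does not increase, and the set of circuits strictly shrinks. Hence, by~\eqref{eq:cycle_time}, $\tau$ does not increase. Iterating yields a spanning tree $T$ with $\tau(T)\le\tau(\mathcal G_o)$. For such a tree overlay, the only simple circuits are $(i,j,i)$ for each edge $\{i,j\}\in\mathcal E_o$, giving $\tau(T)=\max_{\{i,j\}\in\mathcal E_o}d_o(i,j)$; longer circuits reuse tree edges and have no larger ratio.

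The equivalence then follows by a direct calibration. If $T$ is a spanning tree of $G$ with $\max\deg\le\delta$ and $\max w\le B$, using it as overlay yields $d_o(i,j)=0+L\max(\deg_T(i),\deg_T(j))\le L\delta$, hence $\tau^*\le L\delta$. Conversely, if $\tau^*\le L\delta$, let $T^*$ be an optimal tree overlay; each of its edges $\{i,j\}$ satisfies $l(i,j)+L\max(\deg_{T^*}(i),\deg_{T^*}(j))\le L\delta$, which forces $l(i,j)=0$ (otherwise the left-hand side is already $\geq L(\delta+1)$) and $\max(\deg_{T^*}(i),\deg_{T^*}(j))\le\delta$. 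Thus $T^*$ witnesses a yes-answer to $\delta$-$\MBST$. The main obstacle I anticipate is the tree-reduction step: one must argue carefully that edge removal from a non-tree overlay simultaneously (i)~does not increase the delay of any surviving edge and (ii)~only destroys circuits rather than creating new ones. Once this is pinned down, the remaining calibration of the gap between the penalty $L(\delta+1)$ and the budget $L\delta$ is routine.
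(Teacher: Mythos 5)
Your proof is correct, and it takes a genuinely different route from the paper's. The paper reduces from the (unweighted) degree-constrained spanning tree problem \DCST{} via a node-splitting gadget: each vertex $v$ is duplicated into $v^{(1)},v^{(2)}$ with heterogeneous upload capacities ($M/\upcap(v^{(1)})=1$, $M/\upcap(v^{(2)})=k+1$) so that the pendant $2$-circuit $(v^{(1)},v^{(2)},v^{(1)})$ acts as a degree detector, and the threshold is $\tau_0=k+1$. You instead reduce from the decision version of $\delta$-\MBST{} using uniform capacities $\upcap(i)=\dncap(i)=M/L$, encoding the bottleneck constraint through a latency penalty $L(\delta+1)$ that any feasible overlay must avoid; this dispenses with the gadget, handles the weighted bottleneck version directly, and your calibration ($l=0$ forced, $\max(\deg_i,\deg_j)\le\delta$ forced) is clean. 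A genuine added value of your write-up is the explicit tree-reduction step for node-capacitated networks: you observe that deleting a cycle edge can only decrease node degrees, hence can only decrease every surviving $d_o(i,j)$ while shrinking the set of circuits, so an optimal undirected overlay may be assumed to be a spanning tree. The paper's reverse implication quietly restricts attention to tree overlays (its spanning-tree lemma is stated only for the edge-capacitated case), so your argument closes a step the paper leaves implicit. What the paper's route buys in exchange is that it leans only on the textbook NP-completeness of \DCST{} from Garey--Johnson, whereas you additionally need the (easy, and correctly justified via bottleneck Hamiltonian path) NP-hardness of the decision version of $\delta$-\MBST{}.
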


We propose Algorithm~\ref{alg:our_algorithm} (see Appendix~\ref{sec:appendix_alg}), which combines existing approximation algorithms for $\delta$-$\MBST$ on a particular graph built from $\mathcal G_c$.
\begin{prop}
    \label{prop:6-approx}
    Algorithm \ref{alg:our_algorithm} is a $6$-approximation algorithm for $\MCT$ when $\mathcal{G}_c$ is node-capacitated and Euclidean with  $\upcap(i)\leq \min\left(\frac{\dncap(j)}{N}, A(i',j') \right)$, $\forall (i,j)\in \mathcal{E}_c$, and $\mathcal G_o$ is required to be undirected.
\end{prop}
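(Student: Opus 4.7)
The plan is to show that Algorithm~\ref{alg:our_algorithm} outputs an undirected spanning tree $T$ with $\tau(T)\le 6\,\tau(T^*)$, where $T^*$ is any optimal solution of $\MCT$. I would proceed in three stages.

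\emph{Stage 1: cycle-time formula for undirected trees.} Under the hypothesis $\upcap(i)\le\min(\dncap(j)/N, A(i',j'))$ and with $\mathcal G_o$ undirected (so $|\mathcal N_i^-|=|\mathcal N_i^+|=\deg_{\mathcal G_o}(i)$), the $\min$ in~\eqref{eq:do} is attained by the upload term, giving $d_o(i,j)=s\,T_c(i)+l(i,j)+M\deg_{\mathcal G_o}(i)/\upcap(i)$. Adding any edge to a connected overlay both strictly raises some degrees and creates an additional 2-cycle, so by Lemma~\ref{lem:spanning_tree} we may restrict attention to spanning trees $T$; Lemma~\ref{lem:symmetric_underlay} then identifies the critical circuit as a 2-cycle $(i,j,i)$, yielding
\begin{equation*}
\tau(T)=\max_{\{i,j\}\in T}\bigl[w(i,j)+c(i)\deg_T(i)+c(j)\deg_T(j)\bigr],
\end{equation*}
with $w(i,j):=\tfrac{s\,(T_c(i)+T_c(j))}{2}+l(i,j)$ and $c(i):=\tfrac{M}{2\,\upcap(i)}$. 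The weight $w$ is symmetric and satisfies the triangle inequality because $\mathcal G_c$ is Euclidean.

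\emph{Stage 2: lower bounds on $\mathrm{OPT}$.} Writing $\mathrm{OPT}=\tau(T^*)$ and $W^*:=\max_{\{i,j\}\in T^*}w(i,j)$, evaluating the formula of Stage~1 on $T^*$ yields
\begin{equation*}
\mathrm{OPT}\ge W^*\quad\text{and}\quad\mathrm{OPT}\ge c(i)\deg_{T^*}(i)\ge c(i)\quad\forall i\in\mathcal V,
\end{equation*}
since every vertex of a spanning tree has degree at least~$1$. The pointwise inequality $c(i)\le\mathrm{OPT}$ for \emph{every} vertex, uniformly in $i$, is the decisive enabling bound. In addition, $W^*\ge W^{\mathrm{MST}}$, the bottleneck weight of any minimum spanning tree of the auxiliary Euclidean graph $\tilde{\mathcal G}_c=(\mathcal V,\mathcal E_c^{(u)})$ built with weights~$w$, because $T^*$ is itself a spanning tree.

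\emph{Stage 3: the $6$-approximation.} Algorithm~\ref{alg:our_algorithm} constructs $\tilde{\mathcal G}_c$ with edge weights $w$ and invokes a $\delta$-$\MBST$ approximation subroutine for each candidate $\delta\in\{2,\ldots,N-1\}$, returning the candidate of minimum cycle time. For $\delta=2$, a Parker--Rardin-style bottleneck TSP subroutine in the metric graph $\tilde{\mathcal G}_c$ returns a Hamiltonian path $P$ whose largest edge weight is at most $2\,W^{\mathrm{MST}}\le 2\,\mathrm{OPT}$. Since $\deg_P(\cdot)\le 2$, the formula of Stage~1 combined with the pointwise bound $c(\cdot)\le\mathrm{OPT}$ of Stage~2 gives
\begin{equation*}
\tau(P)\le\max_{\{i,j\}\in P}\bigl[w(i,j)+2c(i)+2c(j)\bigr]\le 2\,\mathrm{OPT}+4\,\mathrm{OPT}=6\,\mathrm{OPT}.
\end{equation*}
Because Algorithm~\ref{alg:our_algorithm} reports the minimum-cycle-time tree among its candidates, its output $T$ satisfies $\tau(T)\le\tau(P)\le 6\,\mathrm{OPT}$.

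\emph{Main obstacle.} The technically delicate step is Stage~3: one must verify that the degree-$2$ MBST subroutine achieves bottleneck $\le 2W^{\mathrm{MST}}$ (and not merely $2$ times the optimum degree-$2$ bottleneck, which is a weaker guarantee), and this is precisely where the Euclidean hypothesis is used in an essential way via the triangle inequality on $w$---without it, the shortcut arguments underlying metric bottleneck-TSP $2$-approximations fail. A secondary bookkeeping point is that the two endpoints of $P$ have degree~$1$, which only improves the bound, and that the self-loop terms $d_o(i,i)=s\,T_c(i)$ are automatically dominated since $s\,T_c(i)\le 2w(i,j)$ for any incident edge and $s\,T_c(i)\le\mathrm{OPT}$ by the analogous self-loop bound applied to $T^*$.
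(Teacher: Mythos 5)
Your Stages 1 and 2 are sound and essentially reproduce the paper's Lemmas~\ref{lem:spanning_tree} and~\ref{lem:symmetric_underlay} together with the degree-$\geq 1$ argument inside Lemma~\ref{lem:bottleneck_vs_mct}. The gap is in Stage~3, exactly at the step you yourself flag as delicate: you assert that a ``Parker--Rardin-style'' subroutine returns a Hamiltonian path $P$ whose bottleneck in the weight $w$ is at most $2\,W^{\mathrm{MST}}$. No such guarantee is available. The Parker--Rardin factor~$2$ for bottleneck TSP is measured against the bottleneck of a minimum-bottleneck \emph{biconnected} spanning subgraph (via Fleischner's theorem on squares of $2$-connected graphs); that quantity is in general strictly larger than $W^{\mathrm{MST}}$ and is not upper-bounded by $\mathrm{OPT}$, because the optimal \MCT{} overlay is a tree, not a biconnected graph. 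The guarantee that \emph{is} available relative to the MST bottleneck is the factor~$3$ obtained from a Hamiltonian path in the cube of the MST (each cube edge shortcuts a tree path of at most three edges, so the triangle inequality gives weight at most $3W^{\mathrm{MST}}$); the square of a tree is Hamiltonian only for caterpillars, so the factor~$2$ you need cannot be extracted from the MST alone. This factor-$3$ result is what the paper actually invokes (\cite[Thm.~8]{doi:10.1002/net.21710}, lines~\ref{algoline:2BSTstart}--\ref{algoline:2BSTend} of Algorithm~\ref{alg:our_algorithm}).

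The gap is not merely a citation problem: plugging the correct factor~$3$ into your additive decomposition yields only $\tau(P)\le 3\,\mathrm{OPT}+2c(i)+2c(j)\le 3\,\mathrm{OPT}+4\,\mathrm{OPT}=7\,\mathrm{OPT}$. The paper reaches $6$ by different bookkeeping: it folds the access-link terms $M/\upcap(\cdot)$ \emph{into} the auxiliary edge weight, $d_c^{(u)}(i,j)=w(i,j)+c(i)+c(j)$, before running the MBST machinery. Passing from degree~$1$ to degree~$2$ then costs only a global multiplicative factor $2$, since $w+2c(i)+2c(j)\le 2\left(w+c(i)+c(j)\right)$ because $w\ge 0$; Lemma~\ref{lem:bottleneck_vs_mct} gives $B(\mathcal{T}_{MBST}(\mathcal{G}^{(u)}_c))\le\tau^{*}$ for the bundled weight, and the cube-of-MST path satisfies $B(\mathcal{H})\le 3B(\mathcal{T}_{MBST}(\mathcal{G}^{(u)}_c))$, whence $\tau(\mathcal{H})\le 2\cdot 3\cdot\tau^{*}=6\tau^{*}$. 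To repair your argument you must either prove the $2W^{\mathrm{MST}}$ bottleneck-path bound (which you have not done and which does not follow from standard results), or switch to the bundled weights as the paper does.
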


Finding directed overlays is obviously an NP-hard problem also for node-capacitated networks. Christofides' algorithm holds its approximation factor also in this more general case:
\begin{prop}
    \label{prop:mct_3_approx}
    Christofides' algorithm is a $3N$-approximation algorithm for $\MCT$ when $\mathcal{G}_c$ is node-capacitated and Euclidean.
\end{prop}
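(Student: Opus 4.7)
The plan is to adapt the proof of Proposition~\ref{prop:mctz_3-approx}, noting two observations that keep the $3N$ ratio valid once delays become degree-dependent. Christofides' algorithm is run on the vertex set $\mathcal V$ equipped with the symmetric Euclidean weights $d_c(i,j)$, regardless of whether we are in the edge- or node-capacitated setting; it therefore returns a Hamiltonian cycle $H$ with $d_c(H)\le 1.5\,L^*_{\TSP}$, where $L^*_{\TSP}$ denotes the length of an optimal TSP tour on $(\mathcal V, d_c)$.

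For the upper bound on $\tau(H)$, the crucial new observation is that every vertex of the ring $H$ has in-degree and out-degree exactly one. Substituting $|\mathcal N_i^-|=|\mathcal N_j^+|=1$ into~\eqref{eq:do} collapses the degree factors in the denominator, so $d_o^H(i,j)=d_c(i,j)$ for every edge of $H$. The ring has a unique simple circuit of length $N$, so \eqref{eq:cycle_time} gives $\tau(H)=d_c(H)/N\le 1.5\,L^*_{\TSP}/N$.

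For the lower bound on $\tau^*\triangleq\tau(\mathcal G_o^*)$ I would rely on two facts. First, for \emph{any} overlay, $d_o(i,j)\ge d_c(i,j)$, because $|\mathcal N_i^-|,|\mathcal N_j^+|\ge 1$ can only shrink the minimum in the denominator of~\eqref{eq:do}. Second, because $\mathcal G_o^*$ is strongly connected with $N$ vertices, concatenating shortest paths between consecutive vertices of an arbitrary cyclic ordering produces a closed walk $W$ in $\mathcal G_o^*$ visiting every vertex with $|W|\le N(N-1)\le 2N^2$. Decomposing $W$ into elementary circuits and applying~\eqref{eq:cycle_time} yields $d_o^*(W)\le \tau^*\,|W|$, while shortcutting $W$ in $\mathcal G_c$ via the Euclidean triangle inequality produces a Hamiltonian cycle $H'$ with $L^*_{\TSP}\le d_c(H')\le d_c(W)\le d_o^*(W)\le 2N^2\,\tau^*$. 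Combining the two bounds gives $\tau(H)\le 1.5\,L^*_{\TSP}/N\le 3N\,\tau^*$.

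The main subtlety is the clean separation between the two uses of the degree factor: Christofides' output is a ring, where the degree penalty in~\eqref{eq:do} disappears, while the only property of arbitrary overlays needed for the lower bound is the trivial edgewise inequality $d_o\ge d_c$. Both directions go through without new obstruction, so essentially no extra ingredient beyond the edge-capacitated argument of Proposition~\ref{prop:mctz_3-approx} is required; if anything, the node-capacitated case is easier to lower-bound, because $d_o$ dominates $d_c$ term by term.
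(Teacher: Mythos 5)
Your proof is correct, and it reproduces the paper's overall skeleton: run Christofides on the ``degree-free'' node-capacitated weights, observe that a directed ring has $|\mathcal N_i^-|=|\mathcal N_j^+|=1$ so that the overlay delays on the ring coincide with those weights, and then compare the optimal tour length with $N\tau^*$. Where you genuinely depart from the paper is in how the comparison $L^*_{\TSP}\le 2N^2\tau^*$ is obtained. The paper proves this as Lemma~\ref{lem:tsp_mct}: it covers the optimal overlay $\mathcal G_o^*$ by a minimal family of elementary circuits, builds an auxiliary graph on these circuits, takes a spanning tree, orders the circuits via a DFS traversal, and constructs two nested spanning closed walks $\mathcal W_1,\mathcal W_2$ with $d(\mathcal W_1)\le 2\sum_i d(\mathcal C_i)$ and $|\mathcal W_2|=\sum_i|\mathcal C_i|\le N^2$. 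You instead fix an arbitrary cyclic ordering of the $N$ vertices, concatenate shortest paths of $\mathcal G_o^*$ between consecutive vertices to get a spanning closed walk with $|W|\le N(N-1)$, bound $d_o(W)\le\tau^*|W|$ by decomposing the closed walk into elementary circuits and applying~\eqref{eq:cycle_time} to each, and then shortcut in the complete Euclidean graph. This is simpler (it avoids the circuit-cover/DFS machinery entirely), yields the same $2N$ factor with a marginally sharper walk-length bound, and your explicit remark that $d_o\ge d_c$ edgewise for \emph{any} overlay is exactly the step the paper leaves implicit when it applies Lemma~\ref{lem:tsp_mct} to the auxiliary graph $\mathcal G'_c$ while measuring the optimum with the true overlay-dependent delays. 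Nothing essential is missing from your argument.
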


\section{Numerical Experiments}
\label{sec:experiments}
We adapted PyTorch with the MPI backend to run DPASGD (see~\eqref{equ:gossip_updates}) on a GPU cluster. We also developed a separate network simulator that takes as input an arbitrary underlay topology described in the Graph Modelling Language~\cite{himsolt1997gml} and silos' computation times and calculates the time instants at which local models $\vw_i(k)$ are computed according to~\eqref{equ:gossip_updates} (Appendix~\ref{sec:appendix_time_simulator}).
While PyTorch trains the model as fast as the cluster permits, the network simulator reconstructs the real timeline on the considered underlay. {The code is available at \url{https://github.com/omarfoq/communication-in-cross-silo-fl}.}

We considered three real topologies from \emph{Rocketfuel engine}~\cite{rocketfuel_10.1109/TNET.2003.822655} (Exodus and Ebone) and from \emph{The Internet Topology Zoo}~\cite{zoo_6027859} (G\'eant), and two synthetic topologies (AWS North-America and Gaia) built from the geographical locations of AWS data centers~\cite{gaia_10.5555/3154630.3154682,amazon} (Table~\ref{tab:topologies_cycle_time}). These topologies have between 11 and 87 nodes located in the same continent with the exception of Gaia, which spans four continents. We considered that each node is connected to a geographically close silo by a symmetric access link. See Appendixes~\ref{sec:appendix_exp_details} and~\ref{sec:appendix_full_experiments} for a detailed description of the experiments and additional results. 
\begin{table}
    \caption{Datasets and Models. Mini-batch gradient computation time with NVIDIA Tesla P100.}
    \centering
    \scriptsize
    \begin{tabular*}{\textwidth}{ l @{\extracolsep{\fill}} l @{\extracolsep{\fill}} r @{\extracolsep{\fill}} c  @{\extracolsep{\fill}} l @{\extracolsep{\fill}}r@{\extracolsep{\fill}} r@{\extracolsep{\fill}} r}
        \toprule
        \multirow{2}{*}{\textbf{Dataset}} & \multirow{2}{*}{\textbf{Task}}  &  \textbf{Samples} & \textbf{Batch} & \multirow{2}{*}{\textbf{Model}} & \textbf{Parameters}& \textbf{Model Size}  & \textbf{Computation}\\
         & & (x $10^3$)  & \textbf{Size}& &  (x $10^3$) & (Mbits)& \textbf{Time} (ms)\\
         \midrule
         Shakespeare \cite{caldas2018leaf, mcmahan2016communicationefficient} & Next-Character Prediction  & $4,226$ & 512 & Stacked-GRU \cite{cho2014properties}& $840$ & $3.23$  & $389.6$\\
         FEMNIST \cite{caldas2018leaf} & Image classification  & $805$ & 128 & 2-layers CNN  & $1,207$ & 4.62 &  4.6\\
         Sentiment140 \cite{Go_Bhayani_Huang_2009} & Sentiment analysis & $ 1,600$ & 512 & GloVe \cite{pennington2014glove}+ LSTM \cite{HochSchm97} & $4,810$ & $18.38$ & $9.8$\\
         iNaturalist \cite{DBLP:journals/corr/HornASSAPB17} & Image classification  & $450$ & 16 & ResNet-18 \cite{he2015deep} & $11,217$ & $42.88$ & $25.4$\\
          \bottomrule
    \end{tabular*}%
    \label{tab:datasets_models} 
\end{table}

We evaluated our solutions on three standard federated datasets from LEAF~\cite{caldas2018leaf} and on iNaturalist dataset~\cite{DBLP:journals/corr/HornASSAPB17} with geolocalized images from over 8,000 different species of plants and animals (Table~\ref{tab:datasets_models}). For LEAF datasets, we generated non-iid data distributions following the procedure in~\cite{li2018federated}. For iNaturalist we assigned half of the images uniformly at random and half to the closest silo obtaining local datasets different in size and in the species represented {(Appendix~\ref{sec:appendix_exp_details})}.

\setlength{\tabcolsep}{2pt}
\begin{table}
    \caption{iNaturalist training over different networks. $1$~Gbps core links capacities, $10$ Gbps access links capacities. One local computation step ($s=1$).}
    \centering
    \resizebox{\columnwidth}{!}{%
        \begin{tabular}{l| c | c | c | c | r | r | r | c | c}
            \toprule
            \multirow{2}{*}{\textbf{Network name}} & \multirow{2}{*}{\textbf{Silos}} & \multirow{2}{*}{\textbf{Links}} & \multicolumn{5}{c|}{\textbf{Cycle time (ms)}} & \multicolumn{2}{c}{\textbf{Ring's training speed-up}}\\
            &  &  & {\scriptsize STAR}  &  {\scriptsize MATCHA$^{(+)}$} &{\scriptsize  MST} & {\scriptsize $\delta$-MBST} & {\scriptsize RING}  & {\scriptsize vs STAR} &   {\scriptsize vs MATCHA$^{(+)}$}\\
            \midrule 
            Gaia \cite{gaia_10.5555/3154630.3154682} & $11$ & $55$ & $391$ & $228~(228)$ &  $138$ &  $138$ &  $\mathbf{118}$  & $2.65$ & $1.54~(1.54)$\\ 
            AWS North America \cite{amazon} & $22$ & $231$ & $288$&  $124~(124)$ &  $90$ &  $90$ &  $\mathbf{81}$  & $3.41$ & $1.47~(1.47)$\\
            Géant \cite{geant} & $40$ & $61$ & $634$ & $452~(106)$ &  $\mathbf{101}$ &  $\mathbf{101}$ &  $109$  & $4.85$ & $3.46~(0.81)$\\
            Exodus~\cite{mahajan2002inferring} & $79$ & $147$ & $912$ &  $ 593~(142)$ &  $145$ &  $145$ &  $\mathbf{103}$ & $8.78$ & $5.71~(1.37)$\\
            Ebone~\cite{mahajan2002inferring} & $87$ & $161$ & $902$& $580~(123)$ & $122$ & $122$ & $\mathbf{95}$ & $8.83$ & $6.09~(1.29)$\\
            \bottomrule
        \end{tabular}%
    }
    \label{tab:topologies_cycle_time}
\end{table}

\interfootnotelinepenalty=10000
Table~\ref{tab:topologies_cycle_time} shows the effect of 6 different overlays when training ResNet-18 over iNaturalist in networks with  capacities equal to 1~Gbps and 10~Gbps for core links and access links, respectively.\footnote{
    The delay in the core network is determined by the available bandwidth as in~\eqref{eq:do}. Available bandwidths are often limited to tens or hundreds of Mbps even over inter-datacenter links with capacities between 100~Gbps and 1~Tbps~\cite{gaia_10.5555/3154630.3154682,liu17,persico17,kathiravelu18}. By selecting $1$~Gbps core links in our simulator, which ignores other traffic, we obtain available bandwidth distributions comparable to those observed in experimental studies like~\cite{gaia_10.5555/3154630.3154682} (Appendix~\ref{sec:appendix_exp_details}).
} 
These overlays are \textit{(1)}~the STAR, corresponding to the usual {server-client} setting, where the orchestrator (located at the node with the highest load centrality~\cite{brandes08}) averages all models at each communication round, \textit{(2)}~a dynamic topology built from MATCHA starting from the connectivity graph, \textit{(3)} one built starting from the underlay and denoted as MATCHA$^{+}$ (in both cases MATCHA's parameter $C_b$ equals $0.5$ as in experiments in ~\cite{wang2019matcha}\footnote{
    {Additional experiments fine tuning $C_{b}$ were carried out, conclusions remain the same (Appendix~\ref{sec:appendix_c_b}).}
}), \textit{(4)}~the minimum spanning tree (MST) from Prop.~\ref{prop:symmetric_underlay}, \textit{(5)}~the $\delta$-minimum bottleneck tree ($\delta$-MBST) from Prop.~\ref{prop:6-approx}, and \textit{(6)}~the directed RING from Prop.~\ref{prop:mct_3_approx}. In this particular setting, $\delta$-MBST selects the same overlay as MST. The consensus matrix $\mA$ is selected according to the local-degree rule~\cite{Boyd2004_1272421}.\footnote{{Additional experiments were conducted selecting the matrix $\mA$ as solution of the fastest distributed linear averaging problem defined in~\cite{Boyd2004_1272421} (Appendix~\ref{sec:appendix_full_inaturalist}).}}

The overlays found by our algorithms achieve a higher throughput (smaller cycle time) than the STAR (the {server-client} architecture) and, in most cases, than state-of-the-art MATCHA$^{(+)}$.
\footnote{
    As MATCHA and MATCHA$^{(+)}$ select random overlays at each iteration, we compute their average cycle time.
}
In particular, the RING is between 3.3 ($\approx 391/118$ on Gaia) and 9.4 ($\approx 902/95$ on Ebone) times faster than the STAR and between 1.5 and 6 times faster than MATCHA. MATCHA$^{+}$ relies on the knowledge of the underlay---probably an unrealistic assumption in an Internet setting---while our algorithms only require information about the connectivity graph. Still, the RING is also faster than MATCHA$^+$ but on G\'eant network (where MST is the fastest overlay). From now on, we show only the results for MATCHA$^+$, as it outperforms MATCHA.

\begin{figure*}
    \centering
    \begin{subfigure}[b]{0.24\textwidth}  
        \centering 
        \includegraphics[width=\textwidth, height=0.8\textwidth]{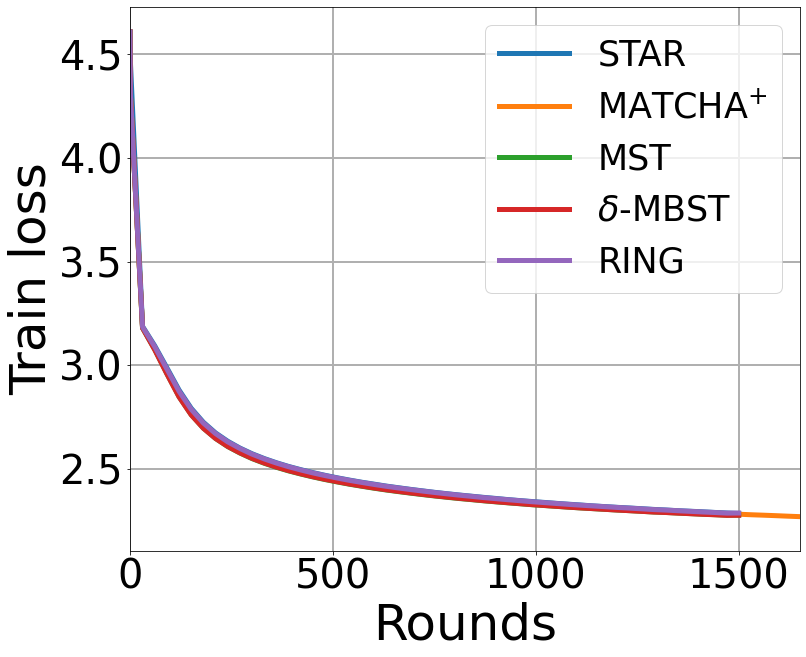}
    \end{subfigure}
    \hfill
    \begin{subfigure}[b]{0.24\textwidth}
        \centering
        \includegraphics[width=\textwidth, height=0.8\textwidth]{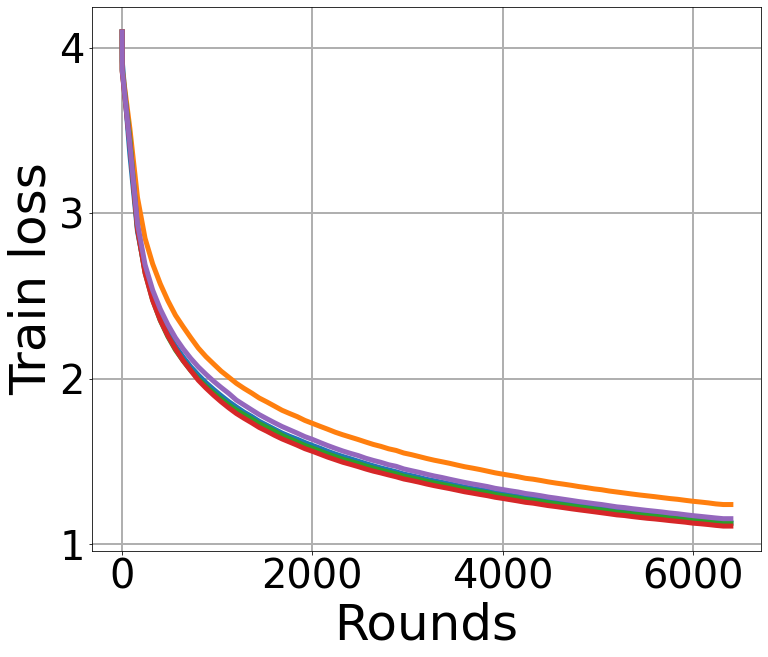}
    \end{subfigure}
    \hfill
    \begin{subfigure}[b]{0.24\textwidth}   
        \centering 
        \includegraphics[width=\textwidth, height=0.8\textwidth]{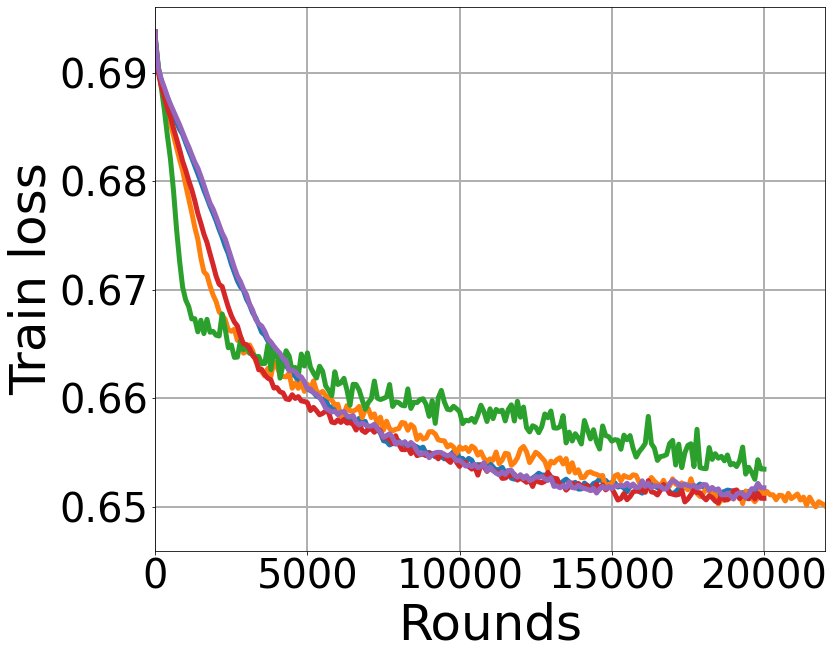}
    \end{subfigure}
    \hfill
    \begin{subfigure}[b]{0.24\textwidth}   
        \centering 
        \includegraphics[width=\textwidth, height=0.8\textwidth]{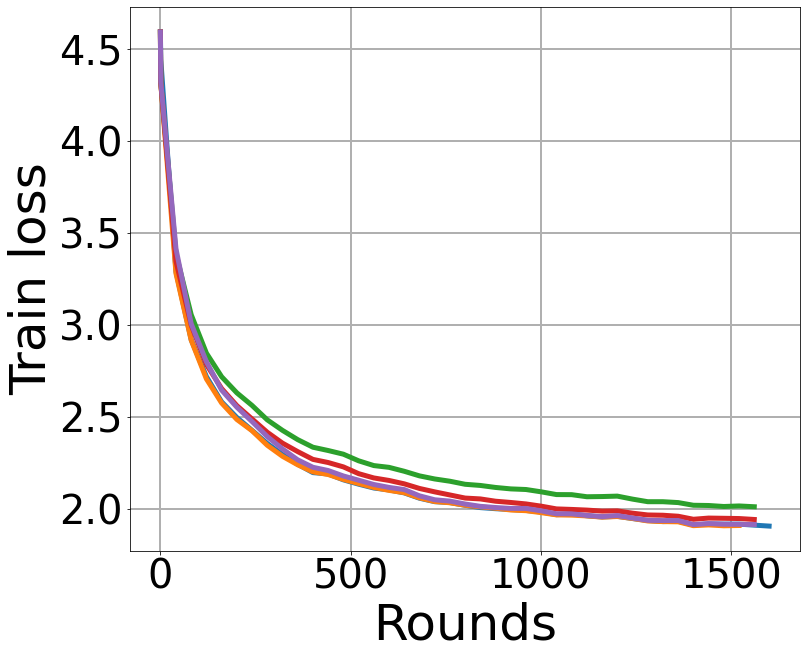}
    \end{subfigure}
        
    \begin{subfigure}[b]{0.24\textwidth}  
        \centering 
        \includegraphics[width=\textwidth, height=0.8\textwidth]{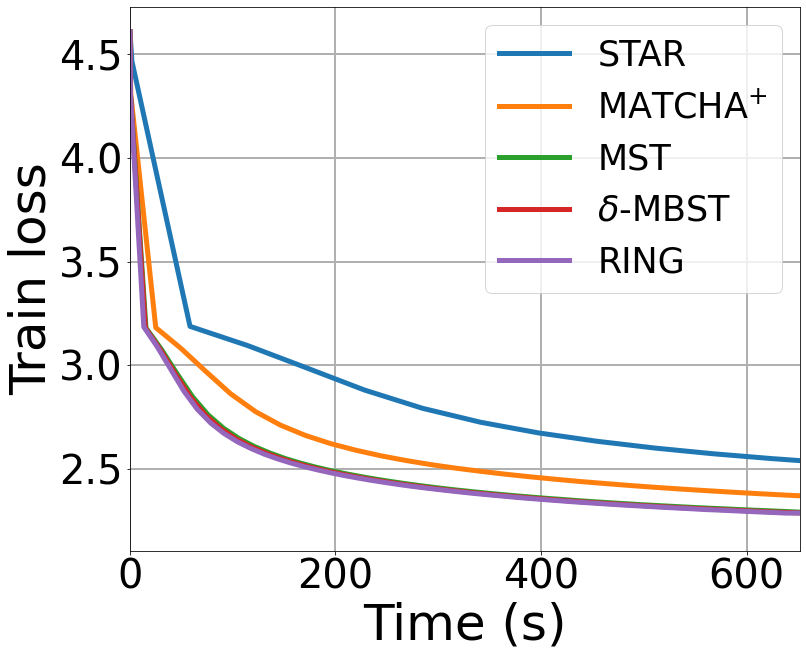}
        \caption[]%
        {{\small Shakespeare}}    
        \label{f:shakespeare}
    \end{subfigure}
    \hfill
    \begin{subfigure}[b]{0.24\textwidth}
        \centering
        \includegraphics[width=\textwidth, height=0.8\textwidth]{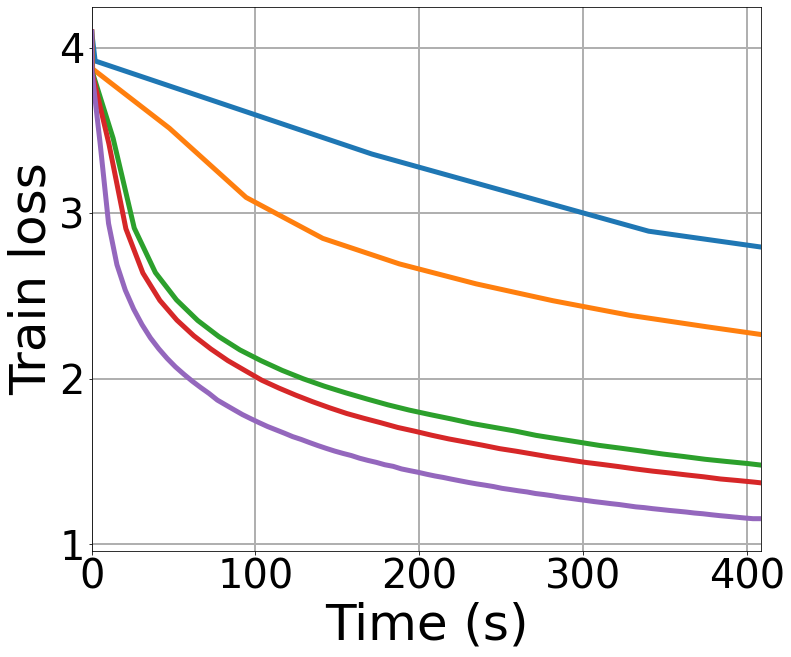}
        \caption[]%
        {{\small FEMNIST}}    
        \label{f:femnist}
    \end{subfigure}
    \hfill
    \begin{subfigure}[b]{0.24\textwidth}   
        \centering 
        \includegraphics[width=\textwidth, height=0.8\textwidth]{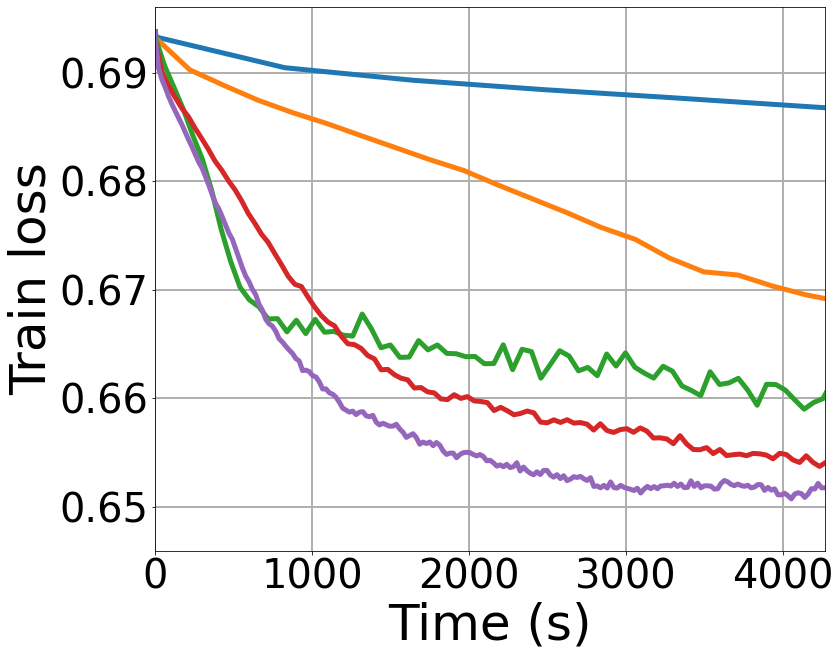}
        \caption[]%
        {{\small Sentiment140}}    
        \label{f:sent140}
    \end{subfigure}
    \hfill
    \begin{subfigure}[b]{0.24\textwidth}   
        \centering 
        \includegraphics[width=\textwidth, height=0.8\textwidth]{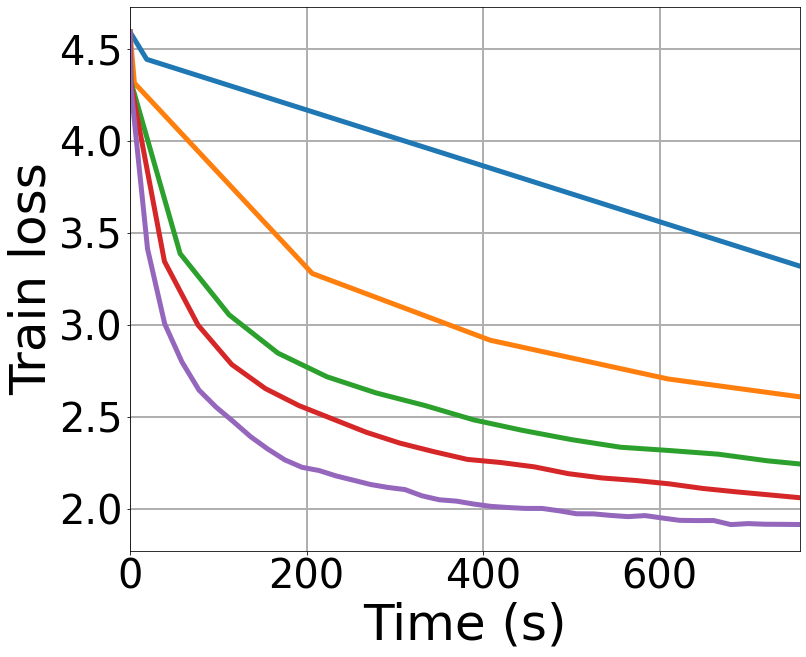}
        \caption[]%
        {{\small iNaturalist}}    
        \label{f:inaturalist}
    \end{subfigure}
    \caption[]
    {\small Effect of overlays on the convergence w.r.t.~communication rounds (top row) and wall-clock time (bottom row) when training four different datasets on AWS North America underlay. $1$~Gbps core links capacities, $100$~Mbps access links capacities, $s=1$.} 
    \label{f:training_for_all_data_sets}
\end{figure*}

The final training time is the product of the cycle time and the number of communication rounds required to converge. The overlay also influences the number of communication rounds, with sparser overlays demanding more rounds~\cite{ndeic_834019_Tradeoffs,Duchi_2012}. The last two columns in Table \ref{tab:topologies_cycle_time} show that this is a second order effect: the RING requires at most 20\% more communication rounds than the STAR and then maintains almost the same relative performance in terms of the training time.\footnote{
    Training time is evaluated as the time to reach a training accuracy equal to $65\%$, $55\%$, $55\%$, $50\%$ and $50\%$ for Gaia, AWS North America, Géant, Exodus, and Ebone networks, respectively. Note that data distribution is different in each network, so that a different global model is learned when solving Problem~\eqref{eq:optimization_problem_constraints} {(see explanations in Appendix~\ref{sec:appendix_data_partiotion})}.
} 
These results (and those in Fig.~\ref{f:training_for_all_data_sets}) confirm that the number of communication rounds to converge is weakly sensitive to the topology (as already observed in~\cite{NIPS2017_7117,pmlr-v80-lian18a,pmlr-v97-koloskova19a,Luo_2019} and partially explained in~\cite{olshevsky2019nonasymptotic,,DBLP:journals/corr/abs-1811-10792,neglia:hal-02430485}). The conclusion is that overlays should indeed be designed for throughput improvement rather than to optimize their spectral properties: the topologies selected by our algorithms achieve faster training time than the STAR, which has optimal spectral properties, and MATCHA/MATCHA$^{(+)}$, which optimize spectral properties given a communication budget.

The same qualitative results hold for other datasets and Fig.~\ref{f:training_for_all_data_sets} shows the training loss versus the number of communication rounds (top row) and versus time (bottom row)  when training on AWS North America with 100 times slower access links. {Other metrics for model evaluation (e.g., training/test accuracy) are shown in Appendix~\ref{sec:appendix_other_metrics_aws}.} The advantage of designing the topology on the basis of the underlay characteristics is evident also in this setting.

\begin{figure*}
    \centering
    \begin{subfigure}[b]{0.49\textwidth}   
        \centering 
        \includegraphics[width=0.49\linewidth, height=0.40\linewidth]{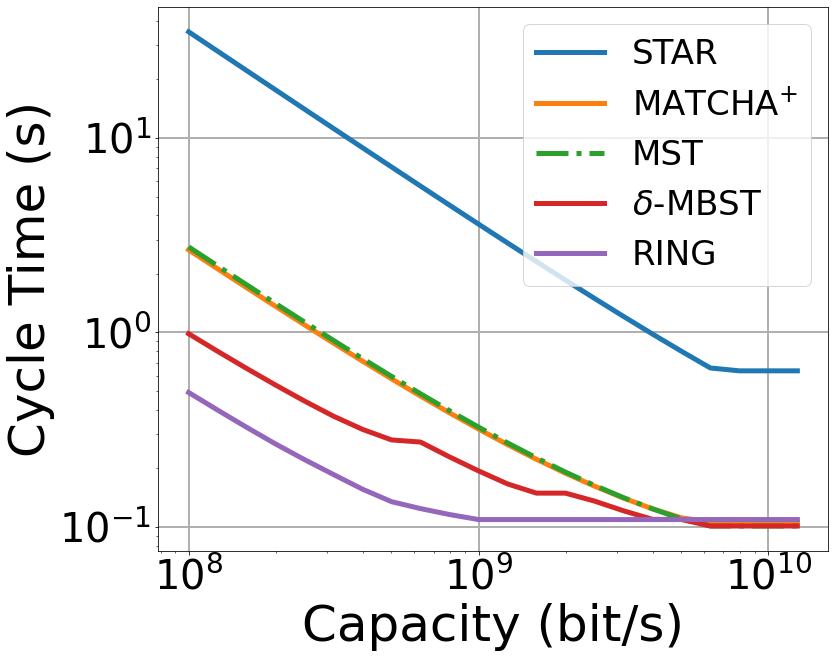}
        \label{f:uniform_capcity_1}
        \hfill
        \includegraphics[width=0.49\linewidth, height=0.40\linewidth]{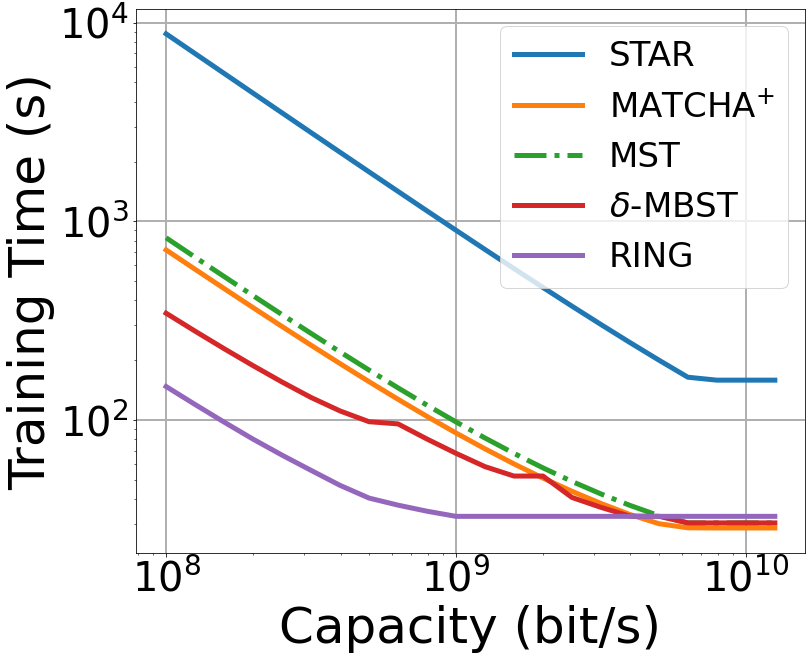}
        \caption[]%
        {{\small Homogeneous access link capacities}}    
        \label{f:uniform_capcity}
    \end{subfigure}
    \hfill
    \begin{subfigure}[b]{0.49\textwidth}   
        \centering 
        \includegraphics[width=0.49\linewidth, height=0.40\linewidth]{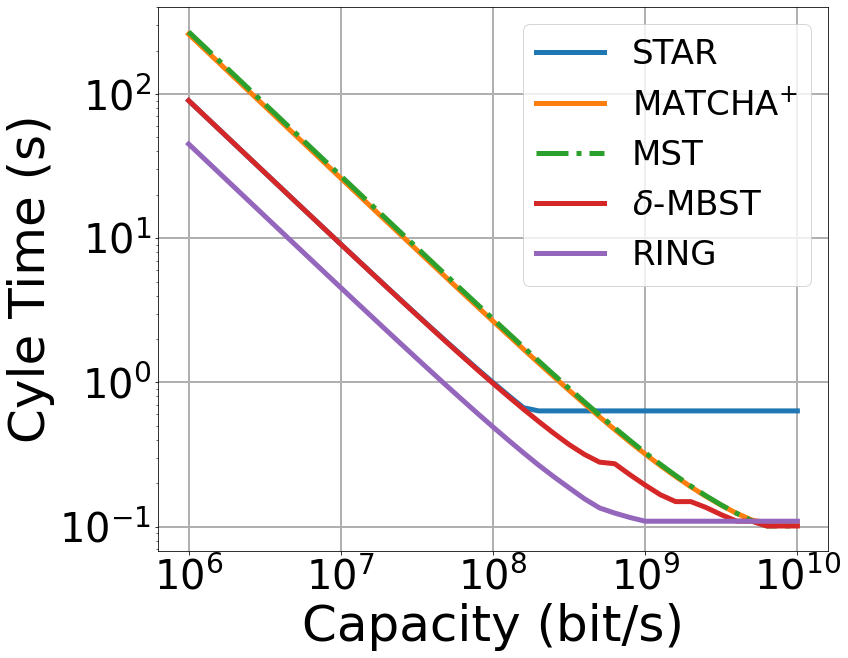}
        \label{f:het_capacity}
        \hfill
        \includegraphics[width=0.49\linewidth, height=0.40\linewidth]{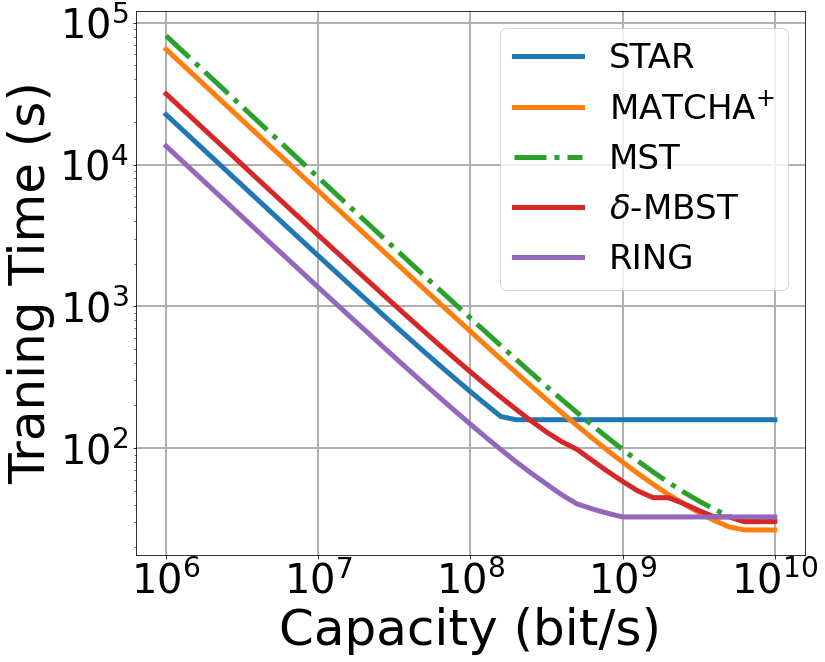}
        \caption[]%
        {{\small Central node with $10$~Gbps access link capacity}}    
        \label{f:het_capacity_1}
    \end{subfigure}
    \caption[]
    {\small Effect of access link capacities on the cycle time and the training time when training iNaturalist  on Géant network. $1$~Gbps core links capacities, $s=1$. (\ref{f:uniform_capcity}): All access links have the same capacity. (\ref{f:het_capacity_1}): One node (the center of the star) has a fixed 10~Gbps access link capacity. The training time is the time when training accuracy reaches $55\%$.} 
    \label{f:cpacity_effect}
\end{figure*}

Figure~\ref{f:cpacity_effect} illustrates the effect of access link speeds on the cycle time and the training time.
When all silos have the same access link capacity (Fig.~\ref{f:uniform_capcity}), for capacity values smaller than 6~Gbps, the RING has the largest throughput followed by $\delta$-MBST, MST and MATCHA$^{+}$ almost paired, and finally the STAR. The advantage of topologies  with small nodes' degrees (like $\delta$-MBST and the RING) is someway expected in the slow access link regime, as access link delays become the dominant term in~\eqref{eq:do}. In particular, Eq.~\eqref{eq:cycle_time} and some simple calculations in  Appendix~\ref{sec:appendix_ring_vs_star}  show that, with $N$ silos, the RING is up to $2 N$ (=80 for G\'eant) times faster than the STAR and $C_b \times \max(\textrm{degree}(\mathcal G_u))$ (= 5 for G\'eant) times faster then MATCHA$^{(+)}$ for slow access links as confirmed in Fig.~\ref{f:uniform_capcity} (left plot). 
What is less expected (but aligned with our observations above about the importance to design overlays for  throughput improvement) is that 
RING's throughput speedups lead to almost as large training time speedups, even larger than those in Table~\ref{tab:topologies_cycle_time}: e.g.~72x in comparison to the STAR and 5.6x in comparison to MATCHA$^{+}$ for 100~Mbps access link capacities.


When the most central node (which is also the center of the STAR) maintains a fixed capacity value equal to $10$~Gbps (Fig.~\ref{f:het_capacity_1}), the STAR performs better, but still is twice slower than the RING and only as fast as $\delta$-MBST. This result may appear surprising at first, but it is another consequence of  Eq.~\eqref{eq:cycle_time} discussed in Appendix~\ref{sec:appendix_ring_vs_star}. Again the relative performance of different overlays in terms of throughput is essentially maintained when looking at the final training time, with differences across topologies emerging only for those with very close throughputs, i.e., MST and MATCHA$^{+}$, and STAR and $\delta$-MBST in the heterogeneous setting of Fig.~\ref{f:het_capacity_1}.

\begin{figure*}
    \centering
    \includegraphics[scale=0.17]{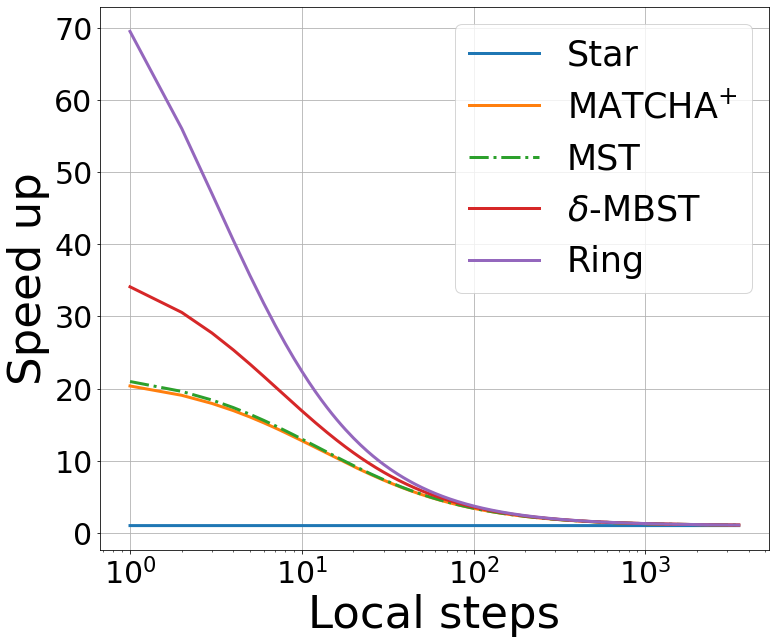}
    \caption{Throughput speedup in comparison to the STAR, when training iNaturalist over Exodus network. All links with $1$~Gbps capacity.}
    \label{fig:cycle_time_vs_local_ste}
\end{figure*}

When local computation requires less time than transmission of model updates, the silo may perform $s$~local computation steps before a communication round. As $s$ increases, the total computation time ($s \times T_c(i)$) becomes dominant in~\eqref{eq:do} and the throughput of different overlays become more and more similar (Fig.~\ref{fig:cycle_time_vs_local_ste}).\footnote{
    In Appendix~\ref{sec:appendix_local_steps}, we show tables similar to Table~\ref{tab:topologies_cycle_time} for different values of $s$.
}
Too many local steps may degrade the quality of the final model, and how to tune~$s$ is still an open research area~\cite{wang2018adaptive,wang2019slowmo, adaptiveJianyuWang,lin2020,woodworth2020,koloskova2020}. Our next research goal is to study this aspect in conjunction with topology design. Intuitively, a faster overlay reduces the number of local steps needed to amortize the communication cost and may lead to better models given the available time budget for training.

\section{Conclusions}
\label{sec:conclusion}
We used the theory of max-plus linear systems to propose topology design algorithms that can significantly speed-up federated learning training by maximizing the system throughput. Our results show that this approach is more promising  than targeting topologies with the best spectral properties, as MATCHA$^{(+)}$ does. {
In future work,
we will explore how to further speed-up training, e.g., by enriching the topologies found by our algorithms with additional links that improve connectivity without decreasing the throughput, and by carefully optimizing the weights of the consensus matrix.
}

\section{{Broader Impact}}
\label{sec:impact}
We have proposed  topology design algorithms that can significantly speed-up federated learning in a cross-silo setting. Improving the efficiency of federated learning can foster its adoption, allowing different entities to share datasets that otherwise would not be available for training.

Federated learning is intended to protect data privacy, as the data is not collected at a single point. At the same time a federated learning system, as any Internet-scale distributed system, may be more vulnerable to different attacks aiming to jeopardize training or to infer some characteristics of the local dataset by looking at the different messages~\cite{fredrikson2015model,shokri2017membership}. Encryption \cite{bost2015machine, nikolaenko2013privacy, bonawitz2017practical} and differential privacy \cite{abadi2016deep} techniques may help preventing such attacks. 

Federated learning is less efficient than training in a highly-optimized computing cluster. It may in particular increase energy training costs, due to a more discontinuous usage of local computing resources and the additional cost of transmitting messages over long distance links. To the best of our knowledge, energetic considerations for federated learning have not been adequately explored, but for a few papers considering FL for mobile devices~\cite{kang19,tran19}.

\section{{Acknowledgements}}
\label{sec:acknowledgement}
{The authors are grateful to the OPAL infrastructure from Universit\'e C\^ote d'Azur for providing computational resources and technical support.}

This work was carried out and partially funded in the framework of a common lab
agreement between Inria and Nokia Bell Labs (ADR `Rethinking the Network').

The authors thank Damiano Carra, Alain Jean-Marie, Marco Lorenzi, and Pietro Michiardi  for their feedback on early versions of this paper,
Fran\c{c}ois Baccelli, Bruno Gaujal, Laurent Hardouin, and Enrico Vicario for pointers to the literature of max-plus linear systems, and the Italian networking community (in particular Mauro Campanella, Marco Canini, Claudio Cicconetti, Francesca Cuomo, Paolo Giaccone, Dario Maggiorini, Marco Mellia, Antonio Pescap\'e, Tommaso Pecorella, and Luca Valcarenghi) for their suggestions to select realistic network scenarios for federated learning.
 Obviously, the authors keep the responsibility  for any error in this paper.

\newpage
\printbibliography

\newpage
\appendix

\newpage
\section{Graph Theory}
\label{sec:appendix_graph}
We now list concepts of graph theory which will be used later on.

\begin{itemize}
    \item \textbf{Predecessor, successor, neighbour}: If in a graph $(i, j) \in \mathcal{E}$, then $i$ is called a predecessor of $j$, $j$ is called a successor of $i$ and $j$, resp. $i$ is called a neighbour of $i$ , resp. $j$. The set of predecessors of $j$ is indicated by $\pi(j)$ (or $\mathcal{N}^{+}_{j}$), the set of all successors of $i$ is denoted $\sigma(i)$ (or $\mathcal{N}^{-}_{i}$) and the set of neighbours of $i$ is denoted $\mathcal{N}_{i}$. Note that in the case of undirected graphs, $\mathcal{N}_{i} = \pi(i) = \sigma(i)$.
    
    \item \textbf{Path, circuit:} A path is a sequence of nodes $(i_{1}, \dots, i_{p}), p > 1$, such that $i_{j} \in \pi(i_{j+1}), j=1, \dots, p-1$. An elementary path is a path where no node appears more then once. When the initial node and the final node coincide, we call the path a circuit. A circuit $C = (i_{1}, \dots, i_{p}=i_{1})$ is an elementary circuit if the path $(i_{1}, \dots, i_{p-1})$ is elementary, an elementary circuit is sometimes referred to as a cycle. If a cycle spans all  vertices of the graph it is called a \emph{Hamiltonian cycle}. The length of circuit $C = (i_{1}, \dots, i_{p})$ is the number of the arcs of which it is composed, i.e., $|C| = p$, and its weight is the sum of the weights of its arcs, i.e, $d(C)=\sum_{k =1}^{p-1}d(i_{k}, i_{k+1})$. 
    
    \item \textbf{Subgraph, spanning subgraph:} Given a graph $\mathcal{G}=(\mathcal{V}, \mathcal{E})$, a graph $\mathcal{G}'=(\mathcal{V}', \mathcal{E}')$ is said to be a subgraph of $\mathcal{G}$ if $\mathcal{V}' \subset \mathcal{V}$ and 
    $\mathcal E \subset \mathcal E'$.
    $\mathcal{G}'$ is said to be a spanning subgraph if $\mathcal{V}' = \mathcal{V}$.
    
    \item \textbf{Strongly connected graph:} A digraph is said to be \emph{strongly connected} or \emph{strong} if for any two different nodes $i$ and $j$ in $\mathcal{V}$ there exists a path from $i$ to $j$. 
    
    \item \textbf{Optimal tour}: In a Hamiltonian graph (i.e., a graph having a Hamiltonian cycle) a Hamiltonian cycle with minimum weight is called an \emph{optimal tour}. Finding the optimal tour in a complete graph is a well known problem and is referred to as the Traveling Salesman Problem (TSP), see for example \cite{10.5555/1374811}.
    
    \item \textbf{Tree, acyclic graph, and Minimum Spanning Tree (MST)}: A tree, or equivalently a connected acyclic undirected graph, is an undirected graph in which any two vertices are connected by exactly one path. An acyclic graph $\mathcal T$ is said to be a spanning tree of an undirected graph $\mathcal{G}$ if $\mathcal T$ is a connected spanning subgraph of $\mathcal{G}$. $\mathcal T$ is said to be an MST of $\mathcal{G}$ if it has minimal weight (the weight of a tree is the sum of the weights of all its edges) among all spanning trees of $\mathcal{G}$.
    
    \item \textbf{Cut, cut-set, and cut property}: A \emph{cut} is a partition of the vertices of a graph into two disjoint subsets. For a cut $c$, the cut-set is the set of edges connecting two nodes from the two disjoint subsets. In a tree, deleting an edge, induces a partition of the set of vertices. For any cut $c$ of the graph, if the weight of an edge $e$ in the cut-set of $c$ is strictly smaller than the weights of all other edges of the cut-set of $c$, then this edge belongs to all MSTs of the graph. 
\end{itemize}

\newpage
\section{On STAR and \texorpdfstring{MATCHA$^{(+)}$}{mat}  Cycle Times}
\label{sec:appendix_ring_vs_star}
For a graph $\mathcal G$, let $\textrm{degree}(i,\mathcal G)$ denote the degree node $i$ in $\mathcal G$ and $\max(\textrm{degree}(\mathcal G))$ denote the maximum degree of the nodes in $\mathcal G$. We show that, with $N$ silos, the RING is up to $2 N$ times faster than the STAR and approximately $C_b \times \max(\textrm{degree}(\mathcal G_u))$ times faster then MATCHA$^{(+)}$ for slow homogeneous access links as shown also in Fig.~\ref{f:uniform_capcity}.

Since access links are homogeneous, i.e.,~$\upcap(i)=\dncap(i)=\upcap(j)=\dncap(j) = C, \forall i,j\in \mathcal{V}$, and slow access links determine the delays, i.e.,~$\upcap(i) \ll A(i',j')$ and $s\times T_c(i)+l(i,j) \ll \frac{M}{A(i,j)}$, according to~\eqref{eq:do}, we have: 
\begin{equation*}
    d_{o}(i, j) = \max\left(|\mathcal{N}_{i}^{-}|, |\mathcal{N}_{j}^{+}|\right) \times \frac{M}{C}. 
\end{equation*}

Then, the cycle time of the RING can be obtained from~\eqref{eq:cycle_time}:
\begin{equation*}
    \tau_{\text{RING}} = \frac{\sum_{i=1}^{N}d_{o}(i, i+1)}{N} = \frac{\frac{M}{C} \times N}{N} =\frac{M}{C}. 
\end{equation*}
Remember that a cycle is the time interval between two consecutive computations at a given silo. For the STAR, it corresponds to the time interval between when the  central node sends the new aggregate model to all silos and when it receives all updated local models. Therefore, we have:
\begin{equation*}
    \tau_{\textrm{STAR}} = \frac{M}{C} \times N + \frac{M}{C} \times N = 2N \times \frac{M}{C}.
\end{equation*}
For MATCHA$^{+}$, at each communication round, we select a random subgraph $\mathcal{G}$. Let $\textrm{degree}(i, \mathcal G)$ denote the degree of silo $i$ in $\mathcal G$. If $\mathcal G$ is drawn, the duration of the communication round is $M/C \times \max(\textrm{degree}(\mathcal G))$. The cycle time is then 
\begin{equation*}
    \tau_{\textrm{MATCHA}^{+}} = \frac{M}{C} \mathbb{E}_{\mathcal{G}}\left[\max\textrm{degree}(\mathcal{G})\right].
\end{equation*}
Let $j$ be the silo such that $j'$ has the largest degree in $\mathcal G_u$.  MATCHA$^{+}$ uses $\max(\textrm{degree}(\mathcal{G}_u))+1$ matchings. The edges of $j$ belong to different matchings. As MATCHA$^{+}$ activates at any communication round a fraction $C_b$ of all matchings, the average degree of node $j$ is $\mathbb{E}_{\mathcal G}\left[\textrm{degree}(j,\mathcal{G})\right] \approx C_b \times \textrm{degree}(j,\mathcal{G}_u) = C_b \times \max(\textrm{degree}(\mathcal{G}_u))$. Then
\begin{equation*}
\tau_{\textrm{MATCHA}^{+}} \gtrapprox \frac{M}{C} \times C_{b} \times  \max(\textrm{degree}(\mathcal G_u)).
\end{equation*}

\newpage
\section{Directed Overlays may be Faster than Undirected Overlays}
\label{sec:appendix_ring_beats_mts}
\begin{figure}
    \begin{subfigure}[b]{0.3\textwidth}
        \centering
        \includegraphics[scale=0.4]{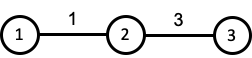}
        \caption{A 3-node example.}
        \label{fig:ring_beats_mts_3}
    \end{subfigure}
    \hfill
    \begin{subfigure}[b]{0.65\textwidth}
        \centering
        \includegraphics[scale=0.4]{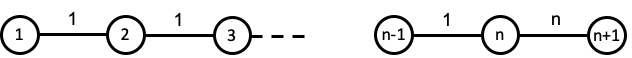}
        \caption{Example with arbitrarily different cycle times.}
        \label{fig:ring_beats_mts_n}
    \end{subfigure}
    \hfill
    \caption{Networks where a directed topology outperforms an undirected one.}
    \label{fig:ring_beats_mts}
\end{figure}

We provide two examples where the underlay network is undirected and still a directed overlay can have shorter cycle time than directed overlays. Examples are in Fig.~\ref{fig:ring_beats_mts}, where numbers associated to links are the corresponding delays (in the two directions).

The network in Fig.~\ref{fig:ring_beats_mts_3} has only three nodes, $\mathcal V = \{1,2,3\}$. We have $d_c(1,2) = d_c(2,1) = 1$, $d_c(2,3) = d_c(3,2) = 3$, and $d_c(1,3) = d_c(3,1) = 4$. The fastest undirected overlay is $G_o^{(u)} = (\mathcal V, \{(1,2),(2,3)\})$. Consider the directed ring $G_o = (\mathcal V, \{(1,2),(2,3),(3,1)\})$.
We have:
\begin{align}
    & \tau\left(\mathcal G_o^{(u)}\right)  = \max\left(\frac{1+1}{2},\frac{3+3}{2},\frac{1+3+1+3}{4}\right) = 3,\\
    & \tau\left(\mathcal G_o\right)  = \frac{1 + 3 + (3 + 1)}{3} = \frac{8}{3} < 3.
\end{align}

The network in Fig.~\ref{fig:ring_beats_mts_n} shows that a directed ring can be arbitrarily faster than an undirected one. Similarly to above, the fastest undirected overlay is $G_o^{(u)}$ and coincides with the underlay. The directed overlay is the ring $(1 \to 2 \to 3 \to \dots n \to n+1 \to 1)$. We have
\begin{align}
    & \tau\left(\mathcal G_o^{(u)}\right)  = n,\\
    & \tau\left(\mathcal G_o\right)   = \frac{(n-1) \times 1 + n + (n+ (n-1) \times 1)}{ n + 1 } = \frac{4 n - 2}{n+1} < 4.
\end{align}
The ratio of the two cycle times can be made arbitrarily large.


\newpage
\section{Approximation Algorithm for \texorpdfstring{$\MCT$}{MCT} on Node-Capacitated Networks}
\label{sec:appendix_alg}
\newcommand{\Algodelta}{\textsc{$\delta$-Prim}}
In this section, we describe  Algorithm~\ref{alg:our_algorithm} that provides an approximate solution for $\MCT$ when the network is node-capacitated and $\mathcal G_c$ is complete. Algorithm~\ref{alg:our_algorithm} combines existing approximation algorithms for $\delta$-$\MBST$ on a particular undirected graph built from $\mathcal G_c$ and denoted by $\mathcal{G}^{(u)}_c$ (lines~\ref{algoline:gprime}-\ref{algoline:d}). Lemma~\ref{lem:bottleneck_vs_mct} establishes a connection  between the bottleneck of the $\MBST$ of $\mathcal{G}^{(u)}_c$ and the cycle time of $\MCT$ on $\mathcal{G}_c$ when the overlay is required to be undirected. To get an approximated $2$-$\MBST$ on $\mathcal{G}^{(u)}_c$, we apply the best known $3$-approximation algorithm from~\cite[Sect.~3.2.1]{doi:10.1002/net.21710} (lines~\ref{algoline:2BSTstart}-\ref{algoline:2BSTend})  which requires $\mathcal{G}^{(u)}_c$ to be Euclidean (Lemma~\ref{lem:g_prime_is_euclidean}), and take its result as one candidate for our solution (line~\ref{algoline:2candidate}). The cube of a graph $\mathcal{G}$, denoted by $\mathcal{G}^3$, is the super-graph of $\mathcal{G}$ such that the edge ($u$, $v$) is in $\mathcal{G}^3$ if and only if there is a path between $u$ and $v$ in $\mathcal{G}$ with three or fewer edges. It has been proved that the cube of a connected graph is Hamiltonian and to find a Hamiltonian path in such a cube can be done in polynomial time~\cite{Karaganis1968OnTC}.
Other $\delta$-BSTs built by Algorithm~\ref{alg:3_mbst} for $3\leq \delta \leq N$ are considered as candidates (lines~\ref{algoline:candidate}-\ref{algoline:otherdelta}) and we finally provide as solution the overlay with the smallest cycle time (line~\ref{algoline:best}).

\IncMargin{2.0em}
\begin{algorithm}[H]
    \label{alg:our_algorithm}
    \SetAlgoLined
    \SetKwInOut{Input}{Input}
    \Indm\Indmm
    \Input{$\mathcal{G}_c = (\mathcal{V}, \mathcal{E}_c)$, uplink capacity $\upcap(i)$, end-to-end delay $l(i,j)$, computation time $T_c(i)$ and model size $M$.}
    \KwResult{Undirected overlay $\mathcal{G}_o$.}
    \Indp\Indpp
    \BlankLine
    Create $\mathcal{G}^{(u)}_{c} = (\mathcal{V}, \mathcal{E}^{(u)}_c)$  where $(i,j)\in \mathcal{E}^{(u)}_c$ iff $(i,j)\in \mathcal{E}_c$ and $(j,i)\in \mathcal{E}_c$ 
    \; \label{algoline:gprime}   
    \For{$(i,j) \in \mathcal{E}^{(u)}_c$}{
        $d_c^{(u)}(i, j) = \large[s\times (T_c(i)+T_c(j)) + l(i, j) + l(j, i) + \frac{M}{\upcap(i)} + \frac{M}{\upcap(j)} \large]/2 $  \label{algoline:d}
    }

    $\mathbb{S} \leftarrow \emptyset$ \tcp*[l]{the set of candidate solutions} 
    \tcc{consider $2$-$\MBST$ approximate solution on $\mathcal{G}^{(u)}_c$ as one candidate}
    $\mathcal{T} \leftarrow$ a minimum weight spanning tree of $\mathcal{G}^{(u)}_c$ \;\label{algoline:2BSTstart}
    $\mathcal{T}^{3}\leftarrow$ the cube of $\mathcal{T}$ \;
    $\mathcal{H} \leftarrow$  a Hamiltonian path in $\mathcal{T}^{3}$
    \;\label{algoline:2BSTend}
    $\mathbb{S} \leftarrow \{\mathcal{H}\}$\;\label{algoline:2candidate}
    \tcc{ consider other $\delta$-BST for $3\leq  \delta \leq N$ as candidates}
    \For{$\delta \in \{3,4,5,...,N\}$\label{algoline:candidate}}{
        $\mathbb{S}\leftarrow \mathbb{S} \cup \{\Algodelta(\mathcal{G}^{(u)}_c)\}$ \tcp{$\Algodelta(\mathcal{G}^{(u)}_c)$  gives a  $\delta$-BST on $\mathcal{G}^{(u)}_c$ \label{algoline:otherdelta}}
    }
    \tcc{ choose the one with the minimum cycle time as output overlay}
    $\mathcal{G}_o \leftarrow \arg\min_{G\in \mathbb{S}} \tilde{\tau}(G)$ \label{algoline:best}
    \caption{Approximation algorithm for $\MCT$ on node-capacitated networks.}
\end{algorithm}
\DecMargin{2.0em}

\IncMargin{2.0em}
\begin{algorithm}[H]
    \label{alg:3_mbst}
    \SetAlgoLined
    \SetKwInOut{Input}{Input}
    \SetKwFunction{Function}{\Algodelta}
    \SetKwProg{Fn}{Function}{:}{}

    \Fn{\Function{$\mathcal{G} = (\mathcal{V}, \mathcal{E})$}}{
        $\mathcal{V}_{T} \coloneqq \left\{v_{0}\right\}$ for some $v_{0} \in \mathcal{V}$\;
        $\mathcal{E}_{T} \coloneqq \left\{\right\}$\;
        $T = (\mathcal{V}_{T}, \mathcal{E}_{T})$\;
        \While{$\left|\mathcal{E}_{T}\right| < \left|\mathcal{V}\right| - 1$}{
            Find the smallest weight edge $(u, v)$ such that $u \in \mathcal{V}_{T}$, $v \not\in \mathcal{V}_{T}$, and $\textsc{degree}_{T}(u) < \delta$\;
            Add $v$ to $\mathcal{V}_{T}$\;
            Add $(u, v)$ to $\mathcal{E}_{T}$\;
    }
    \KwRet $T$ \;
}
\caption{\Algodelta \cite{andersen2018algorithms}}
\end{algorithm}
\DecMargin{2.0em}

\newpage
\section{Proofs}
\label{sec:appendix_proofs}
We use some graph terminology and notation introduced in Appendix~\ref{sec:appendix_graph}.

\subsection{Proof of Proposition \ref{prop:symmetric_underlay}}
\label{sec:appendix_proof_symmetric_underlay}
When we require the overlay $\mathcal G_o$ to be undirected, if we include link $(i,j) \in \mathcal G_c$ then we will also include link $(j,i)$. It is then convenient to consider the undirected graph $\mathcal G_c^{(u)} = (\mathcal V, \mathcal E_c^{(u)})$, where $(i,j) \in \mathcal E_c^{(u)}$ iff  $(i,j) \in \mathcal E_c$ { and $(j,i)\in \mathcal{E}_c$}, from which we want to extract an undirected strong subgraph $\mathcal G_o$ with minimal cycle time. We also associate to each edge $(i,j) \in \mathcal{G}_{c}^{(u)}$ the weight $d_c^{(u)}(i,j)= (d_c(i,j)+d_c(j,i))/2$. Remember that $d_c(i,j)$ is defined as follows \[d_c(i,j)\triangleq s \times T_{c}(i) + l(i,j) + M/A(i',j').\]

Note that an undirected weighted graph can be also seen as a particular directed graph where for each link $(i,j)$ in one direction, there exists a link $(j,i)$ with the opposite direction and the same weight. The concept of cycle time can then immediately be extended to undirected graphs.

\begin{lem}
    \label{lem:spanning_tree}
    Consider the undirected weighted graph $\mathcal G_c^{(u)} = (\mathcal V, \mathcal E_c^{(u)})$, where $(i,j) \in \mathcal E_c^{(u)}$ iff  $(i,j) \in \mathcal E_c$ { and $(j,i)\in \mathcal{E}_c$}. {When $\mathcal G_c$ is edge-capacitated and $\mathcal G_o$ is required to be undirected, the set of solutions $\MCT$ includes a spanning tree of $\mathcal{G}_{c}^{(u)}$.}
\end{lem}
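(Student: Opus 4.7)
The plan is to show that any optimal undirected overlay can be pruned down to a spanning tree without increasing the cycle time, by exploiting two facts specific to the edge-capacitated setting: (i) the per-link delays $d_o(i,j)$ do not depend on the chosen overlay (as noted after equation~\eqref{eq:do_edge}), and (ii) by~\eqref{eq:cycle_time} the cycle time is a maximum over the circuits contained in $\mathcal{G}_o$, so removing edges can only remove circuits from this maximum.

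More concretely, I would proceed as follows. First, let $\mathcal{G}_o^\star$ be any optimal solution of $\MCT$; by the problem statement $\mathcal{G}_o^\star$ is a strong spanning subdigraph of $\mathcal{G}_c$, and since we restrict to undirected overlays, this amounts to $\mathcal{G}_o^\star$ being a connected spanning subgraph of $\mathcal{G}_c^{(u)}$ (each undirected edge $\{i,j\}$ being realised as both arcs $(i,j)$ and $(j,i)$ in the digraph representation used in~\eqref{eq:cycle_time}). Next, suppose $\mathcal{G}_o^\star$ contains an (undirected) cycle. Pick any edge $\{i,j\}$ on this cycle and let $\mathcal{G}_o'$ be the graph obtained by removing it. Because edges on a cycle are not bridges, $\mathcal{G}_o'$ is still connected, hence a feasible undirected overlay.

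The key step is to compare cycle times. Since the network is edge-capacitated, the delays $d_o(u,v)$ on the remaining arcs of $\mathcal{G}_o'$ are identical to those in $\mathcal{G}_o^\star$. Moreover, every directed circuit of $\mathcal{G}_o'$ is also a directed circuit of $\mathcal{G}_o^\star$, so
\begin{equation*}
    \tau(\mathcal{G}_o') \;=\; \max_{\gamma \subset \mathcal{G}_o'} \frac{d_o(\gamma)}{|\gamma|} \;\le\; \max_{\gamma \subset \mathcal{G}_o^\star} \frac{d_o(\gamma)}{|\gamma|} \;=\; \tau(\mathcal{G}_o^\star).
\end{equation*}
Combined with the optimality of $\mathcal{G}_o^\star$, this forces $\tau(\mathcal{G}_o') = \tau(\mathcal{G}_o^\star)$, so $\mathcal{G}_o'$ is itself optimal. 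Iterating this edge-removal procedure (each step decreases the number of edges by one while preserving connectivity and optimality) terminates when no undirected cycle remains, yielding a connected acyclic spanning subgraph of $\mathcal{G}_c^{(u)}$, i.e.~a spanning tree that attains the optimum.

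I do not expect any serious obstacle: the proof is essentially a monotonicity argument built on the two facts highlighted above. The only subtlety is to make sure the equivalence ``$\mathcal{G}_o$ undirected and strong $\iff$ $\mathcal{G}_o$ connected spanning subgraph of $\mathcal{G}_c^{(u)}$'' is handled cleanly, together with the observation that in the max-plus formulation an undirected edge contributes the two arcs $(i,j)$ and $(j,i)$, so a single undirected edge already forms a directed circuit $(i,j,i)$; removing the edge removes this circuit (and all longer circuits through it), which is precisely what makes the inequality above go through.
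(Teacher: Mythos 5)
Your proposal is correct and follows essentially the same route as the paper's proof: both iteratively delete an edge lying on a circuit of an optimal overlay, observe that in the edge-capacitated case the remaining delays are unchanged and the circuit set only shrinks (so the cycle time cannot increase), and conclude that the process terminates in an optimal spanning tree. The only cosmetic differences are that the paper phrases this as a proof by contradiction and removes the maximum-weight edge of each circuit, whereas you argue directly and remove an arbitrary non-bridge edge; neither choice affects the validity of the argument.
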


\begin{proof}
    $\MCT$ is a discrete optimization problem on a finite set,\footnote{
        The set of subgraphs of an undirected graph $\mathcal{G}_{c}$ is finite.
    }
    thus the set of solutions of $\MCT$ is non-empty. Suppose by contradiction that the set of solutions does not contain any spanning tree of $\mathcal{G}_{c}$ and consider $\mathcal{G}_o^{*}$ {to be one of such solutions.}

    As $\mathcal{G}_o^*$ is not a spanning tree and it is strongly connected, there exist circuits in $\mathcal{G}_o^*$. For any circuit $C=(i_{1}, i_{2}, \dots, i_{p}=i_{1})$ in $\mathcal{G}_o^*$, we consider the edge $e_{C}$, such that $d_c^{(u)}(e_{C}) = \max_{k=1,\dots , p-1} d_c^{(u)}(i_{k}, i_{k+1})$. The graph $\hat{\mathcal{G}}_o^{*}$ obtained from $\mathcal{G}_o^{*}$ by deleting $e_{C}$ is a connected spanning subgraph of $\mathcal{G}_c^{(u)}$ and its cycle time is not greater then the cycle time of $\mathcal{G}^{*}_o$. We can now proceed in the same way on $\hat {\mathcal G}_o^*$ until the residual graph has no more circuits and it is then a spanning tree of $\mathcal{G}_c^{(u)}$ with cycle time not greater than the cycle time of $\mathcal{G}^{*}_o$. This tree is also a solution of $\MCT$  contradicting the fact that no spanning tree is in the set of solutions. 
\end{proof}

\begin{lem}
    \label{lem:symmetric_underlay}
    Consider an undirected tree $\mathcal{T} = (\mathcal{V}, \mathcal{E})$, weighted with a delay function $d^{(u)}_{c}: \mathcal{V} \times \mathcal{V} \mapsto \mathbb{R}_{+}$. Its cycle time is $\tau(\mathcal{T}) = \max_{\{i, j\} \in \mathcal{E}} d_{c}^{(u)}(i,j)$.
\end{lem}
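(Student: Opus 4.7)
The plan is to establish both inequalities in $\tau(\mathcal{T}) = \max_{\{i,j\}\in\mathcal{E}}d^{(u)}_c(i,j)$ by viewing the undirected tree $\mathcal{T}$ as a directed graph in which each undirected edge $\{i,j\}\in\mathcal{E}$ is replaced by two arcs $(i,j)$ and $(j,i)$ carrying the same weight $d^{(u)}_c(i,j)$, and then applying formula~\eqref{eq:cycle_time}.

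For the lower bound I would pick an edge $\{i^*,j^*\}\in\mathcal{E}$ achieving the maximum weight and consider the elementary circuit $\gamma^* = (i^*,j^*,i^*)$. A direct calculation gives $d_o(\gamma^*) = d^{(u)}_c(i^*,j^*) + d^{(u)}_c(j^*,i^*) = 2\,d^{(u)}_c(i^*,j^*)$ and $|\gamma^*|=2$, so $d_o(\gamma^*)/|\gamma^*| = d^{(u)}_c(i^*,j^*)$. By~\eqref{eq:cycle_time} this already forces $\tau(\mathcal{T}) \ge \max_{\{i,j\}\in\mathcal{E}} d^{(u)}_c(i,j)$.

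For the upper bound, the key structural fact is that any circuit of a tree (viewed as a symmetrized digraph) uses each directed arc exactly as often as its reverse. To prove it I would invoke the cut property of trees: removing an edge $\{u,v\}\in\mathcal{E}$ disconnects $\mathcal{T}$ into two components $\mathcal{V}_u\ni u$ and $\mathcal{V}_v\ni v$, and any circuit starts and ends at the same vertex, so the number of times it traverses the arc $(u,v)$ must equal the number of times it traverses $(v,u)$. Denoting this common count by $k_{\{u,v\}}$, an arbitrary circuit $\gamma$ of the digraph satisfies
\[
d_o(\gamma) \;=\; 2\sum_{\{i,j\}\in\mathcal{E}} k_{\{i,j\}}\, d^{(u)}_c(i,j), \qquad |\gamma| \;=\; 2\sum_{\{i,j\}\in\mathcal{E}} k_{\{i,j\}}.
\]
Consequently $d_o(\gamma)/|\gamma|$ is a convex combination of the edge weights $\{d^{(u)}_c(i,j)\}_{\{i,j\}\in\mathcal{E}}$ and is therefore bounded above by $\max_{\{i,j\}\in\mathcal{E}} d^{(u)}_c(i,j)$. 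Taking the maximum over all circuits and combining with the lower bound yields the claim.

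The main obstacle is the first step of the upper bound, namely making precise the argument that every circuit of the symmetrized tree traverses each arc the same number of times as its reverse; once this ``equal crossings'' property is stated rigorously (e.g.\ by induction on the number of arcs of the circuit, or directly via the cut property as above), the rest of the proof reduces to the convex-combination bound and a one-line computation for the witness circuit $(i^*,j^*,i^*)$.
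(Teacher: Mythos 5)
Your proof is correct, but it takes a genuinely different route from the paper's. The paper's proof is a two-line argument: it first restricts the maximum in \eqref{eq:cycle_time} to \emph{elementary} circuits (a standard reduction in max-plus theory, since any circuit decomposes into elementary ones whose mean weights dominate), and then observes that in a tree the only elementary circuits are the back-and-forth traversals $(i,j,i)$ of a single edge, each with mean weight $d_c^{(u)}(i,j)$. You instead handle an \emph{arbitrary} circuit directly: the cut induced by each tree edge forces equal numbers of crossings in the two directions, so the mean weight of any circuit is a convex combination of the edge weights and hence at most the maximum, while the witness circuit $(i^*,j^*,i^*)$ gives the matching lower bound. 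Your equal-crossings argument is sound (in a tree, the unique edge across the cut must carry every transition between the two components, and a closed walk crosses any cut an even number of times, alternating directions), and the convex-combination bound is a clean one-liner. What your approach buys is self-containedness --- you never need to invoke the reduction to elementary circuits, which the paper asserts without proof; what the paper's approach buys is brevity, at the cost of leaning on that standard but unproven fact. Both establish the same identity.
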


\begin{proof}
    The cycle time of $\mathcal{T}$ is given by Equation~\eqref{eq:cycle_time}. $\tau(\mathcal{T})   =  \max_{C}\frac{w(C)}{|C|}$,   where the maximum is taken over all the elementary circuits of $\mathcal{T}$. Since $\mathcal{T}$ is acyclic, the only elementary circuits of $\mathcal{T}$ are of the form $(i, j, i)$ for some $\{i, j\} \in \mathcal{E}$. By definition $|(i, j, i)| = 2$ and $w((i, j, i)) = d^{({u})}_{c}(i,j)$. It follows that $\tau(\mathcal{T}) = \max_{\{i, j\} \in \mathcal{E}}\frac{d_{c}^{(u)}(i,j) + d_{c}^{(u)}(j,i)}{2} = \max_{\{i, j\} \in \mathcal{E}} d_{c}^{(u)}(i,j)$.
\end{proof}

\begin{repprop}{prop:symmetric_underlay}
    Consider an undirected weighted graph $\mathcal G_c^{(u)} = (\mathcal V, \mathcal E_c^{(u)})$, where $(i,j) \in \mathcal E_c^{(u)}$ iff  $(i,j) \in \mathcal E_c$ and $(j,i) \in \mathcal E_c$ and where $(i,j) \in \mathcal E_c^{(u)}$ has weight $d_c^{(u)}(i,j)= (d_o(i,j)+d_o(j,i))/2$. A~minimum weight spanning tree of $\mathcal{G}_{c}^{(u)}$ is a solution of $\MCT$ when $\mathcal G_c$ is edge-capacitated and $\mathcal G_o$ is required to be undirected.
\end{repprop}

\begin{proof}
    Denote by $\mathcal{G}^{*}$ the solution of $\MCT$ when $\mathcal G_c$ is edge-capacitated and $\mathcal G_o$ is required to be undirected, and denote $\mathcal{T}^{*}$ an MST of $\mathcal{G}_{c}^{(u)}$ weighted with $d_c^{(u)}$, and suppose by contradiction that $\tau(\mathcal{T}^{*}) > \tau(\mathcal{G}^{*})$.  By Lemma \ref{lem:symmetric_underlay}, it follows that there is an edge $e_{\mathcal{T}^{*}}$ of $\mathcal{T}^{*}$ such that $d_c^{(u)}(e_{\mathcal{T}^{*}}) = \tau(\mathcal{T}^{*})$. Moreover,  it follows that $\forall e \in \mathcal{E}(\mathcal{G}^{*})$, $d_c^{(u)}(e) \leq  \tau(\mathcal{G}^{*}) < \tau(\mathcal{T}^{*}) = d_c^{(u)}(e_{\mathcal{T}^{*}})$. If we remove $e_{\mathcal{T}^{*}}$ from $\mathcal{T}^{*}$, the two components define a cut of $\mathcal{G}_{c}$. The edge of $\mathcal{G}^{*}$, say $e_{cut}$ belonging to the cut-set is such that $d_c^{(u)}(e_{cut}) < d_c^{(u)}(e_{\mathcal{T}^{*}})$, and this is a contradiction with the cut property satisfied by minimum cost spanning trees.
\end{proof}

\subsection{Proof of Proposition \ref{prop:mctz_nphard}}
\label{sec:appendix_proof_nphard_directed}

\begin{repprop}{prop:mctz_nphard}
	$\MCT$ is NP-hard even when $\mathcal{G}_{c}$ is a complete Euclidean edge-capacitated graph.
\end{repprop}

\begin{proof}
    When $\mathcal{G}_{c}$ is an edge-capacitated graph, $d_{c}(i, j) =  s \times T_{c}(i) + l(i, j) + \frac{M}{A(i', j')}$. $\mathcal{G}_{c}$ is complete and Euclidean means that  $d_{c}(i, j) = d_{c}(j, i)$, for all $(i,j) \in \mathcal{V} \times \mathcal{V}$ and that $d_{c}$ verifies triangular inequality, i.e., $d_{c}(i, j) \leq d_{c}(i, k) + d_{c}(k, j)$, for every $i, j, k \in \mathcal{V}$.
    
    We  consider the decision problem $\MCTD$ associated to the particular case of $\MCT$ when $\mathcal{G}_{c}$ is an Euclidean edge-capacitated graph
    and we prove that it is NP-complete.
    
    \begin{problem}
        \label{mct_decisionn}
        \problemtitle{Euclidean Edge-Capacitated Minimal Cycle Time - Decision ($\MCTD$)}
        \probleminput{A strong digraph $\mathcal{G}_{c}\! =\! (\mathcal{V},\mathcal{E}_{c})$, delays function $d_{c}$ and a real number $\tau_{0}$} 
        \problemquestion{Is there a  strong  spanning subdigraph of $\mathcal{G}_c$ with cycle time at most $\tau_{0}$?}
    \end{problem}

    We  first prove that $\MCTD$ is NP.\footnote{A decision problem is NP if we can verify in a polynomial time that the answer for a given instance is YES.} Several algorithms (e.g., Karp's Algorithm \cite{dasdan98}) determines the cycle time of a given graph in a polynomial time. Thus for a proposed solution of  $\MCTD$, we can compute its cycle time in polynomial time, and we can verify if the graph is strongly connected using for example depth first search. It follows that $\MCTD$ is NP.
    
	To prove that $\MCTD$ is NP-complete, we show that Hamiltonian Cycle ($\HC$) can be reduced in a polynomial time to  $\MCTD$, i.e., $\HC \leq_{p} \MCTD$.
	
	Hamiltonian cycle problem is the following decision problem:
	
	\begin{problem}
			\problemtitle{Hamiltonian Cycle ($\HC$)}
			\probleminput{A  connected (undirected) graph $\mathcal{D} = (\mathcal{V},\mathcal{E})$.}
			\problemquestion{Is there a Hamiltonian cycle in $\mathcal{D}$?}
	\end{problem}
	
	Given an instance of $\HC$ with an undirected graph $\mathcal{D}=(\mathcal{V},\mathcal{E})$, we construct an instance of $\MCTD$ with a complete digraph $\mathcal{G}_{c}=(\mathcal{V}, \mathcal{V}\times \mathcal{V})$,  a real number $\tau_{0} = \frac{N+2}{N}$ where $N$ is the size of $\mathcal{V}$, and delay function $d_{c}$, where for a given arbitrary choice of vertex $v_{0}$, $d_{c}$ is defined as:
	\begin{equation*}
		d_{c}(i,j) = \left\{\begin{matrix*}[l]
		1 & \text{if $\left((i,j) \in \mathcal{E}\right) \land \left(j\not = v_{0}\right) \land \left(i \not= v_{0}\right)$, }\\ 
		2 & \text{if $\left( \left((i,j) \in \mathcal{E}\right) \land \left(\left(j = v_0\right) \lor \left(i = v_0\right)\right) \right)$ $\lor$ $\left( \left((i,j) \notin \mathcal{E} \right)\land\left( j \not = v_0\right){\land (i\not = v_{0})}\right),$}\\
		3 &  \text{if ${((i,j) \notin \mathcal{E}) \land ((j=v_0) \lor (i=v_0))}$}.
		\end{matrix*}\right.
	\end{equation*}
	{
	    The constructed digraph $\mathcal{G}_{c}$ is complete and the delays are symmetric and verify triangular inequality. In fact for three  distinct nodes $i, j$, and $k$ in $\mathcal{V}$, we prove that $d_{c}(i, j) \leq d_{c}(i,k) + d_{c}(k, j)$ by distinguishing three possible cases:
	    \begin{enumerate}
	        \item If  $i \not = v_{0}$ and $j\not = v_{0}$, then $d_{c}(i, j)\leq 2$, but every delay is at least equal to one and then $2\leq d_{c}(i,k) + d_{c}(k,j)$; it follows that $d_{c}(i, j) \leq d_{c}(i,k) + d_{c}(k,j)$.
	        \item If $i=v_{0}$, then $d_c(v_{0}, k) \geq 2$, thus $d_{c}(v_{0}, k) + d_{c}(k, j) \geq 3$. It follows that $d_c(v_{0}, j) \leq 3\leq  d_c(v_0, k) + d_c(k, j)$.
	        \item The case when $j=v_{0}$ is analogous to the case when $i=v_0$.
	    \end{enumerate}
	}
	

	If $\mathcal{D}$ has a Hamiltonian cycle, then the {(directed)} graph induced by this cycle is a strong spanning subdigraph of $\mathcal{G}_{c}$ and its cycle time is $\tau_{\HC} = \frac{1\times(N-2) + 2 + 2}{N} = \frac{N+2}{N} \leq \tau_{0}$.

	If $\mathcal{G}_{c}$ has a strong spanning sub-digraph, say $\mathcal{G}^{*}$, having a cycle time $\tau^{*} \leq \frac{N+2}{N}$, let $C$ be an elementary  circuit of $\mathcal{G}^*$ containing $v_{0}$ (such a circuit always exists because the graph is strongly connected). By definition of cycle time, $\frac{d_{c}(C)}{|C|} \leq \tau^{*} = 1 + \frac{2}{N}$ .  We are going to prove that $C$ is a Hamiltonian cycle of $\mathcal D$.
	
	We prove first by contradiction that $C$ contains only the arcs from $\mathcal{E}$. Suppose by contradiction that there exists an arc $(i,j) \notin \mathcal{E}$ in $C$, two cases are possible: 
	\begin{enumerate}
	   \item If $j \not = v_{0}$, and $i \not = v_{0}$ then  ${d_{c}(i,j) = 2}$ and since $v_{0} \in C$, there exist two nodes $v^{-}_{0} \in \sigma(v_{0})$ and $v^{+}_{0} \in \pi(v_{0})$ in $C$. It follows that  $d_{c}(C) \geq d_{c}(i, j)  + d(v^{+}_{0},v_{0}) + d_{c}(v_{0},v^{-}_{0}) + 1\times (|C| - 3) \geq 2 + 2 + 2 + |C| - 3 = |C| + 3 $. Since $C$ is an elementary circuit, it follows that $|C| \leq N$, thus $\frac{d_{c}(C)}{|C|} \geq 1 + \frac{3}{N}$, and this contradicts $\frac{d_{c}(C)}{|C|} \leq 1 + \frac{2}{N}$. 
	   \item If $i = v_{0}$, let $v_{0}^{+}$ be the predecessor of $v_{0}$ in $C$, it follows that $d_{c}(C) \geq d_{c}(v_0^+, v_{0}) + d(v_{0}, j) + 1 \times (|C| - 2) \geq 3 + 2 + |C| -2 = 3 + |C|$, thus $\frac{d_{c}(C)}{|C|} \geq 1 + \frac{3}{|C|}$, and using the same argument as for the first case we get a contradiction. 
	   \item The case when $j = v_0$ is analogous to the case when $i=v_0$.
	\end{enumerate}
	
	It follows that any arc of $C$ is in $\mathcal{E}$. 
	
	We prove next that $C$ is a Hamiltonian Cycle, i.e., $|C| = N$. Since $v_{0} \in C$, there exist two nodes $v^{+}_{0} \in \sigma(v_{0})$ and $v^{-}_{0} \in \pi(v_{0})$ in $C$, it follows that $d_{c}(C) = d_{c}(v^{-}_{0}, v_{0}) + d_{c}(v_{0}, v^{+}_{0}) + 1 \times (|C| - 2) = 2 + 2 + |C| - 2  = 2 + |C|$.
	
	Since $\frac{d_{c}(C)}{|C|} \leq \tau^{*} = 1 + \frac{2}{N}$,  it follows that $1 + \frac{2}{|C|} \leq 1 + \frac{2}{N}$, thus $|C| \geq N$. As $C$ is an elementary circuit it follows that  $|C| = N$, i.e., $C$ is a Hamiltonian cycle. Since $C$ is a circuit containing only arcs from  $\mathcal{D}$, it follows that $\mathcal{D}$ has a Hamiltonian cycle.
	
	So we have proved that $\mathcal{D}$ has a Hamiltonian cycle if and only if $\mathcal{G}_{c}$ has strong spanning subdigraph of cycle time at most $\tau_{0} = \frac{N+2}{N}$. It follows that $\MCTD$ is NP-complete, thus $\MCT$ is NP-hard even when $\mathcal{G}_{c}$ is a complete Euclidean edge-capacitated graph.
\end{proof}

\subsection{Proof of Proposition \ref{prop:mctz_3-approx}}
\label{sec:appendix_proof_3_approx}

Under the assumption that the connectivity topology is Euclidean (delays are symmetric and verify triangular inequality), we first show that the solution of Travelling Salesman Problem ($\TSP$) \cite{gutin2006traveling} is guaranteed to be within a $2N$-multiplicative factor of the solution of $\MCT$ (Lemma~\ref{lem:tsp_mct}).  As a result, the Christofides algorithm~\cite{monnot:hal-00003997}  which is a 1.5-approximation algorithm for $\TSP$, is a $3N$-approximation algorithm for $\MCT$ (Prop.~\ref{prop:mctz_3-approx}).

\begin{lem}
    \label{lem:tsp_mct}
    Consider an Euclidean digraph $\mathcal{G}_{c}$ with $N$ nodes and let $\mathcal H^{*}$ denote its optimal tour. Then $\frac{d_{c}(\mathcal H^{*})}{|\mathcal H^{*}|} \leq 2 N \times \tau_{*}$, where $\tau_{*}$ is the optimal cycle time that can be achieved by a strong spanning subdigraph of $\mathcal{G}_{c}$. 
\end{lem}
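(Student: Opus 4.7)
The plan is to construct a Hamiltonian cycle of bounded weight in $\mathcal G_c$ directly from a strong spanning subdigraph $\mathcal G^*$ attaining the optimal cycle time $\tau_*$, and then to exploit optimality of $\mathcal H^*$. My argument follows the classical closed-walk-and-shortcut template used in metric TSP approximation: first I extract a closed walk $W$ in $\mathcal G^*$ that visits all $N$ vertices, then I collapse $W$ into a Hamiltonian cycle $\mathcal H$ of $\mathcal G_c$ using the triangle inequality on $d_c$, and finally I invoke $d_c(\mathcal H^*)\leq d_c(\mathcal H)$.

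The first building block will be the claim: any closed walk $W$ in $\mathcal G^*$ satisfies $d_o(W)\leq|W|\,\tau_*$, where $|W|$ counts edges with multiplicity. The justification reduces to decomposing $W$, as an edge-multiset, into simple circuits $\gamma_1,\dots,\gamma_k$ with $\sum_i|\gamma_i|=|W|$ and $\sum_i d_o(\gamma_i)=d_o(W)$; since each simple circuit satisfies $d_o(\gamma_i)\leq|\gamma_i|\,\tau_*$ by definition of $\tau_*$ and~\eqref{eq:cycle_time}, summation gives the claim.

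Next I would build such a $W$. Fix $r\in\mathcal V$ and enumerate $\mathcal V\setminus\{r\}=\{v_1,\dots,v_{N-1}\}$. Since $\mathcal G^*$ is strongly connected, for each $i$ there exist simple paths $P_i\colon r\to v_i$ and $Q_i\colon v_i\to r$ in $\mathcal G^*$ of length at most $N-1$ each. Concatenating the excursions yields $W=P_1Q_1P_2Q_2\cdots P_{N-1}Q_{N-1}$, a closed walk visiting every vertex with $|W|\leq 2(N-1)^2$; combined with the previous step, $d_o(W)\leq 2(N-1)^2\,\tau_*$.

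The third step shortcuts $W$ to a Hamiltonian cycle of $\mathcal G_c$ (assumed complete, as is standard for TSP inputs). Let $u_1=r,u_2,\dots,u_N$ list the vertices in order of first appearance in $W$ and let $\mathcal H$ be the directed cycle $(u_1,\dots,u_N,u_1)$. For each consecutive pair $(u_i,u_{i+1})$ (indices modulo $N$), $W$ contains a sub-walk from $u_i$ to $u_{i+1}$; the triangle inequality on $d_c$ bounds $d_c(u_i,u_{i+1})$ by the sum of $d_c$-weights along that sub-walk. Summing over $i$ and using the edge-wise inequality $d_c(e)\leq d_o(e)$ on every $e\in\mathcal G^*$ (an equality in the edge-capacitated setting) gives $d_c(\mathcal H)\leq d_o(W)\leq 2(N-1)^2\,\tau_*$. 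By optimality of $\mathcal H^*$, $d_c(\mathcal H^*)\leq d_c(\mathcal H)$, and since $|\mathcal H^*|=N$ this yields $d_c(\mathcal H^*)/|\mathcal H^*|\leq 2(N-1)^2\tau_*/N\leq 2N\,\tau_*$, as required.

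The hard part will be formalizing the closed-walk-to-simple-circuits decomposition in the first step (classical but slightly technical) and carefully propagating the triangle inequality from a $d_o$-weighted walk in $\mathcal G^*$ to a $d_c$-bound in $\mathcal G_c$; this propagation rests precisely on $d_c\leq d_o$ on edges, which follows immediately from the defining formulas of the two delay functions.
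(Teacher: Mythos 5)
Your proof is correct and reaches the same $2N$ factor, but the way you build the spanning closed walk is genuinely different from the paper's. The paper covers the edge set of the optimal overlay $\mathcal G^{*}$ by a minimal family of elementary circuits $\{\mathcal C_i\}$, organizes these circuits into an auxiliary tree, and produces two closed walks $\mathcal W_1,\mathcal W_2$ by a DFS traversal with a carefully chosen ordering of children; the weight bound comes from $\mathcal W_1$ using each circuit's edges at most twice, the length bound from $|\mathcal W_2|=\sum_i|\mathcal C_i|\le N^2$, and the mediant inequality $\sum_i d_c(\mathcal C_i)/\sum_i|\mathcal C_i|\le\max_i d_c(\mathcal C_i)/|\mathcal C_i|\le\tau_{*}$ closes the argument. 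You instead take the elementary ``spider'' walk of out-and-back excursions from a fixed root to every vertex, which trivially has length at most $2(N-1)^2$, and you isolate the clean max-plus fact that \emph{every} closed walk $W$ of $\mathcal G^{*}$ satisfies $d(W)\le|W|\,\tau_{*}$, via the Eulerian decomposition of $W$ (as an edge multiset with balanced in/out degrees) into elementary circuits. This buys a shorter and less delicate construction---no minimal circuit cover, no circuit-tree ordering---at the price of justifying the walk-to-circuits decomposition, which is standard. Both routes then shortcut the walk to a Hamiltonian cycle by the triangle inequality and compare against the optimal tour, and your constant $2(N-1)^2/N\le 2N$ matches the paper's $2N^2/N=2N$. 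Two cosmetic remarks: the lemma's weights are simply $d_c$ (the cycle time of a spanning subdigraph is taken with respect to $d_c$ in this abstract statement), so your detour through $d_o$ and the edgewise bound $d_c\le d_o$ is unnecessary, though harmless and in fact what makes the lemma reusable in the node-capacitated case; and, like the paper, you need $\mathcal G_c$ complete for the shortcut edges and the optimal tour to exist, which is how the lemma is applied in Propositions~\ref{prop:mctz_3-approx} and~\ref{prop:mct_3_approx}.
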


\begin{proof}
    Let $\mathcal{G}^{*}$ be a spanning digraph of $\mathcal{G}_{c}$ with optimal cycle time $\tau^{*}$.
    
    Let  $\left\{\mathcal C_{i} \right\}_{i=1, \dots, c}$ be a minimal set of elementary circuits of $\mathcal{G}_{*}$, so that $\cup_{i=1}^c \mathcal C_{i} = \mathcal{G}_{*}$ and $\cup_{i \neq j} \mathcal C_{i} \neq \mathcal{G}_{*}$ for each $j$ {(as illustrated in Fig.~\ref{fig:ring_approximation_proof_decomposition})}. Consider an auxiliary graph $\mathcal G'$ whose $c$ nodes represent the $c$ circuits and whose links correspond to two circuits sharing a node. Let $\mathcal T$ be a spanning tree of $\mathcal G'$. Starting from the root of $\mathcal T$, we can define an order of the nodes in each circuit and an order of the children of each circuit as follows. Given the orientation of the circuit corresponding to the root, consider the first node they share with each child. We order the children according to such order (solving arbitrarily possible ties). For each child we reorder its nodes starting from the node they share with the father and following the orientation of the circuit. We consider then the ordered traversal of the circuits $\Gamma= (\mathcal C_{i_1}, \mathcal C_{i_2},\dots, \mathcal C_{i_{2c+1}} = \mathcal  C_{i_{1}})$ obtained using DFS on $\mathcal T$ and visiting the children according to the order introduced above {(as illustrated in Fig.~\ref{fig:ring_approximation_proof_ordering})}. 
    
    From $\Gamma$ we can build two closed walks $\mathcal W_1$ and $\mathcal W_2$, both spanning all nodes of $\mathcal G^*$. The walk $\mathcal W_1$ is built by considering all circuits in the order they appear in $\Gamma$, and then  concatenating their nodes  as follows. The first time we visit one circuit we take all nodes in the circuit in their order (but the last one in each circuit that coincides with the first one). When we come back to the circuit, we only pick the nodes needed to move to the following circuit in $\Gamma$. The walk $\mathcal W_2$ is built by considering the $c$ circuits in the order they first appear in $\Gamma$, and then again concatenating their nodes (but the last one in each circuit that coincides with the first one). Both sequences of nodes define walks as $\mathcal G_c$ is Euclidean and then complete. The length of $\mathcal W_2$ is $|\mathcal W_2| = \sum_{i=1}^c |\mathcal C_i| \le N^2$, as we can have at most $N-1$ elementary circuits and each of them has length at most $N$. {See Figs.~\ref{fig:ring_approximation_proof_walk_1} and~\ref{fig:ring_approximation_proof_walk_2} for the examples of $\mathcal W_1$ and $\mathcal W_2$.
    }
    
    \begin{figure}
        \begin{subfigure}[b]{0.49\textwidth}
            \centering
            \includegraphics[scale=0.32]{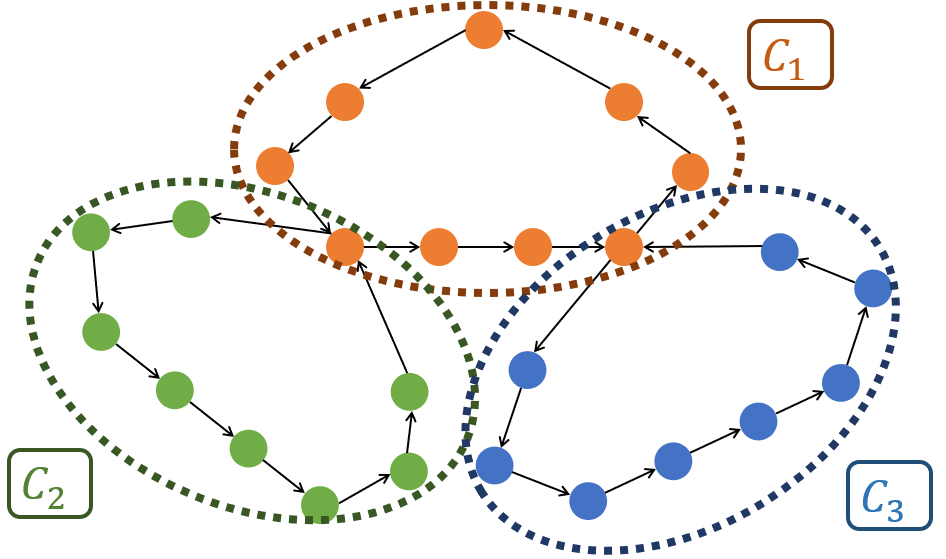}
            \caption{Circuits decomposition}
            \label{fig:ring_approximation_proof_decomposition}
        \end{subfigure}
        \hfill
        \begin{subfigure}[b]{0.49\textwidth}
            \centering
            \includegraphics[scale=0.32]{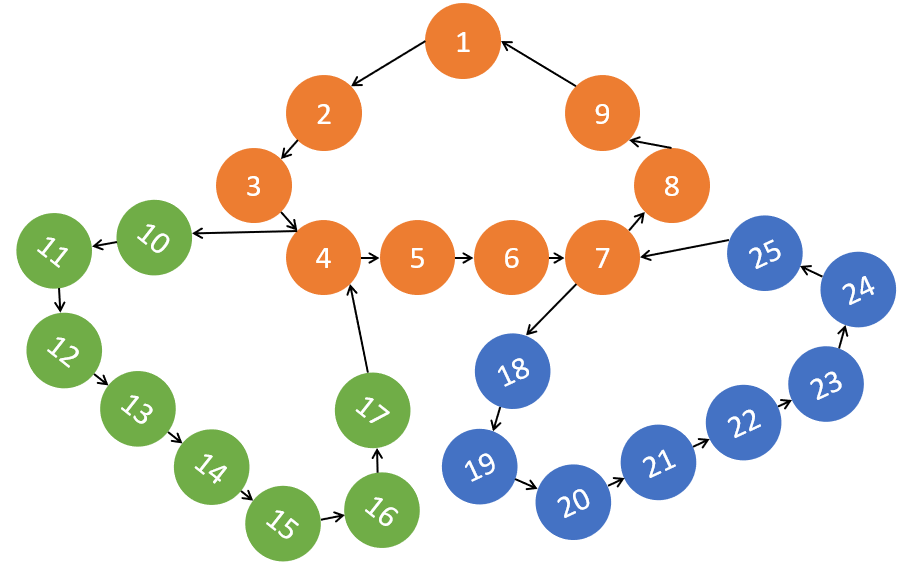}
            \caption{Nodes ordering}
            \label{fig:ring_approximation_proof_ordering}
        \end{subfigure}
        
        \begin{subfigure}[b]{0.49\textwidth}
            \centering
            \includegraphics[scale=0.32]
            {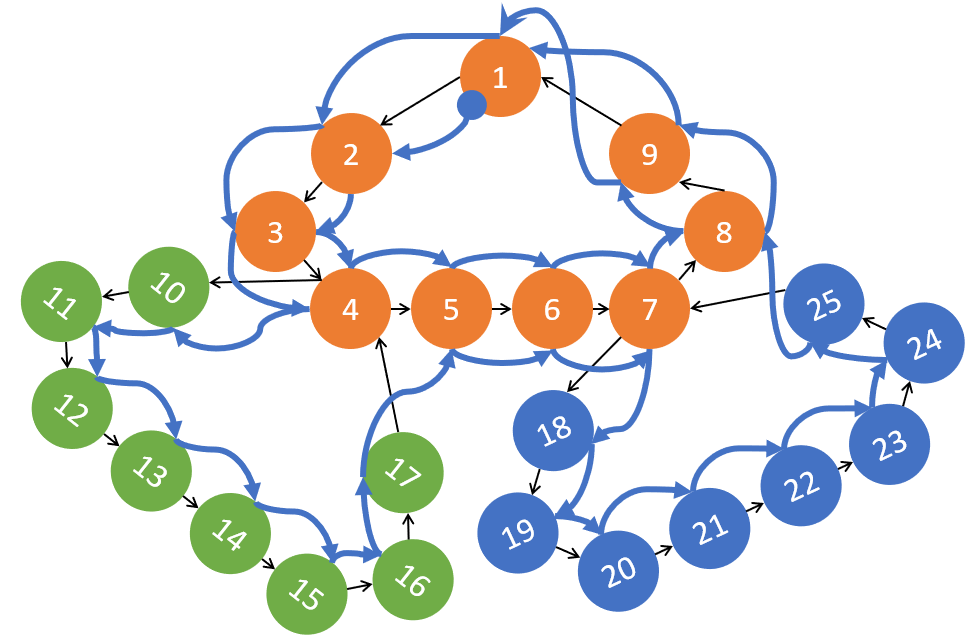}
            \caption{Walk $\mathcal{W}_{1}$}
            \label{fig:ring_approximation_proof_walk_1}
        \end{subfigure}
        \hfill
        \begin{subfigure}[b]{0.49\textwidth}
            \centering
            \includegraphics[scale=0.32]{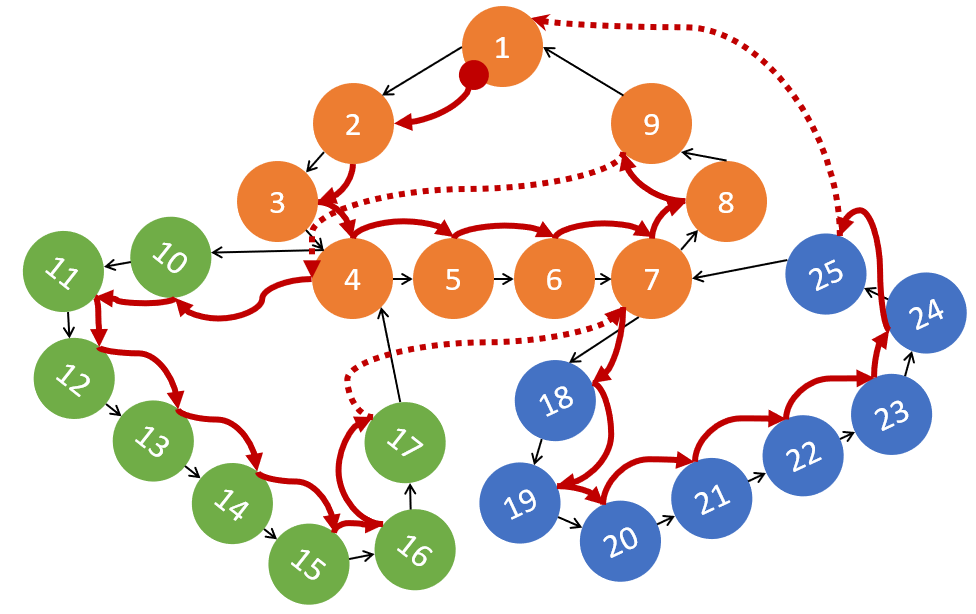}
            \caption{Walk $\mathcal{W}_2$}
            \label{fig:ring_approximation_proof_walk_2}
        \end{subfigure}
        \caption{Illustration of building walks used in the proof of Lemma~\ref{lem:tsp_mct}. }
        \label{fig:ring_approximation_proof}
    \end{figure}
    
    We observe that $d_c(\mathcal W_1) \le 2 \sum_{i=1}^c d_c(\mathcal C_i)$ as the walk $\mathcal W_1$ passes through each link in each circuit $\mathcal C_i$ at most twice: it walks through the first $|\mathcal C_i|-1$ edges of $\mathcal C_i$ the first time it visits $\mathcal C_i$, and uses once more the edges in $\mathcal C_i$ to visit the other circuits and go back to the root. As $\mathcal W_2$ is a sublist of the nodes in $\mathcal W_1$ and delays satisfy the triangle inequality, it holds $d_c(\mathcal W_2) \le d_c(\mathcal W_1)$.
    
    Finally, from the walk $\mathcal W_2$ we can extract a Hamiltonian cycle $\mathcal H$ that has an even smaller delay. Let $\mathcal H^*$ be an optimal tour. It follows
    
    \begin{align}
        \tau(\mathcal H^*) & = \frac{d_c(\mathcal H^*)}{|\mathcal H^*|} \le \frac{d_c(\mathcal H)}{|\mathcal H^*|} \le \frac{d_c(\mathcal W_2)}{|\mathcal H^*|} \\
        & = \frac{|\mathcal W_2|}{|\mathcal H^*|} \frac{d_c(\mathcal W_2)}{|\mathcal W_2|}\\
        & \le \frac{N^2}{N} \frac{d_c(\mathcal W_1)}{\sum_{i=1}^c |\mathcal C_i|}\\
        & \le 2 N \frac{\sum_{i=1}^c d_c(\mathcal C_i)}{\sum_{i=1}^c |\mathcal C_i|}\\
        & \le 2 N \max_{i=1,\dots, c} \frac{ d_c(\mathcal C_i)}{|\mathcal C_i|} = 2 N \tau^*.
    \end{align}
\end{proof}

\begin{repprop}{prop:mctz_3-approx}
Christofides' algorithm~\cite{monnot:hal-00003997} is a $3N$-approximation algorithm for $\MCT$ when $\mathcal G_c$ is edge-capacitated and Euclidean.
\end{repprop} 

\begin{proof}
Christofides algorithm provides a $\frac{3}{2}$-approximation for the traveling salesman problem $\TSP$ defined in \cite{10.5555/1374811}.\footnote{See \cite{monnot:hal-00003997} for the proof.} Given an instance of $\MCT$ let $\hat{C}$ denote  the output of Christofides algorithm and  $C^{*}$ denote the optimal tour of $\mathcal{G}_{c}$. It follows that $d_{c}(\hat{C}) \leq \frac{3}{2} d_{c}(C^{*})$. Since both $\hat{C}$ and $C^{*}$ are Hamiltonian cycles, $|\hat{C}| = |C^{*}|$. Using Lemma \ref{lem:tsp_mct}. it follows that $\frac{d_{c}(\hat{C})}{|\hat{C}|} \leq 2N \times \frac{3}{2} \times \tau_{*} = 3N\times \tau_{*}$. Thus the graph obtained using only the edges of $\hat{C}$ is a $3N$-approximation of the $\MCT$ problem when $\mathcal G_c$ is edge-capacitated and Euclidean.
\end{proof}

\begin{observation}
\label{ob:omegaN_tight}
Christofides' algorithm~\cite{monnot:hal-00003997} is a $\Omega(N)$-approximation algorithm for $\MCT$ when $\mathcal G_c$ is edge-capacitated and Euclidean.
\end{observation}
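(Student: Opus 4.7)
My plan is to exhibit a family of Euclidean edge-capacitated instances on $N$ nodes for which the ratio between the cycle time of Christofides' ring and the optimal cycle time $\tau^*$ grows linearly in $N$. Since Christofides returns a Hamilton cycle $\hat{C}$ used as a directed ring, its cycle time is $d_c(\hat{C})/N \ge \mathrm{TSP}^*/N$, where $\mathrm{TSP}^*$ denotes the minimum Hamilton cycle weight in $\mathcal{G}_c$. Hence it suffices to construct an instance in which
\begin{equation*}
    \frac{\mathrm{TSP}^*}{N} \ge c\,N\,\tau^*\quad\text{for some absolute constant }c>0,
\end{equation*}
equivalently $\mathrm{TSP}^*/\tau^* = \Omega(N^2)$. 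The key observation is that Lemma~\ref{lem:tsp_mct} gives $\mathrm{TSP}^*/|\mathcal{H}^*| \le 2N\tau^*$, so the $N$ factor is the one to be shown tight.

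The strategy is to \textbf{decouple} $\tau^*$ from $\mathrm{TSP}^*/N$ by choosing an overlay $\mathcal{G}^*$ achieving $\tau^*$ that is \emph{not} Hamiltonian. When $\mathcal{G}^*$ contains no spanning circuit, the total weight of $\mathcal{G}^*$ does not upper-bound $\mathrm{TSP}^*$, and one is free to design the metric on $\mathcal V$ so that Hamiltonian tours are expensive while the short circuits of $\mathcal{G}^*$ remain cheap. A natural candidate for $\mathcal{G}^*$ is a ``bouquet'': a strong spanning subdigraph consisting of $\Theta(N)$ elementary cycles of bounded length, all sharing a common hub vertex $v_0$; such a digraph is strongly connected but admits no Hamilton circuit. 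Assigning uniform edge weight $\epsilon$ to $\mathcal{G}^*$ and computing its cycle time directly via \eqref{eq:cycle_time} shows $\tau(\mathcal{G}^*) = \Theta(\epsilon)$, hence $\tau^* \le \Theta(\epsilon)$.

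The delicate step will be lifting $\mathcal{G}^*$ to a Euclidean metric $d_c$ on the complete graph $\mathcal{G}_c = K_N$ that preserves the short-cycle structure while forcing Hamilton cycles to be expensive. The plain shortest-path metric of $\mathcal{G}^*$ is Euclidean but typically admits Hamiltonian tours of weight $O(N\epsilon)$ via an Euler-tour/shortcutting argument on the bouquet, yielding only an $O(1)$ gap. To amplify to $\Omega(N)$, my plan is to \emph{saturate} the triangle inequality from above: set inter-petal distances to the maximum value allowed, namely $d_c(x,y) = d_{\mathcal{G}^*}(x,v_0) + d_{\mathcal{G}^*}(v_0,y)$ when $x,y$ lie in different petals, while keeping intra-petal distances at $\Theta(\epsilon)$. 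Under this metric one then argues that every Hamilton cycle must pay $\Omega(\epsilon)$ for each of $\Omega(N)$ crossings between petals, and an additional $\Omega(N)$ factor because the geometric arrangement of the petals can be chosen (e.g.\ embedding them on well-separated rays from the hub) so that each crossing incurs a detour of length $\Omega(N\epsilon)$. This gives $\mathrm{TSP}^* = \Omega(N^2\epsilon)$, and combining with $d_c(\hat{C}) \ge \mathrm{TSP}^*$ and $|\hat{C}| = N$ yields
\begin{equation*}
    \tau(\hat{C}) \;=\; \frac{d_c(\hat{C})}{N} \;\ge\; \frac{\mathrm{TSP}^*}{N} \;=\; \Omega(N\epsilon) \;=\; \Omega(N)\cdot\tau^*,
\end{equation*}
which is the desired lower bound.

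The main obstacle lies in the lower bound on $\mathrm{TSP}^*$: the triangle inequality is a two-sided constraint, and any saturated-upper-bound metric must be checked for consistency (i.e.\ the assignment must still satisfy $d_c(x,y) \le d_c(x,z) + d_c(z,y)$ for all triples, not only those passing through $v_0$). The candidate geometric embedding---petals along distinct rays emanating from a hub, with carefully chosen inter-petal ray angles---should make this verification routine, but it is the step where the construction can easily collapse. If the embedding cannot be made to work directly, a fallback is to combine several bouquets into a higher-order hierarchical structure and recurse, paying an additional logarithmic factor in the analysis but still achieving $\Omega(N)$.
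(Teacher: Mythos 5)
Your high-level plan---exhibit a family of Euclidean instances whose optimal overlay is a non-Hamiltonian ``bouquet'' of circuits through a hub, so that $\mathrm{TSP}^*/(N\tau^*)=\Omega(N)$---is the right one and matches the paper's proof in spirit. The concrete construction, however, cannot work, and the obstruction is exactly the triangle-inequality check you defer to the end. If the petals have \emph{bounded} length and uniform edge weight $\epsilon$, every node lies within $O(\epsilon)$ of the hub $v_0$, so \emph{any} metric on $\mathcal V$ consistent with these distances satisfies $d_c(x,y)\le d_c(x,v_0)+d_c(v_0,y)=O(\epsilon)$ for every pair; hence $\mathrm{TSP}^*=O(N\epsilon)$ and the certified gap is $O(1)$. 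Your amplification---placing petals on ``well-separated rays'' so that each inter-petal crossing costs $\Omega(N\epsilon)$---is flatly inconsistent with this: a crossing of cost $\Omega(N\epsilon)$ forces some node to sit at distance $\Omega(N\epsilon)$ from $v_0$, and then the bounded-length circuit through that node and $v_0$ has weight $\Omega(N\epsilon)$, hence cycle time $\Omega(N\epsilon)$ by \eqref{eq:cycle_time}, destroying $\tau^*=\Theta(\epsilon)$. The barrier is general: the walk construction in the proof of Lemma~\ref{lem:tsp_mct} gives a Hamiltonian cycle of cycle time at most $2\bigl(\sum_i|\mathcal C_i|/N\bigr)\tau^*$, so if the optimal overlay decomposes into at most $N$ circuits each of length at most $L$, the ratio is $O(L)$; bounded $L$ (and the hierarchical fallback, which keeps $L=O(\log N)$ at best) cannot reach $\Omega(N)$.

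The missing idea is that the circuits achieving $\tau^*$ must themselves have length $\Theta(N)$, so that a single edge of weight comparable to the diameter is amortized over $\Theta(N)$ cheap edges, while a Hamiltonian cycle is forced to pay such an edge $\Theta(N)$ separate times. The paper's instance does precisely this: take $2N$ nodes with $d_c(i,j)=0$ for $i,j\in\{1,\dots,N\}$ and $d_c(i,j)=1$ whenever an endpoint lies in $\{N+1,\dots,2N\}$ (this metric is Euclidean). The overlay formed by the circuits $C_K=(1,\dots,N,K,1)$ for $K\in\{N+1,\dots,2N\}$ has cycle time $2/(N+1)$, since each circuit has weight $2$ and length $N+1$; but any Hamiltonian cycle must use $N$ distinct weight-$1$ arcs entering the satellites, so its weight is at least $N$ and its cycle time at least $1/2$, giving a ratio of at least $(N+1)/4=\Omega(N)$. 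Replacing your bounded-length bouquet with this long-circuit construction closes the gap; the rest of your argument (lower-bounding $\tau(\hat C)$ by $\mathrm{TSP}^*/N$) then goes through.
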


\begin{proof}
    Christofides' algorithm returns a ring as solution.
    We provide an example of an Euclidean underlay where any ring has cycle time at least $N/4$ times larger than the optimal overlay. We consider a complete connectivity graph $\mathcal{G}_{c} = \left(\mathcal{V}, \mathcal{V} \times \mathcal{V}\right)$ to which we associate a delay function $d_{c}$ verifying

\begin{equation}
    \forall (i, j) \in \mathcal{V} \times \mathcal{V};~~ d_c(i,j) = \left\{\begin{matrix*}[l] 0 &\text{if} ~~ i,j \in \{1, \dots, N\},
\\  1 &\text{if} ~~ i \in \{N+1, \dots, 2N\} ~~ \text{or} ~~ j \in \{N+1, \dots, 2N\}. 
\end{matrix*}\right.
\end{equation}

$\mathcal{G}_{c}$ is clearly an Euclidean graph.

A Hamiltonian cycle $\mathcal{H}$ of $\mathcal{G}_{c}$ needs to use exactly $2N$ different edges and in particular $N$ different edges with delay $1$ to connect nodes $i\in \{N+1, \ldots,2N\}$. Therefore, the total delay of the cycle is at least $N \times 0 + N \times 1 = N$, and its cycle time $\tau(\mathcal{H}) \geq \frac{N}{2N} = \frac{1}{2}$.

Consider a directed overlay $\mathcal{G}_{o} = \left(\mathcal{V}, \mathcal{E}_{o}\right)$, with

\begin{equation}
    \mathcal{E}_{o} = \left\{(i, i+1); ~i \in \{1, \dots, N-1\}\right\} \cup \underset{K \in \{N+1, \dots, 2N\}}{\bigcup}\left\{(N, K), (K, 1)\right\}.
\end{equation}

The set of elementary circuits of $\mathcal{E}_{o}$ is exactly the set \[\mathcal{C} = \left\{C_{K} = (1,\dots, N, K, 1): K \in \{N+1, 2N\} \right\}.\]
For any circuit $C_{K} \in \mathcal{C}$,  \[\tau(C_{K}) = \frac{0 \times (N-1) + 2 \times 1}{N+1} = \frac{2}{N+1}.\]

It follows that the minimal cycle time $\tau_{\text{OPT}} = \frac{2}{N+1}$, and $\tau(\mathcal{H}) \geq \frac{N+1}{4} \tau_{\text{OPT}}$ for any Hamiltonian cycle $\mathcal{H}$ of $\mathcal{G}_{c}$. 
\end{proof}

\subsection{Proof of Proposition \ref{prop:mctu_nphard}}
\label{sec:appendix_proof_mct_nphard}
We prove that in a node-capacitated network, $\MCT$ is NP-hard even when $\mathcal{G}_{o}$ is required to be undirected. We start introducing the associated decision problem:
\begin{problem}
		\problemtitle{$\MCTU$-Decision}
		\probleminput{A strongly connected directed graph \mbox{$\mathcal{G}_{c} = (\mathcal{V},\mathcal{E}_{c})$}, model size $M$,\qquad $\{\upcap(i), \dncap(j), l(i,j), A(i',j'), T_c(i), \forall (i,j) \in \mathcal E_c\}$, and a constant $\tau_{0} > 0$.}
		\problemquestion{Is there a strong spanning undirected subgraph $\mathcal{G}_{o}$ of $\mathcal{G}_{c}$, such that $\tau(\mathcal{G}_{o}) \leq \tau_{0}$?}
\end{problem}
$\MCTU$-Decision is closely related to  the \emph{degree-constrained spanning tree} ($\DCST$) defined below:
\begin{problem}
	\problemtitle{Degree-constrained spanning tree ($\DCST$)}
	\probleminput{ An $N$-node connected undirected graph $\mathcal{G} = (\mathcal{V},\mathcal{E})$; positive integer $k\leq N$.}
	\problemquestion{Does $\mathcal{G}$ have a spanning tree in which no node has degree greater than $k$?}
\end{problem}
\DCST{} is a simpler version of $\delta$-\MBST, where we look for a spanning tree with degree at most $k$ and minimum bottleneck.

$\DCST$ is NP-complete~\cite{garey1979computers}.
For example for $k=2$ it can be shown by a reduction from \HC.

\begin{repprop}{prop:mctu_nphard}
    In node-capacitated networks $\MCT$ is NP-hard even when the overlay is required to be undirected.
\end{repprop}
\begin{proof}
    Our proof is based on a reduction of \DCST{} to \MCTU-Decision.
 
    Given an instance of $\DCST$ with an $N$-node connected undirected graph $\mathcal{G}=(\mathcal{V}, \mathcal{E})$ and a positive integer $k \leq N$, we define an instance of $\MCTU$-Decision on a connected graph $\mathcal{G}_{c} = (\mathcal{V}_{c}, \mathcal{E}_{c})$ built from $\mathcal{G}$ according to the following mapping $\Pi$: 
    For each node $v$ in $\mathcal{V}$, there are two nodes $v^{(1)}$ and $v^{(2)}$ in $\mathcal{V}_{c}$ and $(v^{(1)}, v^{(2)}) \in \mathcal{E}_{c}$, and for an arc $(v_{i}, v_{j}) \in \mathcal{E}$, there is an arc $(v_{i}^{(1)}, v_{j}^{(1)})$ in $\mathcal{E}_{c}$. We set $\frac{M}{C_{\text{UP}}(v^{{(1)}})} = 1$, $\frac{M}{C_{\text{UP}}(v^{{(2)}})} = k+1$ for all $v \in \mathcal V$,  $T_c(i)=0$, $C_{DN}(i) = \infty$ for all $i \in \mathcal V_c$, and $l(i,j)=0, A(i',j') = \infty$ for all $(i,j) \in \mathcal E_c$ . Finally, we consider $\tau_0 = k + 1$.

    Suppose that $\mathcal{G}$ has a spanning tree $\mathcal{T} = (\mathcal{V}, \mathcal{E}_{\mathcal{T}})$ in which no node has degree greater than $k$,  and denote $\mathcal{T}_{c} = \Pi(\mathcal{T})$ (i.e., we apply the same mapping described above).  $\mathcal{T}_{c}$ is a spanning tree of $\mathcal{G}_{c}$ (it is acyclic and spans all  nodes of $\mathcal{G}_{c}$). All elementary circuits of $\mathcal{T}_{c}$ are either of the form $(v_{i}^{(1)}, v_{i}^{(2)}, v_{i}^{(1)})$ for some $v_{i} \in \mathcal{V}$, or of the form $(v_{i}^{(1)}, v_{j}^{(1)}, v_{i}^{(1)})$ for some $(v_{i}, v_{j}) \in \mathcal{E}_{\mathcal{T}}$. Moreover, $\tau((v_{i}^{(1)}, v_{i}^{(2)}, v_{i}^{(1)}))  = \frac{k + 1 + \texttt{degree}_{\mathcal{T}}(v_{i}) + 1}{2} \leq k+1$ and   $\tau((v_{i}^{(1)}, v_{j}^{(1)}, v_{i}^{(1)})) =  \frac{\texttt{degree}_{\mathcal{T}}(v_{i}) + 1 + \texttt{degree}_{\mathcal{T}}(v_{j}) + 1}{2} \leq k+1$.
    It follows that $\tau(\mathcal{T}_{c}) \leq k+1 = \tau_{0}$.

    Inversely, suppose that $\mathcal{G}_{c}$ has an MST $\mathcal{T}_{c}$ having a cycle time at most $\tau_{0}$, and let $\mathcal{T} = \Pi^{-1}(\mathcal{T}_{c})$, where $\Pi^{-1}(\mathcal{T})$ is obtained by deleting all the vertices of the form $v^{(2)}_{i}$ for $v_{i} \in \mathcal{V}$. $\mathcal{T}$ is a spanning tree of $\mathcal{G}$ (it contains all  nodes of $\mathcal{G}$ and is acyclic). We prove by contradiction that $\texttt{degree}(\mathcal{T}) \leq k$. Suppose that there exists a node $v \in \mathcal{V}$ such that $|\mathcal{N}^{-}_{v}(\mathcal{T})| > k$, it follows that circuit $\{v_{i}^{(1)}, v_{i}^{(2)}, v_{i}^{(1)}\}$ is a circuit of  $\mathcal{T}_{c}$, and $\tau((v_{i}^{(1)}, v_{i}^{(2)}, v_{i}^{(1)})) = \frac{k + 1 + |\mathcal{N}^{-}_{v}(\mathcal{T})| +1 }{2} > k+1$. It follows that $\tau(\mathcal{T}_c) > k+1$, thus $ k + 1  < \tau_{0} = k + 1$ (contradiction). 
    
    Then the answer to $\DCST$ is positive if and only if the answer to $\MCTU$-Decision is positive. In addition, we have a polynomial reduction algorithm. It follows that $\MCTU$-Decision is NP-hard.
\end{proof}

\subsection{Proof of Proposition \ref{prop:6-approx}}
\label{sec:appendix_proof_6_approx}
The bottleneck of a tree $\mathcal T$ is its maximum edge weight, denoted by $B(\mathcal T)$.
To prove Prop.~\ref{prop:6-approx}, we start by proving that the bottleneck of the $\MBST$ of the undirected graph $\mathcal{G}^{(u)}_c$ (considered in lines~\ref{algoline:gprime}-\ref{algoline:d} of Algo.~\ref{alg:our_algorithm}) is smaller than or equal to the minimal cycle time of the connectivity graph $\mathcal{G}_c$. 

We consider a node-capacitated case where  $\upcap(i)\leq \min\left(\frac{\dncap(j)}{N}, A(i',j') \right)$, $\forall (i,j)\in \mathcal{E}_c$. Thus, according to $\eqref{eq:do}$, the overlay $\mathcal{G}_o$ has weights
\begin{align}
    d_o(i,j) & =  s \times T_c(i) + l(i,j) + \frac{M |\mathcal{N}^-_i|}{\upcap(i)},~~\forall (i,j)\in \mathcal{E}_c.
    \label{eq:weight_6_approx}
\end{align}

Note that the weights defined for the undirected graph $\mathcal{G}^{(u)}_c = (\mathcal{V}, \mathcal{E}^{(u)}_c)$ are
\begin{equation}
      d_c^{(u)}(i, j) = \frac{s\times (T_c(i)+T_c(j)) + l(i, j) + l(j, i) + \frac{M}{\upcap(i)} + \frac{M}{\upcap(j)}}{2},~~\forall  (i,j) \in \mathcal{E}^{(u)}_c.
      \label{eq:built_weight_6_approx}
\end{equation}

\begin{lem}
    \label{lem:bottleneck_vs_mct}
    Consider the case where $\mathcal{G}_c$ is node-capacitated  with  $\upcap(i)\leq \min\left(\frac{\dncap(j)}{N}, A(i',j') \right)$, $\forall (i,j)\in \mathcal{E}_c$, and the overlay is required to be undirected. Let  $\tau^{*}(\mathcal{G}_c)$ be the cycle time of $\MCT$ on $\mathcal{G}_{c}$ and  $\mathcal{T}_{MBST}(\mathcal{G}^{(u)}_c)$ be the $\MBST$ of $\mathcal{G}^{(u)}_c$. The bottleneck of $\mathcal{T}_{MBST}(\mathcal{G}^{(u)}_c)$ is smaller than or equal to $\tau^{*}(\mathcal{G}_c)$, i.e.~ $B(\mathcal{T}_{MBST}(\mathcal{G}^{(u)}_c)) \leq \tau^{*}(\mathcal{G}_{c})$. 
\end{lem}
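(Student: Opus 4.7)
The plan is to start from any optimal undirected overlay $\mathcal{G}_{o}^{*}$ achieving the cycle time $\tau^{*}(\mathcal{G}_{c})$ and exhibit it as a spanning tree of $\mathcal{G}^{(u)}_{c}$ whose bottleneck (measured with the weights $d_{c}^{(u)}$ of~\eqref{eq:built_weight_6_approx}) is no larger than $\tau^{*}(\mathcal{G}_{c})$. Since the MBST of $\mathcal{G}^{(u)}_{c}$ minimizes the bottleneck over all spanning trees of $\mathcal{G}^{(u)}_{c}$, the stated inequality follows.

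First, by Lemma~\ref{lem:spanning_tree} we may assume without loss of generality that $\mathcal{G}_{o}^{*}$ is an undirected spanning tree. Since every edge $(i,j)\in\mathcal{E}_{o}$ also satisfies $(j,i)\in\mathcal{E}_{o}\subseteq\mathcal{E}_{c}$, the edges of $\mathcal{G}_{o}^{*}$ all belong to $\mathcal{E}_{c}^{(u)}$, so $\mathcal{G}_{o}^{*}$ is a spanning tree of $\mathcal{G}_{c}^{(u)}$. Applying the reasoning of Lemma~\ref{lem:symmetric_underlay} with the directed delays $d_{o}$ (the only elementary circuits of a tree are of the form $(i,j,i)$), we obtain
\begin{equation*}
\tau(\mathcal{G}_{o}^{*}) \;=\; \max_{\{i,j\}\in\mathcal{E}(\mathcal{G}_{o}^{*})}\; \frac{d_{o}(i,j)+d_{o}(j,i)}{2}.
\end{equation*}

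Next, I would bound this maximum from below by the bottleneck of $\mathcal{G}_{o}^{*}$ in $\mathcal{G}_{c}^{(u)}$. Under the hypothesis $\upcap(i)\le\min(\dncap(j)/N, A(i',j'))$, the uplink term dominates and~\eqref{eq:weight_6_approx} applies, giving
\begin{equation*}
\frac{d_{o}(i,j)+d_{o}(j,i)}{2}
= \frac{s(T_{c}(i)+T_{c}(j)) + l(i,j)+l(j,i)}{2}
+ \frac{M\,|\mathcal{N}_{i}^{-}|}{2\,\upcap(i)}
+ \frac{M\,|\mathcal{N}_{j}^{-}|}{2\,\upcap(j)}.
\end{equation*}
Because $\mathcal{G}_{o}^{*}$ is a spanning tree, every node has degree at least one, i.e.\ $|\mathcal{N}_{i}^{-}|\ge 1$ and $|\mathcal{N}_{j}^{-}|\ge 1$, and so the right-hand side is at least $d_{c}^{(u)}(i,j)$ as defined in~\eqref{eq:built_weight_6_approx}. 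Taking the maximum over the edges of $\mathcal{G}_{o}^{*}$ yields $\tau^{*}(\mathcal{G}_{c}) = \tau(\mathcal{G}_{o}^{*}) \ge B(\mathcal{G}_{o}^{*})$, where the bottleneck on the right is measured in $\mathcal{G}_{c}^{(u)}$. The MBST minimizes the bottleneck over all spanning trees of $\mathcal{G}_{c}^{(u)}$, hence $B(\mathcal{T}_{MBST}(\mathcal{G}_{c}^{(u)})) \le B(\mathcal{G}_{o}^{*}) \le \tau^{*}(\mathcal{G}_{c})$, which is the claim.

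The only subtle point is ensuring that the weights $d_{o}$ and $d_{c}^{(u)}$ are compared on the same object: one must verify that restricting the directed delay formula~\eqref{eq:weight_6_approx} to a tree causes the node-degree multipliers in front of $M/\upcap(\cdot)$ to be at least one, which is precisely where the spanning-tree reduction of Lemma~\ref{lem:spanning_tree} is used, and where the node-capacitated hypothesis on capacities is needed to keep~\eqref{eq:weight_6_approx} valid. Everything else is bookkeeping.
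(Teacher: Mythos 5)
Your proof is correct and follows essentially the same route as the paper's: view the optimal (tree) overlay as a spanning tree of $\mathcal{G}_c^{(u)}$, use $|\mathcal{N}_i^-|,|\mathcal{N}_j^-|\ge 1$ to bound $d_c^{(u)}(i,j)$ by $\tfrac{1}{2}\left(d_o(i,j)+d_o(j,i)\right)$, relate that to the cycle time via the two-node circuit, and invoke minimality of the MBST bottleneck over spanning trees. The only cosmetic difference is that you compute $\tau(\mathcal{G}_o^*)$ exactly via the Lemma~\ref{lem:symmetric_underlay} argument, whereas the paper only needs the one-sided inequality that any circuit's mean weight is at most the cycle time; both are fine.
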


\begin{proof}
    Denote $\mathcal{T}^*(\mathcal{G}_c)$ the undirected overlay of $\mathcal{G}_c$ with minimal cycle time. We consider the edge \[(w, v) =  \argmax_{(i,j) \in \mathcal{E}(\mathcal{T}^*(\mathcal{G}_c))}d_c^{(u)}(i,j).\] By definition, $B(\mathcal{T}_{MBST}(\mathcal{G}^{(u)}_c)) = \min_{\mathcal{T}\in ST(\mathcal{G}^{(u)}_c)} \max_{(i,j)\in \mathcal{E}(\mathcal{T})} d_c^{(u)}(i,j)$, where $ST(\mathcal{G}^{(u)}_c)$ is the set of spanning trees of $\mathcal{G}^{(u)}_c$. Since  $\mathcal{T}^*(\mathcal{G}_c)\in ST(\mathcal{G}^{(u)}_c)$,  we have:
    \begin{align}
        B(\mathcal{T}_{MBST}& (\mathcal{G}^{(u)}_c))  \leq  d_c^{(u)}(w,v) \nonumber \\
        & \displaystyle_{=}^{\eqref{eq:built_weight_6_approx}} \frac{s\times (T_{c}(w)+T_{c}(v)) + l(w,v) + l(v,w) + M/\upcap(w) + M/\upcap(v)}{2} \nonumber \\
        & \le \frac{s\times (T_{c}(w) + T_{c}(v)) + l(w,v) + l(v,w) + |\mathcal{N}^-_w| M/ \upcap(w)+ |\mathcal{N}^-_v| M/ \upcap(v)}{2} \nonumber \\
        & \displaystyle_{=}^{\eqref{eq:weight_6_approx}} \frac{d_o(w,v) + d_o(v,w)}{2} \nonumber \\
        & \le \tau^{*}(\mathcal{G}_{c}), \nonumber
    \end{align}
    where the second inequality follows from $|\mathcal{N}^{-}_{w}|,|\mathcal{N}^{-}_{v}|  \geq 1$, and the last inequality comes from the definition of cycle time. 
\end{proof}

{
Lemma~\ref{lem:bottleneck_vs_mct} establishes a connection  between the bottleneck of the $\MBST$ of $\mathcal{G}^{(u)}_c$ and the cycle time of $\MCT$ on $\mathcal{G}_c$ when the overlay is required to be undirected. 
To get an approximated $2$-$\MBST$ on $\mathcal{G}^{(u)}_c$, we apply the best known $3$-approximation algorithm from~\cite[Sect.~3.2.1]{doi:10.1002/net.21710} (see lines~\ref{algoline:2BSTstart}-\ref{algoline:2BSTend} in Algo.~\ref{alg:our_algorithm})  which requires $\mathcal{G}^{(u)}_c$ to be Euclidean. So in the following, we show that indeed $\mathcal{G}^{(u)}_c$ is Euclidean.
}

\begin{lem}
    \label{lem:g_prime_is_euclidean}
    If $\mathcal{G}_c$ is Euclidean, then $\mathcal{G}^{(u)}_c$ is Euclidean.
\end{lem}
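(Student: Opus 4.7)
My plan is to verify the two defining properties of a Euclidean graph for $\mathcal{G}^{(u)}_c$: symmetry of $d^{(u)}_c$ and its triangle inequality.

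\textbf{Symmetry} is immediate. Looking at~\eqref{eq:built_weight_6_approx}, every summand treats $i$ and $j$ in a manifestly symmetric way ($T_c(i)+T_c(j)$, $l(i,j)+l(j,i)$, $M/\upcap(i)+M/\upcap(j)$), so $d^{(u)}_c(i,j)=d^{(u)}_c(j,i)$ by inspection. No use of the Euclideanness of $\mathcal{G}_c$ is needed for this step.

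\textbf{Triangle inequality} is the substantive part. The plan is to split $d^{(u)}_c(i,j)$ into three additive blocks and handle them separately: the computation block $s(T_c(i)+T_c(j))/2$, the capacity block $(M/\upcap(i)+M/\upcap(j))/2$, and the latency block $(l(i,j)+l(j,i))/2$. The first two blocks satisfy the triangle inequality trivially because, when we go through an intermediate node $k$, the right-hand side acquires the non-negative extra terms $sT_c(k)$ and $M/\upcap(k)$ that are absent on the left. The entire difficulty therefore concentrates on showing, for the latency block, that
\[l(i,j)+l(j,i)\le l(i,k)+l(k,i)+l(k,j)+l(j,k)+2sT_c(k)+2M/\upcap(k).\]
To obtain this I would invoke the Euclidean property of $d_c$ twice, once in each orientation: $d_c(i,j)\le d_c(i,k)+d_c(k,j)$ yields $l(i,j)\le l(i,k)+l(k,j)+sT_c(k)+M/A(i',k')+M/A(k',j')-M/A(i',j')$, and the symmetric form $d_c(j,i)\le d_c(j,k)+d_c(k,i)$ yields the analogous bound on $l(j,i)$. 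Summing the two rearranged inequalities gives the desired left-hand side, plus $2sT_c(k)$ and a residual $\Delta$ collecting the $M/A$ contributions.

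The main obstacle is thus to show $\Delta\le 2M/\upcap(k)$. I would exploit two facts: first, the negative contributions $-M/A(i',j')-M/A(j',i')$ in $\Delta$ can be dropped since they only decrease the right-hand side; second, the hypothesis $\upcap(k)\le A(k',x')$ for every $x$ (specialising the Prop.~\ref{prop:6-approx} assumption $\upcap(i)\le A(i',j')$ at $i=k$) gives $M/A(k',i')+M/A(k',j')\le 2M/\upcap(k)$, and an analogous argument (using symmetry of the available-bandwidth whenever this is the operative convention in the model, or equivalently routing the triangle inequality of $d_c$ through the symmetric pair $(k',i')$ and $(k',j')$) bounds the remaining $M/A(i',k')+M/A(j',k')$ terms against the $M/A(i',j')+M/A(j',i')$ that we have dropped. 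Putting these pieces together gives the triangle inequality for $d^{(u)}_c$ and completes the Euclidean property.
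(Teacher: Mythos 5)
Your symmetry argument and your treatment of the computation and capacity blocks are fine and agree with the paper. The problem is the latency block, and it stems from which delays the Euclidean hypothesis is placed on. In the node-capacitated setting of this lemma the paper takes $d_c(i,j) = s\times T_c(i) + l(i,j)$, with no $M/A(i',j')$ term (this is stated explicitly at the start of the paper's proof, and is the natural choice here since under the assumption of Proposition~\ref{prop:6-approx} the $A$ terms drop out of $d_o$ entirely). With that definition, applying the triangle inequality for $d_c$ once in each orientation and summing bounds the computation-plus-latency part of $d_c^{(u)}(i,j)$ directly by the corresponding part of $d_c^{(u)}(i,k)+d_c^{(u)}(k,j)$, and the extra $2M/\upcap(k)\ge 0$ on the right-hand side is simply added for free; there is no residual to control.

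By instead invoking the edge-capacitated definition $d_c(i,j)=s\, T_c(i)+l(i,j)+M/A(i',j')$, you create the residual $\Delta = M/A(i',k')+M/A(k',j')+M/A(j',k')+M/A(k',i')-M/A(i',j')-M/A(j',i')$ and must show $\Delta \le 2M/\upcap(k)$, and this step does not go through. Dropping the negative terms and using $\upcap(k)\le A(k',i')$ and $\upcap(k)\le A(k',j')$ controls only two of the four positive terms; even granting symmetry of $A$, you obtain $\Delta\le 4M/\upcap(k)$, a factor of two short of what you need. The fallback you sketch---bounding $M/A(i',k')+M/A(j',k')$ by the dropped $M/A(i',j')+M/A(j',i')$---amounts to a reverse triangle inequality for available bandwidths, which is neither assumed anywhere in the paper nor true in general: a two-hop detour typically has \emph{smaller} available bandwidth, hence \emph{larger} $M/A$. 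So the triangle inequality for $d_c^{(u)}$ is not established as written. The repair is simply to place the Euclidean hypothesis on the node-capacitated delays $s\times T_c(i)+l(i,j)$, after which the proof collapses to the paper's two-line computation.
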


\begin{proof}
    Remind that the connectivity graph $\mathcal{G}_c$ is Euclidean on a node-capacitated network, if its delays $d_c(i,j) = s\times T_c(i) + l(i,j)$ are symmetric ($d_c(i,j) = d_c(j,i), \forall i,j\in \mathcal{V})$ and satisfy the triangle inequality. From~$\eqref{eq:built_weight_6_approx}$ it is easy to check that $d_c^{(u)}(i,j) = d_c^{(u)}(j,i)$.  Consider three nodes $i, j, k \in \mathcal{V}$, we have:
    \begin{align*}
        d_c^{(u)}(i,j) &= \frac{d_c(i,j) + d_c(j,i) + M/\upcap(i) + M/\upcap(j)}{2}\\
        &\le \frac{d_c(i,k) + d_c(k,j) + d_c(j,k) + d_c(k,i) + M/\upcap(i) + M/\upcap(j)}{2}\\
        &\le \frac{d_c(i,k) + d_c(k,j) + d_c(j,k) + d_c(k,i) + M/\upcap(i) + M/\upcap(j) + 2 M/\upcap(k)}{2}\\
        &= d_c^{(u)}(i,k) + d_c^{(u)}(k,j),
    \end{align*}
    where the first inequality follows from the triangle inequality for $d_c(i,j)$ and the second inequality from $\upcap(k) \geq 0$.
\end{proof}

\begin{repprop}{prop:6-approx}
    Algorithm~\ref{alg:our_algorithm} is a $6$-approximation algorithm for $\MCT$ when $\mathcal{G}_c$ is node-capacitated and Euclidean with   $\upcap(i)\leq \min\left(\frac{\dncap(j)}{N}, A(i',j') \right)$, $\forall (i,j)\in \mathcal{E}_c$, and $\mathcal{G}_o$ is required to be undirected.
\end{repprop}

\begin{proof}
    Algorithm~\ref{alg:our_algorithm} considers, as a candidate solution, an opportune Hamiltonian path $\mathcal{H}$ (line~\ref{algoline:2BSTend}) for which reference~\cite[Thm.~8]{doi:10.1002/net.21710} proves that 
    \begin{equation}
        B(\mathcal{H}) \leq 3 \times B(\mathcal T_{MBST}(\mathcal{G}^{(u)}_c))
        \label{eq:MBST}
    \end{equation}
    as $\mathcal{G}^{(u)}_c$ is Euclidean (Lemma~\ref{lem:g_prime_is_euclidean}). Moreover,
    \begin{align}
        \label{eq:xyz}
        \tau(\mathcal{H}) & = \max_{(i,j) \in \mathcal{E}(\mathcal{H})} \frac{d_o(i,j) + d_o(j,i)}{2} \nonumber\\
        & = \max_{(i,j) \in \mathcal{E}(\mathcal{H})}\frac{s \times T_{c}(i) + s\times T_{c}(j) + l(i,j) + l(j,i) + \frac{M |\mathcal{N}^-_i|}{\upcap(i)} + \frac{M |\mathcal{N}^-_j|}{\upcap(j)}}{2}\nonumber\\
        & \leq \max_{(i,j) \in \mathcal{E}(\mathcal{H})}\frac{s \times T_{c}(i) + s\times  T_{c}(j) + l(i,j) + l(j,i) + 2\frac{M}{\upcap(i)} + 2\frac{M}{\upcap(j)}}{2}\nonumber\\
        & \leq \max_{(i,j) \in \mathcal{E}(\mathcal{H})}s \times T_{c}(i) + s\times  T_{c}(j) + l(i,j) + l(j,i) + \frac{M}{\upcap(i)} + \frac{M}{\upcap(j)}\nonumber\\
        & = 2 \max_{(i,j) \in \mathcal{E}(\mathcal{H})} d_c^{(u)}(i,j)\nonumber\\
        & = 2  B(\mathcal H),
    \end{align}
    where the first inequality follows from nodes in a path having degree at most 2. Combining $\eqref{eq:MBST}$, $\eqref{eq:xyz}$, and Lemma~\ref{lem:bottleneck_vs_mct}, it follows that $\tau(\mathcal{H}) \leq 6 \times \tau^{*}(\mathcal{G}_{c})$.
\end{proof}

\subsection{Proof of Proposition \ref{prop:mct_3_approx}}

\begin{repprop}{prop:mct_3_approx}
    Christofides' algorithm is a $3N$-approximation algorithm for $\MCT$ when $\mathcal{G}_c$ is node-capacitated and Euclidean.
\end{repprop} 

\begin{proof}
    Let $\mathcal{G}'_c$ be a weighted graph with the same topology as $\mathcal{G}_c$ with weights $d'(i,j) = s \times T_c(i) +  l(i,j)+\frac{M}{\min\left(\upcap(i), \dncap(j), A(i', j') \right)}$. Denote $\hat{C}$ the output of Christofides' algorithm when used on $\mathcal{G}'_c$, and denote $C^{*}$ the optimal tour of $\mathcal{G}'_{c}$. Since Christofides' algorithm provides a $\frac{3}{2}$-approximation to $\TSP$, it follows that $d'(\hat{C}) \leq \frac{3}{2} d'(C^*)$. As $\hat C$ and $C^*$ are directed rings, it holds $d'(\hat C)= d_o(\hat C)$ and $d'(C^*) = d_o(C^*)$. Using Lemma \ref{lem:tsp_mct} it follows that \[\tau(\hat C) = \frac{d_o(\hat{C})}{|\hat{C}|} =  \frac{d'(\hat{C})}{|\hat{C}|}\leq \frac{3}{2} \frac{d'(C^*)}{|C^*|} = \frac{3}{2} \frac{d_o(C^*)}{|C^*|} = \frac{3}{2} \tau(C^*)  \le 3 N \tau^*.\] Thus the graph obtained using only the edges of $\hat{C}$ is a $3N$-approximation algorithm for $\MCT$ when $\mathcal{G}_c$ is node-capacitated and Euclidean.
\end{proof}

\newpage
\section{Time Simulator}
\label{sec:appendix_time_simulator}
The time simulator reconstructs the wall-clock time. It requires the complete knowledge about the underlay topology, i.e., the capacities of all physical links and the upload and download capacities for each silo. For a given overlay topology $\mathcal{G}_{o} = (\mathcal{V}, \mathcal{E}_{o})$, the purpose of the proposed time simulator (Alg. \ref{alg:time_simulator}) is to compute $t(k) = \left(t_{i}(k)\right)_{1\leq i \leq N}$, i.e., the time at which each silo starts computing for the $k$-th time. The simulator needs to compute the delay required to send a message with a known size on each physical link of the underlay. This delay is the sum of two terms~\cite{geant_doi:10.1002/dac.645}:

\begin{itemize}
    \item Latency: it is the time required by the first transmitted bit to travel from the source to the destination. The latency of a link $(i, j)$ essentially depends on the length of the link and the speed of the light in the link's transmission medium. We have estimated the latency using the formula proposed in  \cite{10.1145/1028788.1028828}: $0.0085 \times \textrm{distance}(i, j)+ 4$, where the distance is expressed in kilometers and the latency in milliseconds. The latency of a path is the sum of the link latencies.
    \item Transmission Delay: it is the time between the reception of the first bit of the message and the reception of the last bit. It depends on the minimum available bandwidth along the path. We compute it as $M/\min\left(\frac{\upcap(i)}{|\mathcal N^-_i|},\frac{\dncap(j)}{|\mathcal N^+_j|}, A(i',j')\right)$.
\end{itemize}

Finally, the simulator also accounts for the total time spent in computation by each node, that is the product of the number of local steps $s$ and the time needed to perform one local step (in milliseconds), i.e., $s \times T_{c}(i)$.

\IncMargin{2.0em}
\begin{algorithm}[H]
    \label{alg:time_simulator}
    \SetAlgoLined
    \SetKwInOut{Input}{Input} 
    \SetKwFunction{Function}{compute\_round\_end\_times}
    \SetKwProg{Fn}{Function}{:}{}

    \Indm\Indmm
    \Input{$\left(l_{i,j}\right)_{(i,j) \in \mathcal{G}_{o}}$, $\left(T^{c}_{i}\right)_{i \in \mathcal{V}}$, $\left(\dncap(i)\right)_{i \in \mathcal{V}}$ and $\left(\upcap(i)\right)_{i \in \mathcal{V}}$}
    \KwResult{$t \in \mathbb{R}^{N\times K}$}

    \Indp\Indpp

    \For{ $i \in \mathcal{V}$}{
        $t_{i}(0) = 0$\;
    }

    \For{ $k \in \{1, \dots, K\}$}{
        $t_i(k)= \max_{j \in \mathcal N^+_i} \left(t_j(k-1) + l(i, j) + \frac{M}{\min\left(\frac{\upcap(i)}{|\mathcal N^-_i|},\frac{\dncap(j)}{|\mathcal N^+_j|}, A(i',j')\right)}\right)$\;
        $t_{i}(k)  = t_{i}(k) + s \times T_{c}(i)$\;
    }
    \caption{Time Simulator}
\end{algorithm}

\newpage
\section{Experiments Detailed Description}
\label{sec:appendix_exp_details}
\subsection{Networks and Communication model}
\label{app:networks}
\nocite{SciPyProceedings_11}
We considered three real topologies from \emph{Rocketfuel engine}~\cite{rocketfuel_10.1109/TNET.2003.822655} (Exodus and Ebone) and from \emph{The Internet Topology Zoo}~\cite{zoo_6027859} (G\'eant), and two synthetic topologies (AWS North-America and Gaia) built from  AWS data centers~\cite{gaia_10.5555/3154630.3154682,amazon} (Table~\ref{tab:topologies_cycle_time}). For the synthetic topologies, we consider a full-meshed underlay. We assume all underlays support a shortest path routing with the geographical distance (or equivalently the latency) as link cost. These topologies have between 11 and 87 nodes located in the same continent with the exception of Gaia, which spans four continents. The Géant and Ebone network connect European cities and Exodus network connect American cities. We considered that each network node is connected to a geographically close silo by a symmetric access link. 

Some underlays and examples of overlays are shown in Figures~\ref{fig:geant},~\ref{fig:gaia}, and~\ref{fig:awsna}.

\begin{figure}
    \centering
    \begin{subfigure}[b]{0.49\textwidth}  
        \centering 
        \includegraphics[width=\textwidth, height=0.8\textwidth]{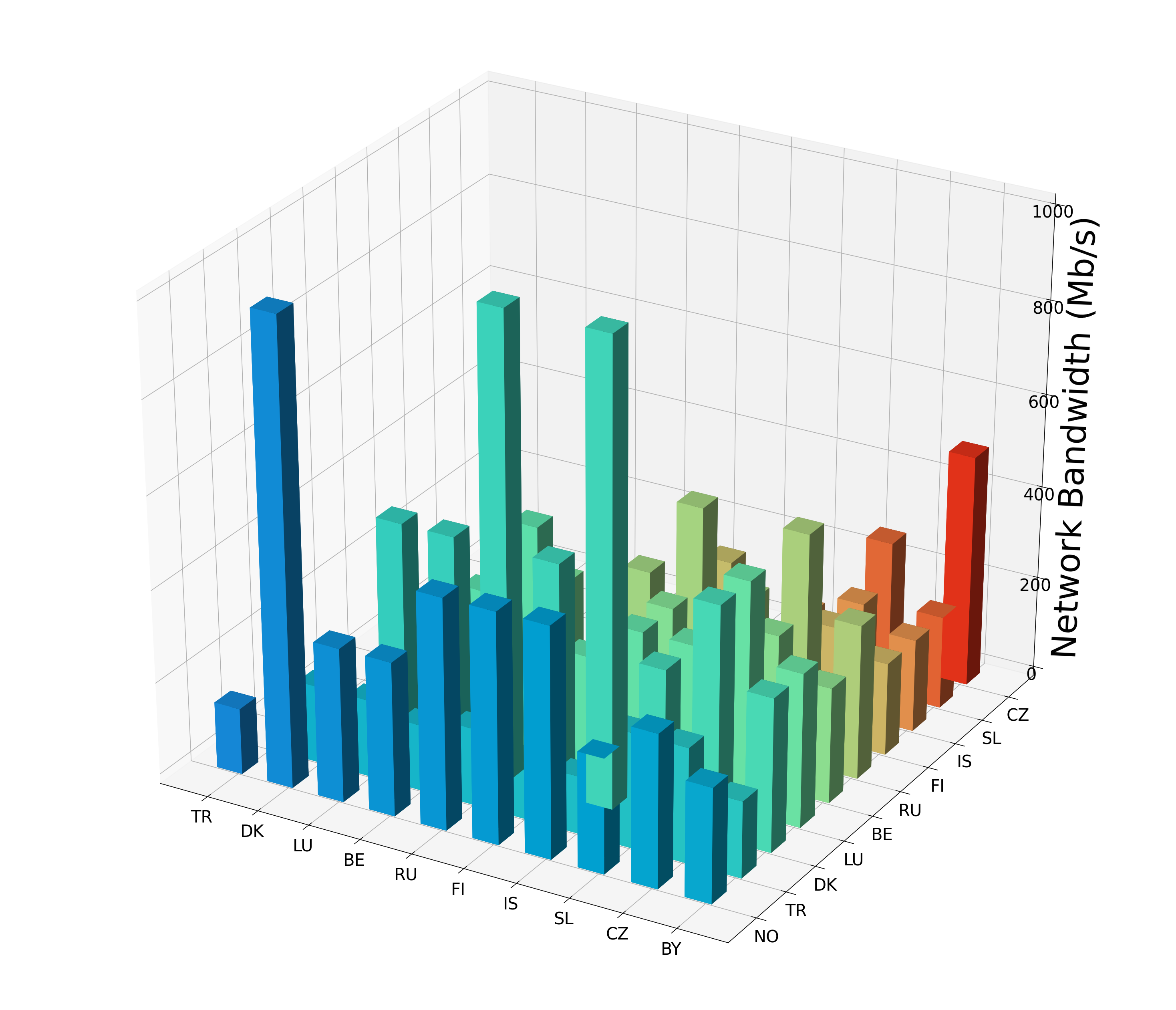}
        \caption[]%
        {{ Available bandwidth  between some pairs of silos in G\'eant as computed through our model.}} 
    \end{subfigure}
    \hfill
    \begin{subfigure}[b]{0.49\textwidth}
        \centering
        \includegraphics[width=\textwidth, height=0.8\textwidth]{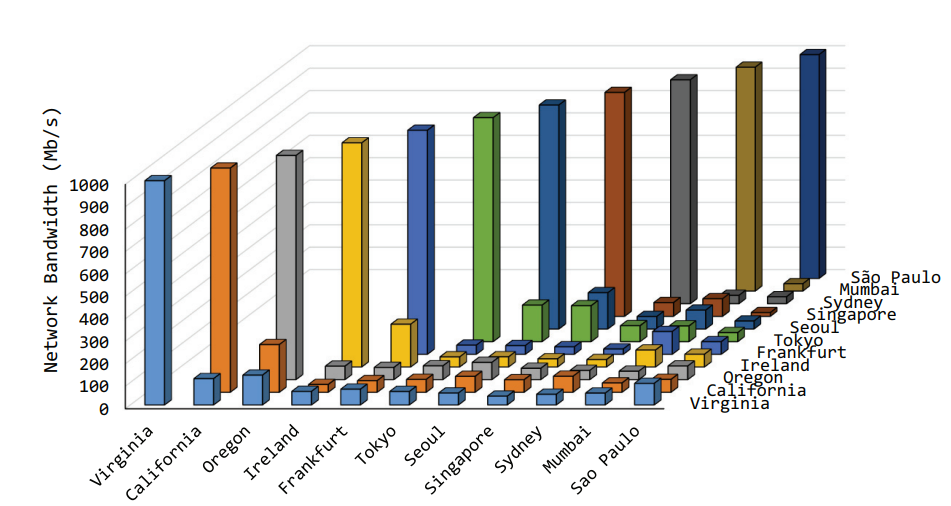}
        \caption[]%
        {{Available bandwidth measurements between Gaia sites \cite[Fig.~2]{gaia_10.5555/3154630.3154682}.}} 
    \end{subfigure}
    \caption[]
    {Our simulator with 1~Gbps capacity links generates a distribution of available bandwidths with the same variability observed in real networks.}
    \label{fig:available_bandwidth}
\end{figure}{}

\begin{figure}
    \centering
    \begin{subfigure}[b]{0.24\textwidth}  
        \centering 
        \includegraphics[width=\textwidth, height=0.8\textwidth]{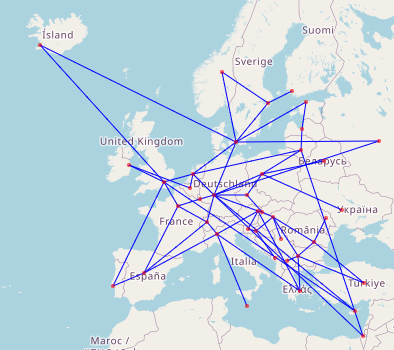}
        \caption[]%
        {{\small Underlay}} 
    \end{subfigure}
    \hfill
    \begin{subfigure}[b]{0.24\textwidth}
        \centering
        \includegraphics[width=\textwidth, height=0.8\textwidth]{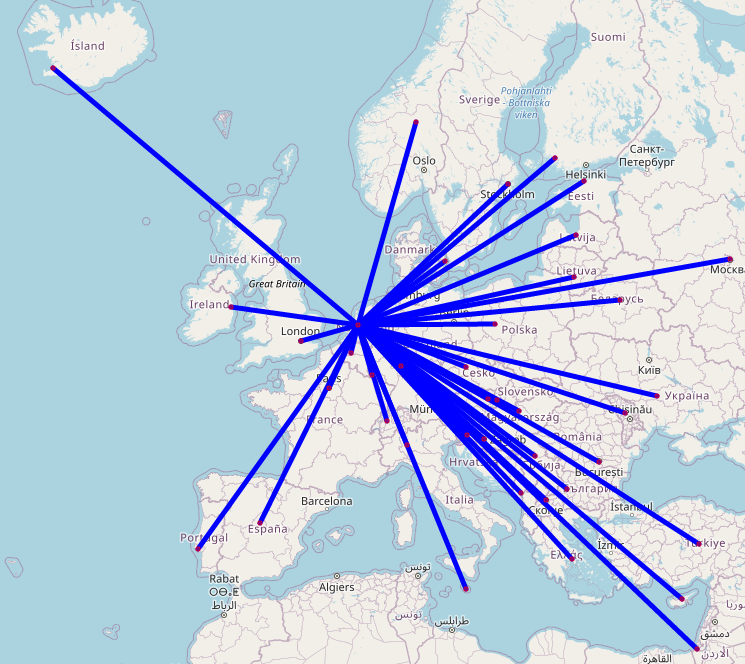}
        \caption[]%
        {{\small Star}} 
    \end{subfigure}
    \hfill
    \begin{subfigure}[b]{0.24\textwidth}   
        \centering 
        \includegraphics[width=\textwidth, height=0.8\textwidth]{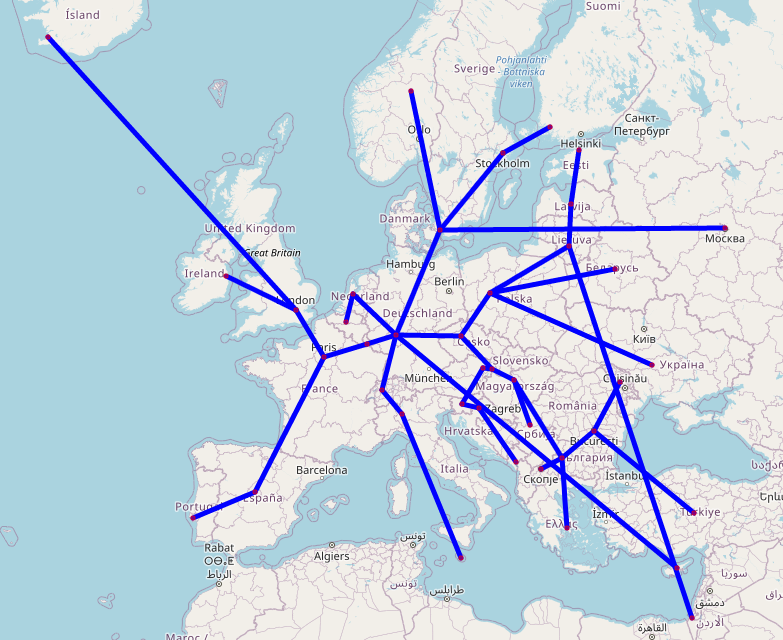}
        \caption[]%
        {{\small MST}} 
    \end{subfigure}
    \hfill
    \begin{subfigure}[b]{0.24\textwidth}   
        \centering 
        \includegraphics[width=\textwidth, height=0.8\textwidth]{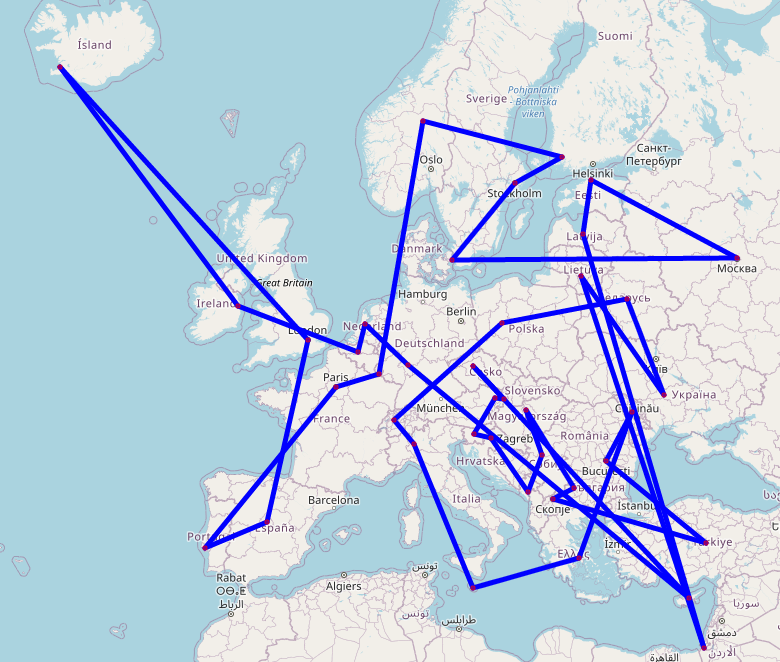}
        \caption[]%
        {{\small Ring}} 
    \end{subfigure}
    \caption[]
    {\small G\'eant Network: the underlay (a) and selected overlays computed when core links have 1~Gbps capacity and access links have 10~Gbps capacity (b-d).}
    \label{fig:geant}
\end{figure}

\begin{figure}
    \centering
    \begin{subfigure}[b]{0.24\textwidth}  
        \centering 
        \includegraphics[width=\textwidth, height=0.8\textwidth]{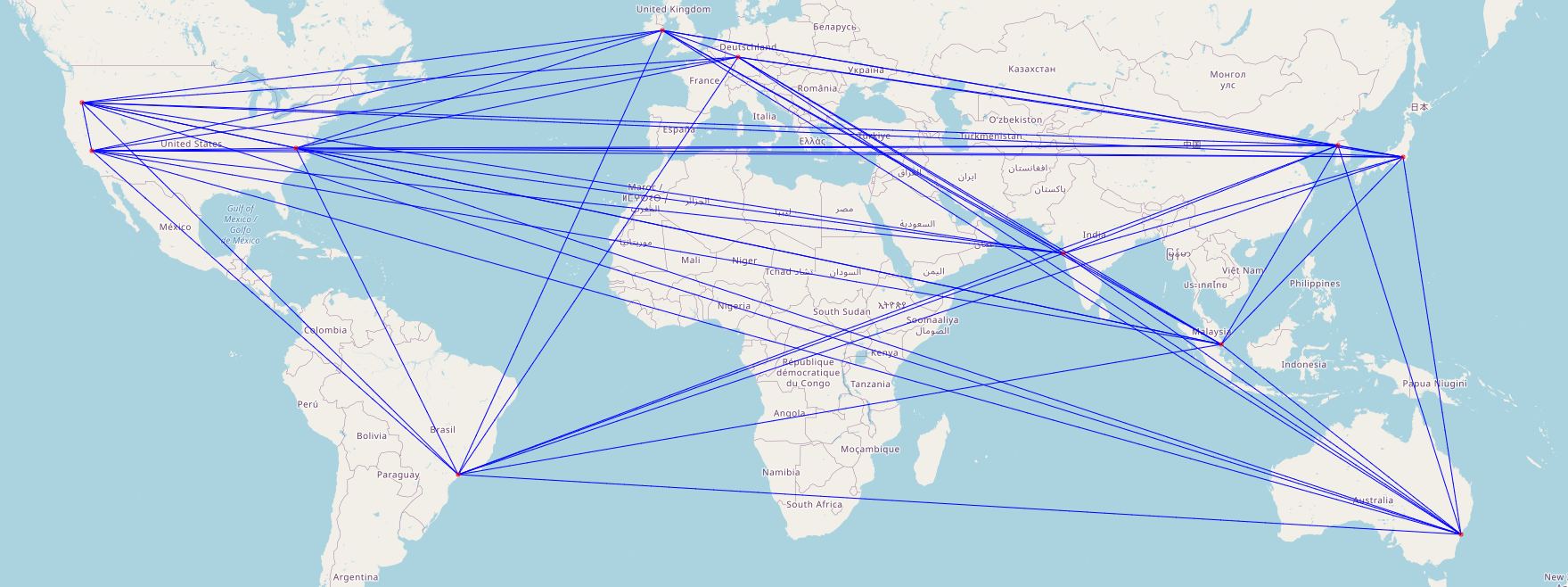}
        \caption[]%
        {{\small Underlay}} 
    \end{subfigure}
    \hfill
    \begin{subfigure}[b]{0.24\textwidth}
        \centering
        \includegraphics[width=\textwidth, height=0.8\textwidth]{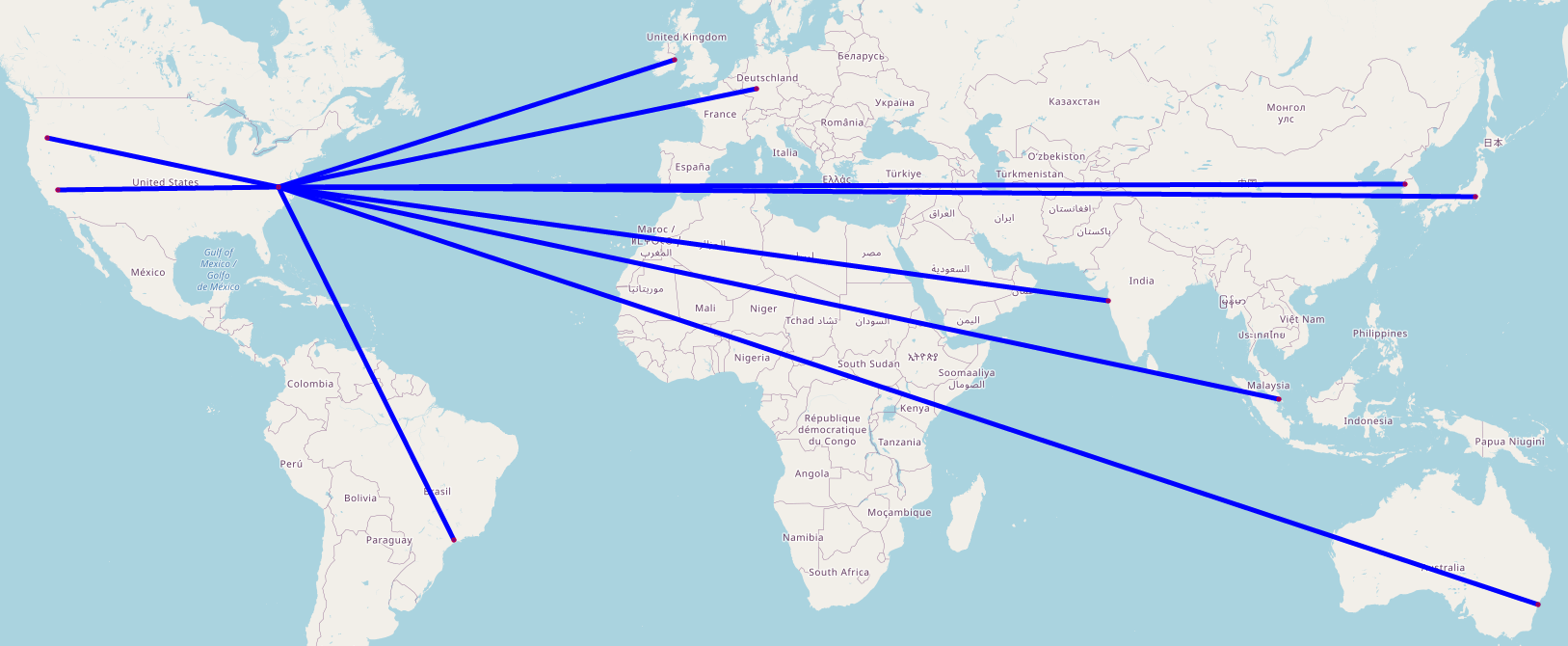}
        \caption[]%
        {{\small Star}} 
    \end{subfigure}
    \hfill
    \begin{subfigure}[b]{0.24\textwidth}   
        \centering 
        \includegraphics[width=\textwidth, height=0.8\textwidth]{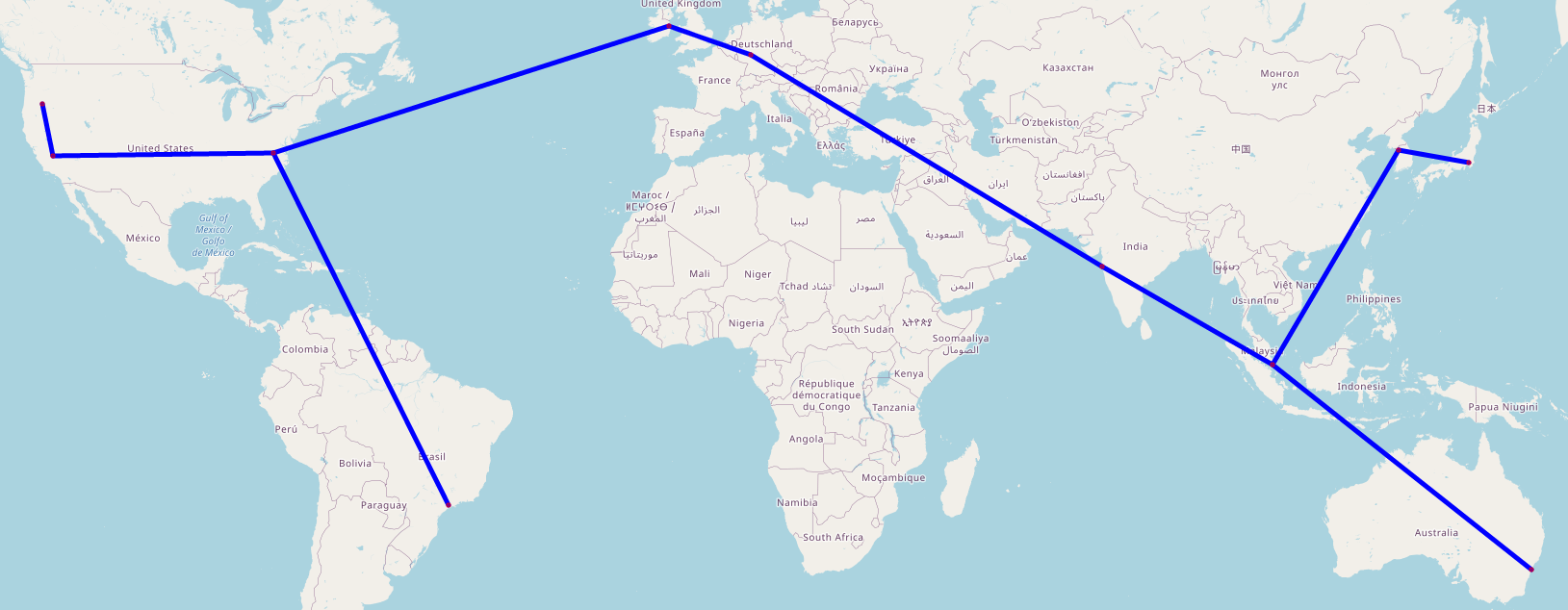}
        \caption[]%
        {{\small MST}} 
    \end{subfigure}
    \hfill
    \begin{subfigure}[b]{0.24\textwidth}   
        \centering 
        \includegraphics[width=\textwidth, height=0.8\textwidth]{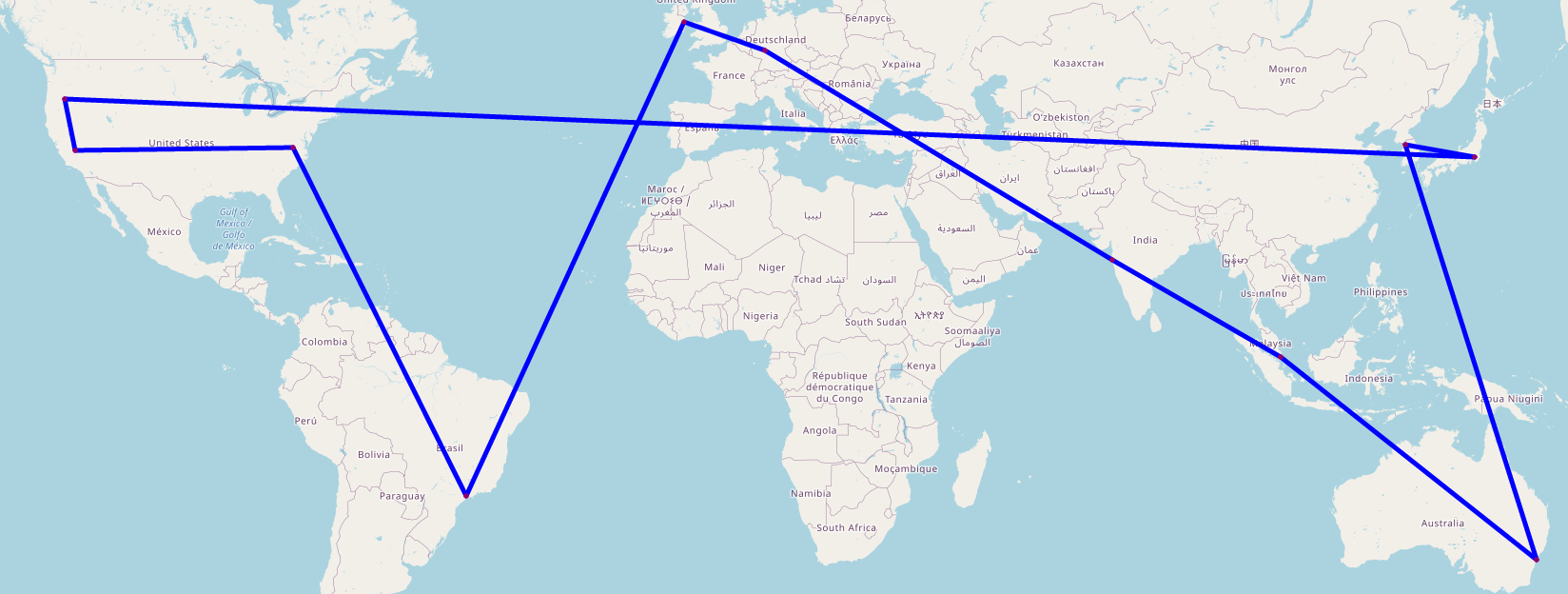}
        \caption[]%
        {{\small Ring}} 
    \end{subfigure}
    \caption[]
    {\small Gaia Network: the underlay (a) and selected overlays computed when core links have 1~Gbps capacity and access links have 10~Gbps capacity (b-d).}
    \label{fig:gaia}
\end{figure}

\begin{figure}
    \centering
    \begin{subfigure}[b]{0.24\textwidth}  
        \centering 
        \includegraphics[width=\textwidth, height=0.8\textwidth]{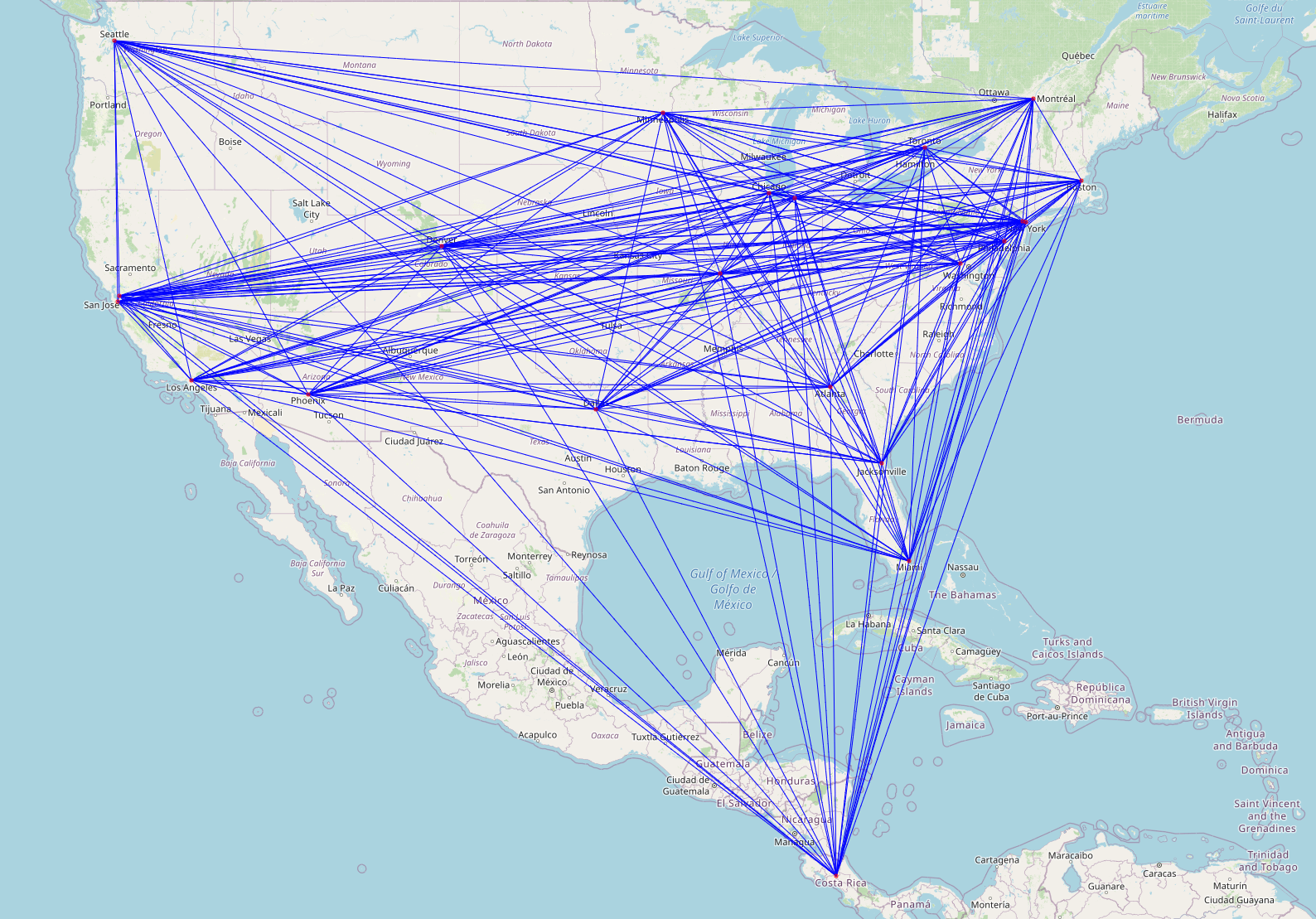}
        \caption[]%
        {{\small Underlay}} 
    \end{subfigure}
    \hfill
    \begin{subfigure}[b]{0.24\textwidth}
        \centering
        \includegraphics[width=\textwidth, height=0.8\textwidth]{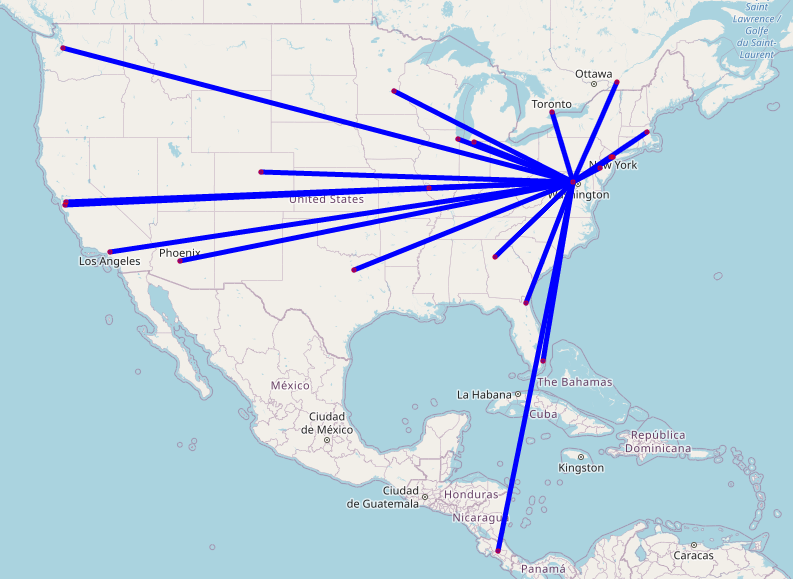}
        \caption[]%
        {{\small Star}} 
    \end{subfigure}
    \hfill
    \begin{subfigure}[b]{0.24\textwidth}   
        \centering 
        \includegraphics[width=\textwidth, height=0.8\textwidth]{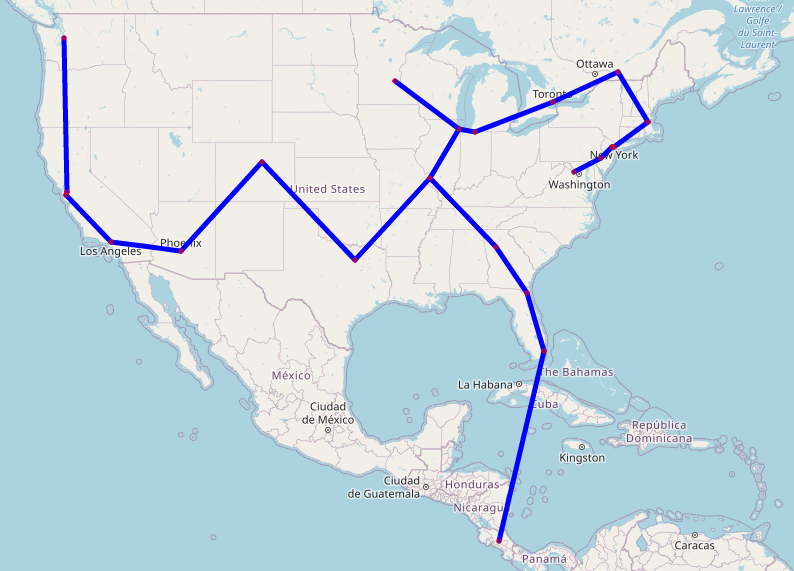}
        \caption[]%
        {{\small MST}} 
    \end{subfigure}
    \hfill
    \begin{subfigure}[b]{0.24\textwidth}   
        \centering 
        \includegraphics[width=\textwidth, height=0.8\textwidth]{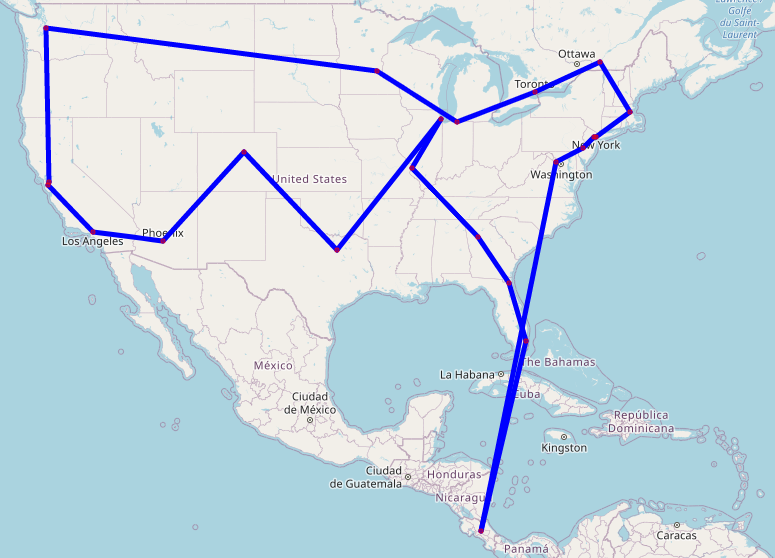}
        \caption[]%
        {{\small Ring}} 
    \end{subfigure}
    \caption[]
    {\small AWS-North America Network: the underlay (a) and selected overlays computed when core links have 1~Gbps capacity and access links have 10~Gbps capacity (b-d).}
    \label{fig:awsna}
\end{figure}

\subsection{Datasets and Models}
We provide full details on datasets and models used in our experiments. We use multiple datasets spanning a wide range of machine learning tasks (sentiment analysis, language modeling, image classification, handwritten character recognition), including those used in prior work on federated learning \cite{mcmahan2016communicationefficient}, and in LEAF \cite{caldas2018leaf} benchmark, and a cross-silo specific dataset based on iNaturalist~\cite{DBLP:journals/corr/HornASSAPB17}. 

\paragraph{iNaturalist dataset.}
    iNaturalist~\cite{DBLP:journals/corr/HornASSAPB17} consists of images from over 8,000 different species of plants and animals. We choose the dataset from iNaturalist 2018 competition which contains 450,000 images\footnote{iNaturalist 2018 competition is part of the $FGVC^5$ workshop at CVPR (\url{https://github.com/visipedia/inat_comp/blob/master/2018/README.md}).} where the geo-locations of these images are provided. 
    Due to a large class imbalance, iNaturalist species classification is a tough learning task, which requires large computation resources.
    In our experiments, we started by using a subset of the original iNaturalist dataset, selecting images containing the $80$ most popular species.\footnote{The dataset size is reduced from 120GB to 18GB containing 67,000 images. We subsampled then $20\%$ from this dataset for training.}
    We have also conducted additional experiments on the full iNaturalist dataset, whose corresponding results are presented in Appendix~\ref{sec:appendix_full_inaturalist}.
    We refer to the complete dataset as Full-iNaturalist.

    In order to simulate a realistic cross-silo environment with non-iid local datasets, one can assign the images to the geographically closest silo obtaining local datasets different in size and in the species represented. This distribution would lead some silos to have no point. We decided then to assign half of the images uniformly at random and half to the closest silo. Moreover, since most of the images in iNaturalist are from North America, for European networks such as Ebone and Géant, we mapped the European cities westward by reducing their longitude by 90 degrees. Table~\ref{tab:statistics_data_distribution} shows that our method  generates quite unbalanced data distribution (e.g., for Ebone, one silo can have up to 50 times more images than another one). 
    
    To classify iNaturalist images we finetuned a pretrained ResNet-18 on ImageNet~\cite{imagenet_cvpr09}. In particular we used the torchvision \cite{10.1145/1873951.1874254} implementation of ResNet-18.

\paragraph{LEAF datasets.}
    LEAF \cite{caldas2018leaf} is a benchmark framework for learning in federated settings. We used three LEAF datasets in our experiments on AWS North America network where we took $20\%$ of the samples randomly as our dataset.\footnote{
        Actually, the amount of data we considered is comparable to the federated learning paper~\cite{li2019federated}: we considered 10 times more data for FEMNIST and the same amount of data for Sentiment140 and Shakespeare.
    }
    Statistics for the corresponding data distributions are in Table~\ref{tab:statistics_data_distribution_2}.
    
    \begin{itemize}
        \item  \textbf{FEMNIST} (\emph{Federated Extended} MNIST): A $62$-class image classification dataset built by partitioning the data of Extended MNIST based on the writer of the digits/characters. In our experiments, we associate each silo with a random number of writers following a lognormal distribution with mean equal to $5$ and standard deviation equal to $1.5$.
        
         We train a  convolutional neural network, similar to LeNet, with two convolutional layers followed by a max-pooling layer and two fully connected layers. 
        
        \item \textbf{Shakespeare}: A dataset built from \emph{The Complete Works of William Shakespeare}, which is partitioned by the speaking roles~\cite{mcmahan2016communicationefficient}. In our experiment, we associate each silo with a random number of speaking roles following a lognormal distribution with mean equal to $5$ and standard deviation equal to $1.5$.
        
        We consider character-level based language modeling on this dataset. 
        The model takes as input a sequence of $200$ English characters and predicts the next character. The model embeds the $200$ characters into a learnable $16$ dimensional embedding space, and uses two stacked-GRU layers with $256$ hidden units, followed by a densely-connected layer.
        
        \item \textbf{Sentiment140}~\cite{Go_Bhayani_Huang_2009}: An automatically generated sentiment analysis dataset that annotates tweets based on their emoticons. In our experiment, we associate each silo with a random number of Twitter accounts following a lognormal distribution with mean equal to $5$ and standard deviation equal to $1.5$.
        
        We use a two layer bi-directional LSTM binary classifier containing $256$ hidden units with pretrained $100$ dimensional GloVe embedding \cite{pennington2014glove}. 
    \end{itemize}
  
    \begin{table}
        \centering
        \caption{Statistics of iNaturalist dataset distribution for different networks.}
        \begin{tabular}{l c r r r r }
            \toprule
            \multirow{2}{*}{Network name} &  \multirow{2}{*}{Silos} & \multicolumn{4}{c}{Samples/silo} \\
            & & Mean & Stdev & Min & Max \\
            \midrule
            Gaia & 11 & 1213 & 1143 & 610 & 3981\\ 
            AWS North America& 22& 606& 731 & 113 & 3216\\
            Géant& 40& 333 & 644 & 152 & 4261\\
            Exodus & 79 & 168& 96& 92& 576\\
            Ebone & 87 & 153 & 394& 68& 3389\\
            \bottomrule
        \end{tabular}
        \label{tab:statistics_data_distribution}
    \end{table}

    \begin{table}
        \centering
        \caption{Statistics of LEAF dataset distribution for AWS North America network (22 silos).}
        \begin{tabular}{l  r r r r }
            \toprule
            \multirow{2}{*}{Dataset}  & \multicolumn{4}{c}{Samples/silo} \\
                            & Mean & Stdev & Min & Max \\
            \midrule
            Shakespeare & 36359 & 6837 &24207& 50736 \\ 
            FEMNIST & 6847& 7473 & 196 & 26469\\
            Sentiment140 &  13101& 14273 & 424 & 50562 \\
            \bottomrule
        \end{tabular}
        \label{tab:statistics_data_distribution_2}
    \end{table}

\subsection{Implementation Details}
\paragraph{Machines.} The experiments have been run on a CPU/GPU cluster, with different GPUs available (e.g., Nvidia Tesla V100, GeForce GTX 1080 Ti, and Titan X).

\paragraph{Libraries.} All code is implemented in PyTorch Version 1.4.0. We offer two possibilities for running the code: \emph{sequential} (using only one GPU) and \emph{parallel} (using multiple GPUs). In the parallel setting MPI backend is used for  inter-GPU communications. 

\paragraph{Hyperparameters.} The dataset is randomly split into an $80\%$ training set and a $20\%$ testing set. When training on Gaia, AWS North America, and G\'eant networks, the initial learning rate is set to 0.001 with Adam optimizer. When training on Exodus and Ebone networks, the initial learning rate is set to $0.1$ with SGD optimizer. We decay the learning rate based on the inverse square root of the number of communication rounds. The batch size is set to $512$ for Sentiment140 and Shakespeare datasets, to $128$ for Femnist dataset and to $16$ for iNaturalist dataset. 

\paragraph{Consensus Matrix.} For a given overlay $\mathcal{G}_o=(\mathcal{V},\mathcal{E}_o)$, the consensus matrix $\mA$ is selected similarly to the local-degree rule in \cite{Boyd2004_1272421}. The weight on an arc is based on the larger in-degree of its two incident nodes:
\begin{align}
    \label{eq:mixing_matrix}
    \mA_{i,j} &=\frac{1}{1 + \max\left(|\mathcal{N}^{-}_{i}|, |\mathcal{N}^{-}_{j}|\right)}, ~~\forall (i ,j) \in \mathcal{E}_{o}.\\ 
    \mA_{i,i} &= 1-\sum_{j\in \mathcal{N}^{-}_i} \mA_{i,j},~~\forall i\in \mathcal{V}.
\end{align}
The matrix $\mA$ so-built is symmetric doubly stochastic. The weights can be determined in a fully-distributed way: every node just needs to exchange degree information with its neighbours.

\paragraph{MATCHA.} We implemented MATCHA as described in~\cite{wang2019matcha} but for one difference. 
In MATCHA, each matching $i$ is selected independently with some probability $p_i$. With probability $q = \prod_{i}(1-p_i)$, no matching is selected and then no communication occurs. This is equivalent to perform a random number of local steps $s$ between two communication rounds. In order to compare fairly the different approaches and isolate the effect of $s$, we fixed $s$ also for MATCHA as follows. Silos perform a given number of local steps $s$ and then, when a communication should occur, matchings are independently sampled until at least one of them is selected. In practice, in our experiments, the probability $q$ was close to 0, so that the two approaches are practically undistinguishable. Finally, we observe that MATCHA computes the matchings coloring an initial topology, but it is not explained how this initial topology is selected. MATCHA and MATCHA$^{+}$ operate exactly in the same way but starting from two different initial topologies: the connectivity graph $\mathcal G_c$ and the underlay $\mathcal G_u$, respectively. The silos can easily discover the connectivity graph $\mathcal G_c$; reconstructing the underlay is much more complicated. Nevertheless, as MATCHA$^{+}$ was in general outperforming MATCHA, we showed the results for MATCHA$^{+}$.

\newpage
\section{Complete Set of Experiments}
\label{sec:appendix_full_experiments}
\subsection{Effect of the number of local steps}
\label{sec:appendix_local_steps}
Tables~\ref{tab:topologies_cycle_time_local_step_5} and~\ref{tab:topologies_cycle_time_local_step_10} show the effect of 6 different overlays when training ResNet-18 over iNaturalist in networks with 1~Gbps core links and 10~Gbps access links and local steps equal to 5 and 10, respectively. For $5$ local steps, the training time is evaluated as the time to reach a training accuracy equal to $65\%$, $55\%$, $60\%$, $45\%$, and $45\%$ for Gaia, AWS North America, Géant, Exodus, and Ebone, respectively. For $10$ local steps, the training time is evaluated as the time to reach a training accuracy equal to $65\%$, $50\%$, $50\%$, $45\%$, and $40\%$, respectively. 

\setlength{\tabcolsep}{2pt}
\begin{table}
    \footnotesize
    \caption{iNaturalist training over different networks. $1$~Gbps core links capacities, $10$ Gbps access links capacities. Five local computation steps.}
    \centering
    \resizebox{\columnwidth}{!}{%
        \begin{tabular}{l| c | c | r | c | r | r | r | c | c}
            \toprule
            \multirow{2}{*}{\textbf{Network name}} & \multirow{2}{*}{\textbf{Silos}} & \multirow{2}{*}{\textbf{Links}} & \multicolumn{5}{c|}{\textbf{Cycle time (ms)}} & \multicolumn{2}{c}{\textbf{Ring's training speed-up}}\\
            &  &  & {\scriptsize STAR}  &  {\scriptsize MATCHA$^{(+)}$} & {\scriptsize MST} & {\scriptsize $\delta$-MBST} &  {\scriptsize RING}  & {\scriptsize vs STAR} &   {\scriptsize vs MATCHA$^{(+)}$}\\
            \midrule 
            Gaia \cite{gaia_10.5555/3154630.3154682} & $11$ & $55$ & $492.4$ & $329.3 (329.3)$ &  $239.7$ &  $239.8$ &  $219.7$  & $1.79$ & $1.50 (1.50)$\\ 
            AWS NA \cite{amazon} & $22$ & $231$ & $389.8$ &  $226.0 (226.0)$ &  $191.3$ &  $191.3$ &  $182.9$  & $1.40$ & $1.24 (1.24)$\\
            Géant \cite{geant} & $40$ & $61$ & $736.0$& $ 553.8 (207.4)$ &  $202.6$ &  $202.6$ &  $210.6$  & $3.49$ & $2.63 (2.96)$\\
            Exodus(us)~\cite{mahajan2002inferring} & $79$ & $147$ & $1013.4$&  $695.0 (243.8)$ &  $246.9$ &  $246.9$ &  $205.5$ & $3.95$ & $2.25 (1.18)$\\
            Ebone(eu)~\cite{mahajan2002inferring} & $87$ & $161$ & $1003.2$ & $681.6 (224.9)$ & $223.2$ & $223.2$ & $196.9$ & $3.04$ & $2.29 (1.21)$\\
            \bottomrule
        \end{tabular}%
    }
    \label{tab:topologies_cycle_time_local_step_5}
\end{table}

\setlength{\tabcolsep}{2pt}
\begin{table}
    \footnotesize
    \caption{iNaturalist training over different networks. $1$~Gbps core links capacities, $10$ Gbps access links capacities. Ten local computation steps.}
    \centering
    \resizebox{\columnwidth}{!}{%
        \begin{tabular}{l| c | c | r | c | r | r | r | c | c}
            \toprule
            \multirow{2}{*}{\textbf{Network name}} & \multirow{2}{*}{\textbf{Silos}} & \multirow{2}{*}{\textbf{Links}} & \multicolumn{5}{c|}{\textbf{Cycle time (ms)}} & \multicolumn{2}{c}{\textbf{Ring's training speed-up}}\\
            &  &  & {\scriptsize STAR}  &  {\scriptsize MATCHA$^{(+)}$} & {\scriptsize MST} & {\scriptsize $\delta$-MBST} &  {\scriptsize RING}  & {\scriptsize vs STAR} &   {\scriptsize vs MATCHA$^{(+)}$}\\
            \midrule 
            Gaia \cite{gaia_10.5555/3154630.3154682} & $11$ & $55$ & $619.4$ & $456.4 (456.4)$ &  $366.7$ &  $366.7$ &  $346.7$  & $1.79$ & $1.32 (1.32)$\\ 
            AWS NA \cite{amazon} & $22$ & $231$ & $516.8$ &  $353.2 (353.2)$ &  $318.3$ &  $318.3$ &  $309.9$  & $0.69$ & $0.47 (0.47)$\\
            Géant \cite{geant} & $40$ & $61$ & $609.0$& $ 680.8 (334.7)$ & $329.6$ &  $329.6$ &  $337.6$  & $0.90$ & $ 1.00 (1.98)$\\
            Exodus(us)~\cite{mahajan2002inferring} & $79$ & $147$ & $1140.4$&  $ 822.0(370.9)$ &  $373.9$ &  $373.9$ &  $332.5$ & $1.52$ & $1.10 (1.23)$ \\
            Ebone(eu)~\cite{mahajan2002inferring} & $87$ & $161$ & $1130.2$ & $ 808.6(352.1)$ & $350.4$ & $350.4$ & $323.9$ & $1.74$ & $ 1.25 (1.09)$\\
            \bottomrule
        \end{tabular}%
    }
    \label{tab:topologies_cycle_time_local_step_10}
\end{table}

\subsection{Full results for training every dataset on AWS North America}
\label{sec:appendix_other_metrics_aws}
In Figure~\ref{f:training_for_all_data_sets}, we have shown the training loss w.r.t. communication rounds and wall-clock time when training four different datasets on AWS North America. Here we provide the complete results (Figures~\ref{f:shakespear_aws}--\ref{f:inaturalist_aws}) which include training loss, training accuracy, test loss, and test accuracy w.r.t communication rounds and wall-clock time.

\begin{figure*}
    \centering
    \begin{subfigure}[b]{0.24\textwidth}  
        \centering 
        \includegraphics[width=\textwidth, height=0.8\textwidth]{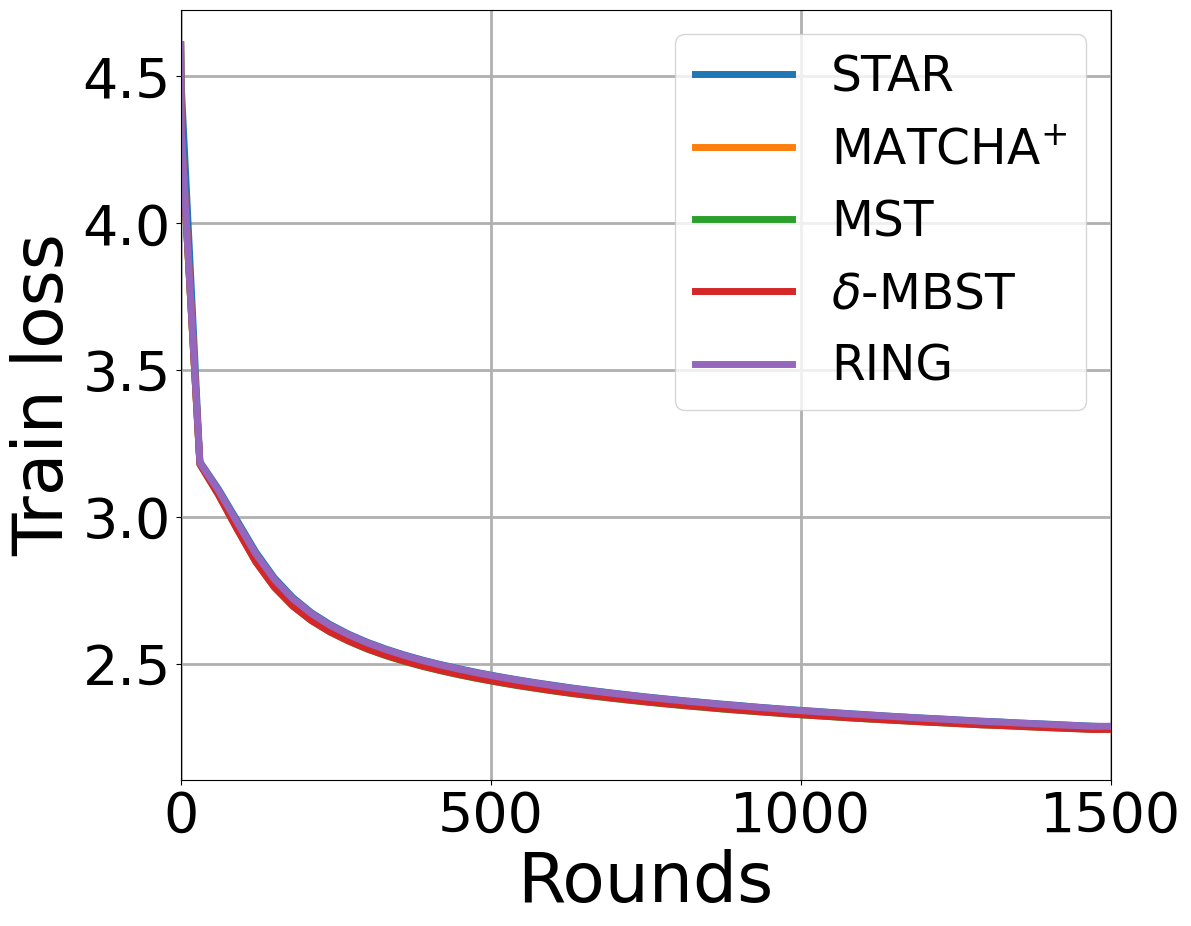}
    \end{subfigure}
    \hfill
    \begin{subfigure}[b]{0.24\textwidth}
        \centering
        \includegraphics[width=\textwidth, height=0.8\textwidth]{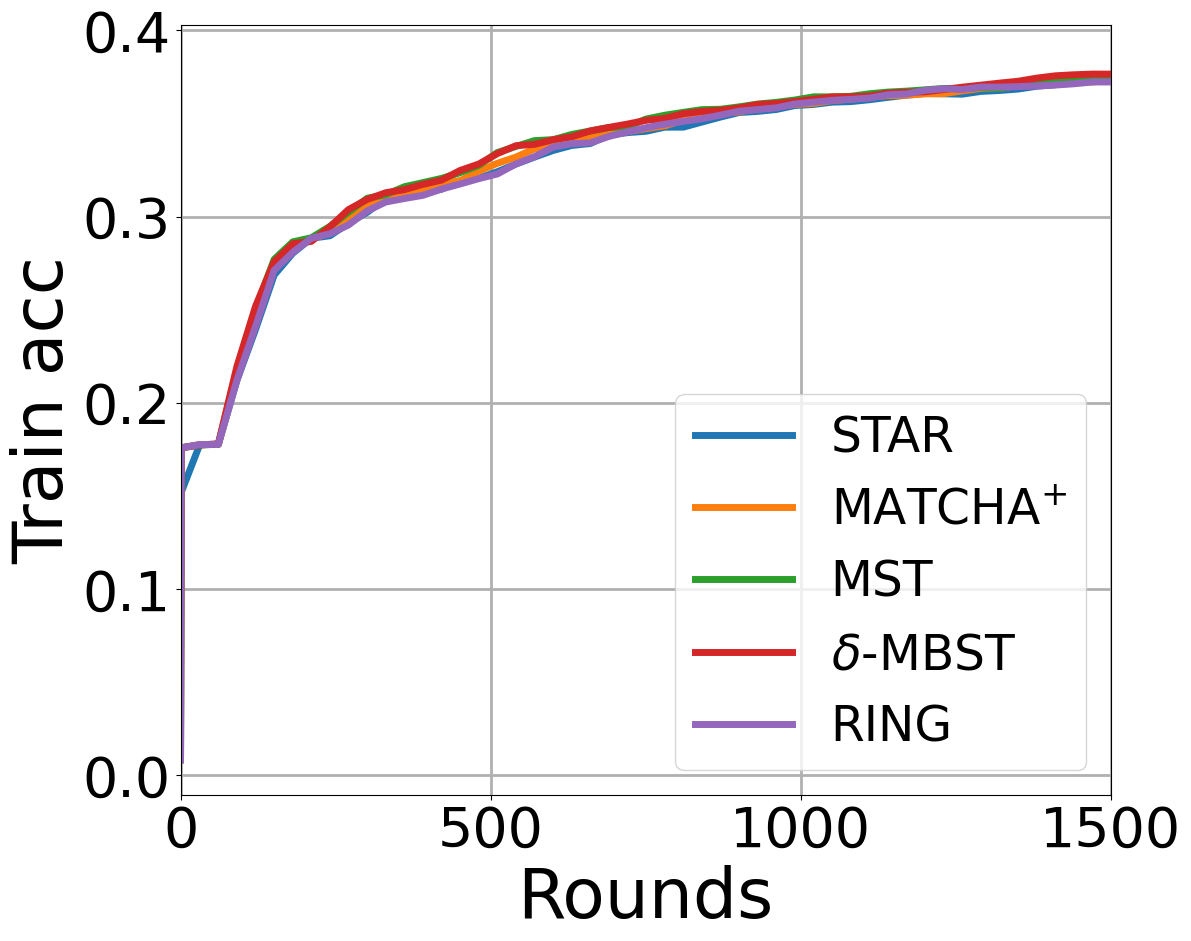}
    \end{subfigure}
    \hfill
    \begin{subfigure}[b]{0.24\textwidth}   
        \centering 
        \includegraphics[width=\textwidth, height=0.8\textwidth]{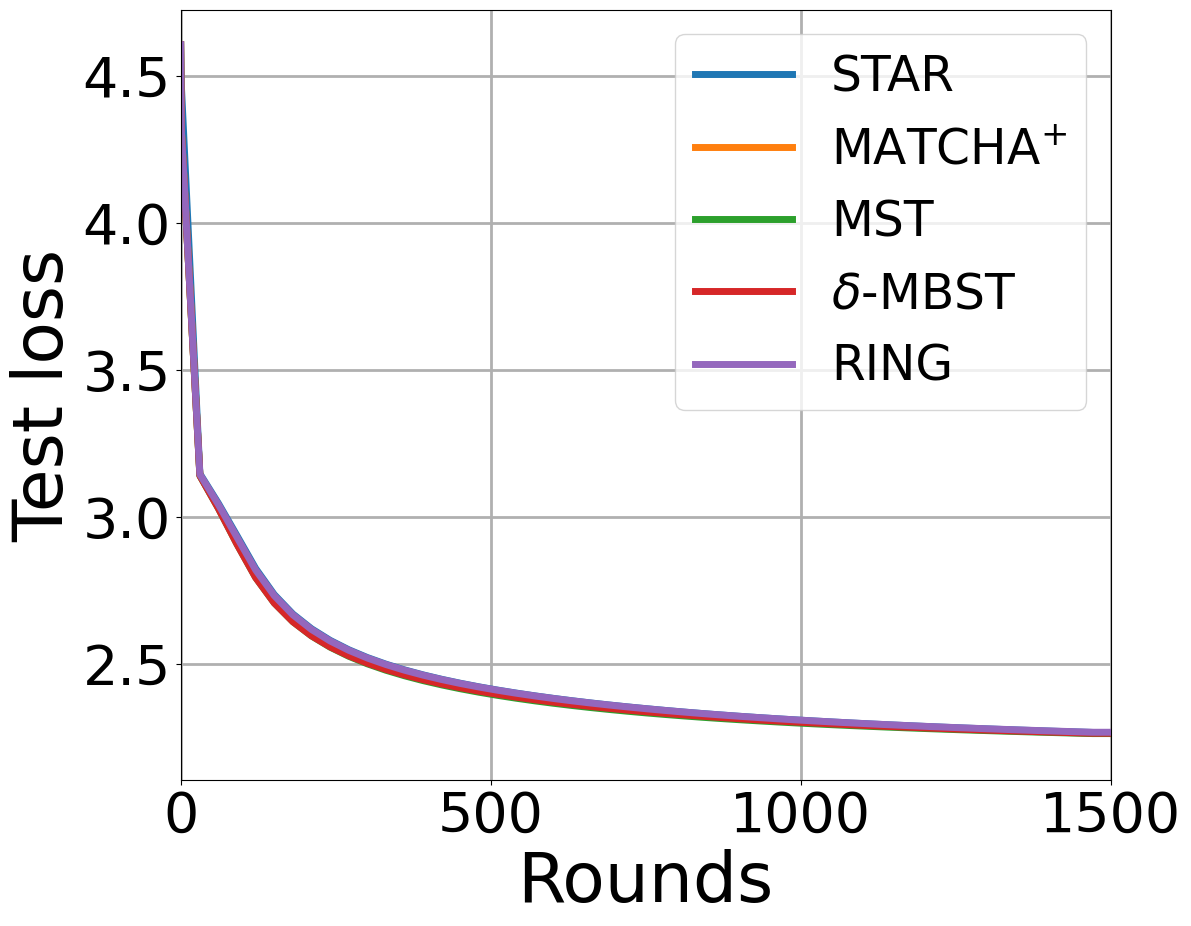}
    \end{subfigure}
    \hfill
    \begin{subfigure}[b]{0.24\textwidth}   
        \centering 
        \includegraphics[width=\textwidth, height=0.8\textwidth]{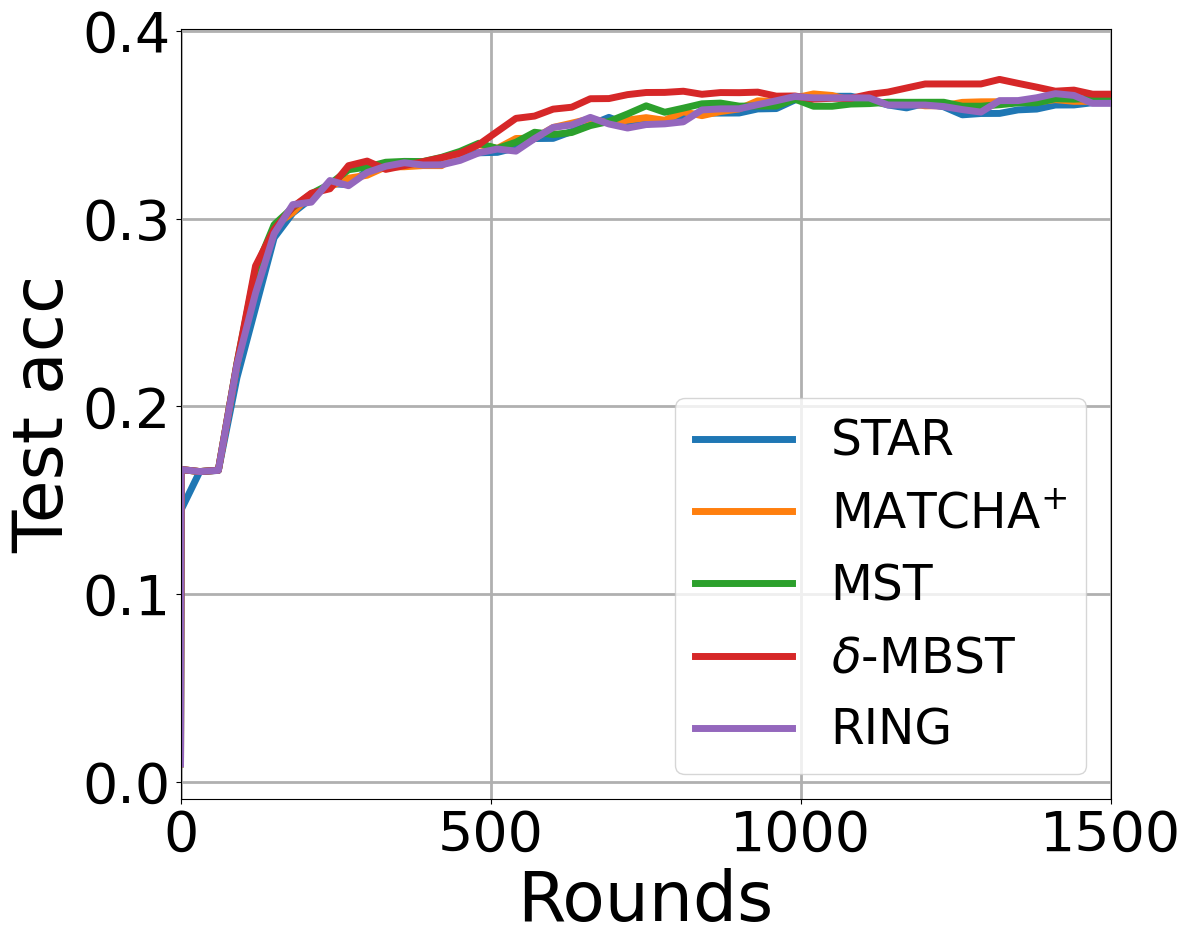}
    \end{subfigure}
    \\    
    \begin{subfigure}[b]{0.24\textwidth}  
        \centering 
        \includegraphics[width=\textwidth, height=0.8\textwidth]{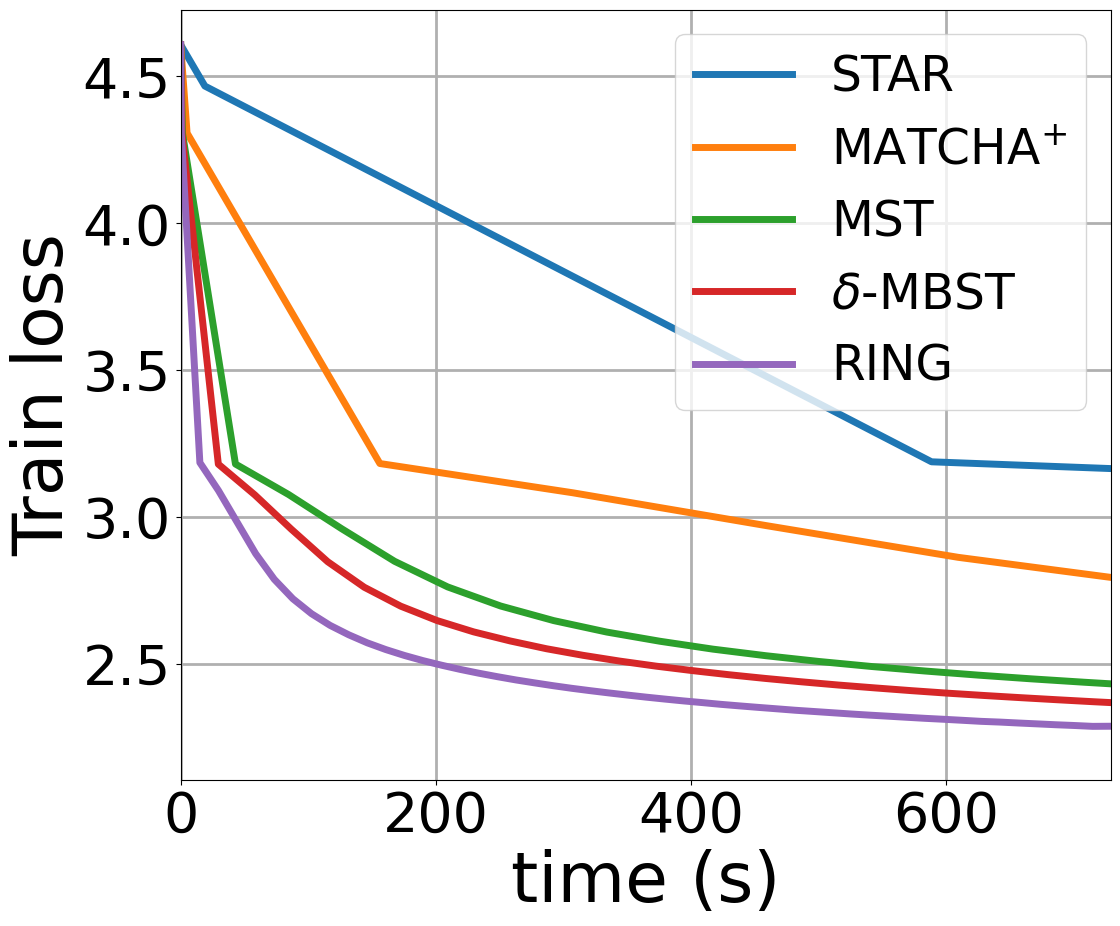}
        \caption[]{{\small Train Loss}}    
    \end{subfigure}
    \hfill
    \begin{subfigure}[b]{0.24\textwidth}
        \centering
        \includegraphics[width=\textwidth, height=0.8\textwidth]{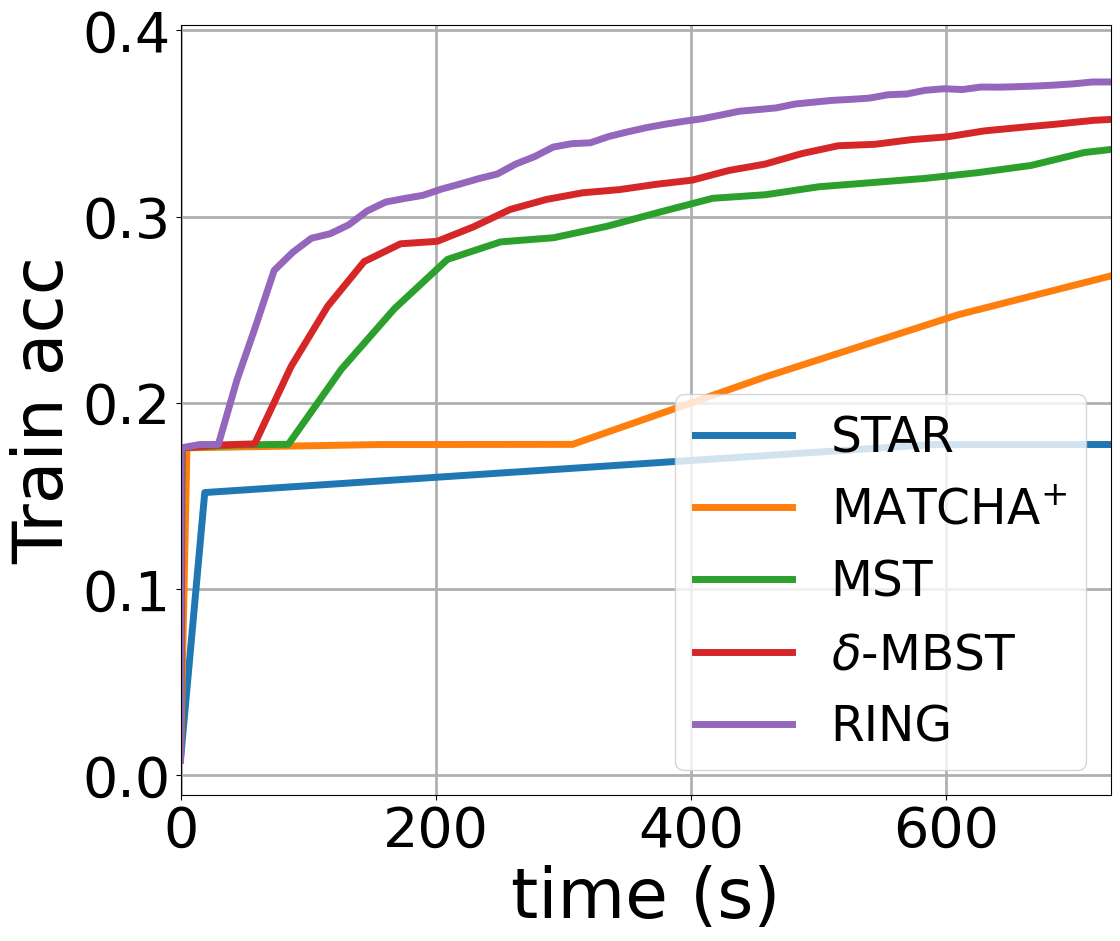}
        \caption[]{{\small Train Accuracy}}    
    \end{subfigure}
    \hfill
    \begin{subfigure}[b]{0.24\textwidth}   
        \centering 
        \includegraphics[width=\textwidth, height=0.8\textwidth]{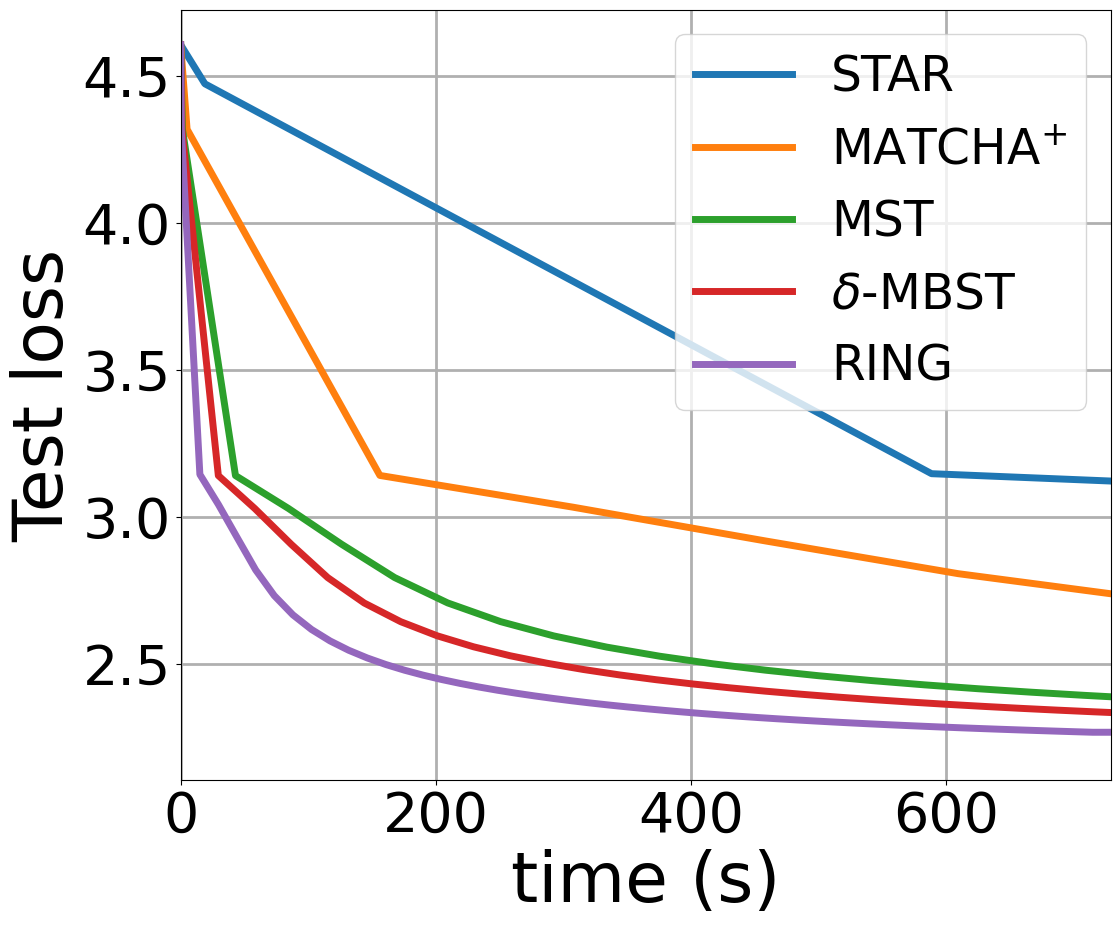}
        \caption[]{{\small Test Loss}}    
    \end{subfigure}
    \hfill
    \begin{subfigure}[b]{0.24\textwidth}   
        \centering 
        \includegraphics[width=\textwidth, height=0.8\textwidth]{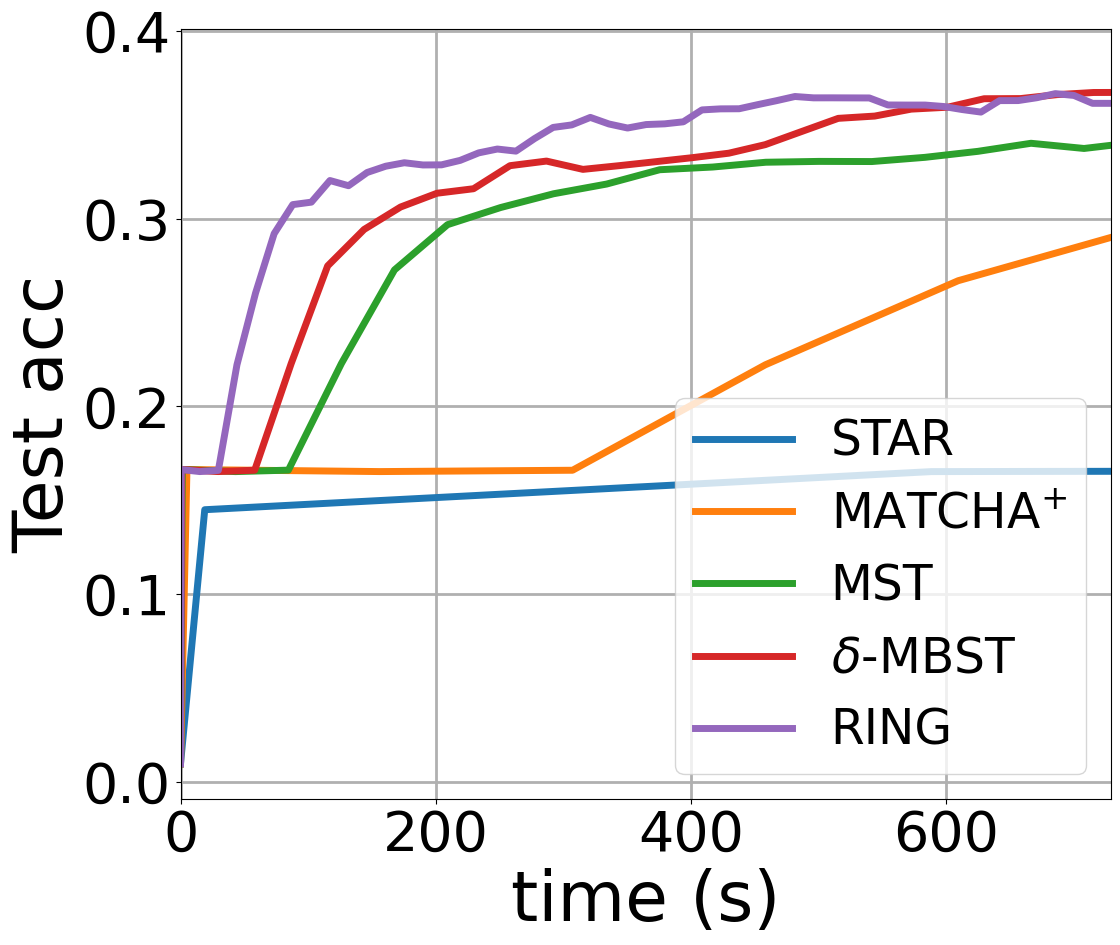}
        \caption[]{{\small Test Accuracy}}    
    \end{subfigure}
    \caption[]
    {\small Effect of overlays on the convergence w.r.t.~communication rounds (top row) and wall-clock time (bottom row) when training Shakespeare on AWS North America underlay. $1$~Gbps core links capacities, $100$~Mbps access links capacities, $s=1$.} 
    \label{f:shakespear_aws}
\end{figure*}

\begin{figure*}
    \centering
    \begin{subfigure}[b]{0.24\textwidth}  
        \centering 
        \includegraphics[width=\textwidth, height=0.8\textwidth]{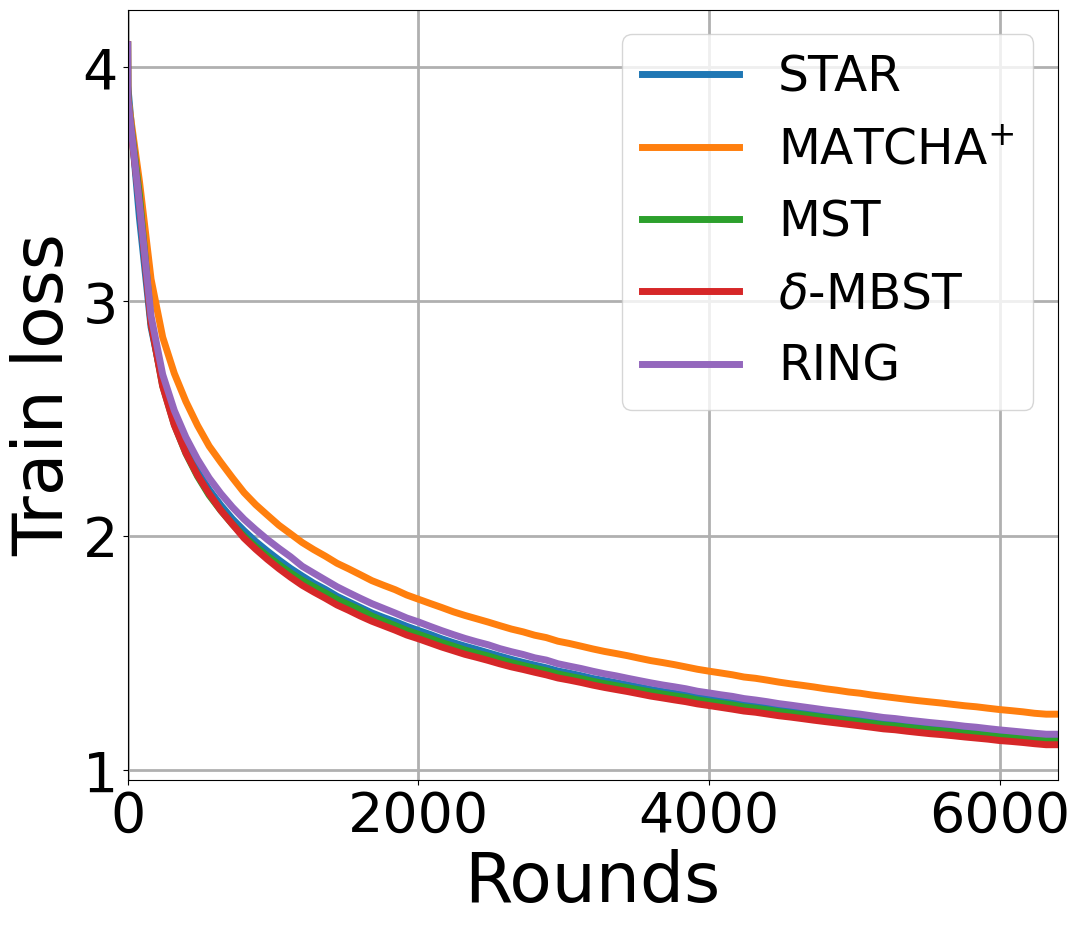}
    \end{subfigure}
    \hfill
    \begin{subfigure}[b]{0.24\textwidth}
        \centering
        \includegraphics[width=\textwidth, height=0.8\textwidth]{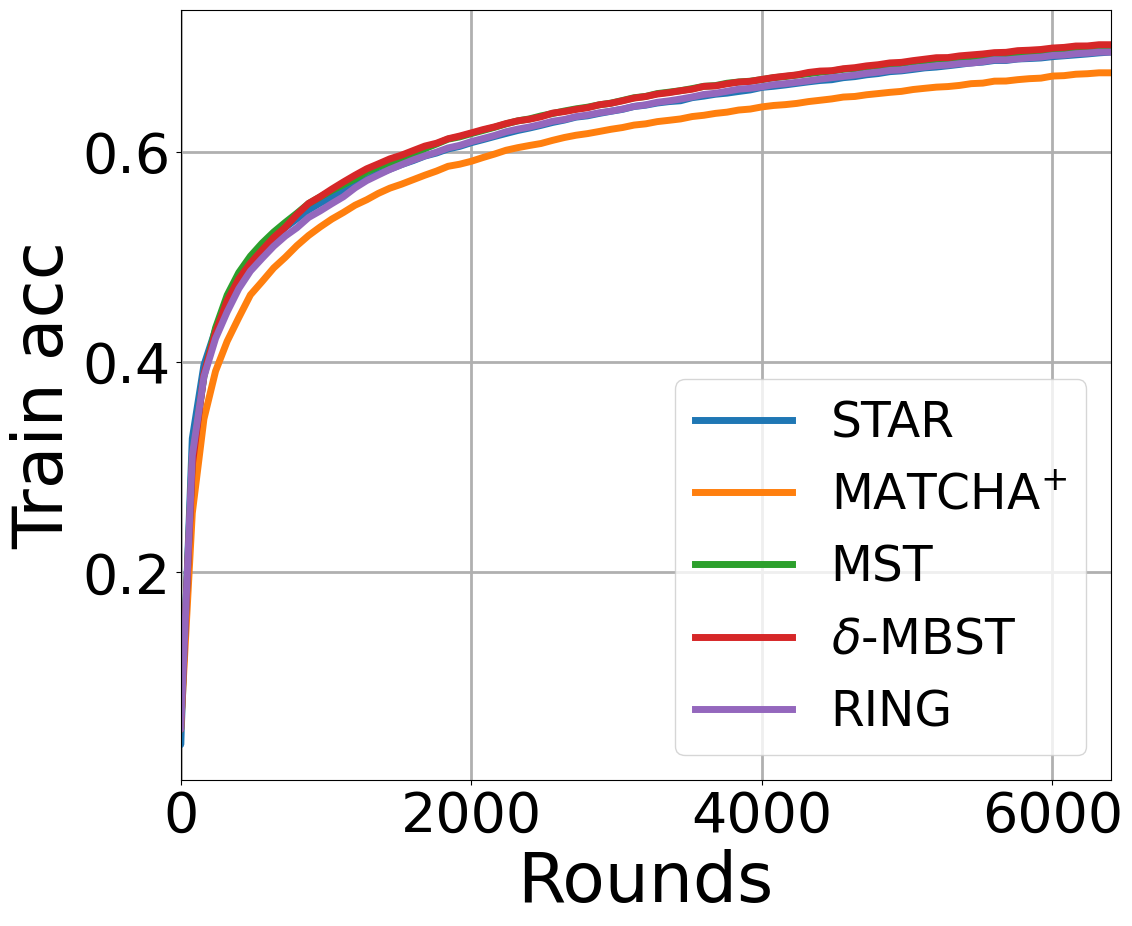}
    \end{subfigure}
    \hfill
    \begin{subfigure}[b]{0.24\textwidth}   
        \centering 
        \includegraphics[width=\textwidth, height=0.8\textwidth]{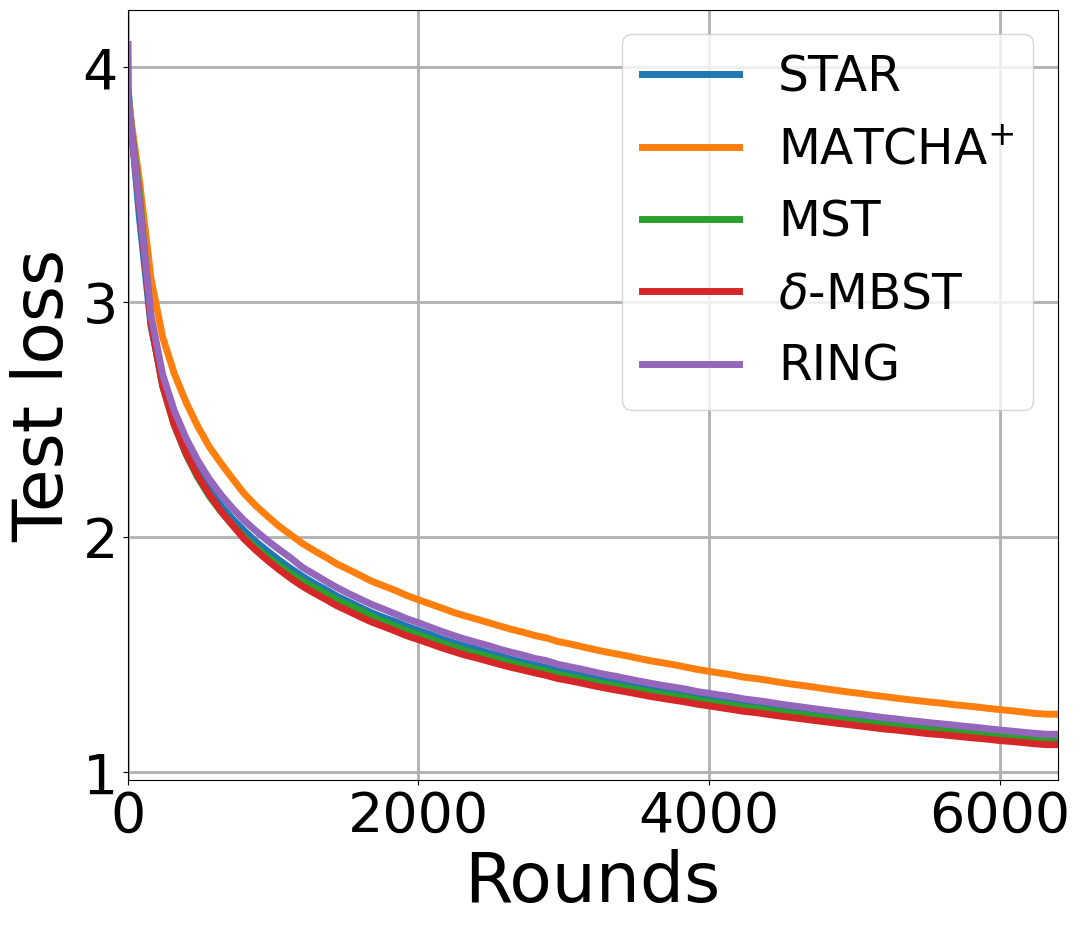}
    \end{subfigure}
    \hfill
    \begin{subfigure}[b]{0.24\textwidth}   
        \centering 
        \includegraphics[width=\textwidth, height=0.8\textwidth]{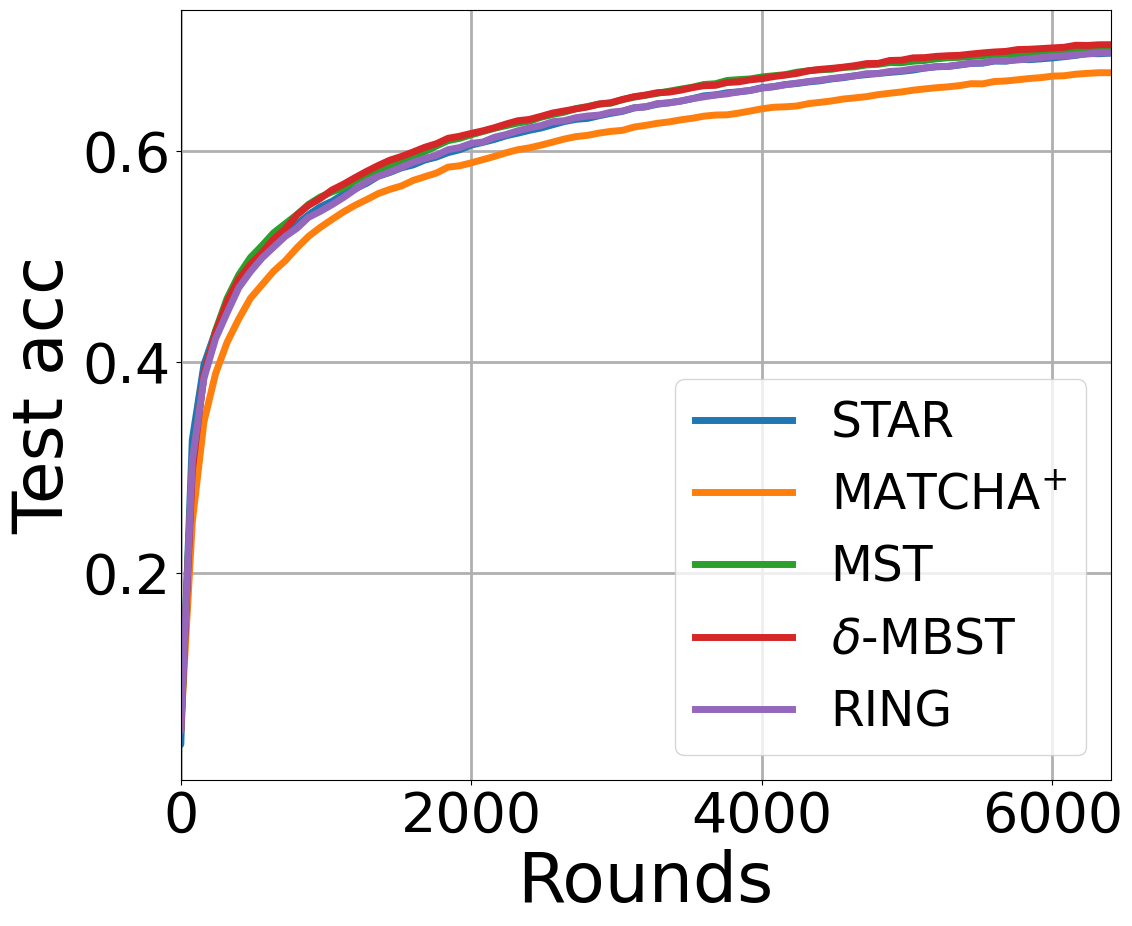}
    \end{subfigure}
    \\
    \begin{subfigure}[b]{0.24\textwidth}  
        \centering 
        \includegraphics[width=\textwidth, height=0.8\textwidth]{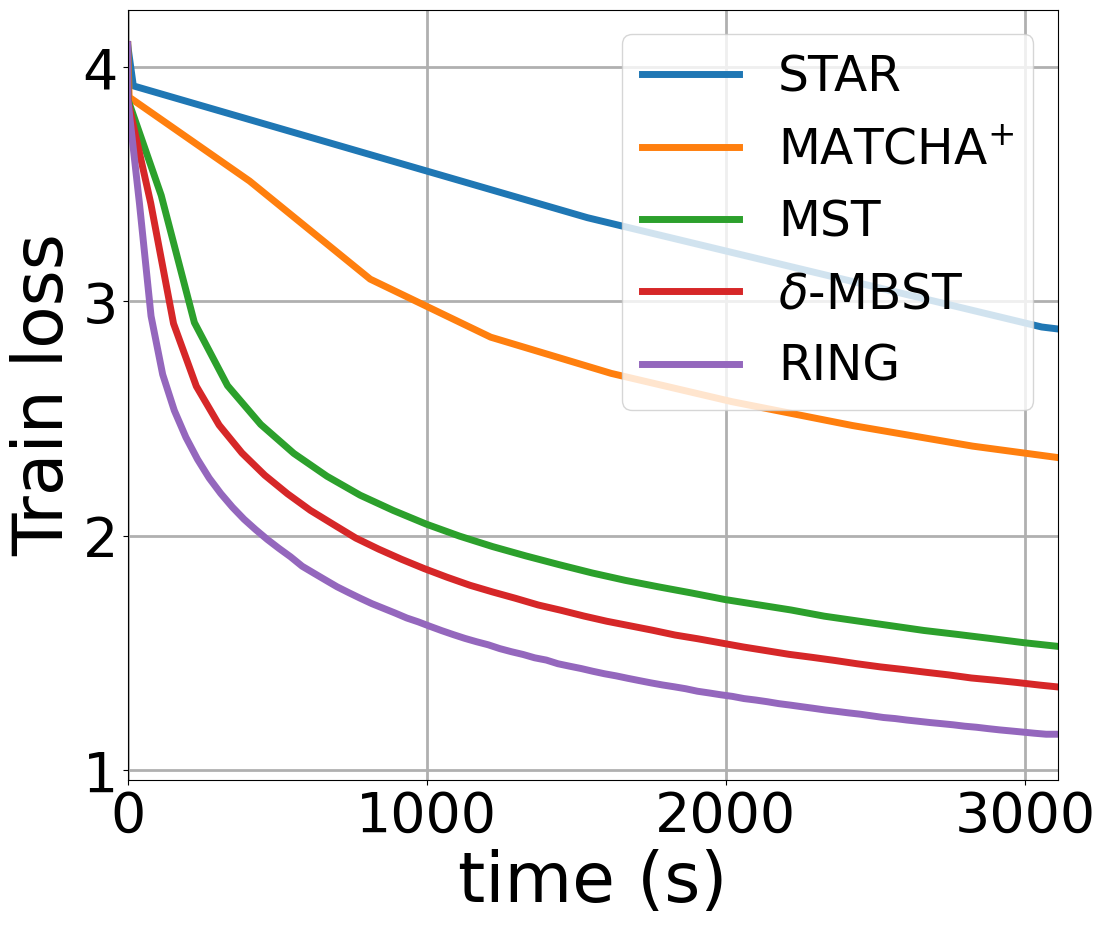}
        \caption[]{{\small Train Loss}}    
    \end{subfigure}
    \hfill
    \begin{subfigure}[b]{0.24\textwidth}
        \centering
        \includegraphics[width=\textwidth, height=0.8\textwidth]{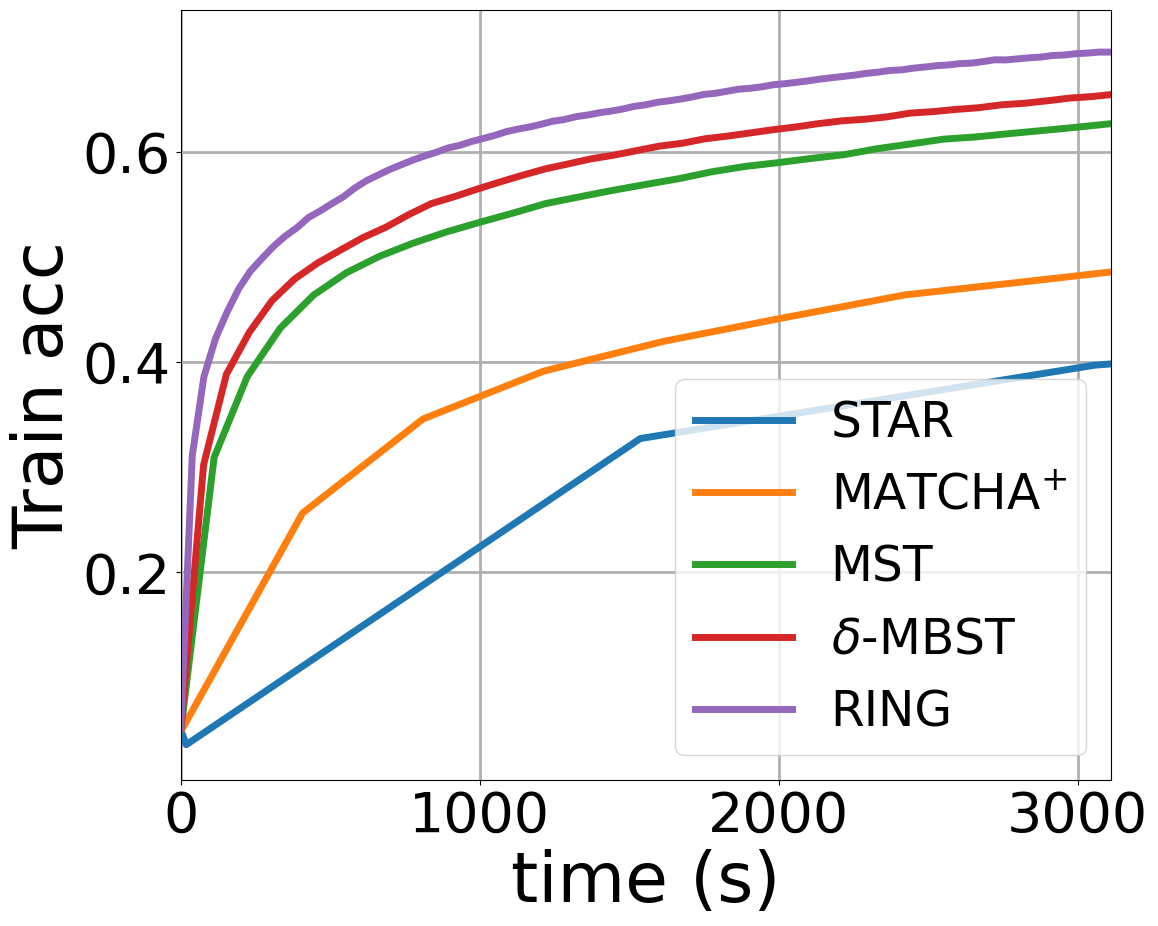}
        \caption[]{{\small Train Accuracy}}    
    \end{subfigure}
    \hfill
    \begin{subfigure}[b]{0.24\textwidth}   
        \centering 
        \includegraphics[width=\textwidth, height=0.8\textwidth]{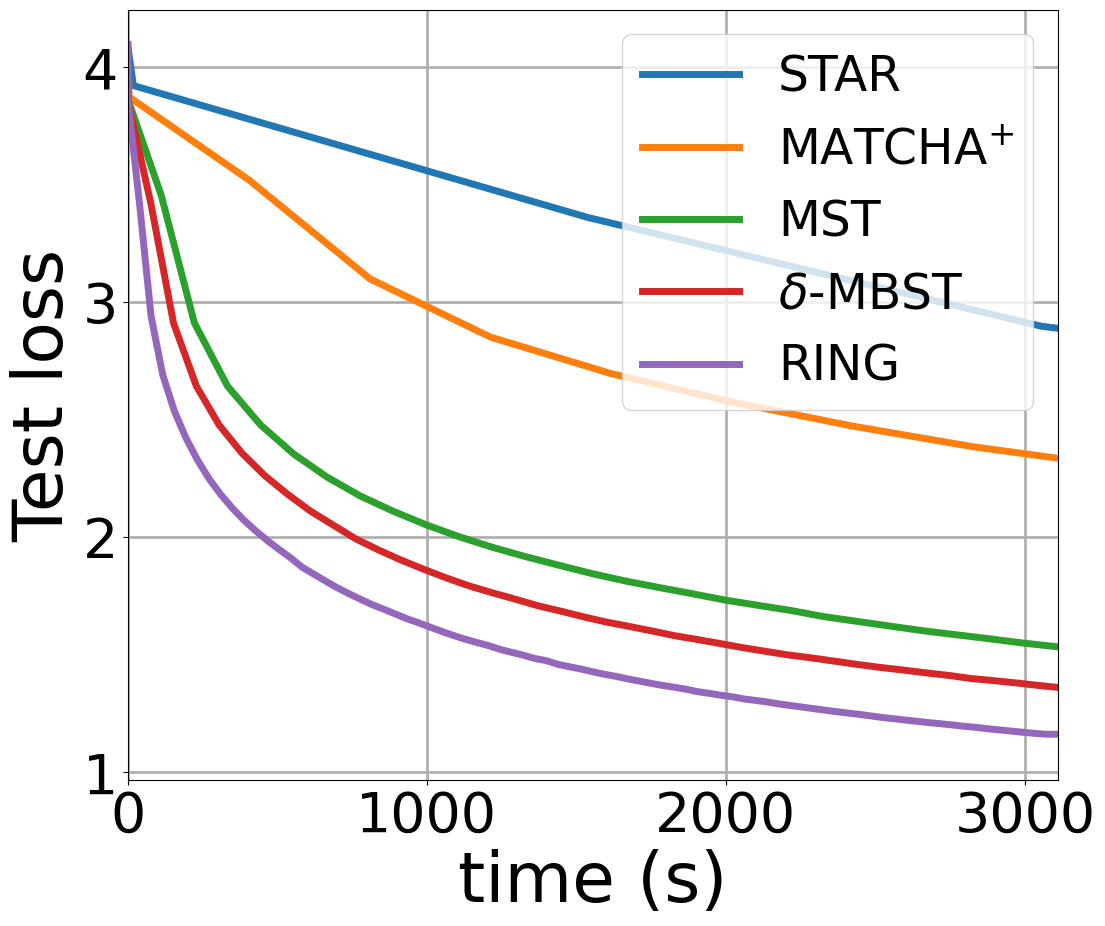}
        \caption[]{{\small Test Loss}}    
    \end{subfigure}
    \hfill
    \begin{subfigure}[b]{0.24\textwidth}   
        \centering 
        \includegraphics[width=\textwidth, height=0.8\textwidth]{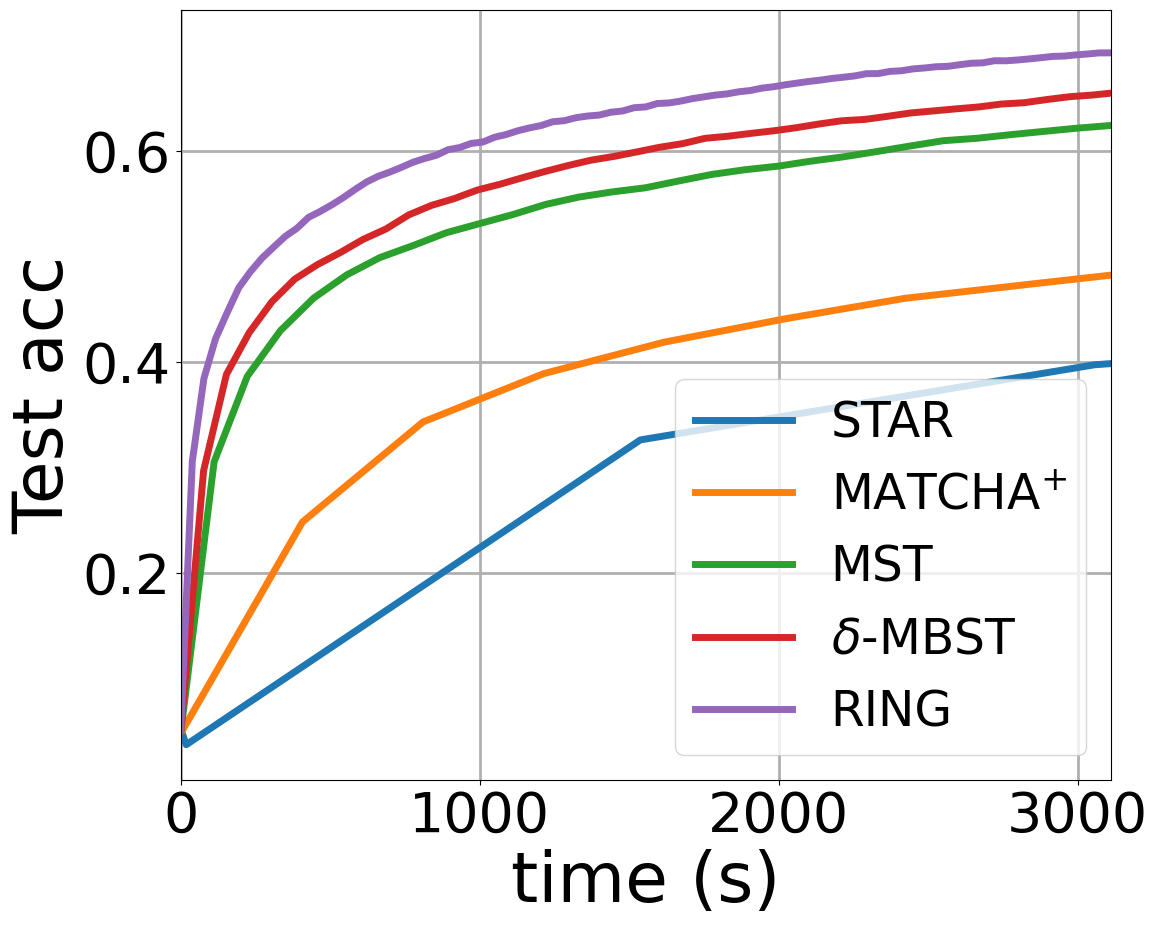}
        \caption[]{{\small Test Accuracy}}    
    \end{subfigure}
    \caption[]
    {\small Effect of overlays on the convergence w.r.t.~communication rounds (top row) and wall-clock time (bottom row) when training FEMNIST on AWS North America underlay. $1$~Gbps core links capacities, $100$~Mbps access links capacities, $s=1$.} 
    \label{f:femnist_aws}
\end{figure*}

\begin{figure*}
    \centering
    \begin{subfigure}[b]{0.24\textwidth}  
        \centering 
        \includegraphics[width=\textwidth, height=0.8\textwidth]{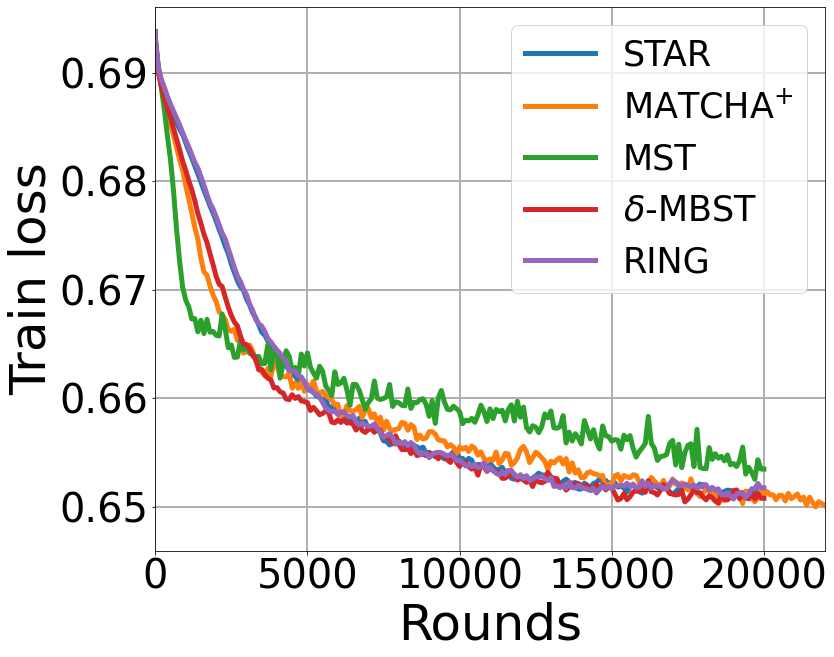}
    \end{subfigure}
    \hfill
    \begin{subfigure}[b]{0.24\textwidth}
        \centering
        \includegraphics[width=\textwidth, height=0.8\textwidth]{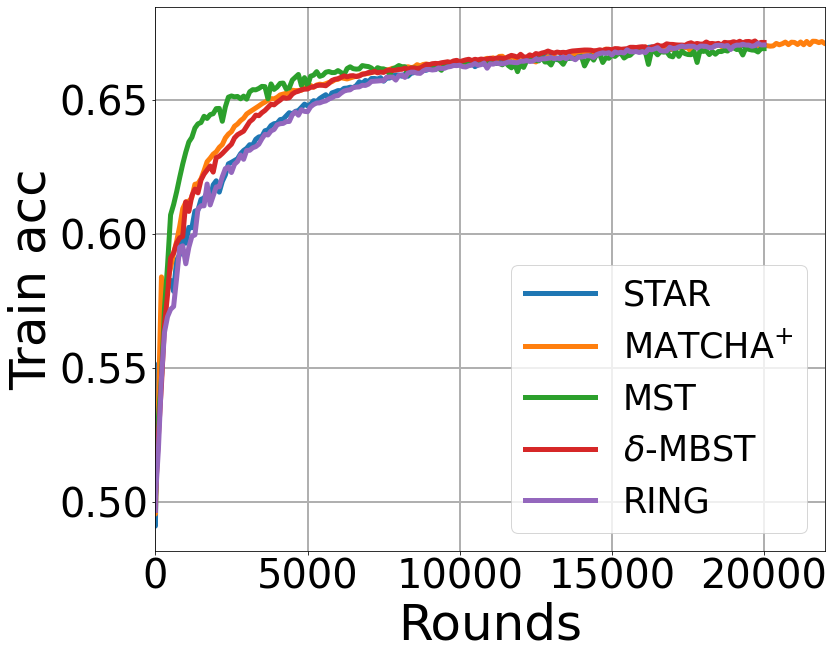}
    \end{subfigure}
    \hfill
    \begin{subfigure}[b]{0.24\textwidth}   
        \centering 
        \includegraphics[width=\textwidth, height=0.8\textwidth]{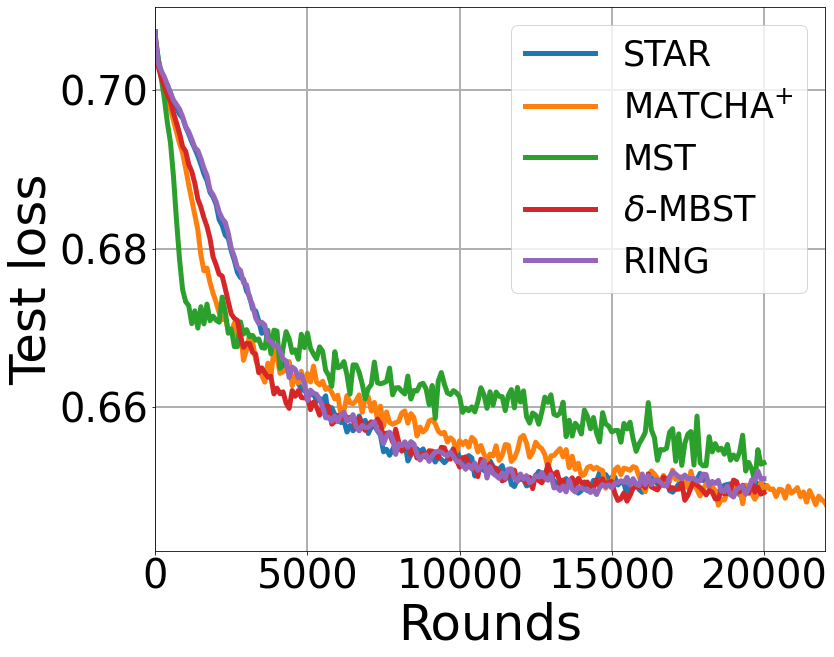}
    \end{subfigure}
    \hfill
    \begin{subfigure}[b]{0.24\textwidth}   
        \centering 
        \includegraphics[width=\textwidth, height=0.8\textwidth]{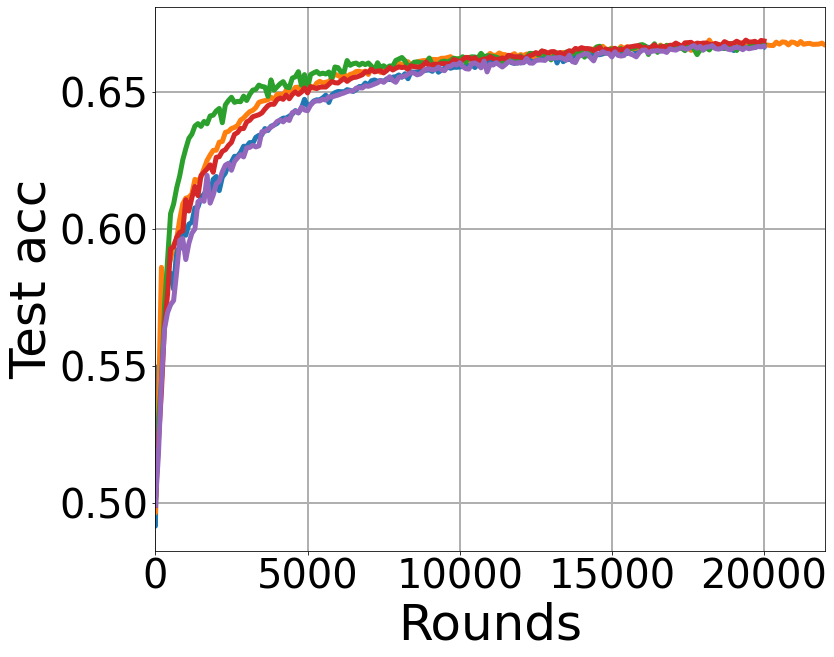}
    \end{subfigure}
    \\    
    \begin{subfigure}[b]{0.24\textwidth}  
        \centering 
        \includegraphics[width=\textwidth, height=0.8\textwidth]{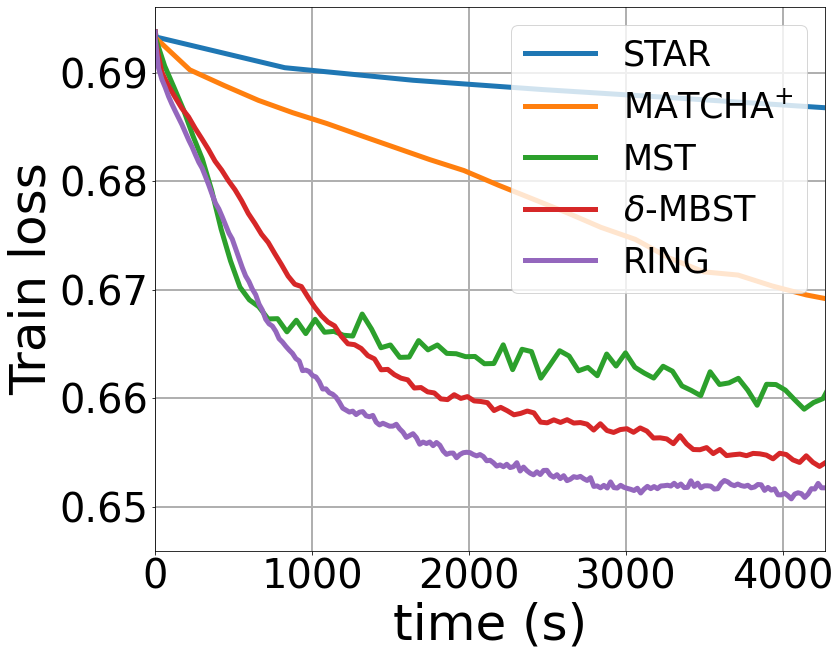}
        \caption[]{{\small Train Loss}}    
    \end{subfigure}
    \hfill
    \begin{subfigure}[b]{0.24\textwidth}
        \centering
        \includegraphics[width=\textwidth, height=0.8\textwidth]{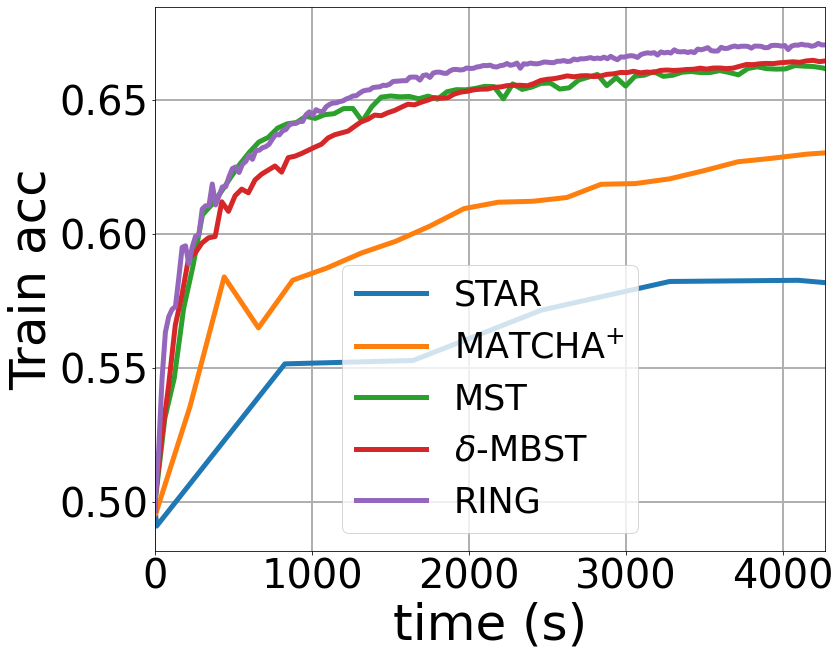}
        \caption[]{{\small Train Accuracy}}    
    \end{subfigure}
    \hfill
    \begin{subfigure}[b]{0.24\textwidth}   
        \centering 
        \includegraphics[width=\textwidth, height=0.8\textwidth]{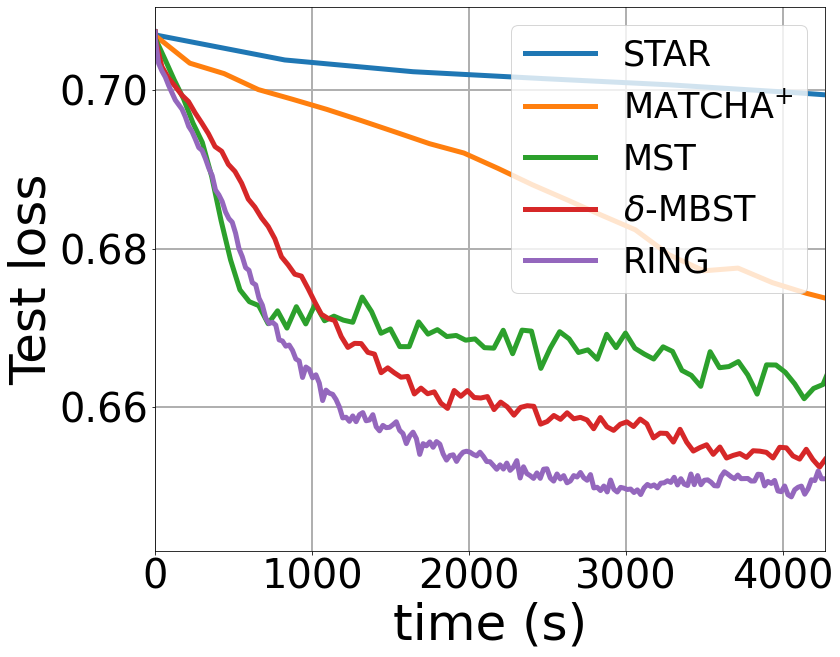}
        \caption[]{{\small Test Loss}}
    \end{subfigure}
    \hfill
    \begin{subfigure}[b]{0.24\textwidth}   
        \centering 
        \includegraphics[width=\textwidth, height=0.8\textwidth]{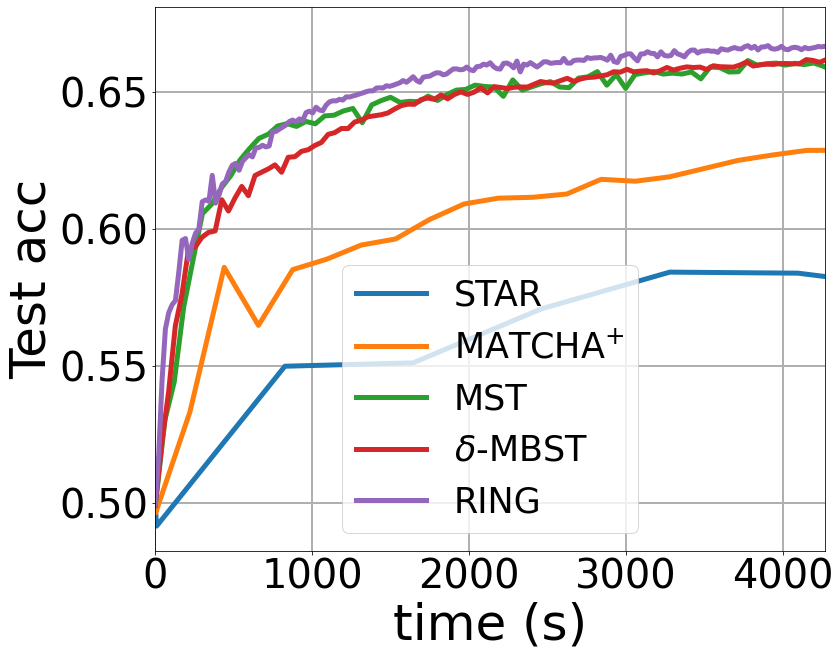}
        \caption[]{{\small Test Accuracy}} 
    \end{subfigure}
    \caption[]
    {\small Effect of overlays on the convergence w.r.t.~communication rounds (top row) and wall-clock time (bottom row) when training Sentiment140 on AWS North America underlay. $1$~Gbps core links capacities, $100$~Mbps access links capacities, $s=1$.} 
    \label{f:senti_aws}
\end{figure*}

\begin{figure*}
    \centering
    \begin{subfigure}[b]{0.24\textwidth}  
        \centering 
        \includegraphics[width=\textwidth, height=0.8\textwidth]{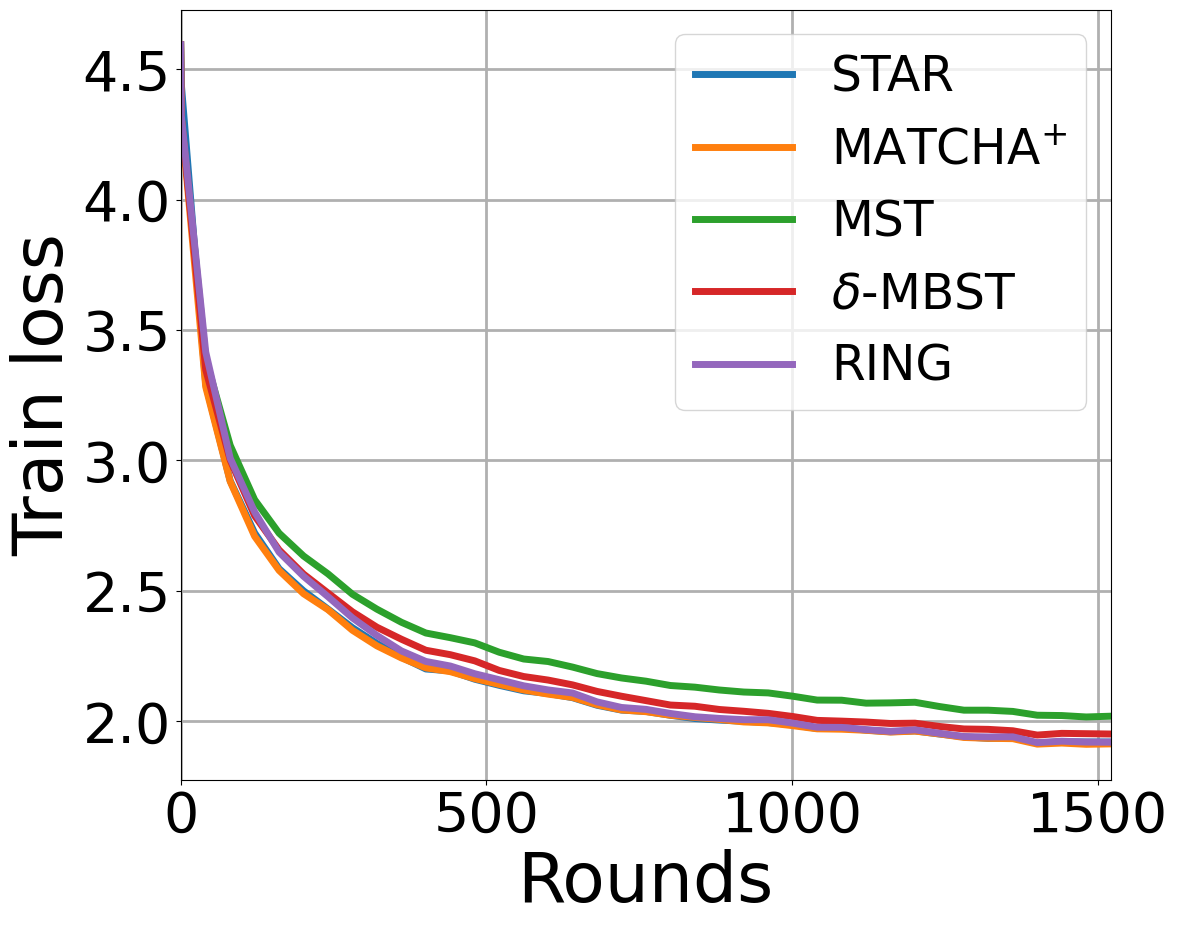}
    \end{subfigure}
    \hfill
    \begin{subfigure}[b]{0.24\textwidth}
        \centering
        \includegraphics[width=\textwidth, height=0.8\textwidth]{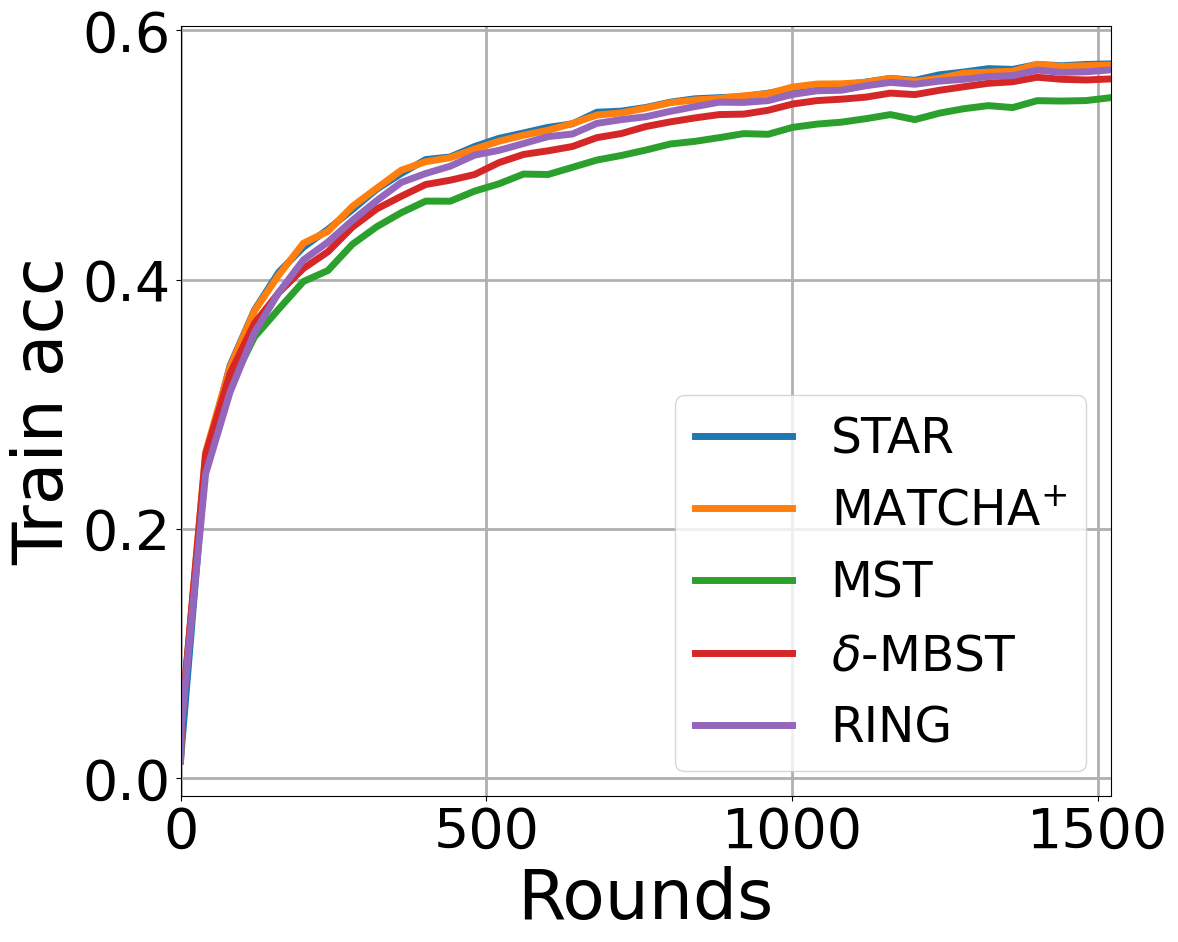}
    \end{subfigure}
    \hfill
    \begin{subfigure}[b]{0.24\textwidth}   
        \centering 
        \includegraphics[width=\textwidth, height=0.8\textwidth]{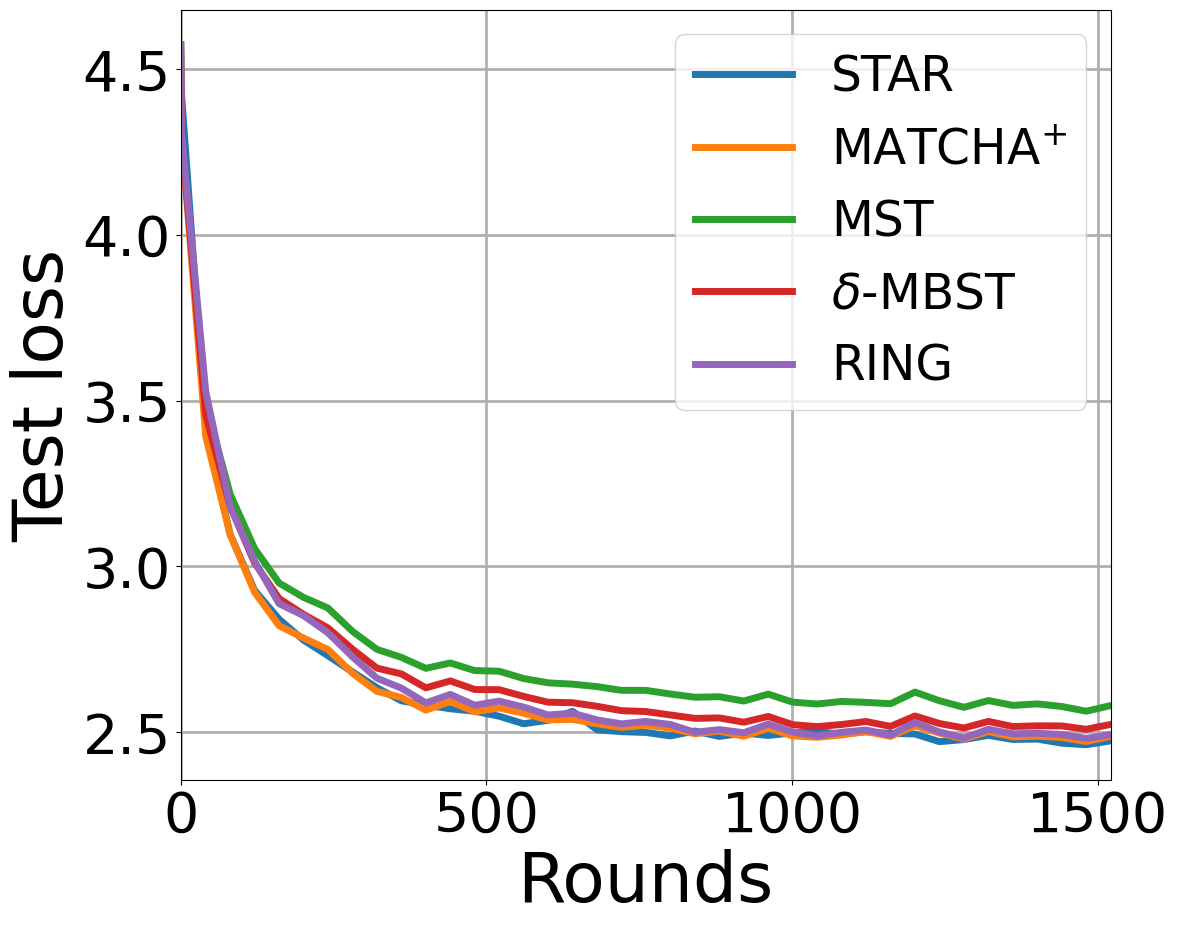}
    \end{subfigure}
    \hfill
    \begin{subfigure}[b]{0.24\textwidth}   
        \centering 
        \includegraphics[width=\textwidth, height=0.8\textwidth]{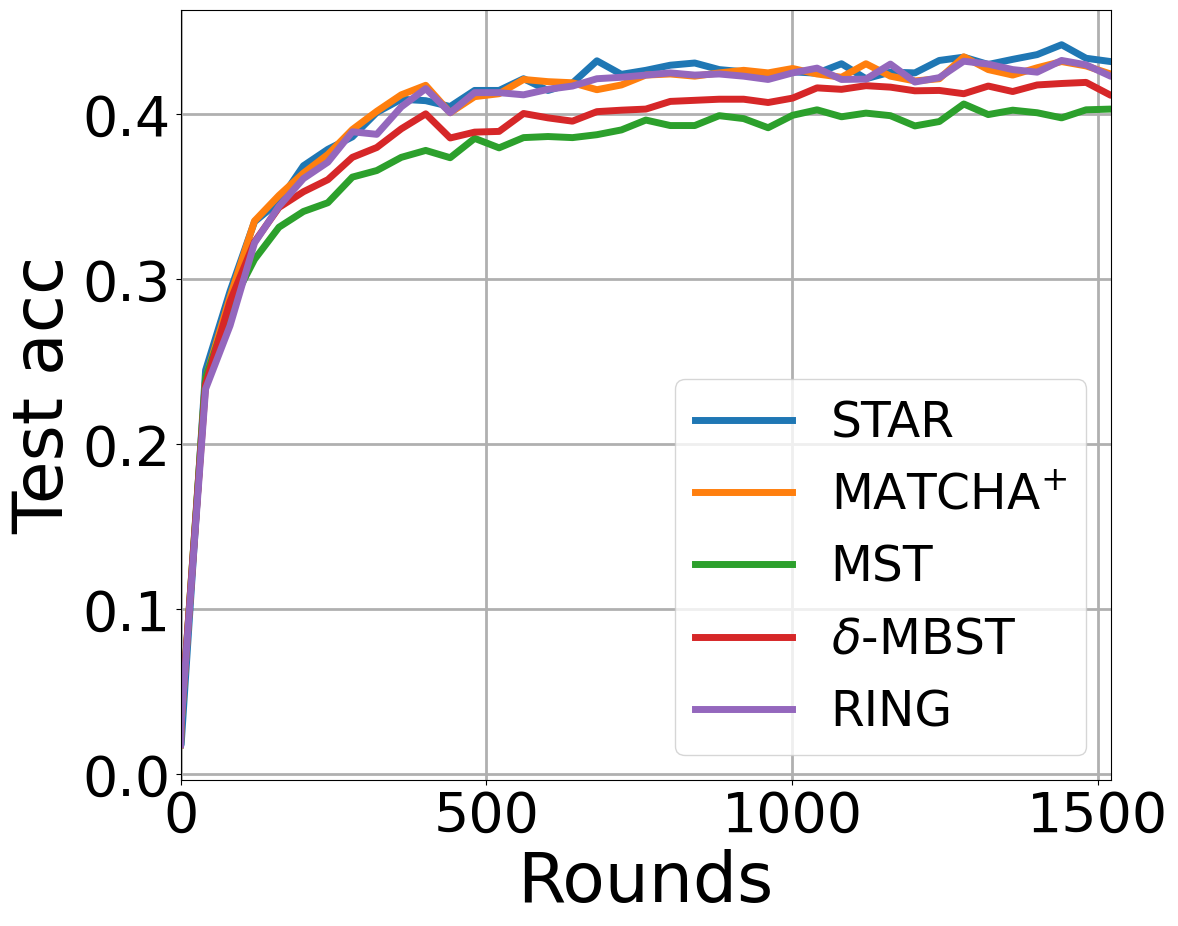}
    \end{subfigure}
    \\    
    \begin{subfigure}[b]{0.24\textwidth}  
        \centering 
        \includegraphics[width=\textwidth, height=0.8\textwidth]{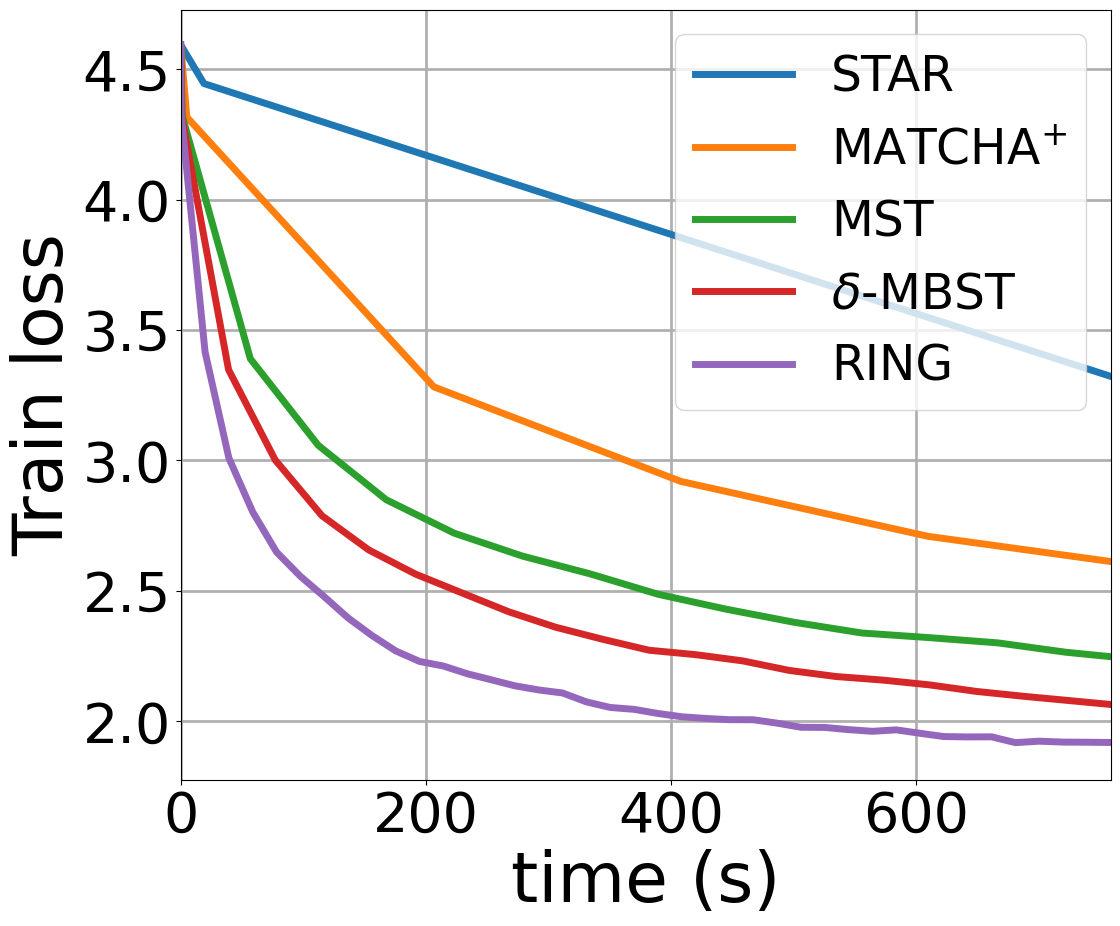}
        \caption[]{{\small Train Loss}}
    \end{subfigure}
    \hfill
    \begin{subfigure}[b]{0.24\textwidth}
        \centering
        \includegraphics[width=\textwidth, height=0.8\textwidth]{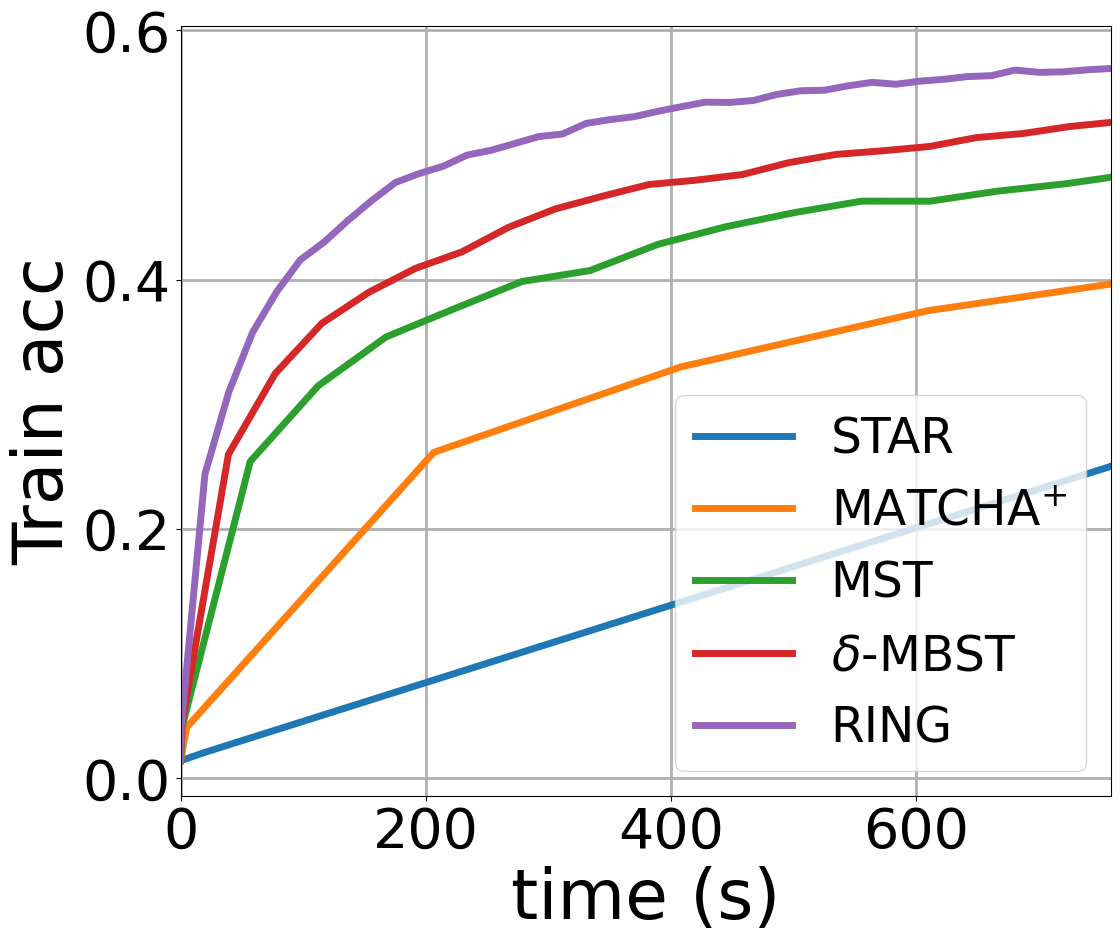}
        \caption[]{{\small Train Accuracy}}
    \end{subfigure}
    \hfill
    \begin{subfigure}[b]{0.24\textwidth}   
        \centering 
        \includegraphics[width=\textwidth, height=0.8\textwidth]{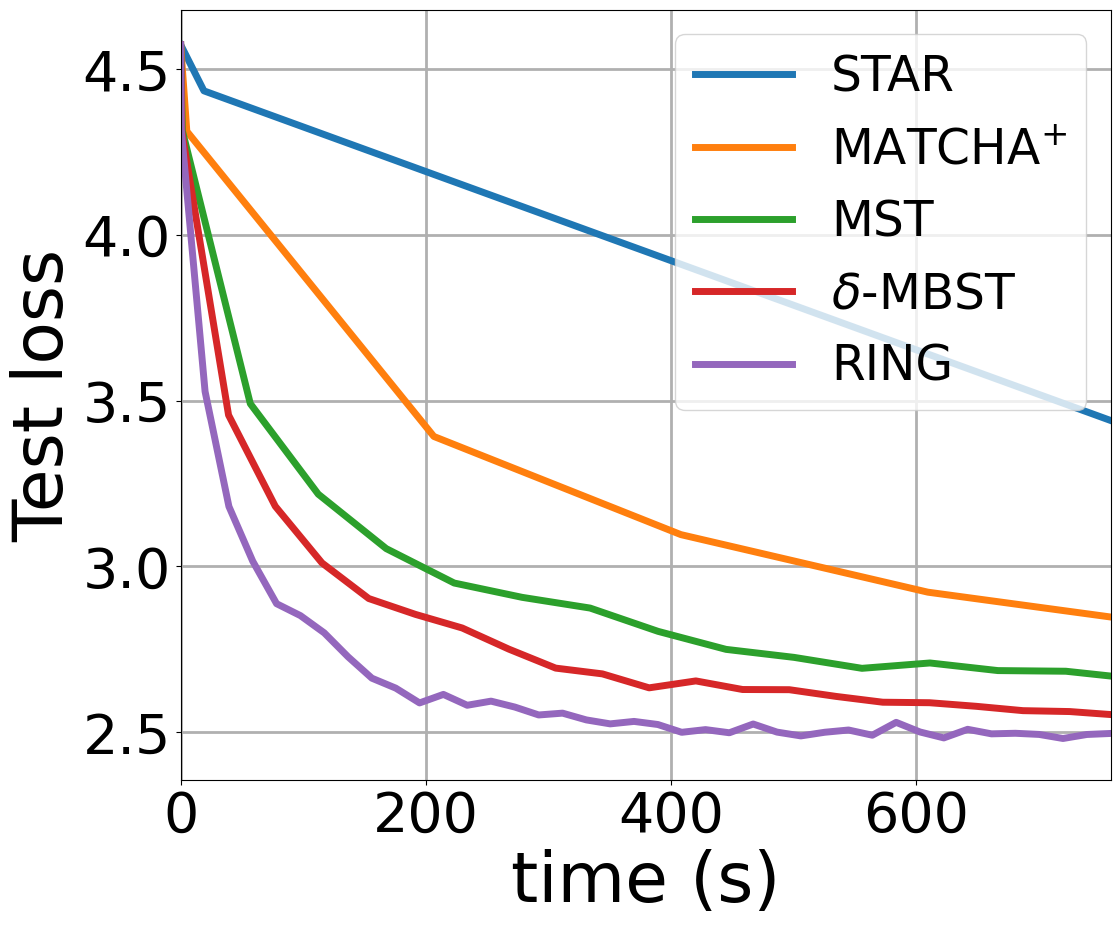}
        \caption[]{{\small Test Loss}} 
    \end{subfigure}
    \hfill
    \begin{subfigure}[b]{0.24\textwidth}   
        \centering 
        \includegraphics[width=\textwidth, height=0.8\textwidth]{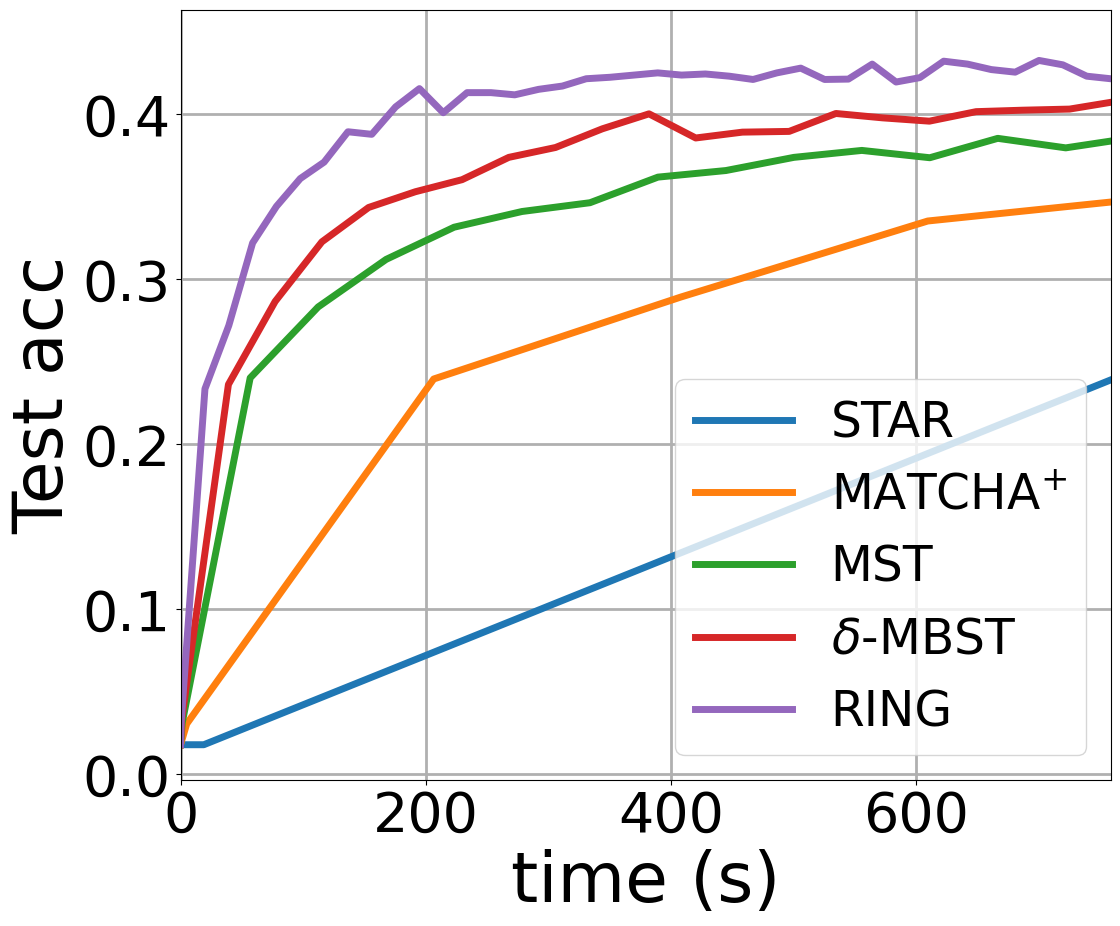}
        \caption[]{{\small Test Accuracy}}    
    \end{subfigure}
    \caption[]
    {\small Effect of overlays on the convergence w.r.t.~communication rounds (top row) and wall-clock time (bottom row) when training iNaturalist on AWS North America underlay. $1$~Gbps core links capacities, $100$~Mbps access links capacities, $s=1$.} 
    \label{f:inaturalist_aws}
\end{figure*}

\subsection{Exploring other scenarios} 
In our experiments, we considered 5  underlays, for which we compared 6 different overlays (e.g.,~Table~\ref{tab:topologies_cycle_time}). Moreover, we tested 4 different datasets (e.g.,~Fig.~\ref{f:training_for_all_data_sets}) and 3 different values for the number of local steps $s=1, 5, 10$ (e.g.,~Tables~\ref{tab:topologies_cycle_time_local_step_5} and~\ref{tab:topologies_cycle_time_local_step_10}). We were not able to run experiments for all 360 possible combinations. In Figures~\ref{f:gaia_s_1}--\ref{f:ebone_s_5}, we show some representative additional results. For each experimental result, four metrics are shown including the train loss, train accuracy, test loss, and test accuracy w.r.t. communication rounds and wall-clock time. The common observation is that the RING converges faster than MATCHA$^+$ and STAR in terms of wall-clock time.  In some cases, the test loss and accuracy of the model learned by the RING start becoming worse after some time, with overfitting being a possible explanation  in some cases (see Figs.~\ref{f:gaia_s_1},~\ref{f:geant_s_1},~\ref{f:gaia_s_5}, and~\ref{f:geant_s_5}). 

\begin{figure*}
    \centering
    \begin{subfigure}[b]{0.24\textwidth}  
        \centering 
        \includegraphics[width=\textwidth, height=0.8\textwidth]{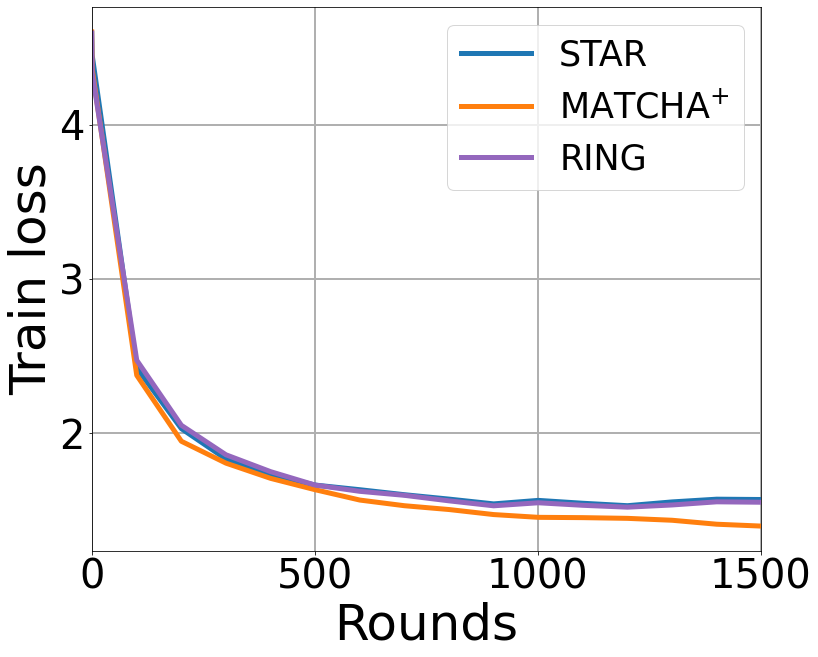}
    \end{subfigure}
    \hfill
    \begin{subfigure}[b]{0.24\textwidth}
        \centering
        \includegraphics[width=\textwidth, height=0.8\textwidth]{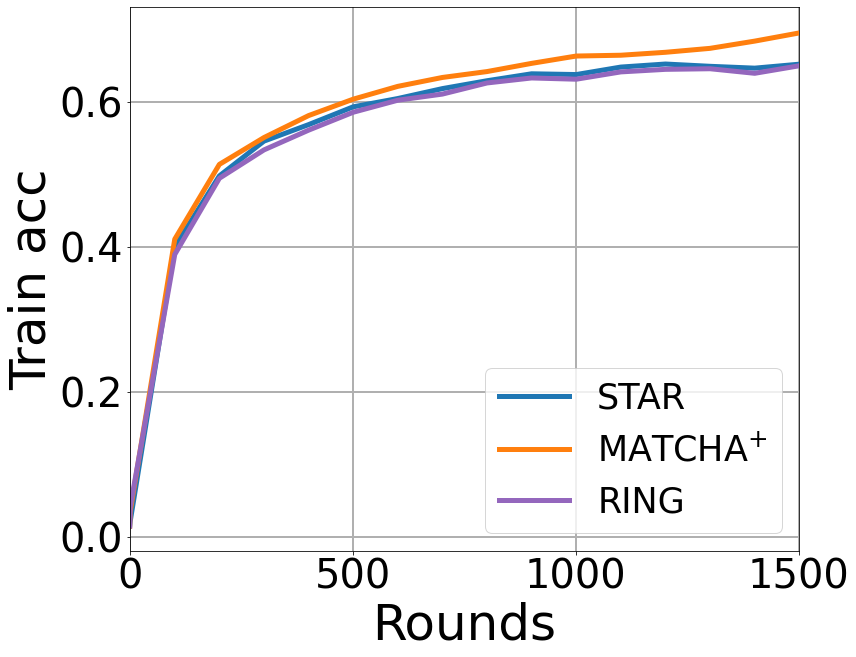}
    \end{subfigure}
    \hfill
    \begin{subfigure}[b]{0.24\textwidth}   
        \centering 
        \includegraphics[width=\textwidth, height=0.8\textwidth]{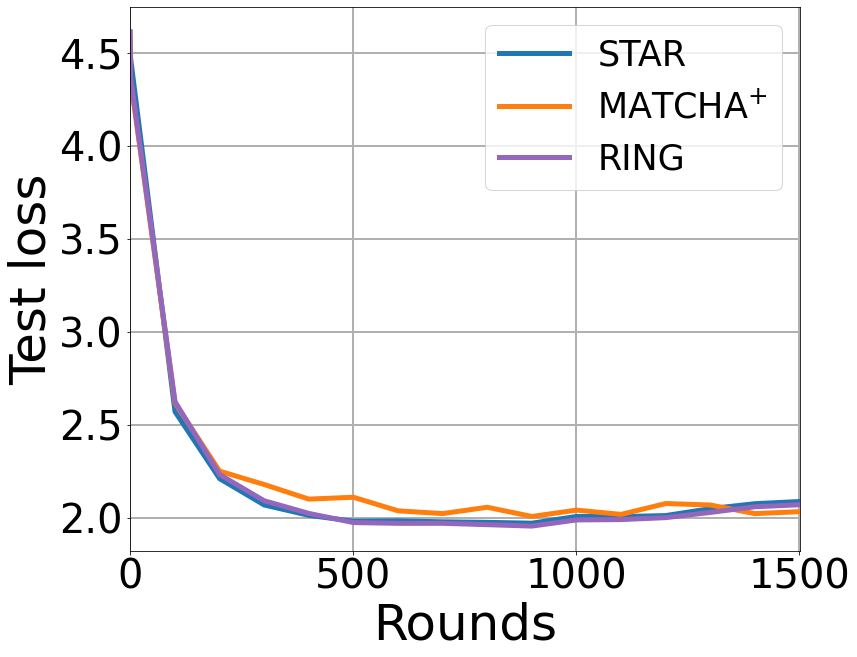}
    \end{subfigure}
    \hfill
    \begin{subfigure}[b]{0.24\textwidth}   
        \centering 
         \includegraphics[width=\textwidth, height=0.8\textwidth]{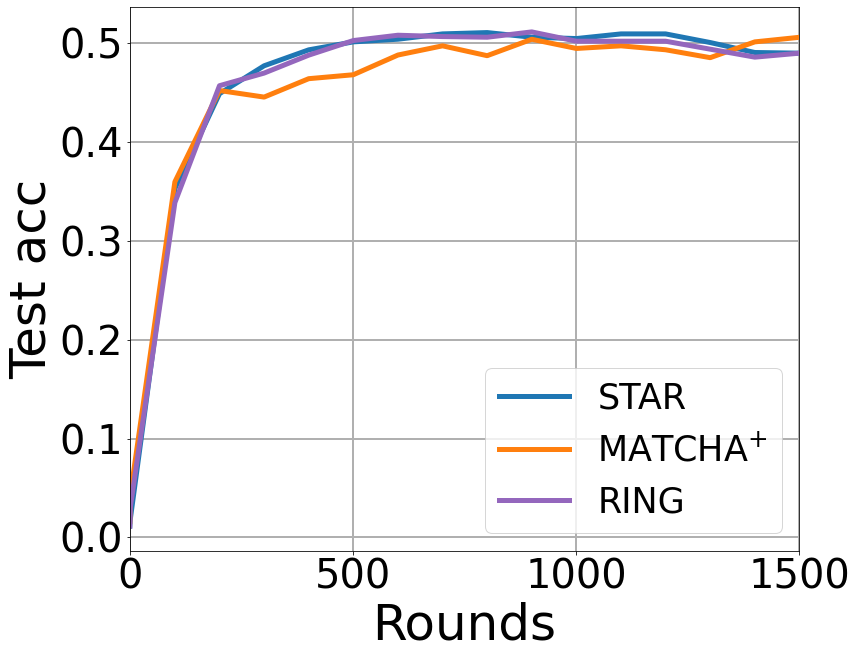}
    \end{subfigure}
    \\    
    \begin{subfigure}[b]{0.24\textwidth}  
        \centering 
        \includegraphics[width=\textwidth, height=0.8\textwidth]{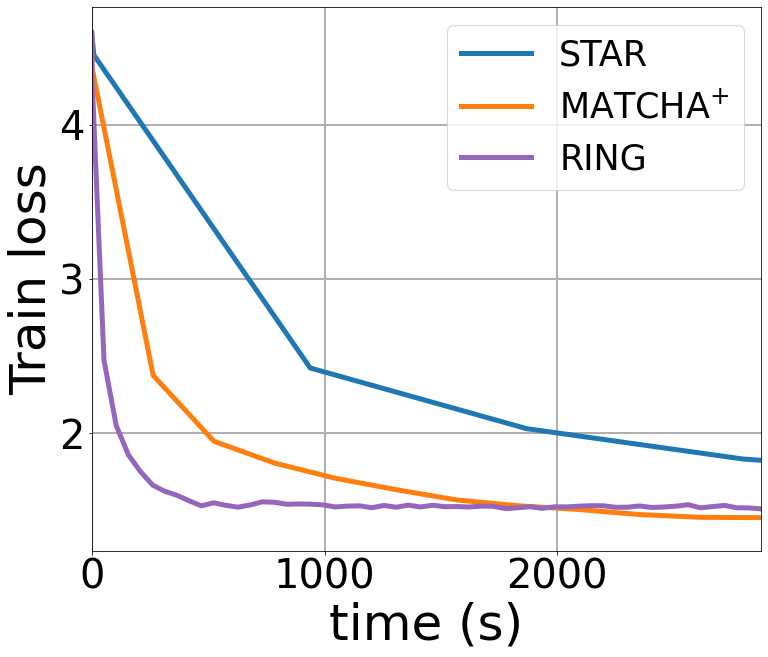}
        \caption[]{{\small Train Loss}}    
    \end{subfigure}
    \hfill
    \begin{subfigure}[b]{0.24\textwidth}
        \centering
        \includegraphics[width=\textwidth, height=0.8\textwidth]{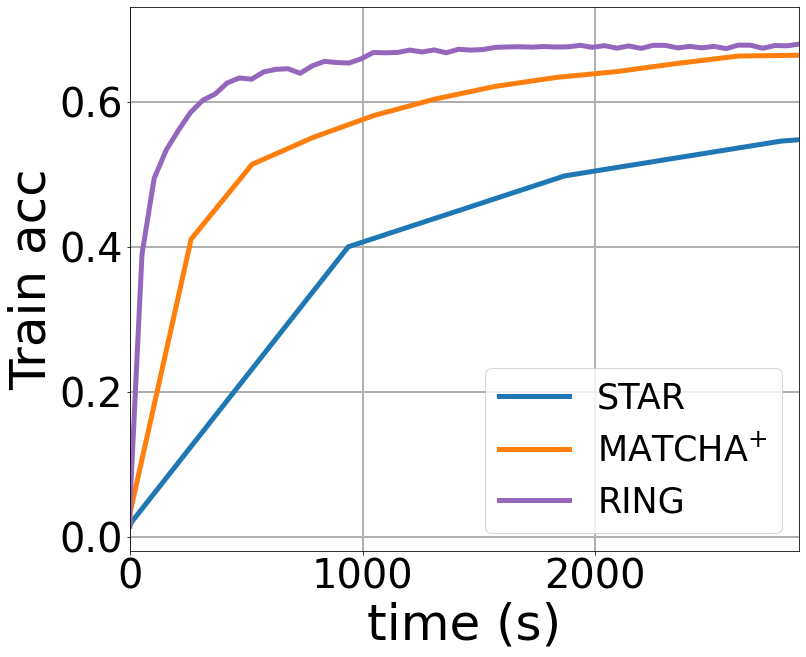}
        \caption[]{{\small Train Accuracy}}    
    \end{subfigure}
    \hfill
    \begin{subfigure}[b]{0.24\textwidth}   
        \centering 
        \includegraphics[width=\textwidth, height=0.8\textwidth]{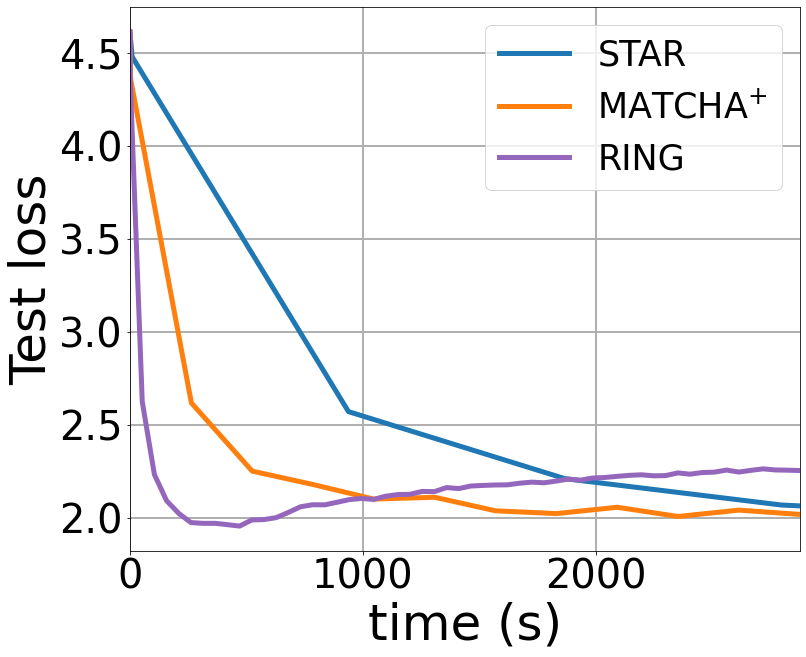}
        \caption[]{{\small Test Loss}}    
    \end{subfigure}
    \hfill
    \begin{subfigure}[b]{0.24\textwidth}   
        \centering 
         \includegraphics[width=\textwidth, height=0.8\textwidth]{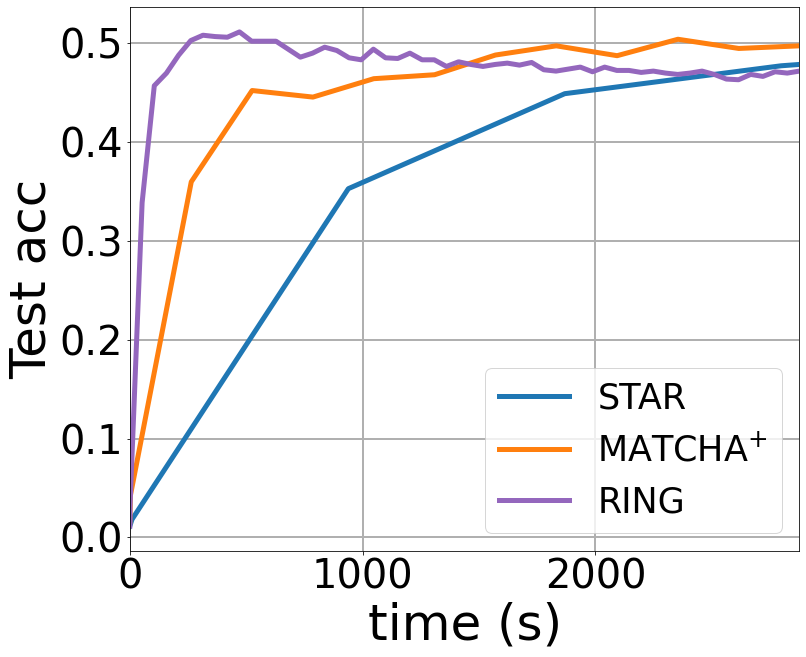}
        \caption[]{{\small Test Accuracy}}    
    \end{subfigure}
    \caption[]
    {\small Effect of overlays on the convergence w.r.t.~communication rounds (top row) and wall-clock time (bottom row) when training ResNet-18 image classification model using iNaturalist on Gaia underlay. $1$~Gbps core links capacities, $100$~Mbps access links capacities, $s=1$.} 
    \label{f:gaia_s_1}
\end{figure*}

\begin{figure*}
    \centering
    \begin{subfigure}[b]{0.24\textwidth}  
        \centering 
        \includegraphics[width=\textwidth, height=0.8\textwidth]{Figures/convergence_curves/local_step_1/aws/inaturalist/Train_loss_vs_iteration.png}
    \end{subfigure}
    \hfill
    \begin{subfigure}[b]{0.24\textwidth}
        \centering
        \includegraphics[width=\textwidth, height=0.8\textwidth]{Figures/convergence_curves/local_step_1/aws/inaturalist/Train_acc_vs_iteration.png}
    \end{subfigure}
    \hfill
    \begin{subfigure}[b]{0.24\textwidth}   
        \centering 
        \includegraphics[width=\textwidth, height=0.8\textwidth]{Figures/convergence_curves/local_step_1/aws/inaturalist/Test_loss_vs_iteration.png}
    \end{subfigure}
    \hfill
    \begin{subfigure}[b]{0.24\textwidth}   
        \centering 
        \includegraphics[width=\textwidth, height=0.8\textwidth]{Figures/convergence_curves/local_step_1/aws/inaturalist/Test_acc_vs_iteration.png}
    \end{subfigure}
    \\   
    \begin{subfigure}[b]{0.24\textwidth}  
        \centering 
        \includegraphics[width=\textwidth, height=0.8\textwidth]{Figures/convergence_curves/local_step_1/aws/inaturalist/Train_loss_vs_time.png}
        \caption[]{{\small Train Loss}}    
    \end{subfigure}
    \hfill
    \begin{subfigure}[b]{0.24\textwidth}
        \centering
        \includegraphics[width=\textwidth, height=0.8\textwidth]{Figures/convergence_curves/local_step_1/aws/inaturalist/Train_acc_vs_time.png}
        \caption[]{{\small Train Accuracy}}    
    \end{subfigure}
    \hfill
    \begin{subfigure}[b]{0.24\textwidth}   
        \centering 
        \includegraphics[width=\textwidth, height=0.8\textwidth]{Figures/convergence_curves/local_step_1/aws/inaturalist/Test_loss_vs_time.png}
        \caption[]{{\small Test Loss}}    
    \end{subfigure}
    \hfill
    \begin{subfigure}[b]{0.24\textwidth}   
        \centering 
        \includegraphics[width=\textwidth, height=0.8\textwidth]{Figures/convergence_curves/local_step_1/aws/inaturalist/Test_acc_vs_time.png}
        \caption[]{{\small Test Accuracy}}    
    \end{subfigure}
    \caption[]
    {\small Effect of overlays on the convergence w.r.t.~communication rounds (top row) and wall-clock time (bottom row) when training ResNet-18 image classification model using iNaturalist on  AWS North America underlay. $1$~Gbps core links capacities, $100$~Mbps access links capacities, $s=1$.} 
\end{figure*}

\begin{figure*}
    \centering
    \begin{subfigure}[b]{0.24\textwidth}  
        \centering 
        \includegraphics[width=\textwidth, height=0.8\textwidth]{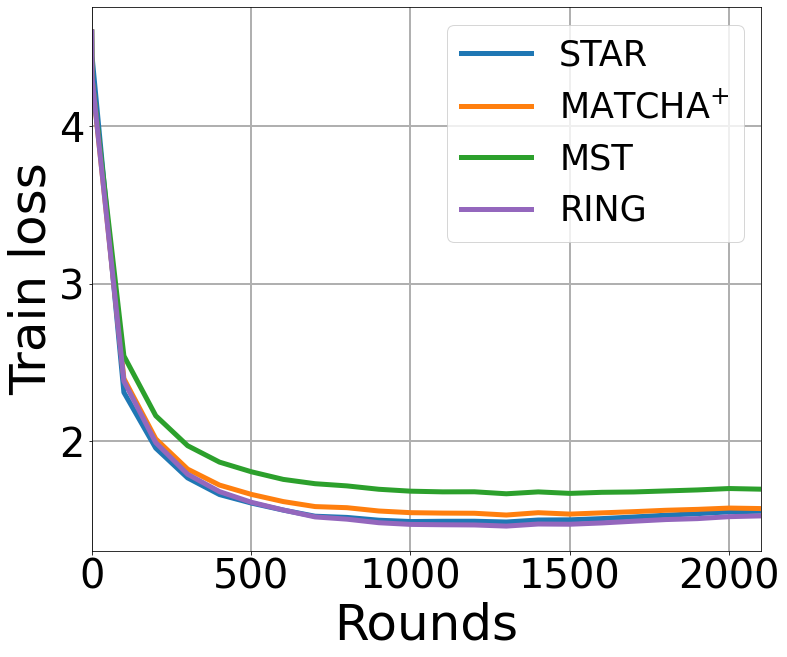}
    \end{subfigure}
    \hfill
    \begin{subfigure}[b]{0.24\textwidth}
        \centering
        \includegraphics[width=\textwidth, height=0.8\textwidth]{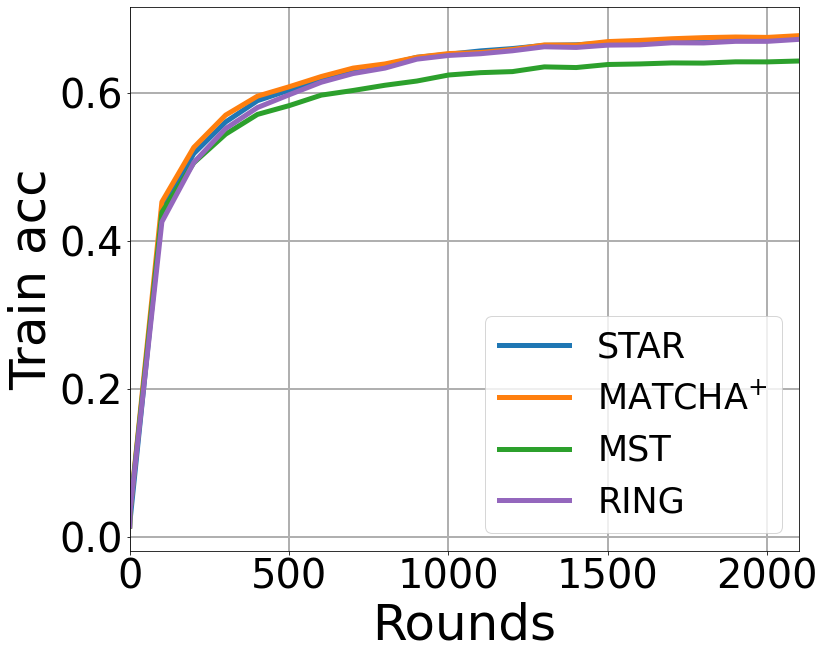}
    \end{subfigure}
    \hfill
    \begin{subfigure}[b]{0.24\textwidth}   
        \centering 
        \includegraphics[width=\textwidth, height=0.8\textwidth]{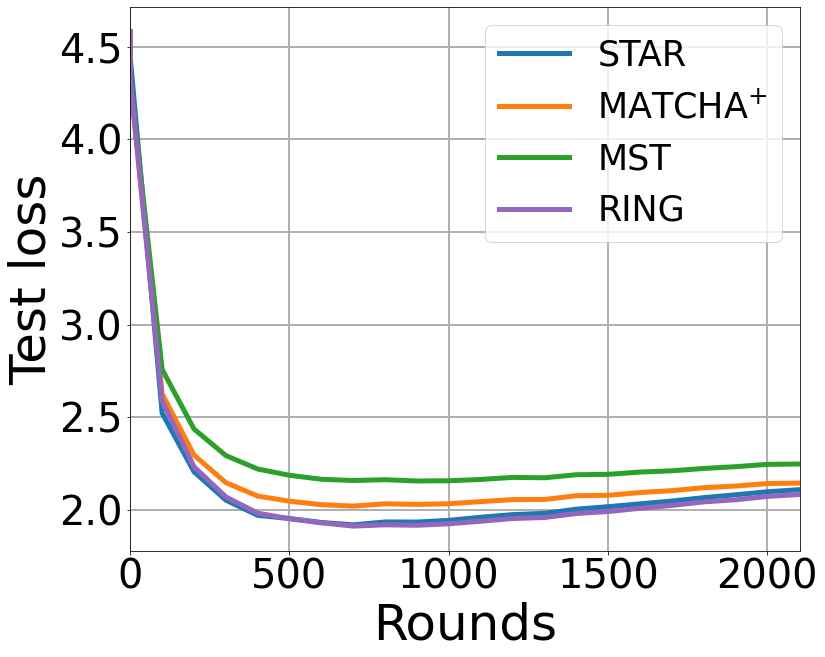}
    \end{subfigure}
    \hfill
    \begin{subfigure}[b]{0.24\textwidth}   
        \centering 
        \includegraphics[width=\textwidth, height=0.8\textwidth]{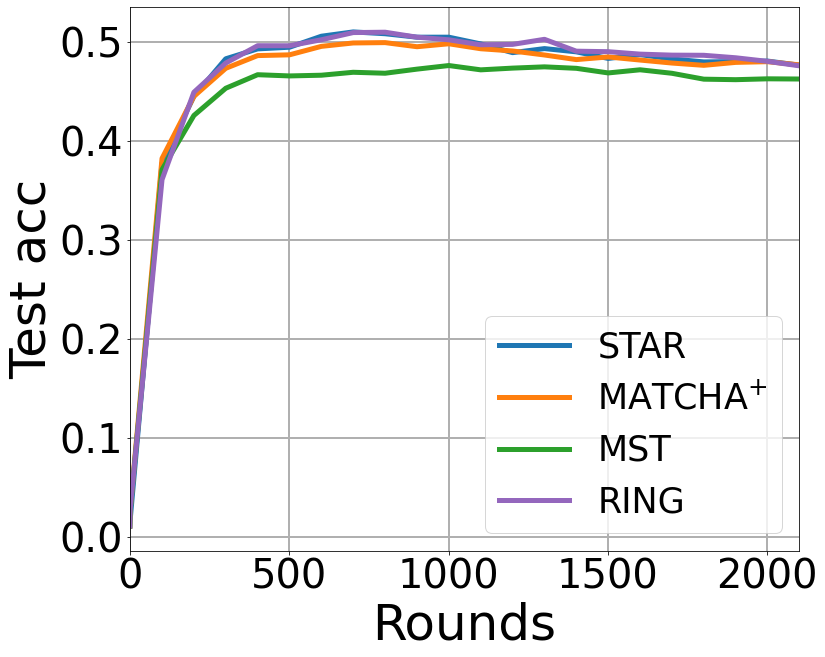}
    \end{subfigure}
    \\    
    \begin{subfigure}[b]{0.24\textwidth}  
        \centering 
        \includegraphics[width=\textwidth, height=0.8\textwidth]{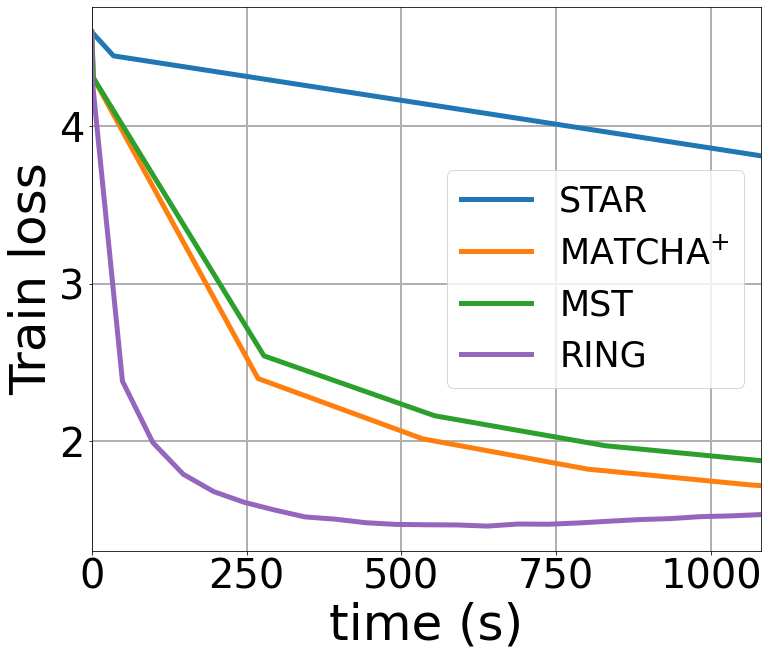}
        \caption[]{{\small Train Loss}}    
    \end{subfigure}
    \hfill
    \begin{subfigure}[b]{0.24\textwidth}
        \centering
        \includegraphics[width=\textwidth, height=0.8\textwidth]{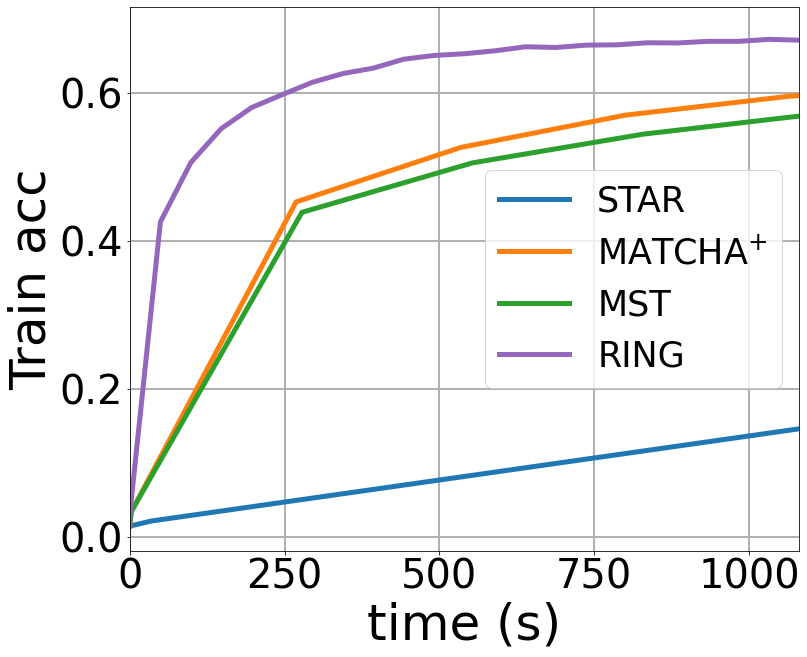}
        \caption[] {{\small Train Accuracy}}    
    \end{subfigure}
    \hfill
    \begin{subfigure}[b]{0.24\textwidth}   
        \centering 
        \includegraphics[width=\textwidth, height=0.8\textwidth]{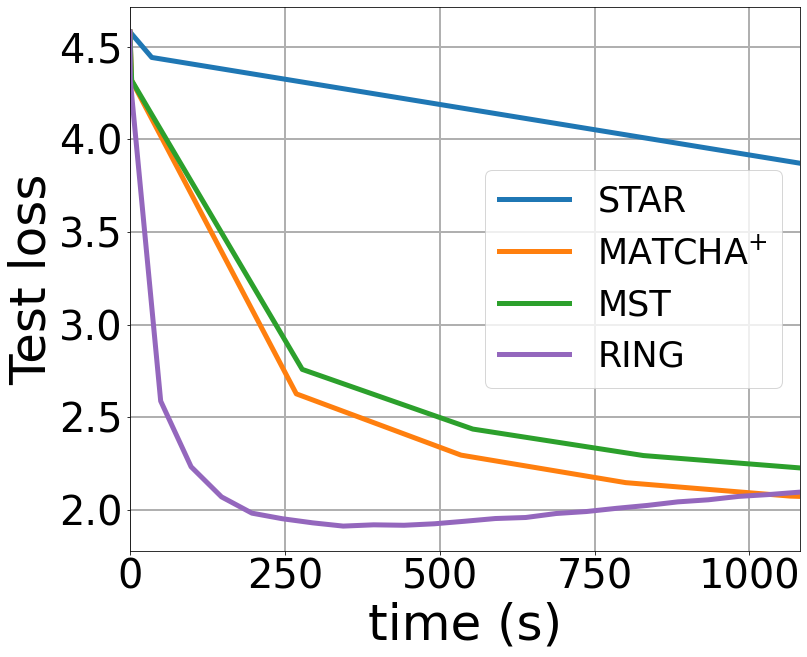}
        \caption[]{{\small Test Loss}}    
    \end{subfigure}
    \hfill
    \begin{subfigure}[b]{0.24\textwidth}   
        \centering 
        \includegraphics[width=\textwidth, height=0.8\textwidth]{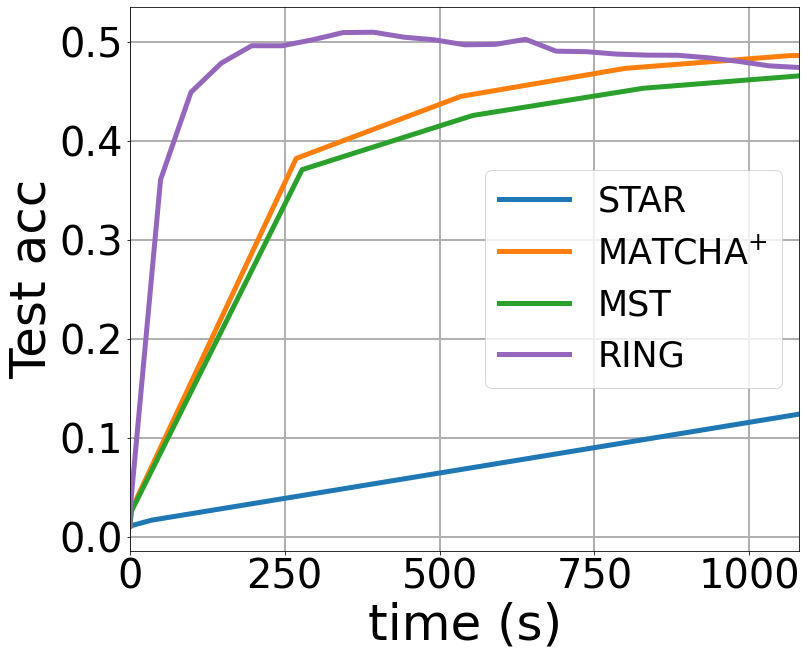}
        \caption[]{{\small Test Accuracy}}    
    \end{subfigure}
    \caption[]
    {\small Effect of overlays on the convergence w.r.t.~communication rounds (top row) and wall-clock time (bottom row) when training ResNet-18 image classification model using iNaturalist on  G\'eant underlay. $1$~Gbps core links capacities, $100$~Mbps access links capacities, $s=1$.} 
    \label{f:geant_s_1}
    \end{figure*}

\begin{figure*}
    \centering
    \begin{subfigure}[b]{0.24\textwidth}  
        \centering 
        \includegraphics[width=\textwidth, height=0.8\textwidth]{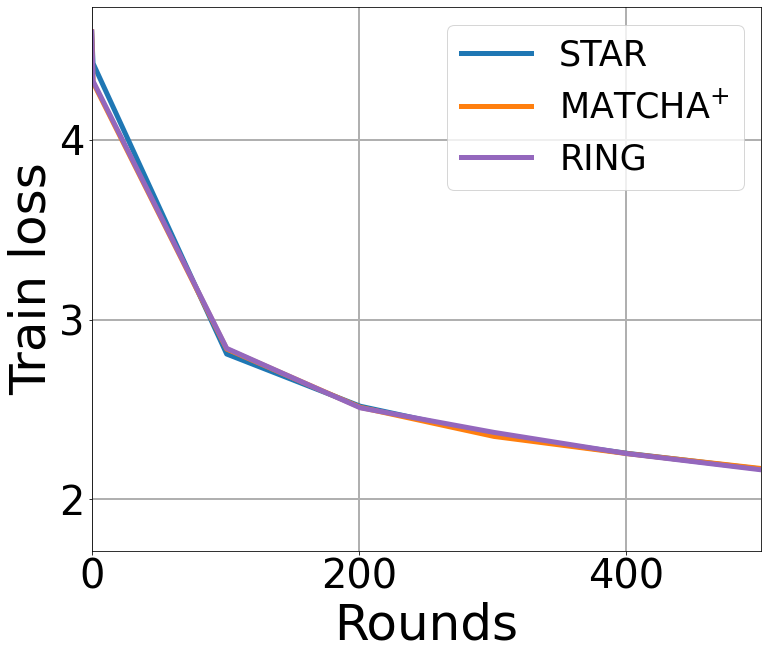}
    \end{subfigure}
    \hfill
    \begin{subfigure}[b]{0.24\textwidth}
        \centering
        \includegraphics[width=\textwidth, height=0.8\textwidth]{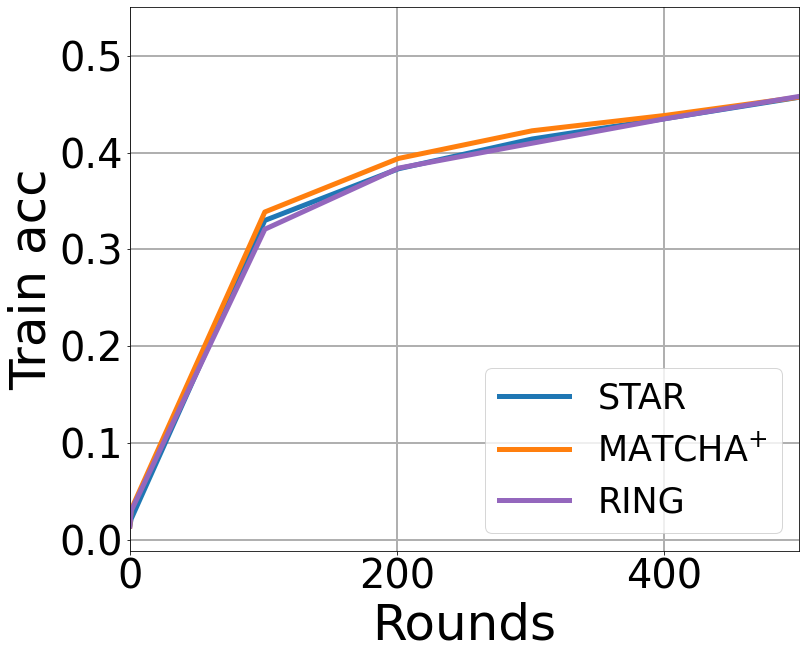}
    \end{subfigure}
    \hfill
    \begin{subfigure}[b]{0.24\textwidth}   
        \centering 
        \includegraphics[width=\textwidth, height=0.8\textwidth]{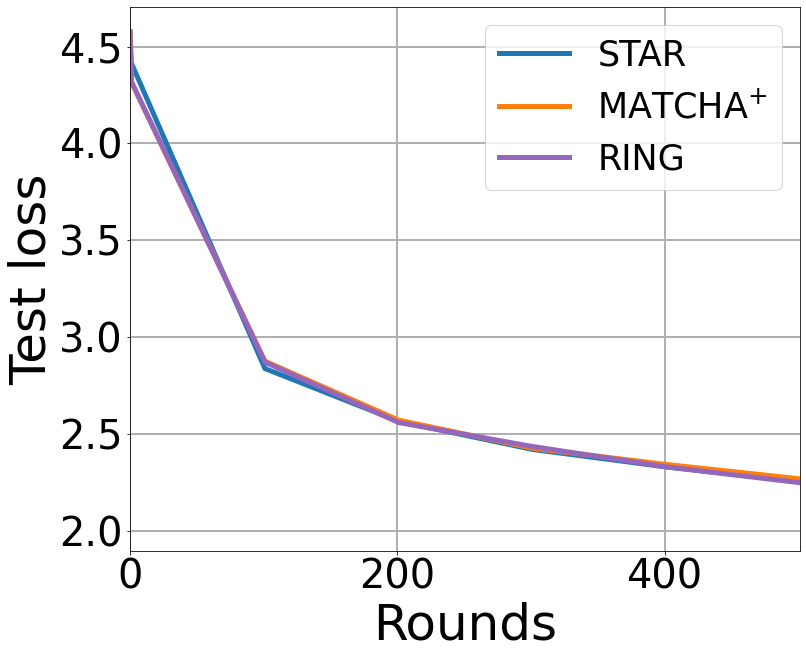}
    \end{subfigure}
    \hfill
    \begin{subfigure}[b]{0.24\textwidth}   
        \centering 
        \includegraphics[width=\textwidth, height=0.8\textwidth]{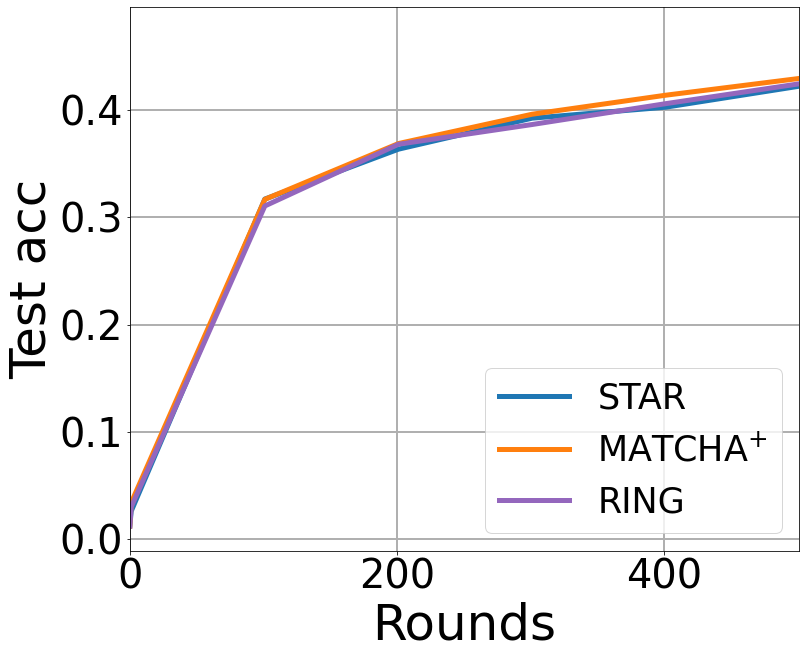}
    \end{subfigure}
    \\
    \begin{subfigure}[b]{0.24\textwidth}  
        \centering 
        \includegraphics[width=\textwidth, height=0.8\textwidth]{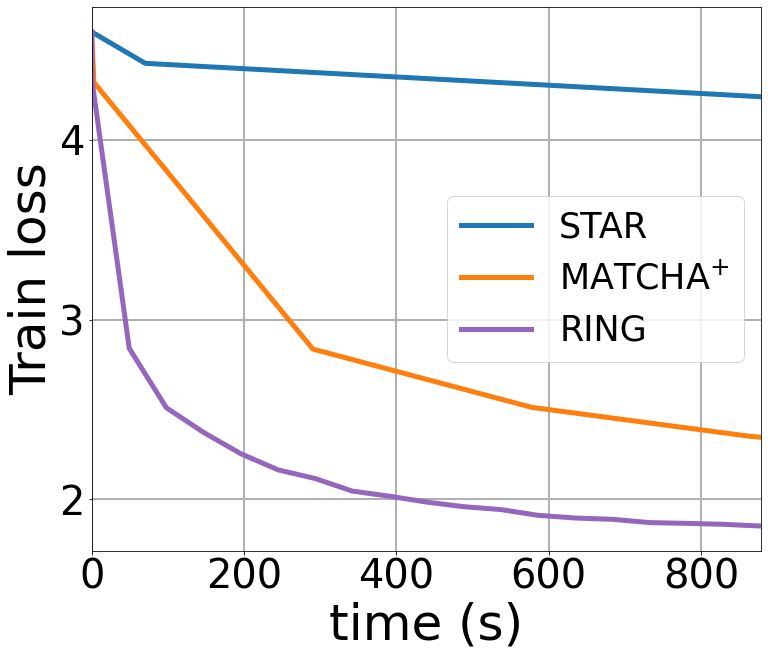}
        \caption[]{{\small Train Loss}}    
    \end{subfigure}
    \hfill
    \begin{subfigure}[b]{0.24\textwidth}
        \centering
        \includegraphics[width=\textwidth, height=0.8\textwidth]{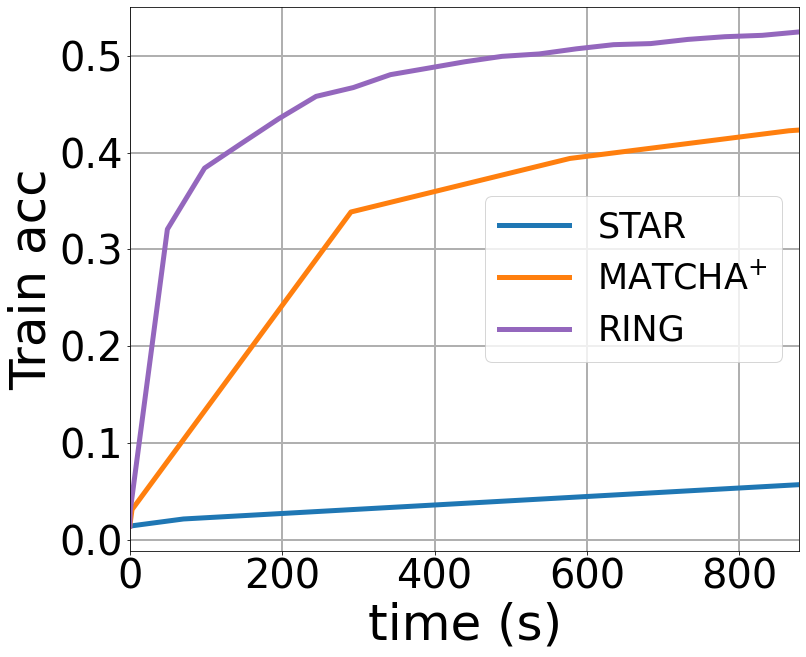}
        \caption[]{{\small Train Accuracy}} 
    \end{subfigure}
    \hfill
    \begin{subfigure}[b]{0.24\textwidth}   
        \centering 
        \includegraphics[width=\textwidth, height=0.8\textwidth]{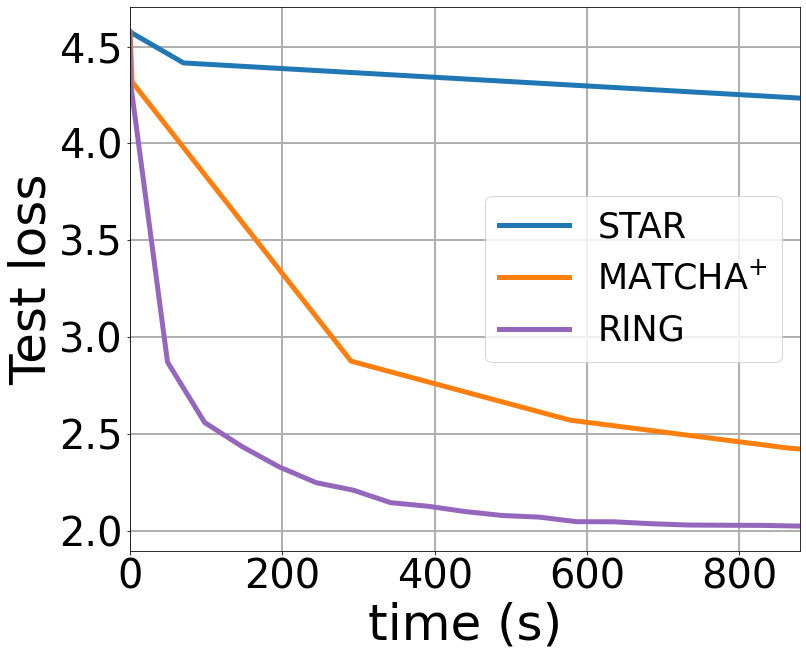}
        \caption[]{{\small Test Loss}}    
    \end{subfigure}
    \hfill
    \begin{subfigure}[b]{0.24\textwidth}   
        \centering 
        \includegraphics[width=\textwidth, height=0.8\textwidth]{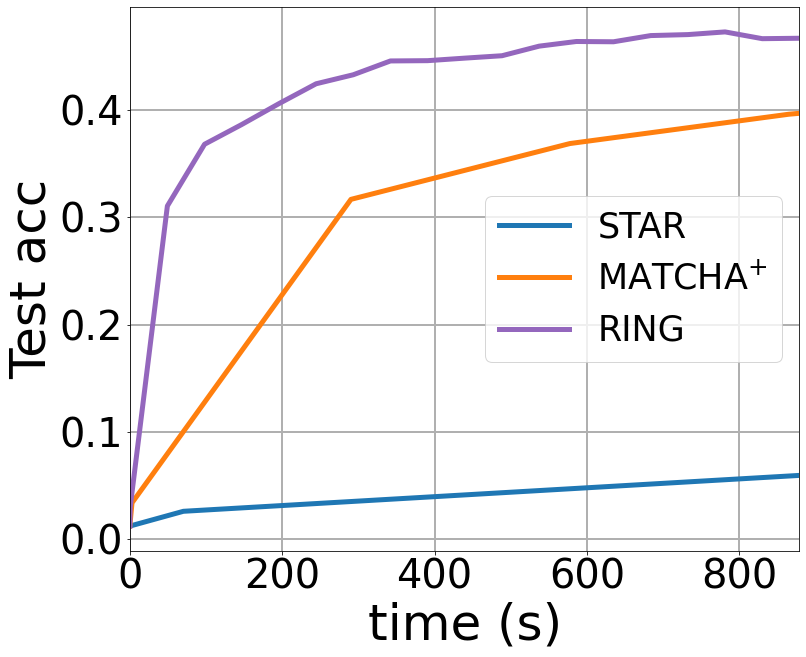}
        \caption[]{{\small Test Accuracy}}    
    \end{subfigure}
    \caption[]
    {\small Effect of overlays on the convergence w.r.t.~communication rounds (top row) and wall-clock time (bottom row) when training ResNet-18 image classification model using iNaturalist on  Exodus underlay. $1$~Gbps core links capacities, $100$~Mbps access links capacities, $s=1$.} 
\end{figure*}
    
\begin{figure*}
    \centering
    \begin{subfigure}[b]{0.24\textwidth}  
        \centering 
        \includegraphics[width=\textwidth, height=0.8\textwidth]{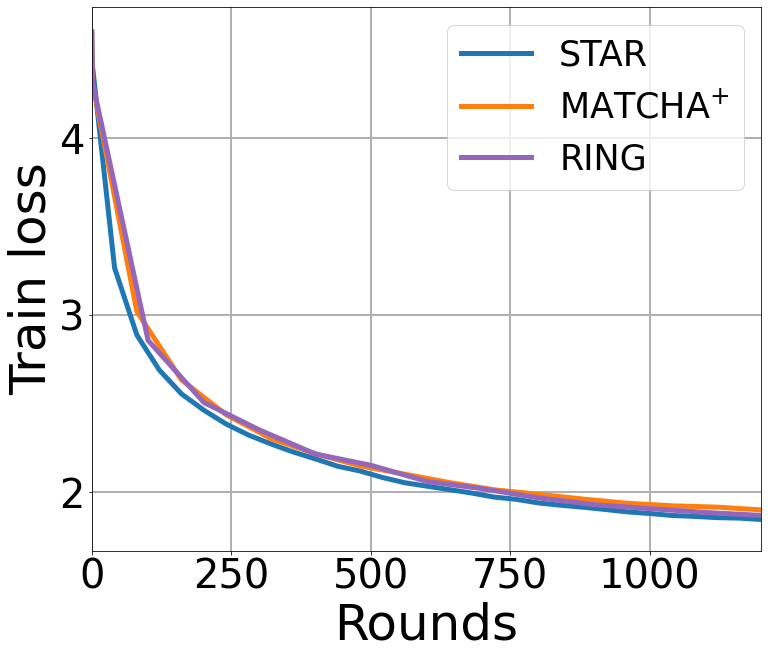}
    \end{subfigure}
    \hfill
    \begin{subfigure}[b]{0.24\textwidth}
        \centering
        \includegraphics[width=\textwidth, height=0.8\textwidth]{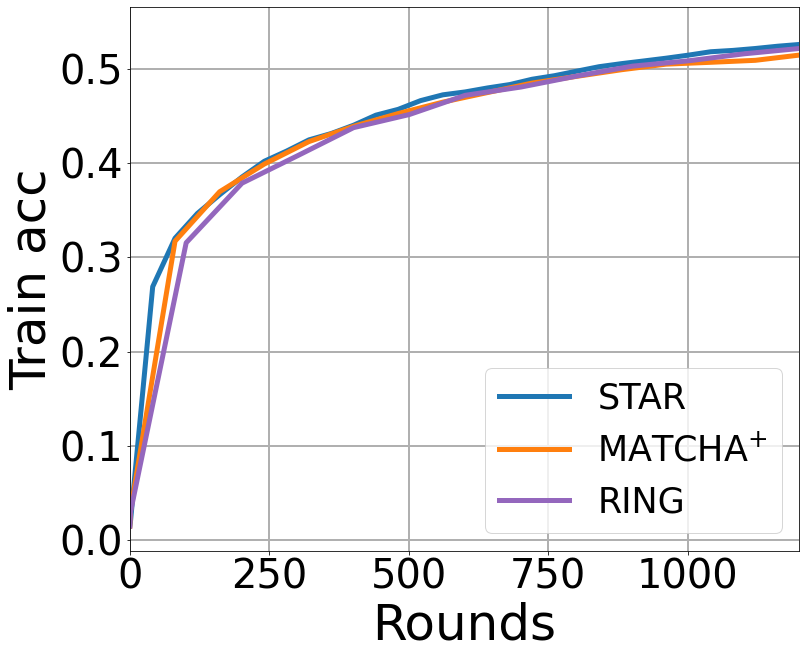}
    \end{subfigure}
    \hfill
    \begin{subfigure}[b]{0.24\textwidth}   
        \centering 
        \includegraphics[width=\textwidth, height=0.8\textwidth]{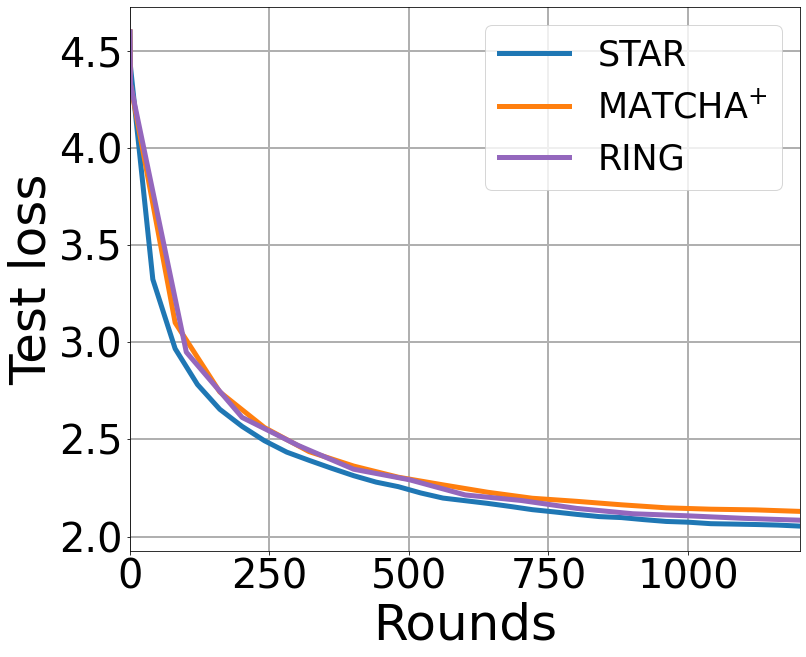}
    \end{subfigure}
    \hfill
    \begin{subfigure}[b]{0.24\textwidth}   
        \centering 
        \includegraphics[width=\textwidth, height=0.8\textwidth]{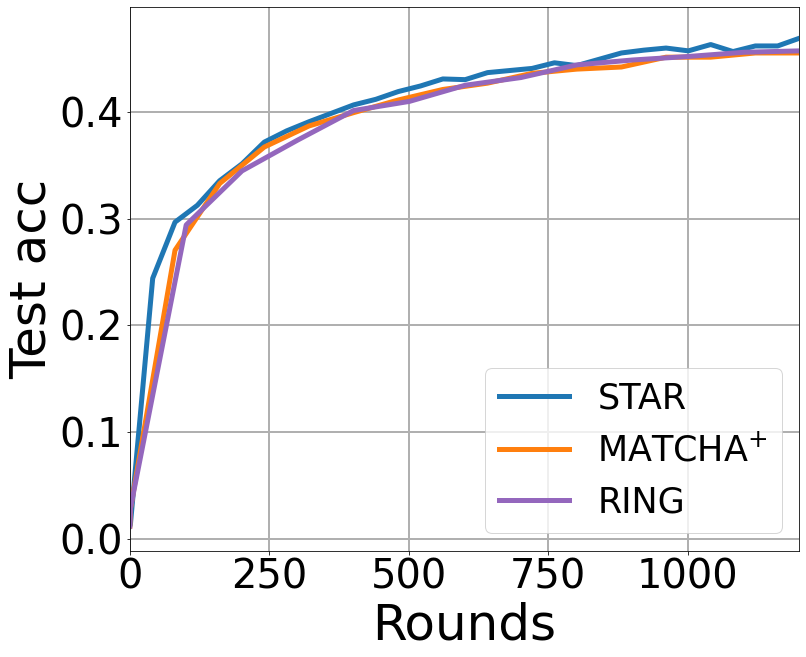}
    \end{subfigure}
    \\    
    \begin{subfigure}[b]{0.24\textwidth}  
        \centering 
        \includegraphics[width=\textwidth, height=0.8\textwidth]{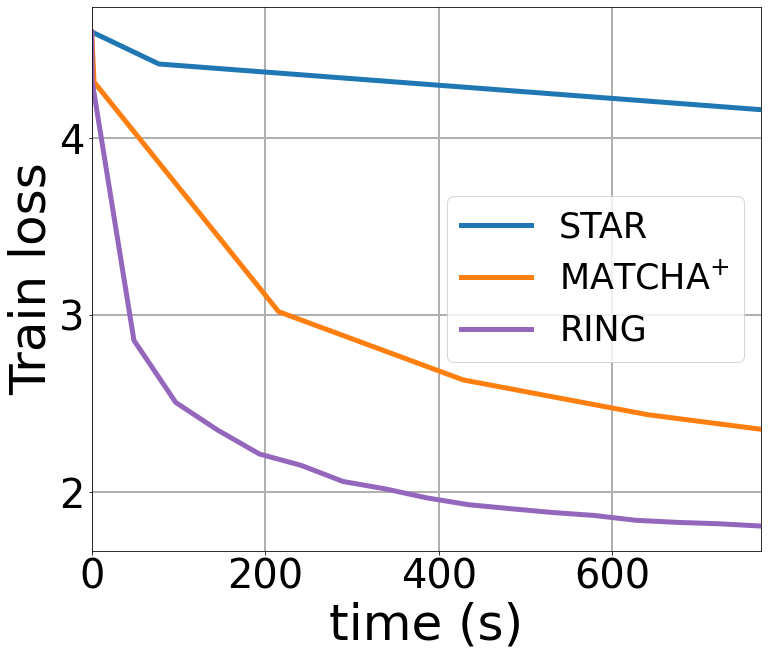}
        \caption[]{{\small Train Loss}}    
    \end{subfigure}
    \hfill
    \begin{subfigure}[b]{0.24\textwidth}
        \centering
        \includegraphics[width=\textwidth, height=0.8\textwidth]{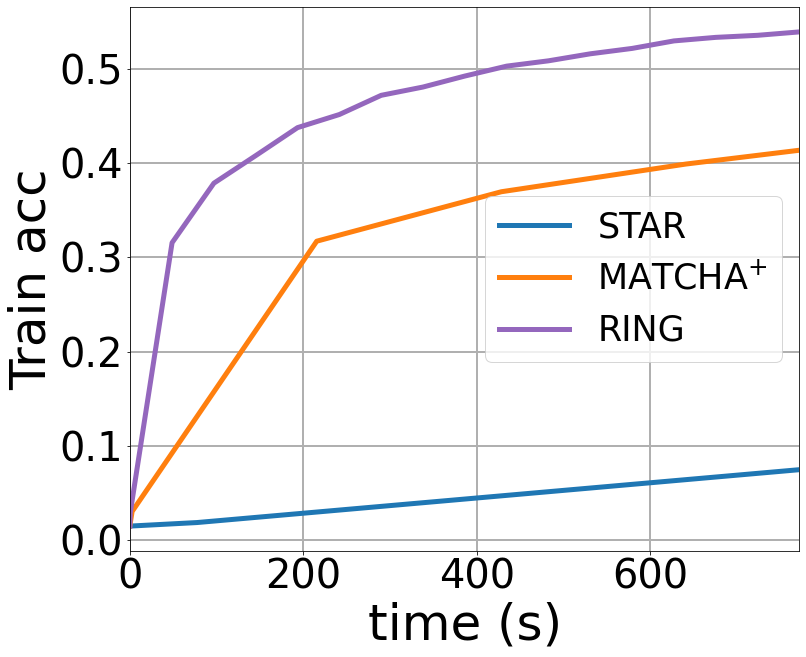}
        \caption[]{{\small Train Accuracy}}    
    \end{subfigure}
    \hfill
    \begin{subfigure}[b]{0.24\textwidth}   
        \centering 
        \includegraphics[width=\textwidth, height=0.8\textwidth]{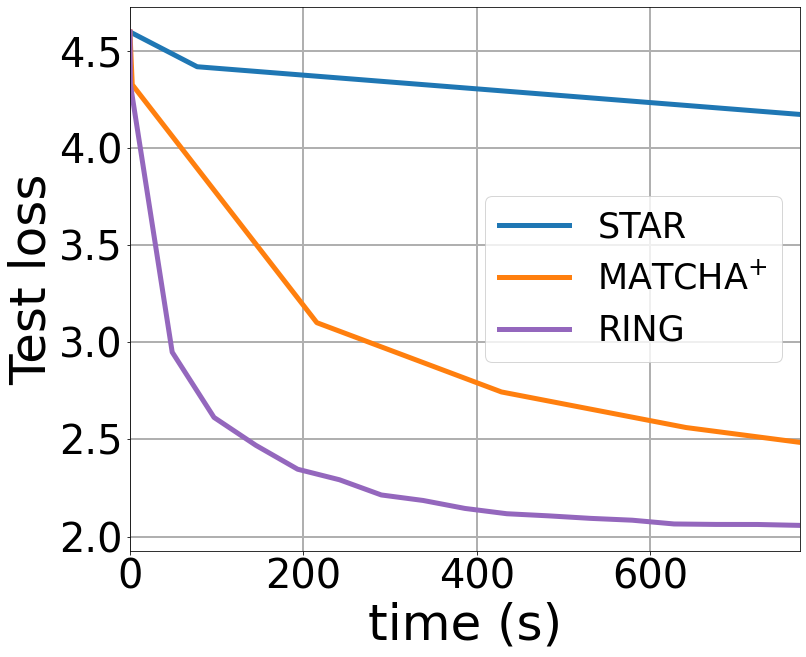}
         \caption[]{{\small Test Loss}}    
    \end{subfigure}
    \hfill
    \begin{subfigure}[b]{0.24\textwidth}   
        \centering 
        \includegraphics[width=\textwidth, height=0.8\textwidth]{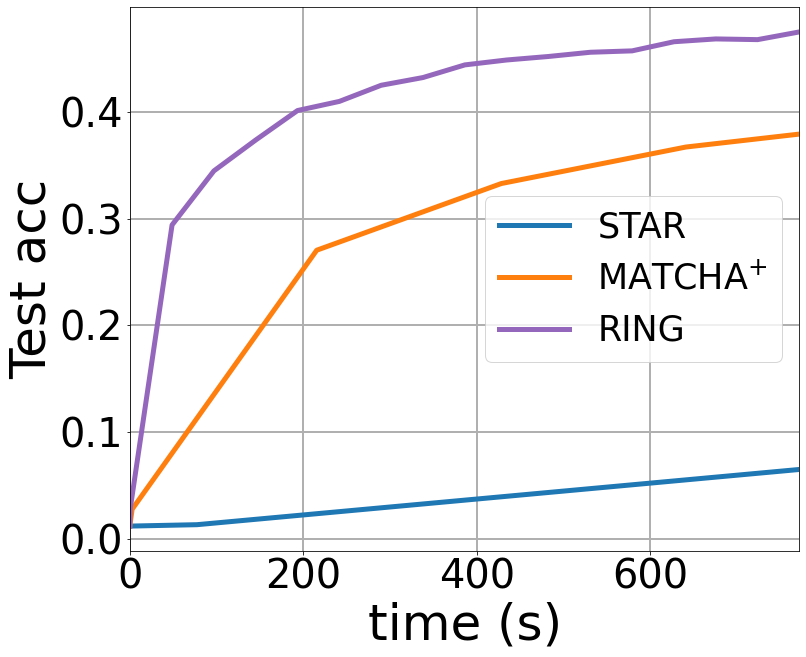}
        \caption[]{{\small Test Accuracy}}    
    \end{subfigure}
    \caption[]
    {\small Effect of overlays on the convergence w.r.t.~communication rounds (top row) and wall-clock time (bottom row) when training ResNet-18 image classification model using iNaturalist on  Ebone underlay. $1$~Gbps core links capacities, $100$~Mbps access links capacities, $s=1$.} 
\end{figure*}
    
\begin{figure*}
    \centering
    \begin{subfigure}[b]{0.24\textwidth}  
        \centering 
        \includegraphics[width=\textwidth, height=0.8\textwidth]{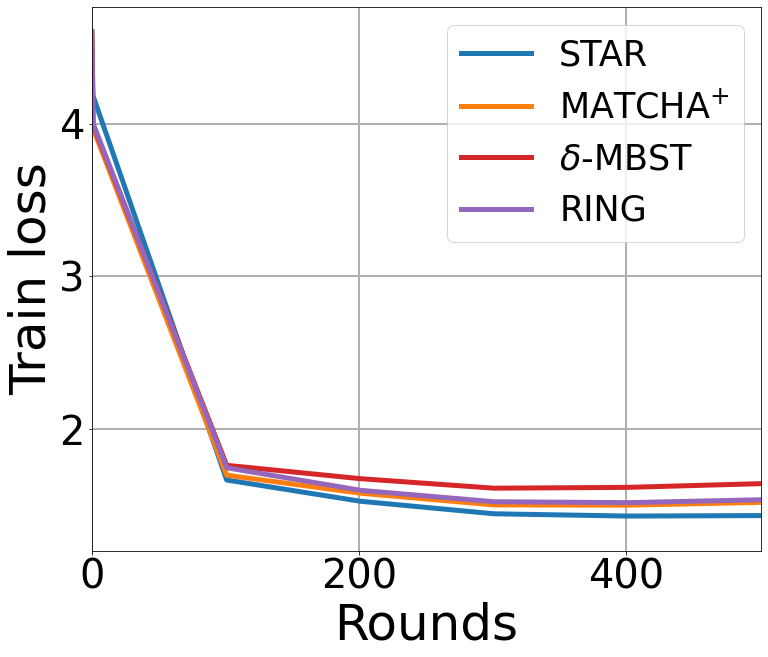}
    \end{subfigure}
    \hfill
    \begin{subfigure}[b]{0.24\textwidth}
        \centering
        \includegraphics[width=\textwidth, height=0.8\textwidth]{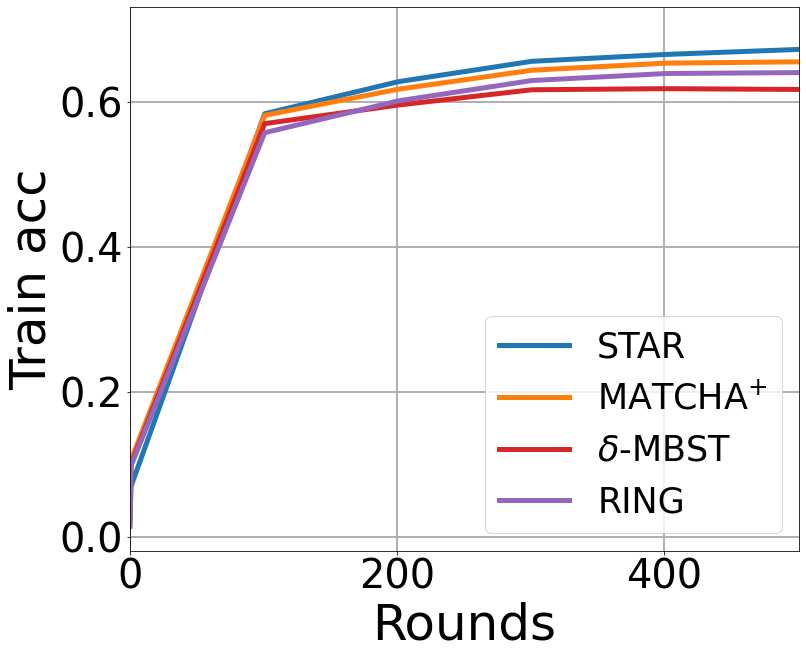}
    \end{subfigure}
    \hfill
    \begin{subfigure}[b]{0.24\textwidth}   
        \centering 
        \includegraphics[width=\textwidth, height=0.8\textwidth]{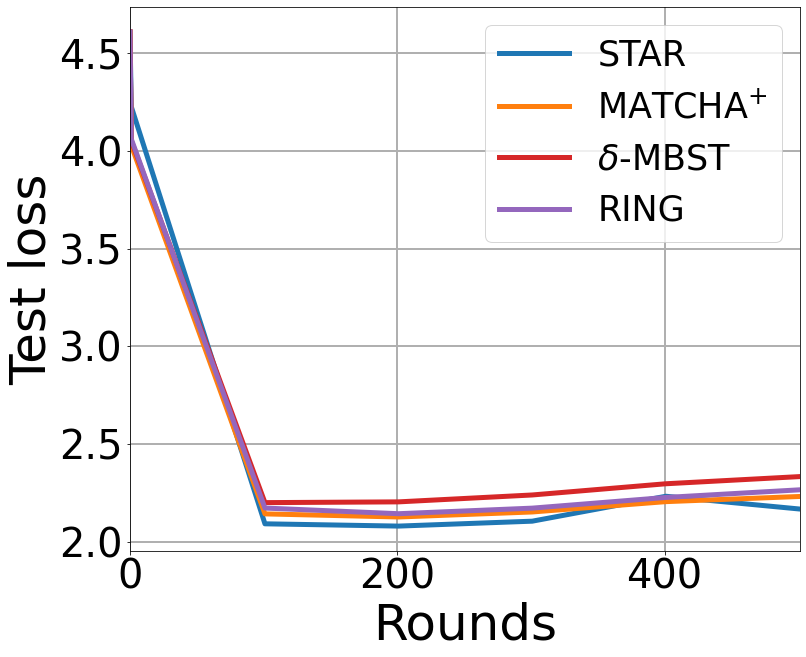}
    \end{subfigure}
    \hfill
    \begin{subfigure}[b]{0.24\textwidth}   
        \centering 
        \includegraphics[width=\textwidth, height=0.8\textwidth]{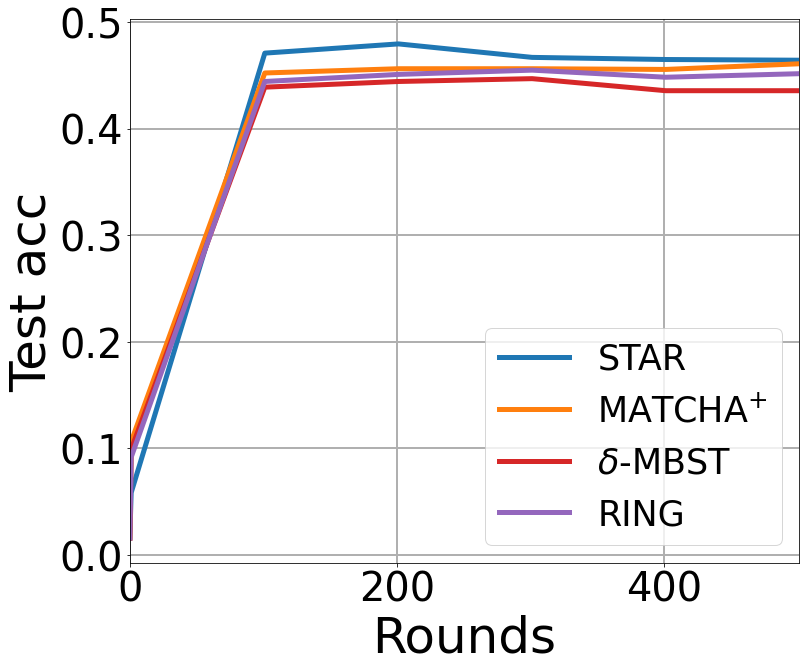}
    \end{subfigure}
    \\    
    \begin{subfigure}[b]{0.24\textwidth}  
        \centering 
        \includegraphics[width=\textwidth, height=0.8\textwidth]{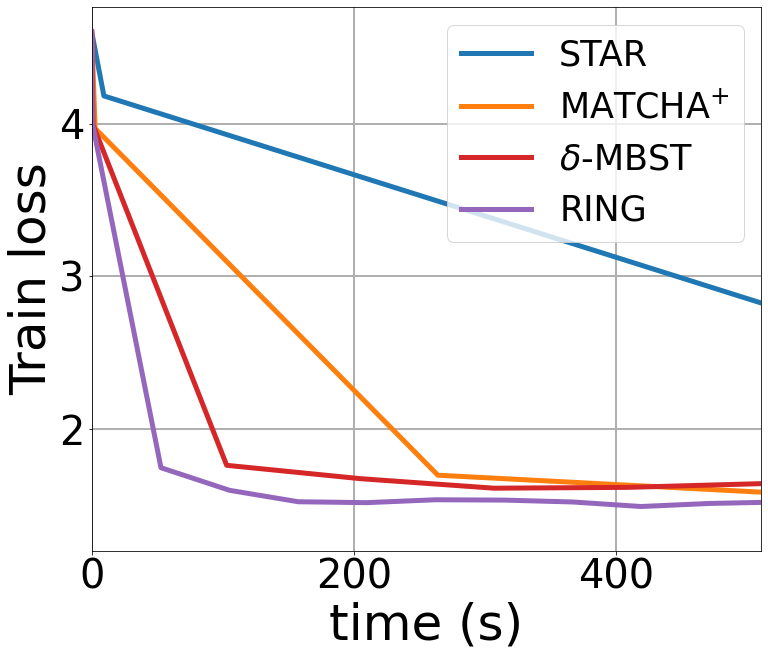}
        \caption[]{{\small Train Loss}}    
    \end{subfigure}
    \hfill
    \begin{subfigure}[b]{0.24\textwidth}
        \centering
        \includegraphics[width=\textwidth, height=0.8\textwidth]{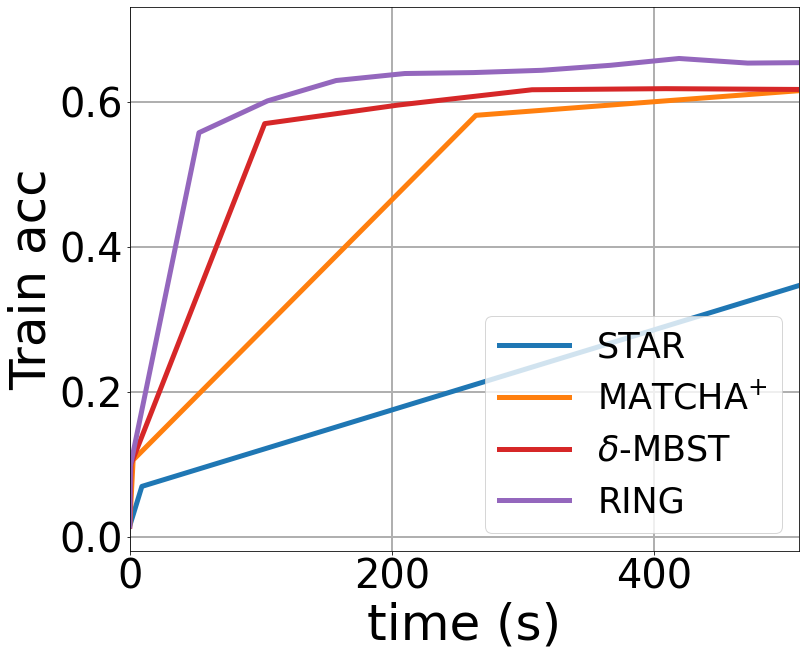}
        \caption[]{{\small Train Accuracy}}    
    \end{subfigure}
    \hfill
    \begin{subfigure}[b]{0.24\textwidth}   
        \centering 
        \includegraphics[width=\textwidth, height=0.8\textwidth]{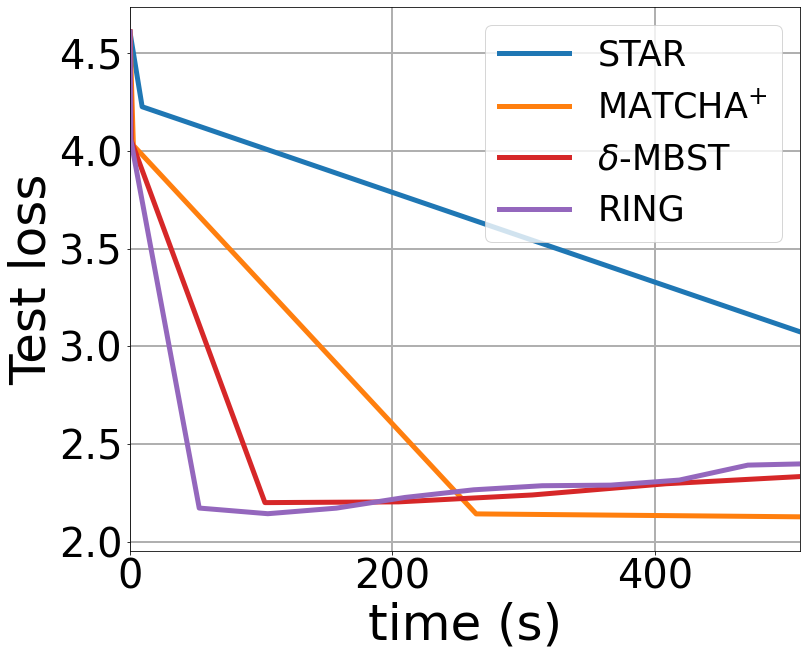}
        \caption[]{{\small Test Loss}}    
    \end{subfigure}
    \hfill
    \begin{subfigure}[b]{0.24\textwidth}   
        \centering 
        \includegraphics[width=\textwidth, height=0.8\textwidth]{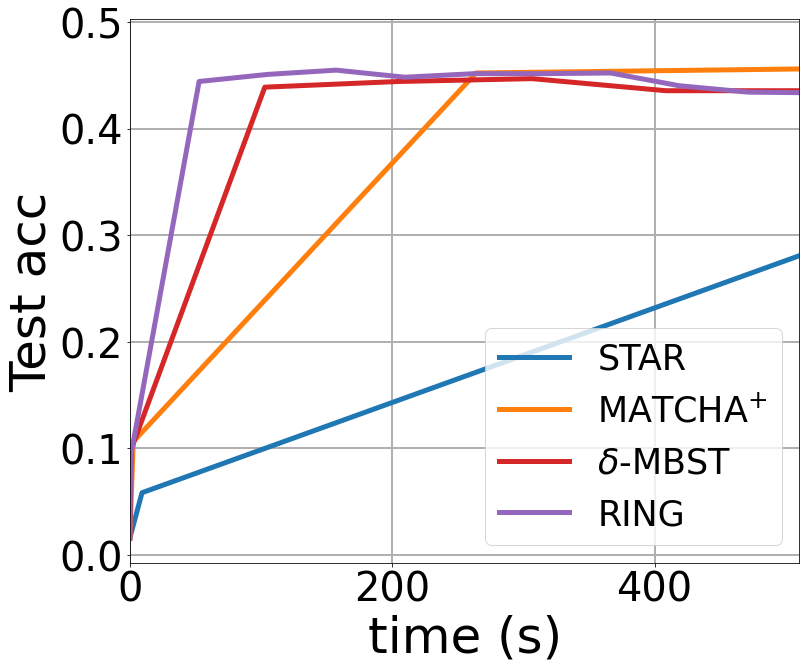}
        \caption[]{{\small Test Accuracy}}    
    \end{subfigure}
    \caption[]
    {\small Effect of overlays on the convergence w.r.t.~communication rounds (top row) and wall-clock time (bottom row) when training ResNet-18 image classification model using iNaturalist on Gaia underlay. $1$~Gbps core links capacities, $100$~Mbps access links capacities, $s=5$.} 
    \label{f:gaia_s_5}
\end{figure*}
    
\begin{figure*}
    \centering
    \begin{subfigure}[b]{0.24\textwidth}  
        \centering 
        \includegraphics[width=\textwidth, height=0.8\textwidth]{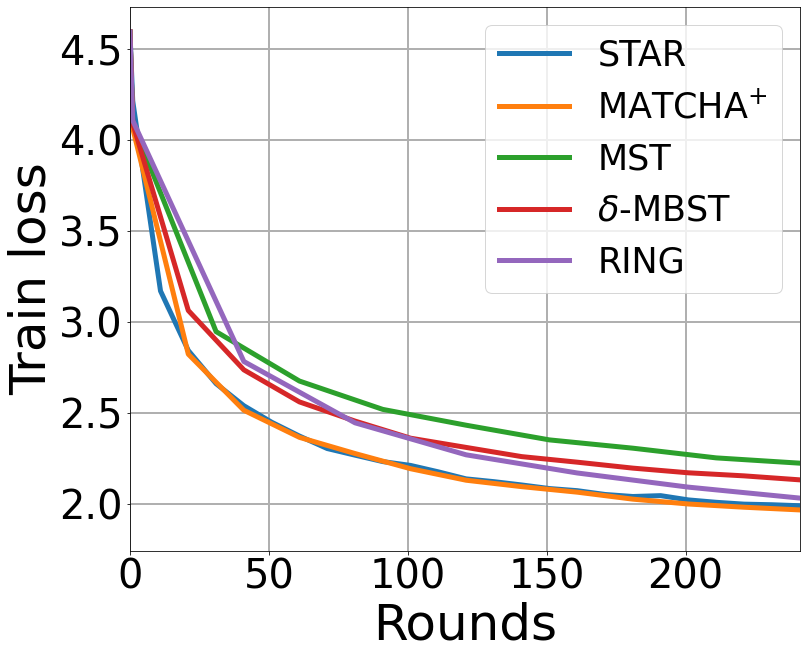}
    \end{subfigure}
    \hfill
    \begin{subfigure}[b]{0.24\textwidth}
        \centering
        \includegraphics[width=\textwidth, height=0.8\textwidth]{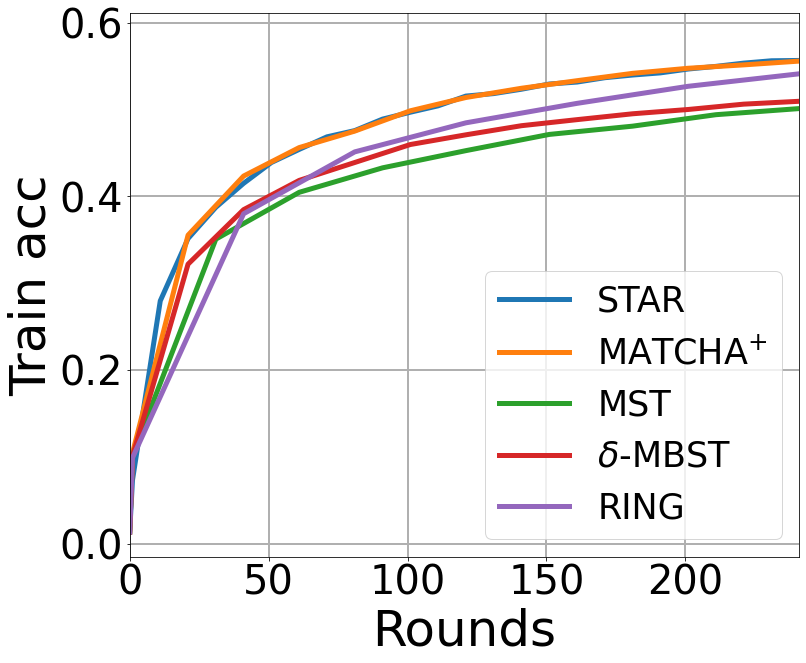}
    \end{subfigure}
    \hfill
    \begin{subfigure}[b]{0.24\textwidth}   
        \centering 
        \includegraphics[width=\textwidth, height=0.8\textwidth]{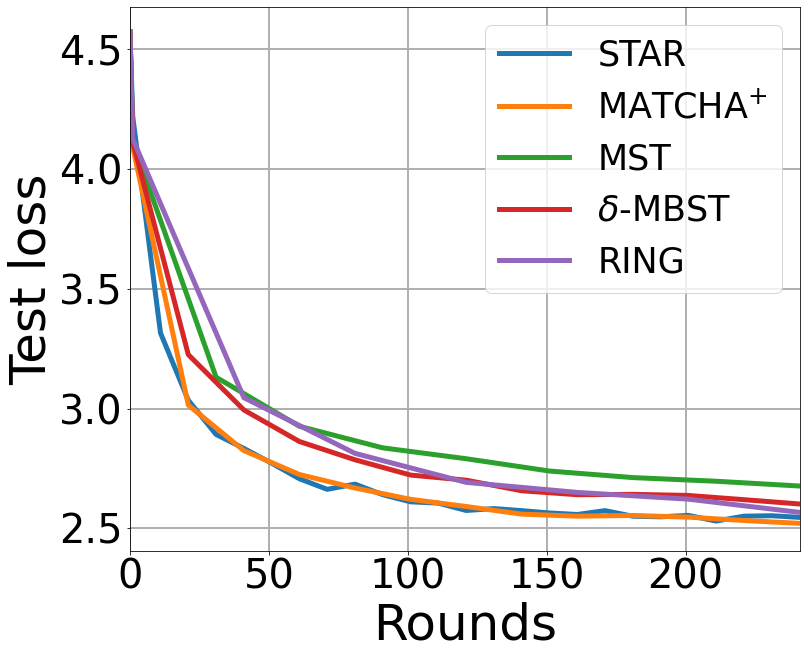}
    \end{subfigure}
    \hfill
    \begin{subfigure}[b]{0.24\textwidth}   
        \centering 
        \includegraphics[width=\textwidth, height=0.8\textwidth]{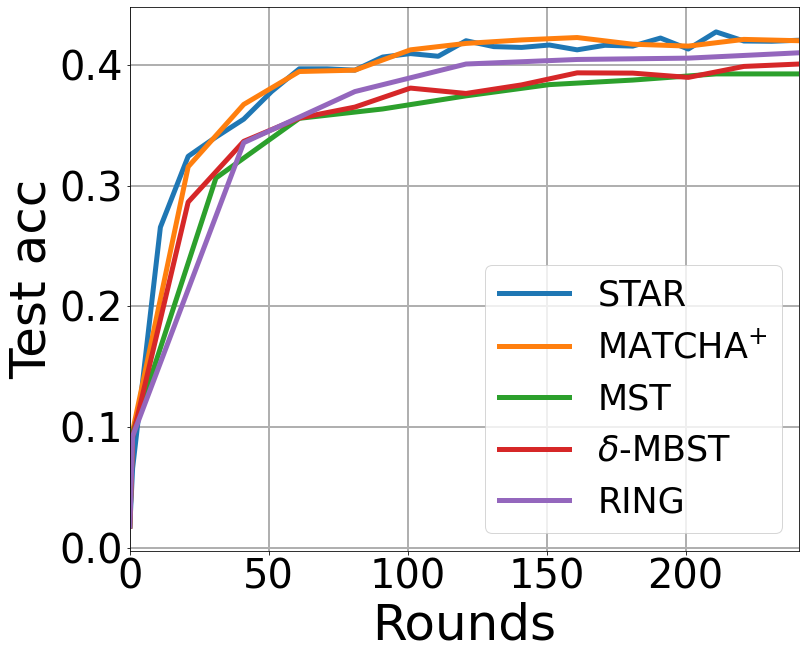}
    \end{subfigure}
    \\    
    \begin{subfigure}[b]{0.24\textwidth}  
        \centering 
        \includegraphics[width=\textwidth, height=0.8\textwidth]{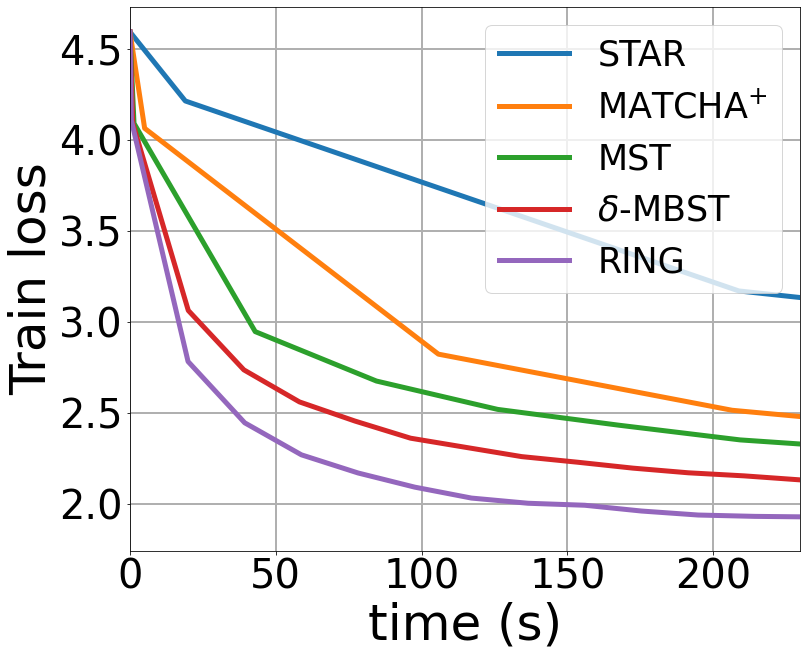}
        \caption[]{{\small Train Loss}}    
    \end{subfigure}
    \hfill
    \begin{subfigure}[b]{0.24\textwidth}
        \centering
        \includegraphics[width=\textwidth, height=0.8\textwidth]{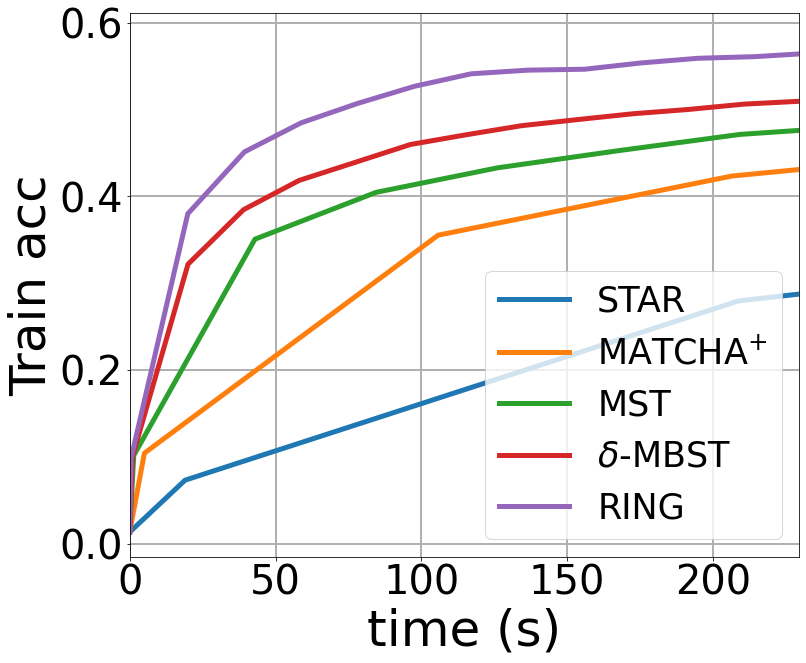}
        \caption[]{{\small Train Accuracy}}    
    \end{subfigure}
    \hfill
    \begin{subfigure}[b]{0.24\textwidth}   
        \centering 
        \includegraphics[width=\textwidth, height=0.8\textwidth]{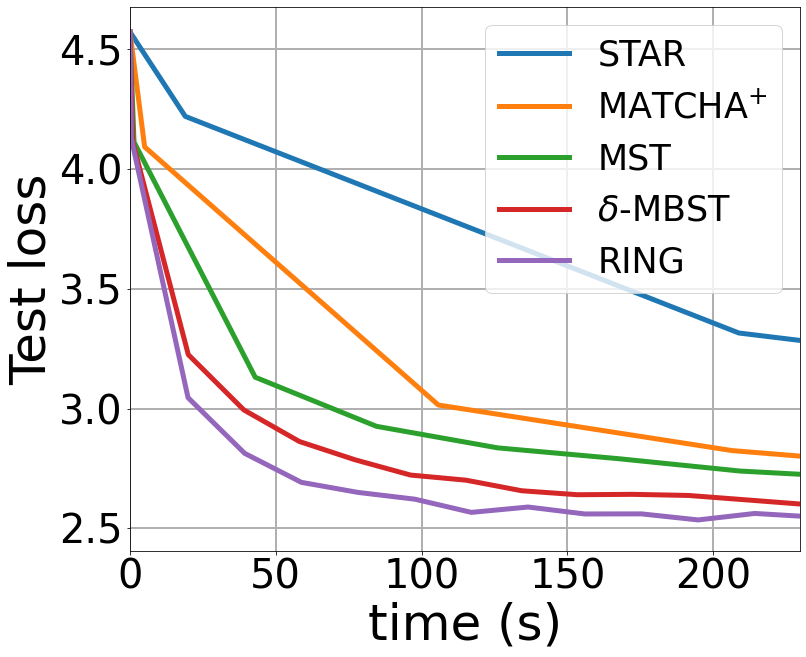}
        \caption[]{{\small Test Loss}}    
    \end{subfigure}
    \hfill
    \begin{subfigure}[b]{0.24\textwidth}   
        \centering 
        \includegraphics[width=\textwidth, height=0.8\textwidth]{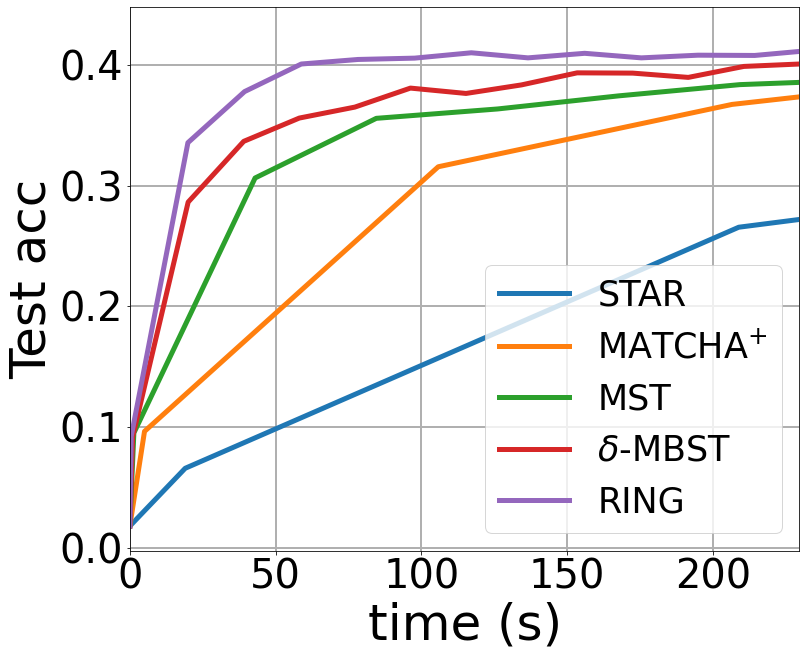}
        \caption[]{{\small Test Accuracy}}    
    \end{subfigure}
    \caption[]
    {\small Effect of overlays on the convergence w.r.t.~communication rounds (top row) and wall-clock time (bottom row) when training ResNet-18 image classification model using iNaturalist on  AWS North America underlay. $1$~Gbps core links capacities, $100$~Mbps access links capacities, $s=5$.} 
\end{figure*}
    
\begin{figure*}
    \centering
    \begin{subfigure}[b]{0.24\textwidth}  
        \centering 
        \includegraphics[width=\textwidth, height=0.8\textwidth]{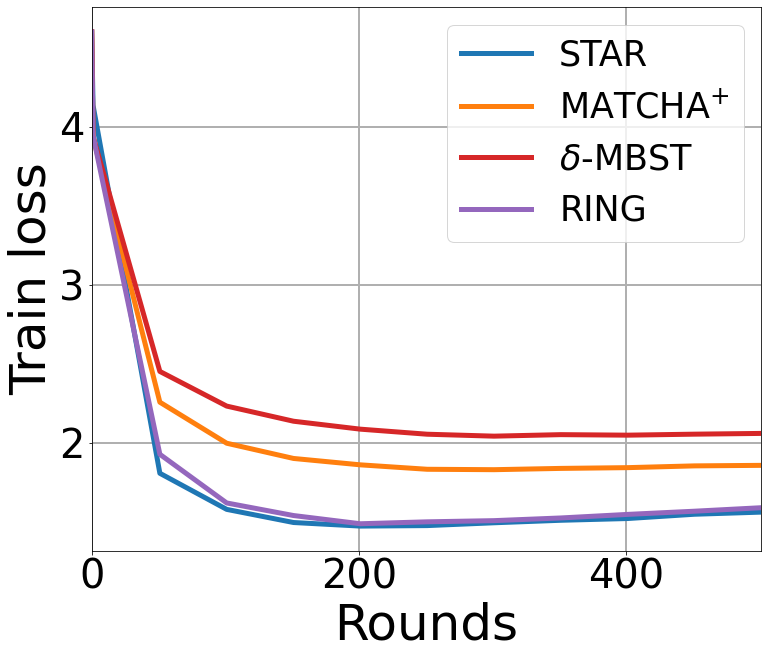}
    \end{subfigure}
    \hfill
    \begin{subfigure}[b]{0.24\textwidth}
        \centering
        \includegraphics[width=\textwidth, height=0.8\textwidth]{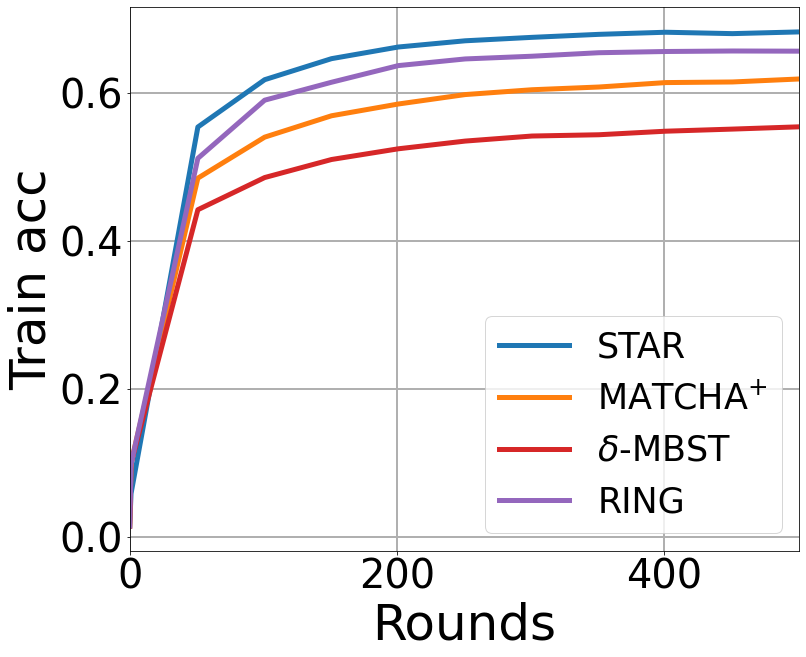}
    \end{subfigure}
    \hfill
    \begin{subfigure}[b]{0.24\textwidth}   
        \centering 
        \includegraphics[width=\textwidth, height=0.8\textwidth]{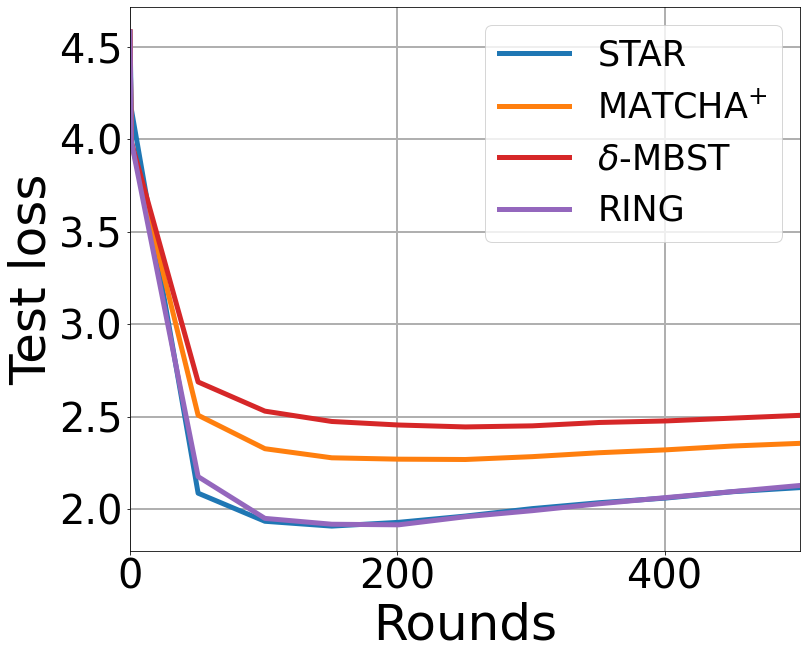}
    \end{subfigure}
    \hfill
    \begin{subfigure}[b]{0.24\textwidth}   
        \centering 
        \includegraphics[width=\textwidth, height=0.8\textwidth]{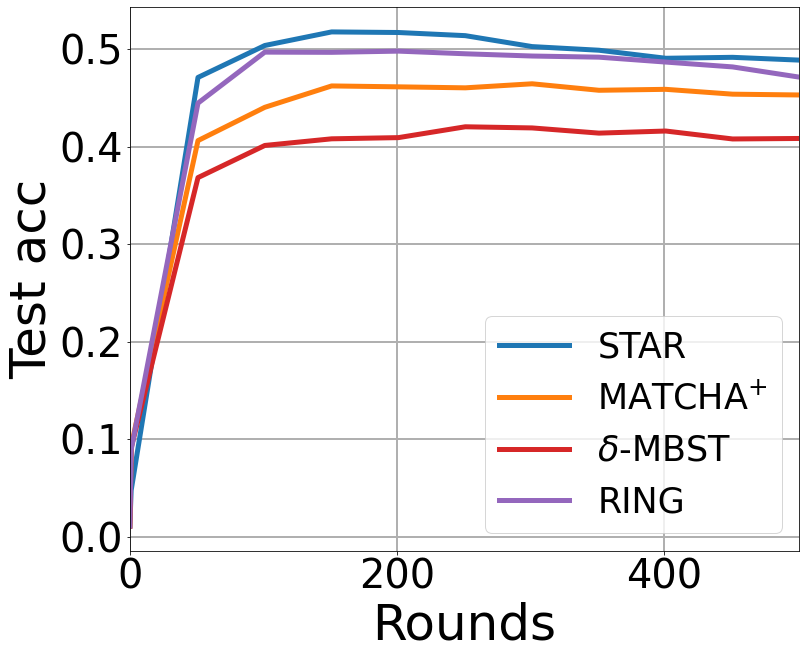}
    \end{subfigure}
    \\    
    \begin{subfigure}[b]{0.24\textwidth}  
        \centering 
        \includegraphics[width=\textwidth, height=0.8\textwidth]{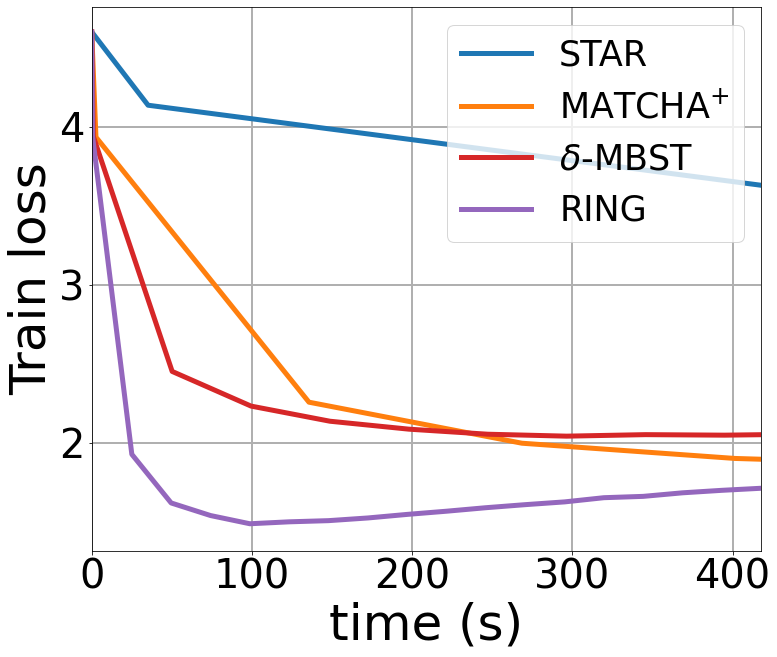}
        \caption[]{{\small Train Loss}}    
    \end{subfigure}
    \hfill
    \begin{subfigure}[b]{0.24\textwidth}
        \centering
        \includegraphics[width=\textwidth, height=0.8\textwidth]{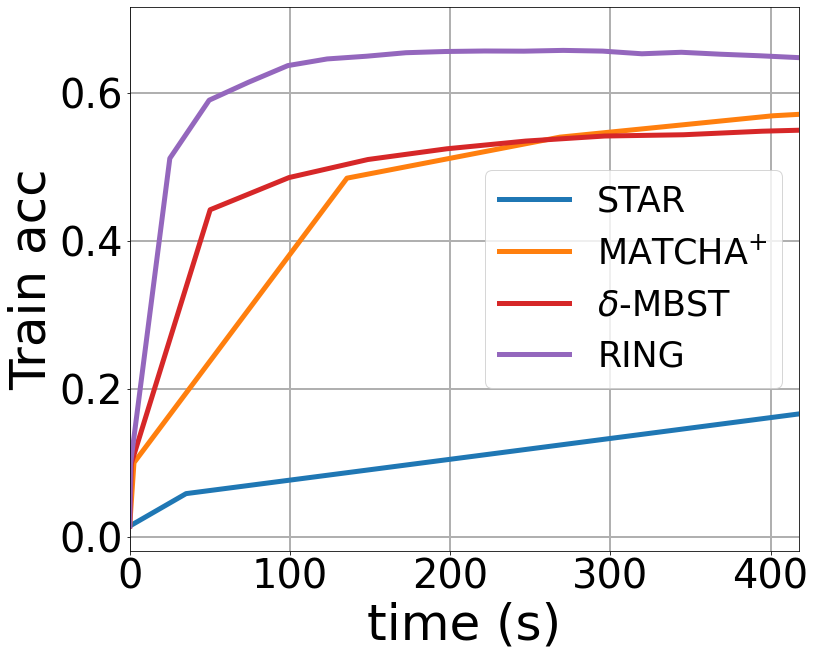}
        \caption[]{{\small Train Accuracy}}    
        
    \end{subfigure}
    \hfill
    \begin{subfigure}[b]{0.24\textwidth}   
        \centering 
        \includegraphics[width=\textwidth, height=0.8\textwidth]{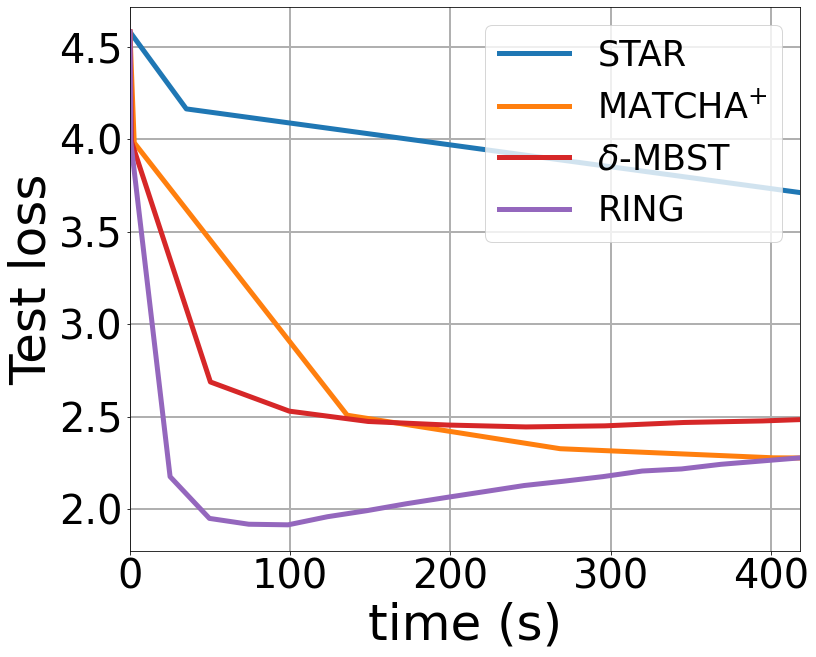}
        \caption[]{{\small Test Loss}}    
        
    \end{subfigure}
    \hfill
    \begin{subfigure}[b]{0.24\textwidth}   
        \centering 
        \includegraphics[width=\textwidth, height=0.8\textwidth]{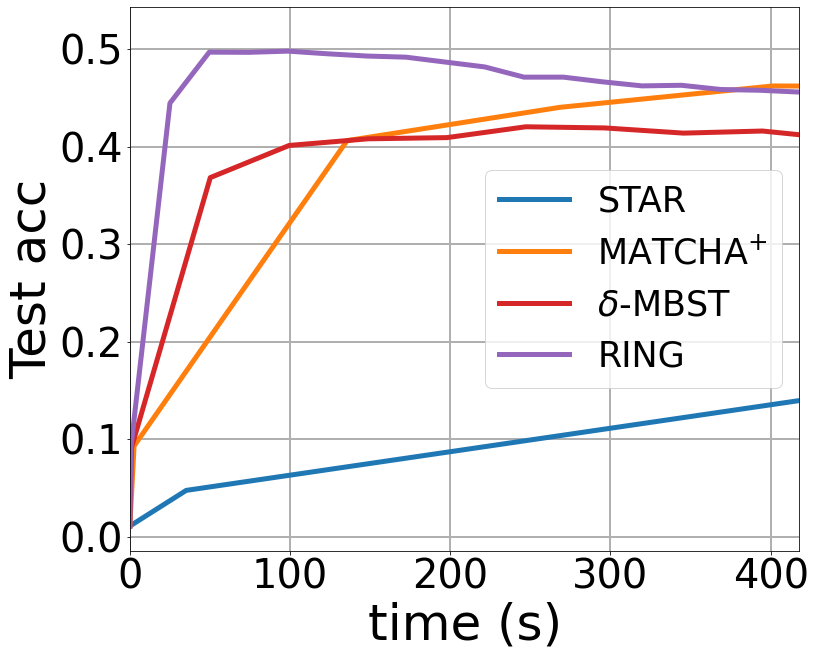}
        \caption[]{{\small Test Accuracy}}    
    \end{subfigure}
     \caption[]
    {\small Effect of overlays on the convergence w.r.t.~communication rounds (top row) and wall-clock time (bottom row) when training ResNet-18 image classification model using iNaturalist on  G\'eant underlay. $1$~Gbps core links capacities, $100$~Mbps access links capacities, $s=5$.} 
    \label{f:geant_s_5}
\end{figure*}
    
\begin{figure*}
    \centering
    \begin{subfigure}[b]{0.24\textwidth}  
        \centering 
        \includegraphics[width=\textwidth, height=0.8\textwidth]{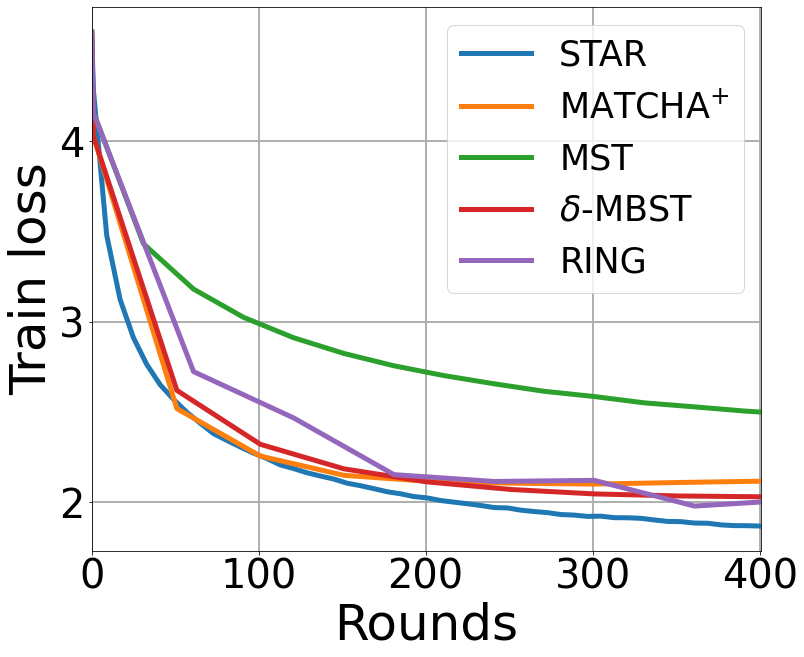}
    \end{subfigure}
    \hfill
    \begin{subfigure}[b]{0.24\textwidth}
        \centering
        \includegraphics[width=\textwidth, height=0.8\textwidth]{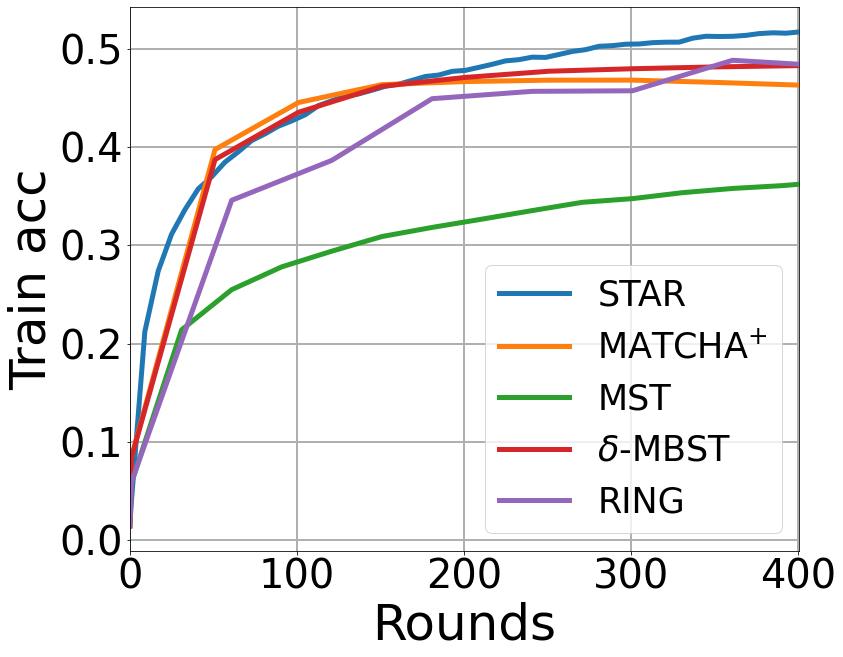}
    \end{subfigure}
    \hfill
    \begin{subfigure}[b]{0.24\textwidth}   
        \centering 
        \includegraphics[width=\textwidth, height=0.8\textwidth]{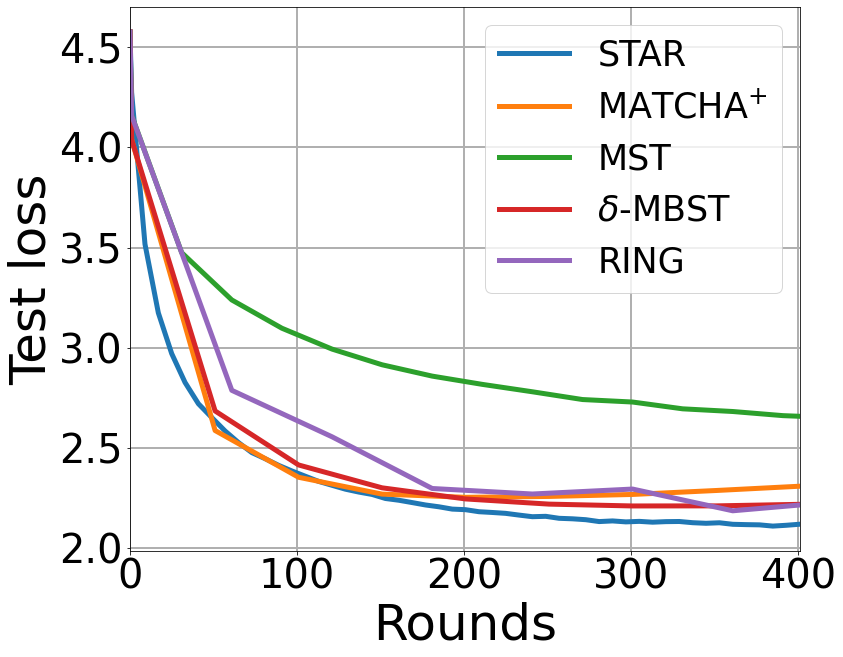}
    \end{subfigure}
    \hfill
    \begin{subfigure}[b]{0.24\textwidth}   
        \centering 
        \includegraphics[width=\textwidth, height=0.8\textwidth]{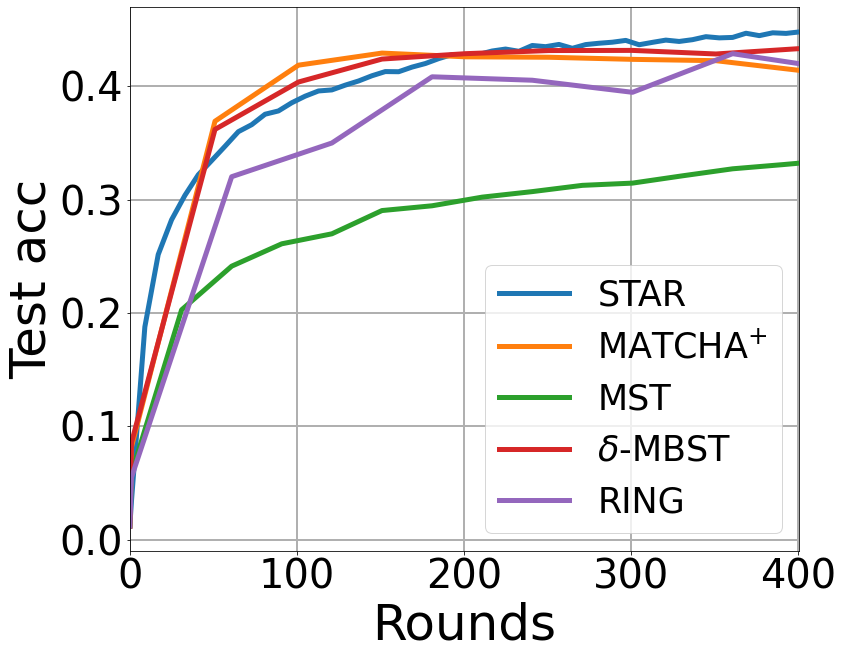}
    \end{subfigure}
    \\    
    \begin{subfigure}[b]{0.24\textwidth}  
        \centering 
        \includegraphics[width=\textwidth, height=0.8\textwidth]{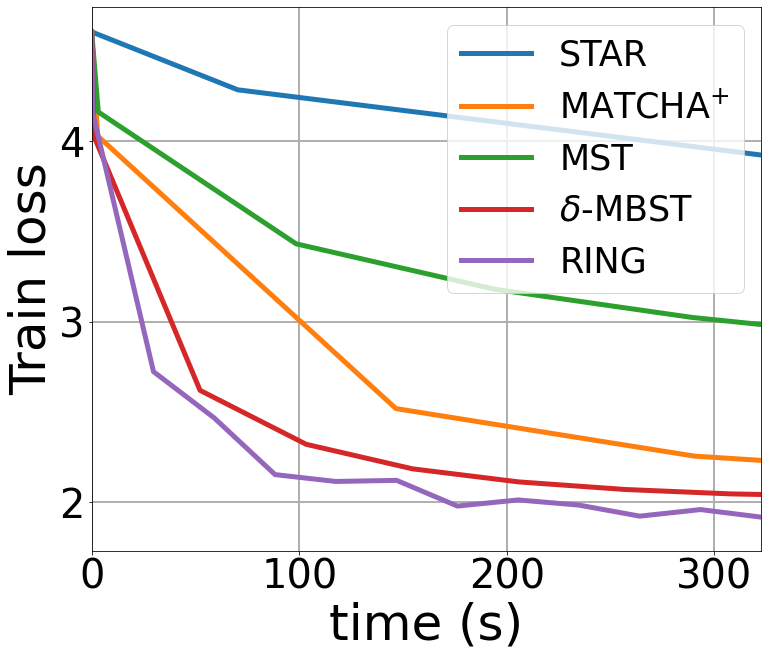}
        \caption[]{{\small Train Loss}}
    \end{subfigure}
    \hfill
    \begin{subfigure}[b]{0.24\textwidth}
        \centering
        \includegraphics[width=\textwidth, height=0.8\textwidth]{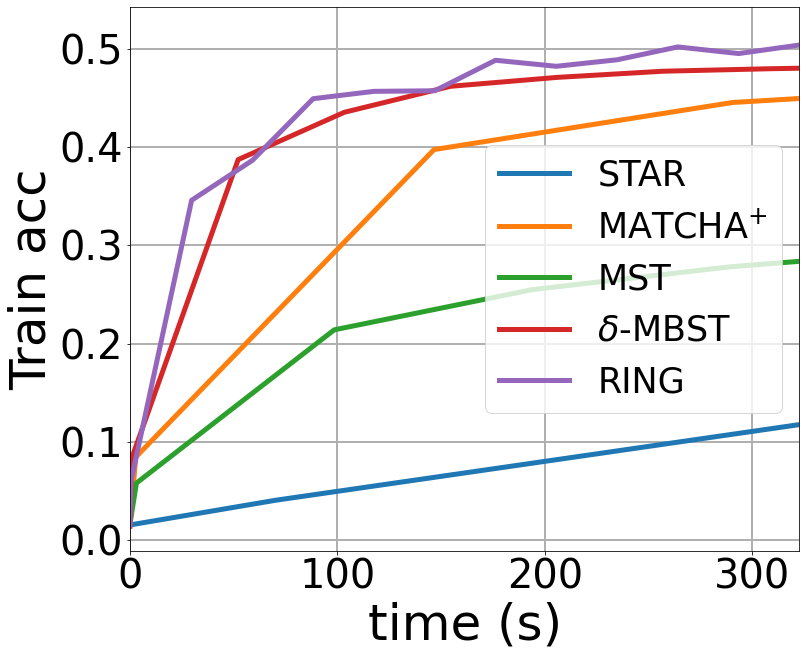}
        \caption[]{{\small Train Accuracy}}    
    \end{subfigure}
    \hfill
    \begin{subfigure}[b]{0.24\textwidth}   
        \centering 
        \includegraphics[width=\textwidth, height=0.8\textwidth]{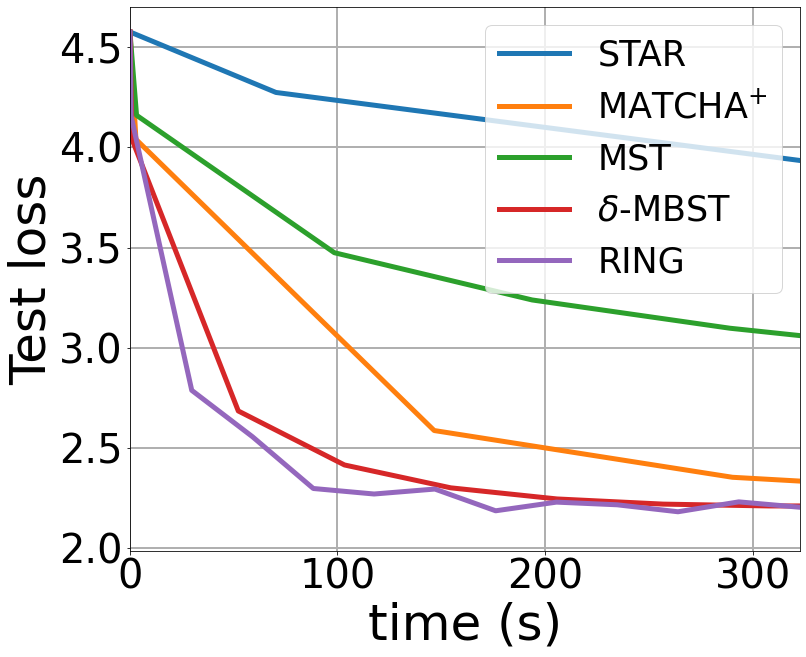}
        \caption[]{{\small Test Loss}}
    \end{subfigure}
    \hfill
    \begin{subfigure}[b]{0.24\textwidth}   
        \centering 
        \includegraphics[width=\textwidth, height=0.8\textwidth]{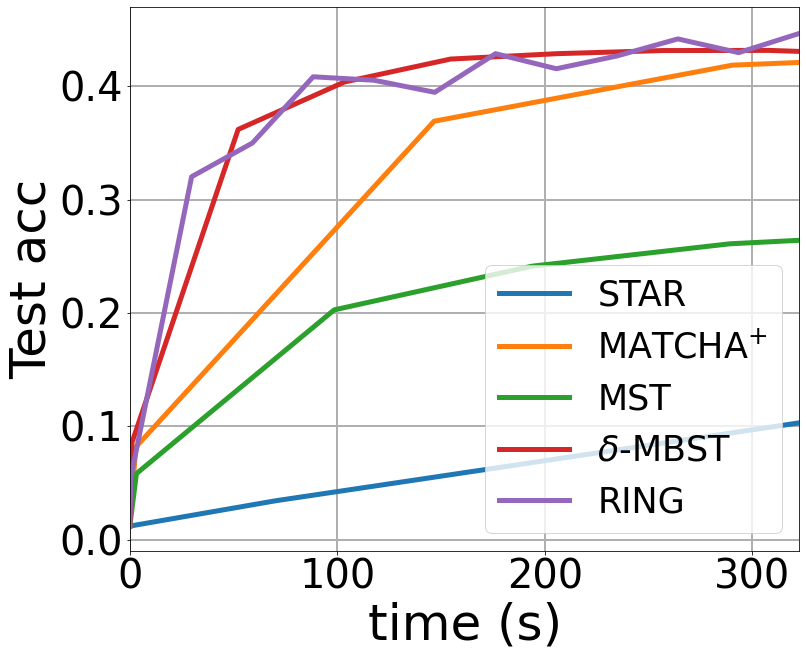}
        \caption[]{{\small Test Accuracy}}    
    \end{subfigure}
    \caption[]
    {\small Effect of overlays on the convergence w.r.t.~communication rounds (top row) and wall-clock time (bottom row) when training ResNet-18 image classification model using iNaturalist on  Exodus underlay. $1$~Gbps core links capacities, $100$~Mbps access links capacities, $s=5$.} 
\end{figure*}

\begin{figure*}
    \centering
    \begin{subfigure}[b]{0.24\textwidth}  
        \centering 
        \includegraphics[width=\textwidth, height=0.8\textwidth]{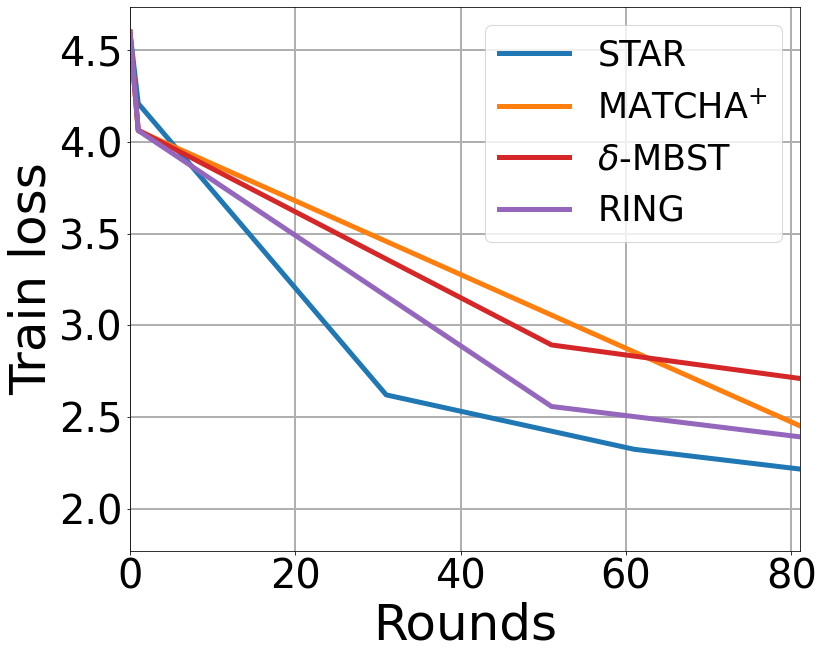}
    \end{subfigure}
    \hfill
    \begin{subfigure}[b]{0.24\textwidth}
        \centering
        \includegraphics[width=\textwidth, height=0.8\textwidth]{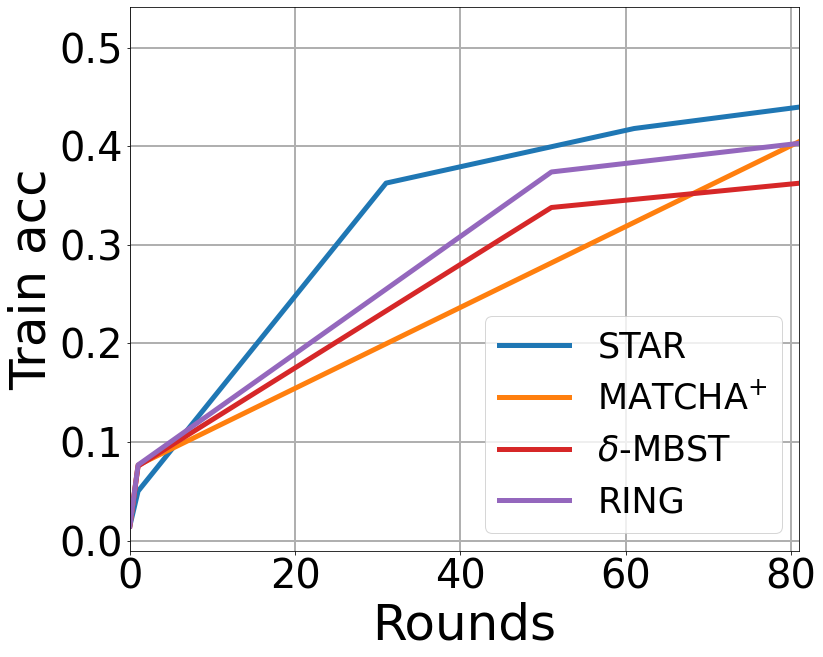}
    \end{subfigure}
    \hfill
    \begin{subfigure}[b]{0.24\textwidth}   
        \centering 
        \includegraphics[width=\textwidth, height=0.8\textwidth]{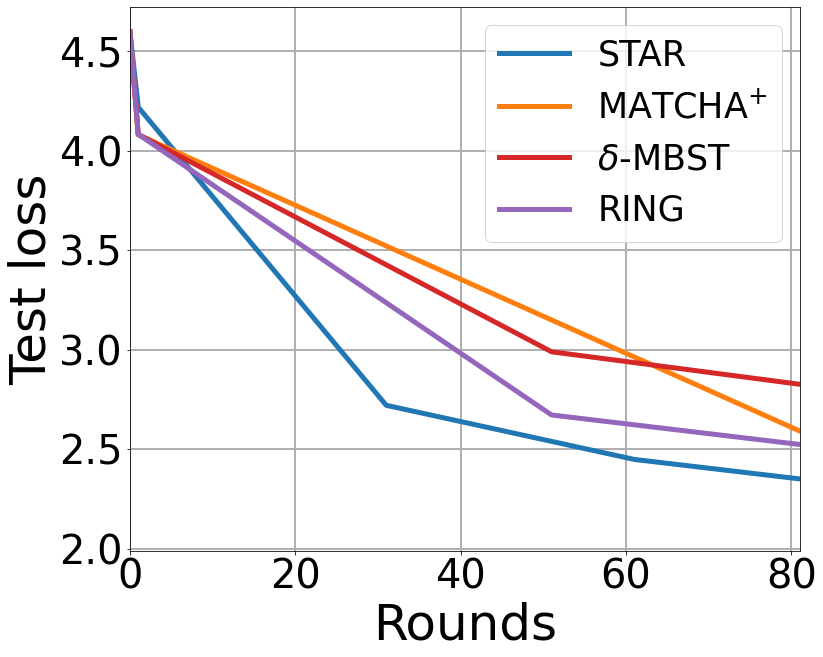}
    \end{subfigure}
    \hfill
    \begin{subfigure}[b]{0.24\textwidth}   
        \centering 
        \includegraphics[width=\textwidth, height=0.8\textwidth]{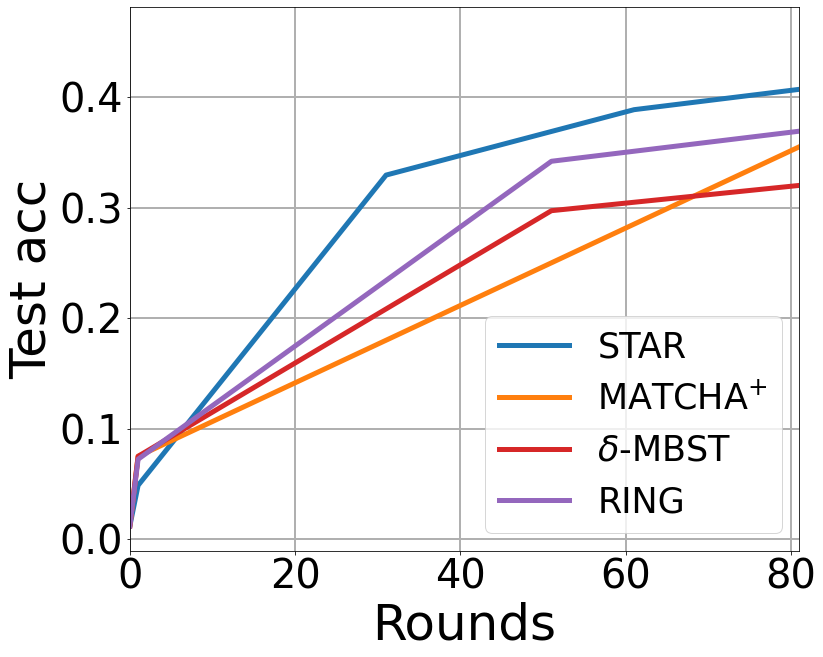}
    \end{subfigure}
    \\    
    \begin{subfigure}[b]{0.24\textwidth}  
        \centering 
        \includegraphics[width=\textwidth, height=0.8\textwidth]{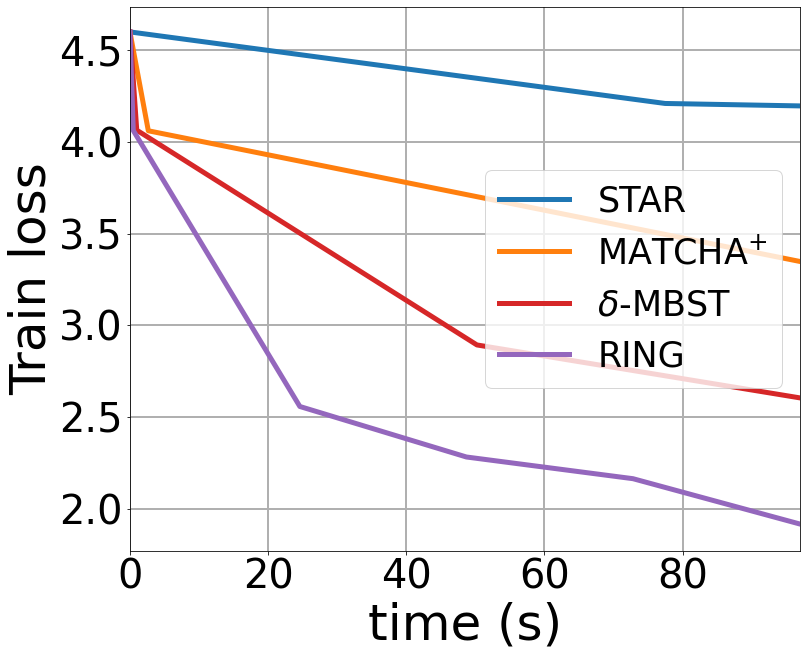}
        \caption[]{{\small Train Loss}}    
    \end{subfigure}
    \hfill
    \begin{subfigure}[b]{0.24\textwidth}
        \centering
        \includegraphics[width=\textwidth, height=0.8\textwidth]{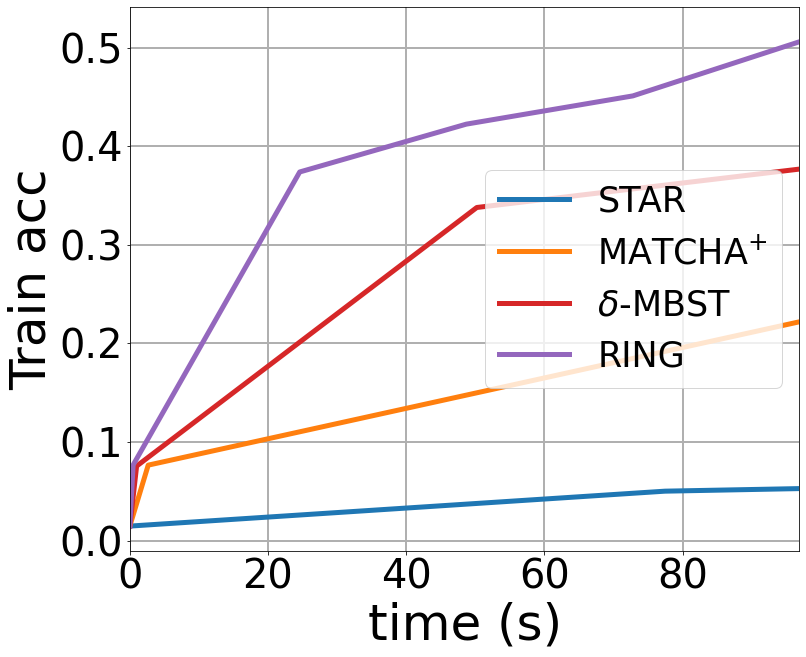}
        \caption[]{{\small Train Accuracy}}    
    \end{subfigure}
    \hfill
    \begin{subfigure}[b]{0.24\textwidth}   
        \centering 
        \includegraphics[width=\textwidth, height=0.8\textwidth]{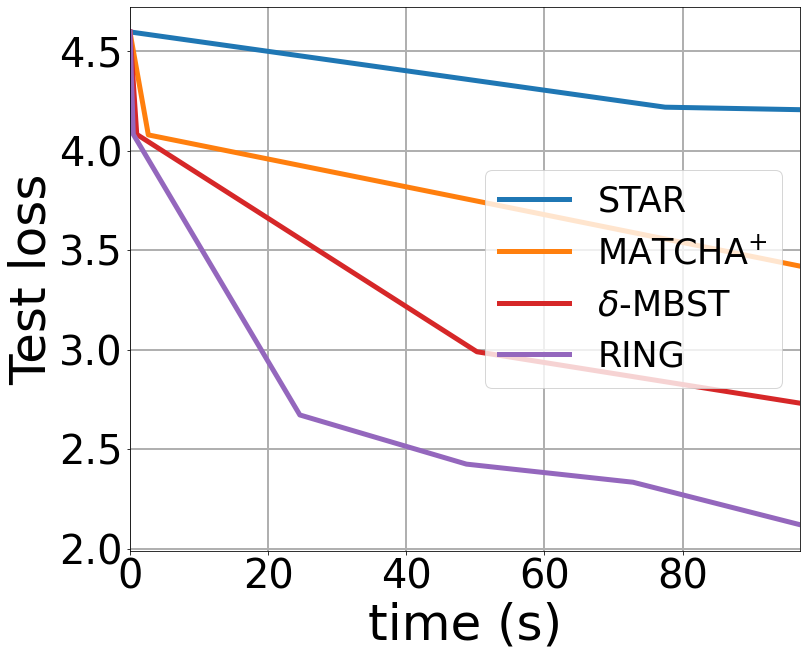}
        \caption[]{{\small Test Loss}}
    \end{subfigure}
    \hfill
    \begin{subfigure}[b]{0.24\textwidth}   
        \centering 
        \includegraphics[width=\textwidth, height=0.8\textwidth]{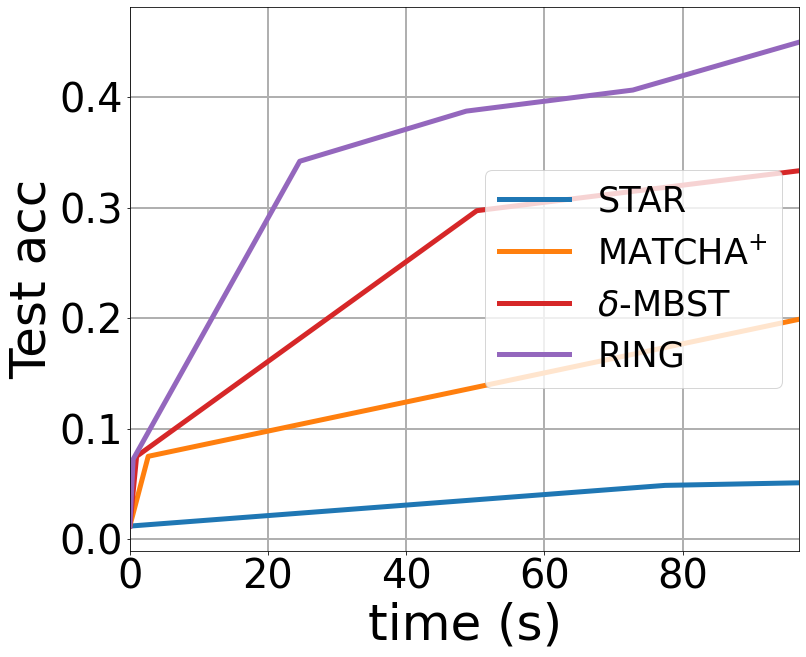}
        \caption[]{{\small Test Accuracy}}    
    \end{subfigure}
    \caption[]
    {\small Effect of overlays on the convergence w.r.t.~communication rounds (top row) and wall-clock time (bottom row) when training ResNet-18 image classification model using iNaturalist on  Ebone underlay. $1$~Gbps core links capacities, $100$~Mbps access links capacities, $s=5$.} 
    \label{f:ebone_s_5}
\end{figure*}

\newpage
\subsection{Training on Full-iNaturalist dataset}
\label{sec:appendix_full_inaturalist}
Full-iNaturalist contains 450,000 images belonging to 8142 classes.
The distribution of images across classes is highly skewed. We randomly split them into an $80\%$ training set and a $20\%$ testing set, and fine-tuned a pretained ResNet-50 on ImageNet from torchvision implementation for species classification. When training on Gaia, AWS North America, and G\'eant networks, the initial learning rate is set to 5e-5 with Adam optimizer. When training on Exodus and Ebone networks, the initial learning rate is set to $0.1$ with SGD optimizer. We decay the learning rate by half every epoch. The batch size is set to $96$. Because of the larger model size ($161.06$ Mbits) and larger batch size (compared with the iNaturalist setting in Table~\ref{tab:datasets_models}), the computation time for one local update of the model in this case increases to $946.7$ ms.

Half of the images are assigned uniformly at random, the other half are assigned to the geographically closest silo. Table~\ref{tab:statistics_data_distribution_full} shows that our method  generates quite unbalanced data distributions (e.g., for Ebone, one silo can have up to 43 times more images than another one). Moreover, Figure~\ref{fig:inaturalist_distribution_js} shows pairwise Jenson-Shannon (JS) divergence \cite{js_divergence} for label distributions at different silos under our method and under a uniformly random repartition.  
The JS divergence across silos is larger when the samples are distributed following our method, suggesting that novel data is far from being iid distributed.

{
Differently from the previous experiments, we did not set the consensus weights using the local degree rule, but, for a given overlay, we computed the consensus matrix $\mA$ with the optimal spectral properties. For undirected topologies, we solved the symmetric fast distributed linear averaging problem~\cite[Eq.~17]{Boyd2004_1272421}. This problem is expressed as a semi-definite program that is convex and can be solved efficiently. For the RING, the optimal consensus matrix has all the non-zero entries equal to $1/2$.
}

\begin{table}
    \centering
    \caption{Statistics of Full-iNaturalist dataset distribution for different networks.}
    \begin{tabular}{l c r r r r }
        \hline 
        \multirow{2}{*}{Network name} &  \multirow{2}{*}{Silos} & \multicolumn{4}{c}{Samples/silo} \\
        & & Mean & Stdev & Min & Max \\
        \hline 
        Gaia & 11 & 37795 & 29986 & 19344 & 112745\\ 
        AWS North America& 22& 18897& 9915 & 10502 & 50727\\
        Géant & 40& 10393 & 17535 & 5102 & 116498\\
        Exodus & 79 & 5262& 3368& 2710& 18454\\
        Ebone & 87 & 4778& 11222& 2264& 98886\\
        \hline 
    \end{tabular}
    \label{tab:statistics_data_distribution_full}
\end{table}

\begin{figure}
        \centering
    \begin{minipage}[b]{0.9\textwidth}
        \begin{subfigure}[b]{0.19\textwidth}  
            \centering 
            \includegraphics[width=\textwidth, height=0.8\textwidth]{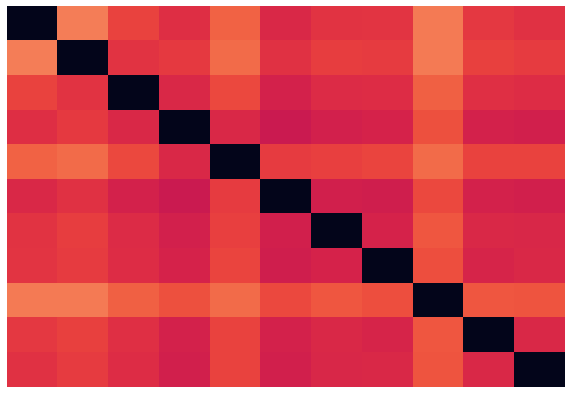}
        \end{subfigure}
        \hfill
        \begin{subfigure}[b]{0.19\textwidth}
            \centering
            \includegraphics[width=\textwidth, height=0.8\textwidth]{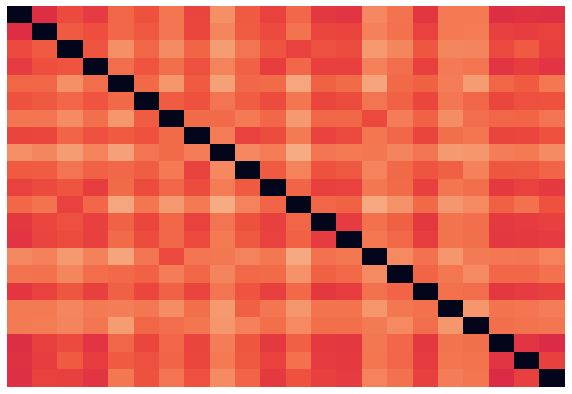}
        \end{subfigure}
        \hfill
        \begin{subfigure}[b]{0.19\textwidth}   
            \centering 
            \includegraphics[width=\textwidth, height=0.8\textwidth]{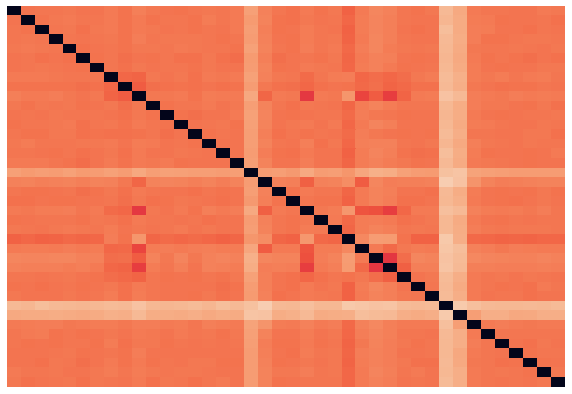}
        \end{subfigure}
        \hfill
        \begin{subfigure}[b]{0.19\textwidth}   
            \centering 
            \includegraphics[width=\textwidth, height=0.8\textwidth]{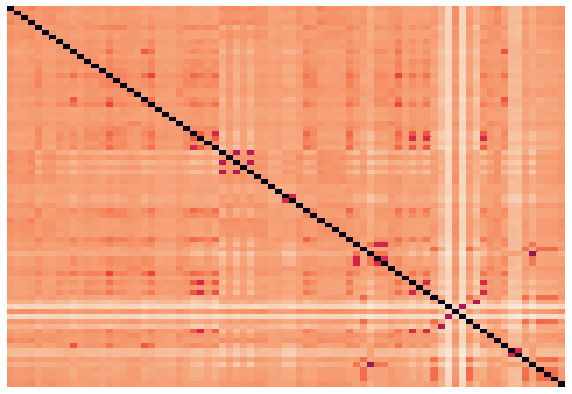}
        \end{subfigure}
        \hfill
        \begin{subfigure}[b]{0.19\textwidth}   
            \centering 
            \includegraphics[width=\textwidth, height=0.8\textwidth]{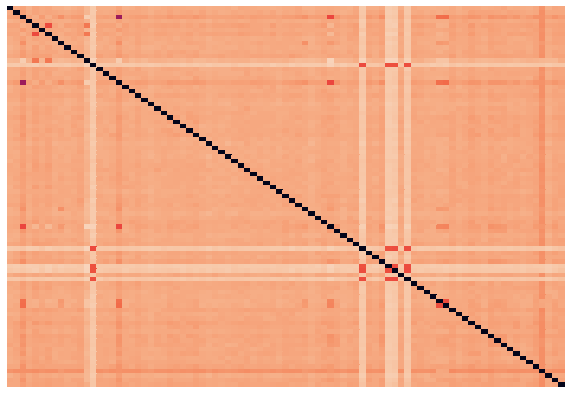}
        \end{subfigure}

        \begin{subfigure}[b]{0.19\textwidth}  
            \centering 
            \includegraphics[width=\textwidth, height=0.8\textwidth]{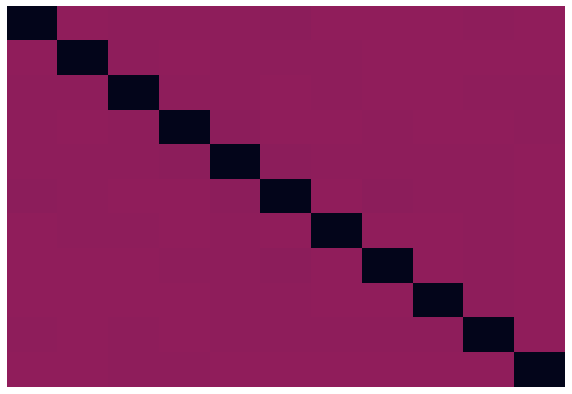}
            \caption[]%
            {{\small Gaia}} 
        \end{subfigure}
        \hfill
        \begin{subfigure}[b]{0.19\textwidth}
            \centering
            \includegraphics[width=\textwidth, height=0.8\textwidth]{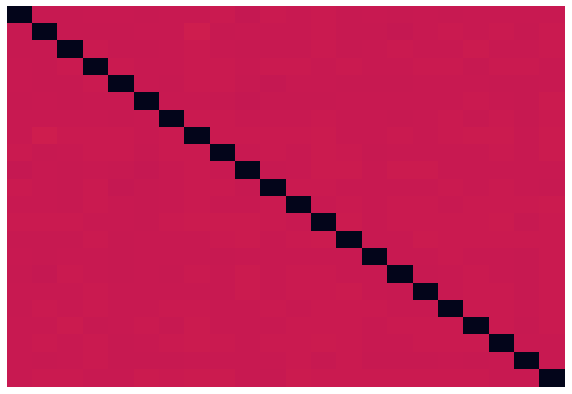}
            \caption[]%
            {{\small AWS NA}} 
        \end{subfigure}
        \hfill
        \begin{subfigure}[b]{0.19\textwidth}   
            \centering 
            \includegraphics[width=\textwidth, height=0.8\textwidth]{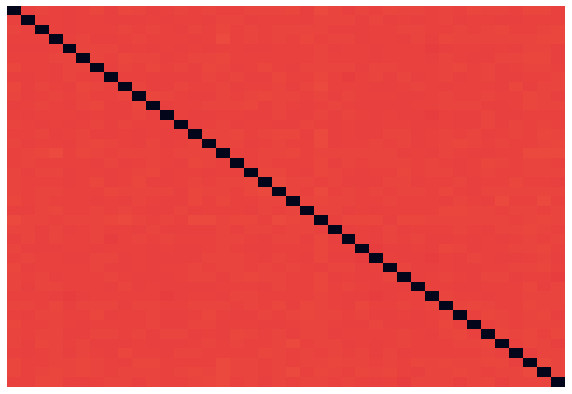}
            \caption[]%
            {{\small Géant}} 
        \end{subfigure}
        \hfill
        \begin{subfigure}[b]{0.19\textwidth}   
            \centering 
            \includegraphics[width=\textwidth, height=0.8\textwidth]{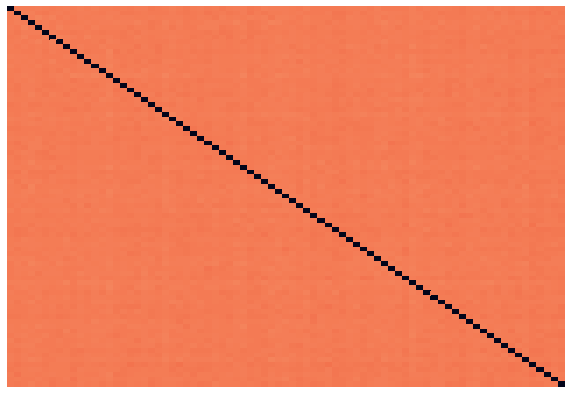}
            \caption[]%
            {{\small Exodus}} 
        \end{subfigure}
        \hfill
        \begin{subfigure}[b]{0.19\textwidth}   
            \centering 
            \includegraphics[width=\textwidth, height=0.8\textwidth]{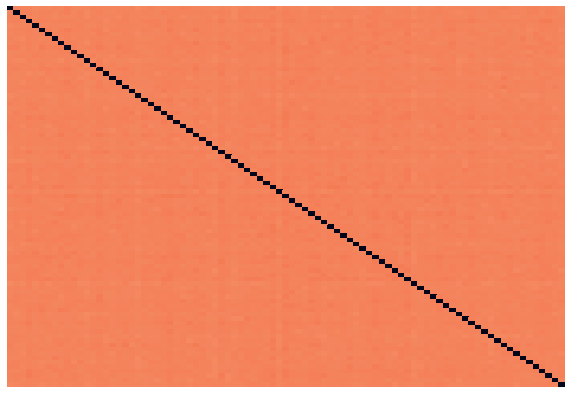}
            \caption[]%
            {{\small Ebone}} 
        \end{subfigure}
    \end{minipage}
    \hfill
    \begin{minipage}[b]{0.09\textwidth}
        \vspace*{-3cm}
        \centering 
        \includegraphics[width=0.8\textwidth, height=3.3\textwidth]{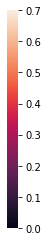}
        \par \vspace{0.3cm}
    \end{minipage}
        
        \caption[]
        {\small Pairwise Jensen-Shannon divergence across silos labels distributions for Full-iNaturalist dataset on different networks. The first row is for data distributed with our method and the second row is for data distributed uniformly at random.}
        \label{fig:inaturalist_distribution_js}
\end{figure}

\setlength{\tabcolsep}{2pt}
\begin{table}[t]
    \caption{Full-iNaturalist training over different networks. $1$~Gbps core links capacities, $1$ Gbps access links capacities. One local computation step ($s=1$).}
    \centering
    \resizebox{\columnwidth}{!}{%
    \begin{tabular}{l| c | c | c | c | r | r | r | c | c}
    \toprule
    \multirow{2}{*}{\textbf{Network name}} & \multirow{2}{*}{\textbf{Silos}} & \multirow{2}{*}{\textbf{Links}} & \multicolumn{5}{c|}{\textbf{Cycle time (ms)}} & \multicolumn{2}{c}{\textbf{Ring's training speed-up}}\\
   &  &  & {\scriptsize STAR}  &  {\scriptsize MATCHA$^{(+)}$} &{\scriptsize  MST} & {\scriptsize $\delta$-MBST} & {\scriptsize RING}  & {\scriptsize vs STAR} &   {\scriptsize vs MATCHA$^{(+)}$}
    \\
    \midrule 
    Gaia \cite{gaia_10.5555/3154630.3154682} & $11$ & $55$ & 
    $4444$ & $2721~(2721)$ &  $1498$ &  $1363$ &  $\mathbf{1156}$  & $3.84$ & $12.10~(12.10)$\\ 
    AWS North America \cite{amazon} & $22$ & $231$ &   
    $7785$&  $4384~(4384)$ &  $1441$ &  $1297$ &  $\mathbf{1119}$  & $6.96$ & $23.50~(23.50)$\\
    Géant \cite{geant} & $40$ & $61$ & 
    $13585$ & $4912~(1894)$ &  $1944$ &  $1464$ &  $\mathbf{1196}$  & $11.35$ & $4.10~(1.58)$\\
    Exodus~\cite{mahajan2002inferring} & $79$ & $147$ & 
    $26258$ &  $ 6180~(1825)$ &  $2078$ &  $1481$ &  $\mathbf{1194}$ & $13.74$ & $2.59~(0.96)$\\
    Ebone~\cite{mahajan2002inferring} & $87$ & $161$ & 
    $28753$& $8045~(1933)$ & $2448$ & $1481$ & $\mathbf{1178}$ & $19.52$ & $5.80~(1.39)$\\
    \bottomrule
    \end{tabular}%
    }
    \label{tab:topologies_cycle_time_full}
\end{table}

Table~\ref{tab:topologies_cycle_time_full} shows the effect of 6 different overlays when training ResNet-50 over Full-iNaturalist in networks with  capacities equal to 1~Gbps for core links and access links.\footnote{Training time is evaluated as the time to reach a top 5 training accuracy equal to $18\%$ for Gaia and to $13\%$ for other networks. The top 5 training accuracy reached by centralized training ResNet-50 after 50 epochs is about $20\%$.} We can see that RING always achieves the best throughput in this setting.

\subsection{Dependence of model performance on underlays}
\label{sec:appendix_data_partiotion}

The models obtained by the experiments in Table~\ref{tab:topologies_cycle_time} have different performance w.r.t. the underlays. The reason is that we chose to optimize the mean of local functions $\eqref{eq:optimization_problem_constraints}$, which
leads to different optimization problems when the number of silos changes. 
The observed difference in the trained models' performances is related to the fact that each of them is the result of a different optimization problem.
Instead, when optimizing the weighted sum of local functions with weights equal to the percentage of the data points held by silos, the model performance does not depend on the underlay.  
To confirm this claim, we trained ResNet-18 on iNaturalist using the weighted average loss on STAR topology over the five underlays considered in the paper. Figure~\ref{fig:weighted_topologies} shows that the obtained models for these five underlays have similar performances, reaching a test accuracy between $46\%$ and $48\%$. 

\begin{figure}
    \centering
    \begin{subfigure}[b]{0.48\textwidth}
        \centering 
        \includegraphics[scale=0.22]{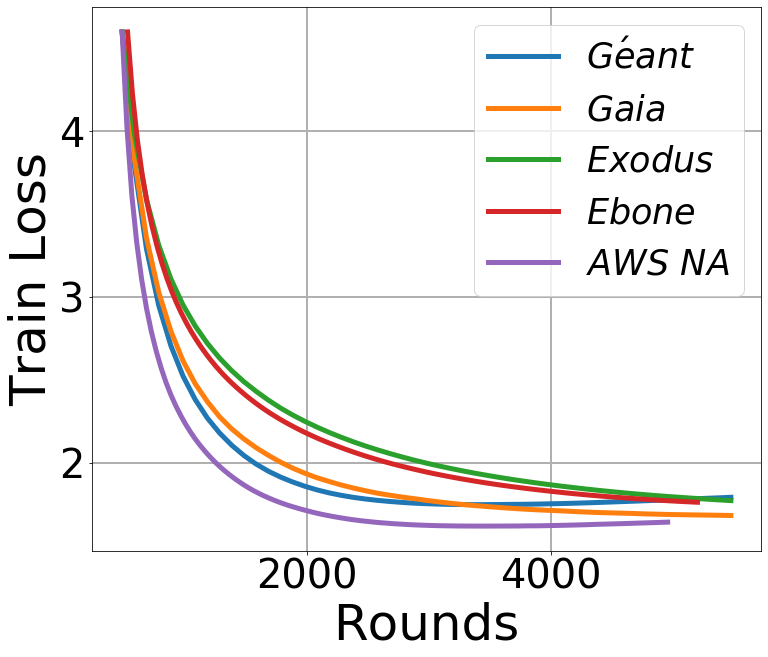}
        \caption{\small Training loss vs Rounds }
        \label{fig:weighted_trainloss}
    \end{subfigure}
    \hfill
    \begin{subfigure}[b]{0.48\textwidth}
        \centering
        \includegraphics[scale=0.22]{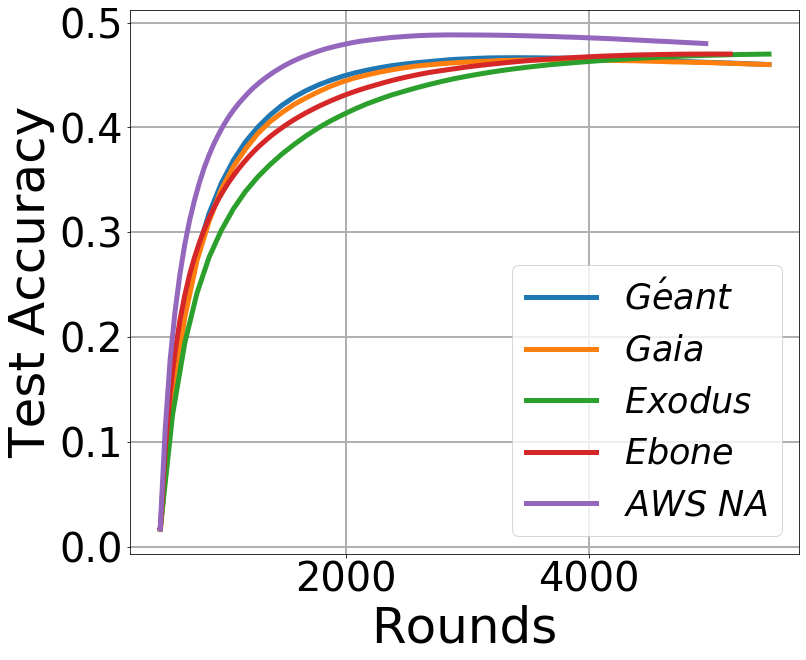}
        \caption{\small Test accuracy vs Rounds}
        \label{fig:weighted_testacc}
    \end{subfigure}
    \caption{The model performance of training iNaturalist on STAR overlays of five different underlays: Gaia, AWS North America, Géant, Exodus and Ebone.}
    \label{fig:weighted_topologies}
\end{figure}

\subsection{Effect of \texorpdfstring{$C_b$}{cb} in MATCHA}
\label{sec:appendix_c_b}
There is no real configuration criterion for $C_b$ in \cite{wang2019matcha}, but \cite[Fig.~3]{wang2019matcha} suggests to select the smallest  $C_b$ that has the same spectral norm of vanilla-SGD---but less communication overhead. This criterion leads to pick for all our topologies, but ``AWS North America,'' a value of $C_b \in [0.4, 0.6]$, with no significant change to the results in Table~\ref{tab:topologies_cycle_time}. For ``AWS North America'' the criterion leads to $C_b=0.2$. Table~\ref{tab:matcha}, first row, shows indeed that MATCHA is faster for $C_b=0.2$, but still RING is $1.08$ and $3.29$ faster than MATCHA for 10~Gbps and 100~Mbps access links capacities, respectively. The table shows also that this criterion does not lead necessarily to the fastest training time for MATCHA. An alternative is to select $C_b$ by running time-consuming training experiments, but in any case we have always observed RING to outperform MATCHA except on G\'eant (see Table~\ref{tab:topologies_cycle_time} and Table~\ref{tab:matcha}). 
Note that MATCHA is supposed to find by itself how often to use each link and ``achieve a win-win in this error-runtime trade-off for \emph{any arbitrary network topology}'' \cite{wang2019matcha}. We ran additional experiments with MATCHA over our topologies (for the RING we considered its undirected version as MATCHA uses bi-directional communications); however, MATCHA was still slower than RING (last two rows in Table~\ref{tab:matcha}).

\begin{table}[!htbp]
    \centering
    \caption{RING’s training speed-up vs MATCHA when training iNaturalist on AWS-North America network. MATCHA runs on top of  underlay,  RING, and $\delta$-MBST with different values of communication budget $C_{b}$. $1$ Gbps core links capacities. The star denotes the results with $C_b$ set according to \cite[Fig.~3]{wang2019matcha}. Bold fonts denote the optimal setting for $C_b$.}
     \resizebox{\columnwidth}{!}{%
        \begin{tabular}{c||l|l|l|l|l|l|l||l|r|r|l|l|l|l}
            \toprule
            \textbf{Access links capacities} &  \multicolumn{7}{c||}{\textbf{$10$ Gbps}} & \multicolumn{7}{c}{\textbf{$100$ Mbps}}\\
            \midrule
            \textbf{Communication budget} ($\mathbf{C_{b}}$) &  $1.0$ & $0.8$ & $0.6$ & $0.5$ & $0.4$ & $0.2$ & $0.1$ & $~~1.0$ & $0.8~~$ & $0.6~~$ & $0.5$ & $0.4$ & $0.2$ & $0.1$\\
            \midrule
            \textbf{MATCHA over underlay} & $2.02$ & $1.43$ & $1.57$ & $1.47$ & $1.46$ & $\mathbf{1.08}^{*}$ & $1.38$ & $18.85$ & $12.56$ & $12.00$ & $9.94$ & $8.18$ & $3.29^{*}$ & $\mathbf{2.44}$\\
            \textbf{MATCHA over $\delta$-MBST} & $\mathbf{1.10}^{*}$ & $1.25$ & $1.33$ & $1.12$ & $1.41$ & $1.89$ & $2.28$ & $~~2.08^{*}$ & $2.26$ & $1.56$ & $1.45$ & $1.31$ & $\mathbf{1.15}$ & $1.15$\\
            \textbf{MATCHA over RING} & $\mathbf{1.00}^{*}$ & $1.42$ & $1.40$ & $1.15$ & $1.26$ & $1.35$ & $1.34$ & ~~$\mathbf{1.00}^{*}$ & $2.15$ & $1.92$ & $1.47$ & $1.54$ & $1.41$ & $1.28$\\
            \bottomrule
        \end{tabular}
    }
    \label{tab:matcha}
\end{table}

\end{document}